\theoremstyle{definition}
\newtheorem{theorem}{Theorem}[section]
\newtheorem{proposition}[theorem]{Proposition}
\newtheorem{lemma}[theorem]{Lemma}
\newtheorem{corollary}[theorem]{Corollary}
\theoremstyle{definition}
\newtheorem{definition}[theorem]{Definition}
\newtheorem{example}[theorem]{Example}
\theoremstyle{remark}
\setlist[itemize,1]{leftmargin=10pt, labelindent=5pt, itemsep=3pt, parsep=3pt}
\setlist[enumerate,1]{leftmargin=10pt, labelindent=5pt, itemsep=3pt, parsep=3pt}
\setlist[itemize,2]{leftmargin=15pt, itemsep=3pt, parsep=3pt}
\setlist[enumerate,2]{leftmargin=15pt, itemsep=3pt, parsep=3pt}
\ttfamily\fontsize{7.5}{9}\selectfont,    
\renewenvironment{abstract}
{\begin{center}\bfseries Abstract\end{center}%
 \small
 \begin{quote}}
{\end{quote}}
\titlespacing{\section}{0pt}{*0.9}{*0.9} 
\titlespacing{\subsection}{0pt}{*0.9}{*0.9} 
\definecolor{darkblue}{rgb}{0.0,0.0,0.65}
\definecolor{darkred}{rgb}{0.68,0.05,0.0}
\definecolor{darkgreen}{rgb}{0.0,0.29,0.29}
\definecolor{darkpurple}{rgb}{0.47,0.09,0.29}
\newcommand{\call}{\texttt{[CALL]}}
\newcommand{\sep}{\texttt{[SEP]}}
\newcommand{\return}{\texttt{[RETURN]}}
\newcommand{\f}{f}
\newcommand{\g}{\phi}
\newcommand{\nxt}{\pi}
\newcommand{\op}{\mathcal{T}}
\newcommand{\te}{\mathrm{TE}}
\newcommand{\pe}{\mathrm{PE}}
\newcommand{\sa}{\mathrm{SA}}
\newcommand{\mha}{\mathrm{MHA}}
\newcommand{\ff}{\mathrm{FF}}
\newcommand{\dec}{\mathrm{DEC}}
\newcommand{\softmax}{\mathrm{softmax}}
\newcommand{\seq}[1]{\overline{#1}}
\newcommand{\id}{\mathrm{id}}
\newcommand{\F}{\mathcal{H}}
\newcommand{\tf}{\mathrm{TF}}
\newcommand{\proj}{\mathrm{PROJ}}
\newcommand{\attn}{\mathrm{ATTN}}
\newcommand{\AHA}{\mathrm{AHA}}
\newcommand{\ours}{\texttt{FASP}}
\newcommand{\rasp}{\texttt{RASP}}
\newcommand{\crasp}{\texttt{C-RASP}}
\newcommand{\act}{\mathrm{ACT}}
\newcommand{\dfnb}{\mathcal{S}}
\newcommand{\aha}{\texttt{aha}}
\newcommand{\rha}{\texttt{rha}}
\newcommand{\onehot}{\texttt{onehot}}
\newcommand{\add}{\texttt{add}}
\newcommand{\minus}{\texttt{minus}}
\newcommand{\multi}{\texttt{multiply}}
\newcommand{\AND}{\texttt{and}}
\newcommand{\OR}{\texttt{or}}
\newcommand{\NOT}{\texttt{not}}
\newcommand{\XOR}{\texttt{xor}}
\newcommand{\EQ}{\texttt{equal}}
\newcommand{\LT}{\texttt{less}}
\newcommand{\GT}{\texttt{greater}}
\newcommand{\LEQ}{\texttt{leq}}
\newcommand{\GEQ}{\texttt{geq}}
\newcommand{\NEQ}{\texttt{neq}}
\newcommand{\ITE}{\texttt{if\_then\_else}}
\newcommand{\MAX}{\texttt{max}}
\newcommand{\MIN}{\texttt{min}}
\newcommand{\SUM}{\texttt{sum}}
\newcommand{\AVE}{\texttt{average}}
\newcommand{\RBM}{\texttt{rightmost\_best\_match}}
\newcommand{\REM}{\texttt{rightmost\_exact\_match}}
\newcommand{\SEQ}{\texttt{seq\_len}}
\title{PENCIL: Long Thoughts with Short Memory}
\author{Chenxiao Yang\qquad Nathan Srebro\qquad David McAllester\qquad Zhiyuan Li \\
Toyota Technological Institute at Chicago\\
\texttt{\{chenxiao,nati,mcallester,zhiyuanli\}@ttic.edu} }
\date{}
\begin{document}

\maketitle

\begin{abstract}
While state-of-the-art LLMs have demonstrated great promise of using long Chains-of-Thought (CoT) to boost reasoning, scaling it up to more challenging problems at test-time is fundamentally limited by suboptimal memory usage — intermediate computations accumulate indefinitely in context even no longer needed for future thoughts. We introduce PENCIL, which incorporates a novel reduction mechanism into the autoregressive generation process that recursively clean up intermediate thoughts based on patterns learned from training. By iteratively generating and erasing thoughts, PENCIL can think deeper to solve harder problems using shorter context and less computes. Empirically, we observe PENCIL is significantly more effective and efficient than CoT. For example, we demonstrate PENCIL with a small 25M-parameter transformer and 2048 context length solves Einstein's puzzle — a task that challenges much larger models like GPT-4. Theoretically, we prove PENCIL can perform universal efficient computation by simulating any Turing machines with optimal time and space complexity, and thus can solve arbitrary computable tasks that are otherwise intractable for vanilla CoT.
\end{abstract}

\begin{figure*}[t]
    \centering
    \includegraphics[width=\linewidth]{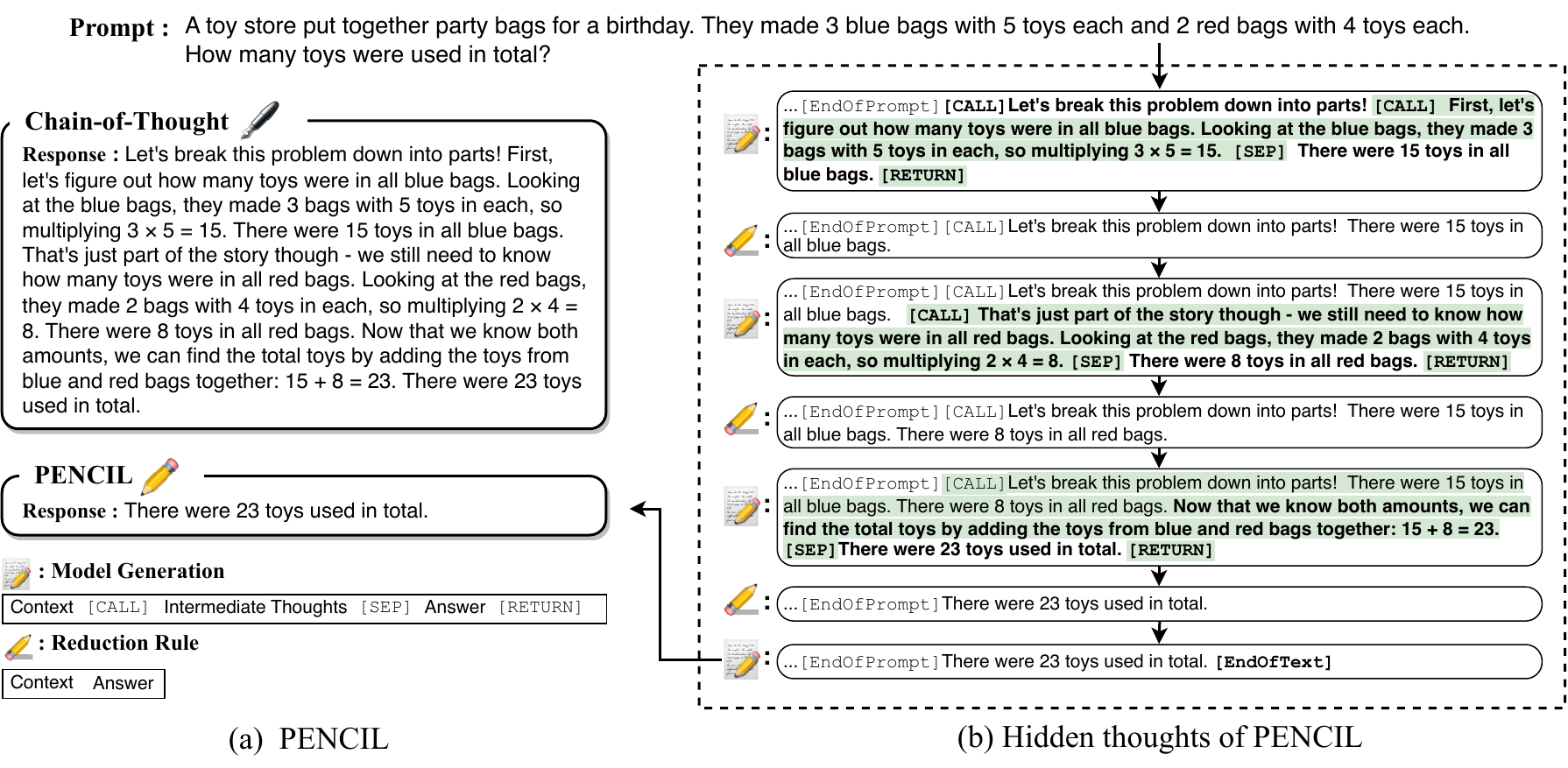}
    \vspace{-15pt}
    \caption{A toy example illustrating how PENCIL would potentially solve an arithmetic problem. Bold text indicates content generated in the current iteration, content highlighted in blue indicates intermediate thoughts to be erased by the reduction rule. See a concrete example of the complete thinking process for solving QBF in Fig.~\ref{fig_qbf}, and an illustration for Einstein's puzzle in Fig.~\ref{fig_puzzle}. All details are included in Appendix~\ref{app_sat}, \ref{app_qbf} and \ref{app_puzzle}.\vspace{-10pt}}
    \label{fig_intro}
\end{figure*}

\section{Introduction} \label{sec_intro}
Recently, there has been a surge of interest in reasoning with \emph{Chain-of-Thought} (CoT)~\citep{wei2022chain} and generating longer thoughts at test-time to tackle larger-scale and more complicated problems~\citep{openai2024reasoning,guo2025deepseek,snell2024scaling,muennighoff2025s1}. CoT is an iterative generation process: each intermediate reasoning step is appended to the current context and treated as the input in subsequent reasoning. The context grows until reaching a final answer. Whilex such an iterative model is theoretically powerful -- capable, in principle, of tackling many intricate problems given unlimited length~\citep{merrill2023expresssive,feng2024towards,li2024chain} -- it suffers from the inherent \emph{write-only} limitation: partial computation remains in the context even when no longer needed for future thought generation. This design becomes particularly problematic for inherently hard reasoning tasks, where no efficient algorithm exists and thus reasoning inevitably spans many steps, forcing the context length to grow indefinitely. This not only demands excessive memory resources that become impractical for computationally hard tasks, but could also degrades the model's ability to effectively retrieve information in the context, even when the maximum length is not exceeded~\citep{liu2024lost}.

Memory management is a major issue in modern computer systems. Turing machines, for example, can overwrite tape cells and reclaim space for new computations, while high-level programming languages rely on stack frames, function calls, and garbage collection to discard unneeded data. 
While some previous works have attempted to augment LLMs with external memory (e.g.~\citep{gao2023retrieval,wang2024augmenting}), they often lack a direct mechanism for reclamation of no longer needed memory as stack deallocation or garbage collection. This paper proposes \textbf{PENCIL},~\footnote{\textbf{\underline{P}}ENCIL \textbf{\underline{EN}}ables \textbf{\underline{C}}ontext-efficient \textbf{\underline{I}}nference and \textbf{\underline{L}}earning} which introduces cleaning mechanisms to CoT for space-efficient and long-chain reasoning.

In a nutshell, PENCIL combines a next-token generator (e.g., a decoder-only transformer) and a \emph{reduction rule}, and applies the reduction rule whenever possible throughout the standard iterative next-token generation process to reduce context length. In this paper, we focus on a simple yet universal reduction rule motivated by the function call stack in modern computers.
\begin{equation} \label{eq_g}
\textbf{C} ~\call~ \textbf{T} ~\sep~ \textbf{A} ~\return ~~\Rightarrow~~
\textbf{C}~\textbf{A}
\end{equation}
where $\call$, $\sep$, and $\return$ are special tokens that separate the context (\textbf{C}), thoughts (\textbf{T}), and answer (\textbf{A}) in the sequence. 
Once a computation completes (marked by \return), all intermediate reasoning steps (those between \call and \sep) will be removed, merging the answer back into the context. Importantly, this process can be applied recursively, allowing for hierarchical reasoning structures similar to nested function calls in programming. PENCIL alternates between standard CoT-style generation and this reduction step, automatically discarding unneeded thoughts based on patterns learned from training. Figure~\ref{fig_intro} gives a hypothetical example of how PENCIL might be applied to natural language thoughts.

We train and evaluate PENCIL on SAT, QBF, and Einstein’s puzzle — tasks that inherently require exponential computation time. PENCIL effectively reduces the maximal CoT length (i.e. the space requirement) from exponential to polynomial. Consequently, under fixed architecture and context window, PENCIL allows solving larger-sized problems whereas CoT fails due to exploding context length. Furthermore, by continually discarding irrelevant tokens, PENCIL can significantly save training computes and converge faster even when memory or expressiveness is not a bottleneck. Notably, on the 5$\times$5 Einstein puzzle -- a challenging natural-language logic puzzle that even large models like GPT-4 struggle with -- PENCIL achieves a 97\% success rate by using a small transformer with 25M-parameter and 2048-token context.

Theoretically, we show that PENCIL with a fixed finite-size {decoder-only transformer} can perform universal space-efficient computation, by simulating Turing machine running in $T$ steps and $S$ space with $\mathcal O(T)$ generated tokens and maximal sequence length $\mathcal O(S)$. This indicates its power for solving any computational tasks with optimal time and space efficiency. This is a significant improvement over standard CoT, which require context length to grow proportionally with $\mathcal O(T)$, making them fundamentally unable to solve problems requiring extensive computation within fixed memory constraints.

See discussions about related work in Appendix~\ref{app_rw}. Codes are available at \url{https://github.com/chr26195/PENCIL}.

\section{PENCIL: Iterative Generation and Reduction} \label{sec_pencil}
Chain-of-Thought (CoT)~\citep{wei2022chain} allows language models to generate intermediate reasoning steps before producing a final answer. Formally, given a finite alphabet $\Sigma$, let $\nxt: \Sigma^* \rightarrow \Sigma$ be a \emph{next-token predictor}, which maps an input sequence $(x_1, x_2, \cdots, x_n) \in \Sigma^n$ to the next token $x_{n+1} \in \Sigma$. Correspondingly, we can define a sequence-to-sequence mapping $f: \Sigma^* \rightarrow \Sigma^*$ as
\begin{equation} \label{eq_f}
f_\nxt(x_1, \ldots, x_n) \triangleq (x_1, \ldots, x_n, \nxt(x_1, \ldots, x_n))
\end{equation}
which concatenates the next token to the current context. For brevity, we will write $f$ instead of $f_\nxt$ when the context is clear. CoT with $k$ steps is denoted as $f^k: \Sigma^* \rightarrow \Sigma^*$, where $f^k \triangleq f \circ f^{k-1}$ and $f^1 \triangleq f$. Given any input sequence $x = (x_1, x_2, \ldots, x_n) \in \Sigma^n$, each application of $f$ extends the sequence by one token, such that $f^k(x) \in \Sigma^{n+k}$. Throughout this paper, we use shorthand $x_{:j}$ to denote $(x_1,\ldots, x_j)$, and $x_{i:j}$ the subsequence from $x_i$ to $x_j$, for any string $x\in\Sigma^*$ longer than $j$.

The iterative generation process of CoT is inherently limited by its write-once nature; that is, once written, intermediate computations permanently occupy the context, regardless of their relevance in the subsequent reasoning steps. Consequently, the context length would eventually grow overwhelmingly large for complex reasoning problems. To address this, we introduce {PENCIL}, which is CoT equipped with a reduction rule that enables selective elimination of reasoning traces, allowing the model to generate longer thoughts to solve larger problems with less memory.

\subsection{The Reduction Rule and PENCIL}

A \emph{reduction rule} (a.k.a. rewriting rule)~\citep{baader1998term} is a formal mechanism originated from logic for transforming one expression to another via predefined patterns and ultimately reaching a final normal form, i.e. the answer. It serves as a fundamental model of computation in classic functional programming languages such as $\lambda$-calculus~\citep{o1985equational}, and proof assistants for automated theorem proving and reasoning~\citep{wos1992automated}. Mathematically, the reduction rule can be thought of as a sequence-to-sequence function $\g: \Sigma^* \rightarrow \Sigma^*$, which in this paper is from a longer sequence $(x_{1}, \ldots, x_{a}) \in \Sigma^a$ to a shorter one $(x_{i_1}, \ldots, x_{i_{b}}) \in \Sigma^{b}$ where $b \leq a$. 

\textbf{The Reduction Rule}~~ Let $\hat{\Sigma} = \Sigma~\cup$ $\{$ \call, \sep, \return $\}$ be the extended alphabet including three special tokens that indicate certain structures of the reasoning trace. Given the new alphabet, we can instantiate the rule $\g$ as (\ref{eq_g}), where 
\begin{equation}
\begin{split}
\textbf{C} &\,\in\, (\Sigma \cup \{\call, \sep, \return\})^* \\
\textbf{T} &\,\in\, (\Sigma \cup \{\sep, \return\})^* \\
\textbf{A} &\,\in\, (\Sigma \cup \{\call\})^*
\end{split}
\end{equation}
are subsequences separated by the special tokens. The allowance of difference special tokens in $\textbf{C}$, $\textbf{T}$, \textbf{A} ensures that: 1) the $\return$ token is the last $\return$ token in the sequence; 2) the $\sep$ token in (\ref{eq_g}) is the one immediately before the $\return$ token ; 3) and the $\call$ token is immediately before the $\sep$ token. Thus the matching is unique. 

Intuitively, \textbf{C} can be understood as context that can include information that is either directly relevant to solving the current problem or irrelevant but useful for solving future problems; $\textbf{T}$ represents the intermediate thoughts for deriving the answer and $\textbf{A}$ represents the answer. If the input sequence satisfy the pattern $\textbf{C} ~\call~ \textbf{T} ~\sep~ \textbf{A} ~\return$, the rule will activate. Consequently, the entire intermediate thoughts and the special token triplet will be removed, with the answer being merged back into the context. Otherwise if the pattern is not satisfied, the rule will leave the input sequence unchanged.

It is important to note that the inclusion of \call in \textbf{C} enables nested reasoning structures critical for achieving optimal space efficiency, while allowing \call in \textbf{A} enables tail recursion optimization for better efficiency as will be discussed in Sec.~\ref{sec_usepencil}.

\textbf{PENCIL} consists of a learnable next-token predictor $f$ as defined in (\ref{eq_f}) which is responsible for generating the intermediate reasoning steps (including special tokens \call, \sep, \return) as in the standard CoT, and the reduction rule $\g$ as defined in (\ref{eq_g}) that serves to reduce the context and clean the memory. Formally, we define one step and $k$-steps of PENCIL as $\operatorname{PENCIL}^1_{\g, f} =  \g\circ f$ and $\operatorname{PENCIL}^k_{\g, f} = (\g\circ f)^k$. Namely, each step of PENCIL first generates the next token as in standard CoT and then applies the reduction rule $\g$, deleting the intermediate computations if the new sequence matches the pattern. Thus, $\operatorname{PENCIL}_{\g, f}$ can be formally defined as a set of sequence-to-sequence mappings $\{\operatorname{PENCIL}^1_{\g, f}, \operatorname{PENCIL}^2_{\g, f}, ~ \ldots \}$ which produces the entire thinking process on input $x$. 

\subsection{Alternated Generation and Reduction Process}
The alternated generation and reduction process of PENCIL can also be interpreted by grouping the $f$ functions that are interleaved by ineffective reduction steps (where $\g$ does not match the pattern):
\begin{equation} \label{eq_pencil}
\operatorname{PENCIL}^k_{\g, f} = f^{k_{r+1}} \circ \g \circ f^{k_{r}} \circ \g \circ \cdots \circ \g\circ f^{k_1} 
\end{equation}
where $k = \sum_{i=1}^{r+1} k_{i}$, and $k_{i}$ denotes the number of tokens generated between the $(i-1)$-th and $i$-th effective reduction. Here $r$ is the total number of effective reductions, assuming the model terminates with a \texttt{[EOS]} token indicating stop generation. This process alternates between two phases
\begin{equation}\label{eq:pencil_segment_generation}
\textbf{Generation:}~~~x^{(i)} \triangleq f^{k_{i}} \circ \underbrace{\g \cdots \g\circ f^{k_1}(x)}_{x^{(i-0.5)}},\quad \textbf{Reduction:}~~~x^{(i+0.5)} \triangleq \g \circ \underbrace{f^{k_{i}} \cdots \g\circ f^{k_1}(x)}_{x^{(i)}}
\end{equation}
where $x^{(i)}$ represents a generated sequence ending with \return except for $x^{(r+1)}$ which ends with the \texttt{[EOS]} token, and $x^{(i+0.5)}$ represents the reduced sequence after each effective reduction, with $x^{(0.5)} \triangleq x$ defined as the input prompt. The complete reasoning trace can be expressed as:
\begin{equation}
x ~~\overset{f^{k_1}}{\longrightarrow}~~ x^{(1)} ~~\overset{\g}{\longrightarrow}~~ x^{(1.5)} ~~\cdots~~ x^{(r+0.5)} ~~\overset{f^{k_{r+1}}}{\longrightarrow}~~ x^{(r+1)}
\end{equation}
That is, at each iteration $i$, PENCIL first generates from $x^{(i-0.5)}$, which could be understood as the prompt for the current iteration, to $x^{(i)}$, a prompt-response pair that ends with the \return token; then PENCIL applies the reduction rule to transform the prompt-response pair $x^{(i)}$ into a new prompt $x^{(i+0.5)}$ for the next iteration $i+1$.

\textbf{Space Efficiency}~~ To compare the space efficiency of CoT and PENCIL, we define \emph{scaffolded CoT} as the trace that would be produced by PENCIL but without actually removing the thoughts. (We refer to it as ``scaffolded" because it includes the special tokens that mark the hierarchical reasoning structure.) Formally, for any input sequence $x$, scaffolded CoT is defined as 
\begin{equation} \label{eq_scaffolded}
(x~~,~~ x^{(1)}\backslash x^{(0.5)}~~,~~ \ldots~~,~~ x^{(r+1)}\backslash x^{(r+0.5)})
\end{equation}
where $x^{(i)}\backslash x^{(i-0.5)}$ represents the tokens generated at iteration $i$. The maximal sequence length in PENCIL is $\max_{i\in [r+1]}\{|x^{(i)}|\}$, whereas the scaffolded CoT has a length of $n+k$. As we will demonstrate in Sec.~\ref{sec_usepencil}, their difference becomes particularly significant (i.e. $\max_{i\in [r+1]}\{|x^{(i)}|\} \ll n+k$) for complex reasoning tasks, where the context length of CoT can grow exponentially while the context length length of PENCIL is kept polynomial. 

\textbf{Time Efficiency}~~ Moreover, even though the total number of predicted tokens or reasoning steps is the same with or without reduction, PENCIL can significantly save computes by maintaining a substantially shorter context for each generated token. In other words, PENCIL can use significantly less amounts of computes to generate a token. We discuss in Appendix~\ref{app_compute} the computational benefit of PENCIL in terms of the FLOPs for generating a sequence and empirically quantify it in Sec.~\ref{sec_exp}.

\section{Thinking with PENCIL} \label{sec_usepencil}

We next demonstrate how the reduction rule can be applied to several concrete computationally intensive problems (including SAT, QBF and Einstein's puzzle) and how PENCIL could solve them space efficiently.

\subsection{SAT and QBF}

\textbf{SAT} is a canonical NP-complete problem. We consider the 3-SAT variant, where each instance is a Boolean formula in conjunctive normal form with clauses of length three, e.g. $(x_1 \vee \neg x_2 \vee x_3) \wedge (\neg x_1 \vee x_2 \vee \neg x_3)$. The ratio between number of clauses and variables is set as $4.3$, larger than the threshold $4.267$ where instances are empirically hardest to solve and satisfiability probability transitions sharply from $1$ to $0$~\citep{selman1996generating}. \textbf{QBF} is a PSPACE-complete problem that generalizes SAT by adding universal ($\forall$) and existential ($\exists$) quantifiers. Each instance is a quantified Boolean formula in Prenex normal form, e.g., $\exists x_1 \forall x_2 \exists x_3: (x_1 \vee \neg x_2 \vee x_3) \wedge (\neg x_1 \vee x_2 \vee \neg x_3)$. We set the probability of a variable being existentially quantified as $0.5$.

\begin{figure*}[t!]
    \centering
    \includegraphics[width=1\linewidth]{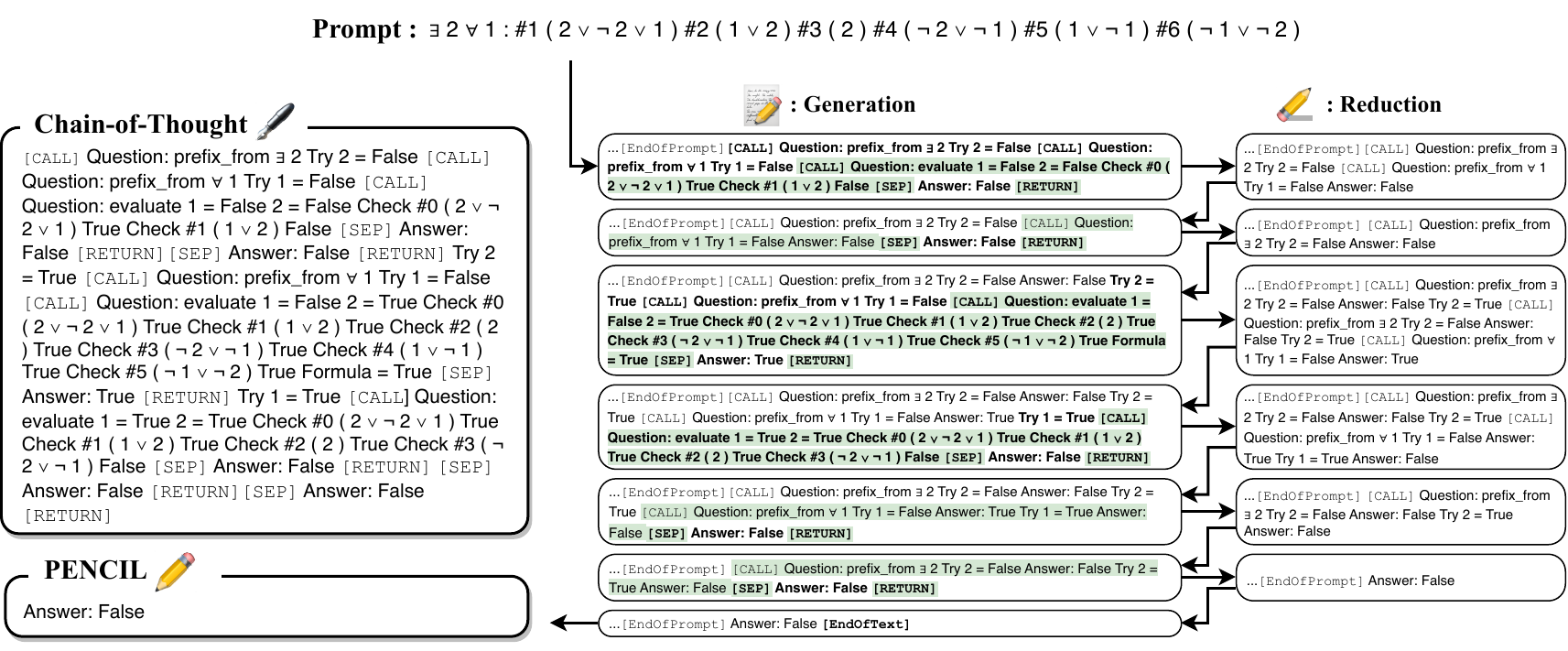}
    \vspace{-5pt}
    \caption{The complete thinking process of PENCIL on a small-sized QBF instance. The ``$...$" at the beginning of a thought hides the prompt. Bold text represents newly generated thoughts, while green highlights indicate thoughts to be removed.\vspace{-10pt}}  \label{fig_qbf}
\end{figure*}

\begin{figure*}[t!]
    \centering
    \includegraphics[width=1\linewidth]{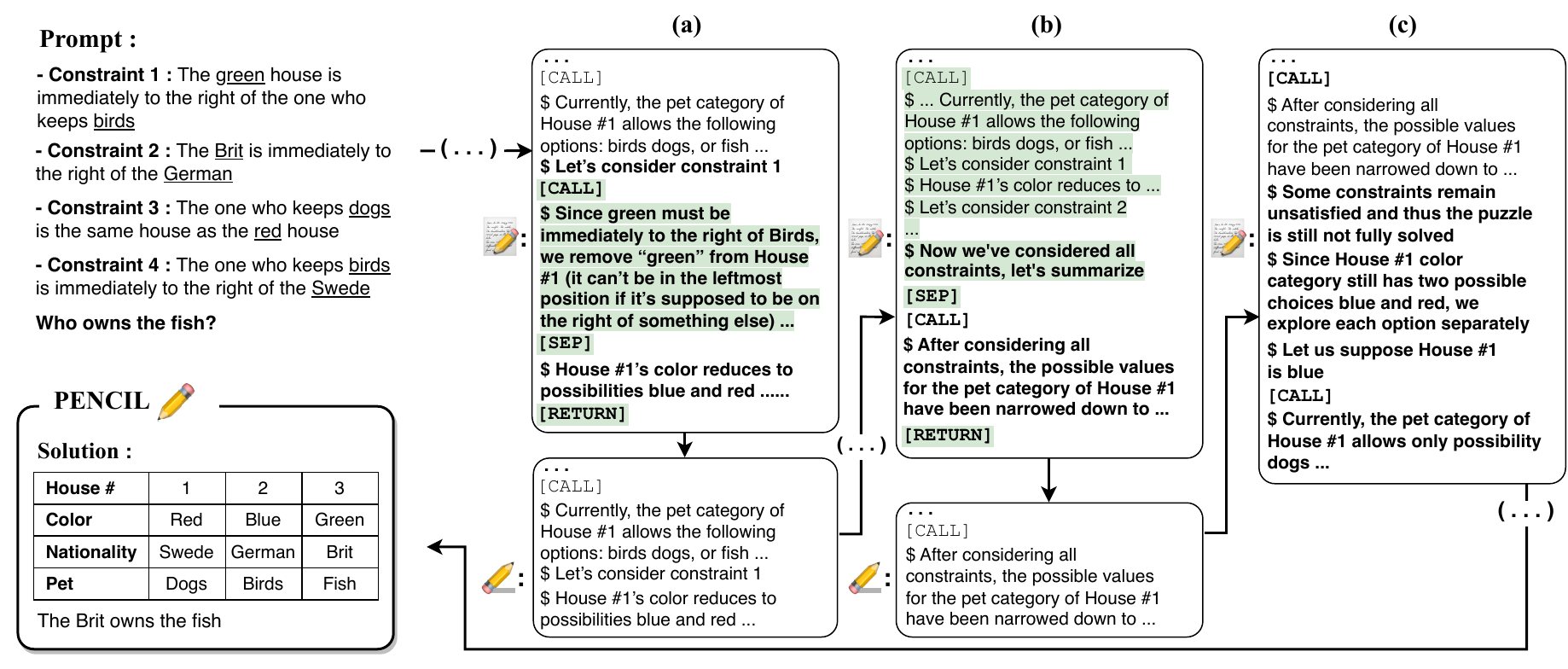}
    \vspace{-5pt}
    \caption{A simplified illustration of the algorithm for generating the thinking process for Einstein's puzzle (3$\times$3). The puzzle requires determining attributes of each house (Color: Blue/Green/Red, Nationality: Brit/German/Swede, Pet: Birds/Dogs/Fish) given a set of constraints, with each house having unique attributes. The ``$...$" in the arrow denotes omitted thoughts for conciseness; the ``$...$" in the box denotes omitted thought. See the complete example in Appendix~\ref{app_puzzle}.\vspace{-10pt}} \label{fig_puzzle}
\end{figure*}

We consider using the DPLL algorithm to solve the SAT problem, and solving the QBF problem by recursively handling quantifiers and trying variable values. The PENCIL reasoning traces are generated as we run the algorithm. Both algorithms recursively explore variable assignments by splitting on an unassigned variable $x_i$ and trying branches $x_i=\texttt{True}$ and $x_i=\texttt{False}$. The reduction rule wraps each branch with \call, \sep and \return, which creates a hierarchical binary tree structure. See Fig.~\ref{fig_qbf} for a concrete example.

Without the reduction rule, the context must retain the complete recursive trace — all partial assignments and intermediate formulas — leading to worst-case exponential space complexity $\mathcal O(2^n)$. For PENCIL, once a branch returns, its intermediate reasoning steps are discarded, therefore search paths will be discarded, preserving only the final answer. This reduces the maximal length to $\mathcal O(n)$, bounded by the search tree depth. As shown in Fig.~\ref{fig_statistics}, at $n=10$, the maximal sequence length drops from $13,804$ to $2,507$ for SAT and from $151,661$ to $649$ for QBF.

\subsection{Special Use Case and Einstein's Puzzle}

\textbf{Einstein's Puzzle}~~
We further consider Einstein's puzzle~\citep{prosser1993hybrid}, a classic constraint satisfaction problem where the model must learn to reason in natural language. Each problem instance consists of a list of houses with different attributes (e.g., color, nationality, pet), and given a set of constraints or clues as the prompt (e.g. the green house is immediately to the right of the one who keeps birds), the goal is to determine the attributes of each house through logical deduction. 
The original puzzle has size 5 $\times$ 5 (5 houses and 5 attribute categories, totaling 25 variables), which presents a significant challenge for language models to solve -- even GPT-4 fails to solve it with few-shot CoT~\citep{dziri2024faith}.

\textbf{Special Use Case: Tail Recursion / Summarization}~~
A notable special case of the reduction rule is when the answer itself leads to another question: when $\textbf{A} = \call~\textbf{T’}$, (\ref{eq_g}) becomes
\begin{equation} \label{eq_tail}
\begin{split}
&\textbf{C} ~ \call ~ \textbf{T} ~ \sep~ \call ~ \textbf{T'} ~ \return \\
\Rightarrow\quad &\textbf{C}~\call ~ \textbf{T'}.
\end{split}
\end{equation}
We refer to this special use case as \emph{tail recursion} since it mimics the tail recursion in functional programming where a function's returned value is another function call. A practical application of this rule is to simplify an originally complex question (\textbf{T}) by iteratively reducing it to a more tractable form (\textbf{T'}), or summarize a lengthy reasoning trace (\textbf{T}) into a more concise conclusion (\textbf{T'}).  In Sec.~\ref{sec_theory} we will use this to prove PENCIL's space efficiency.

\begin{figure*}[t!]
\centering
\subfigure[SAT]{
\includegraphics[height=75pt]{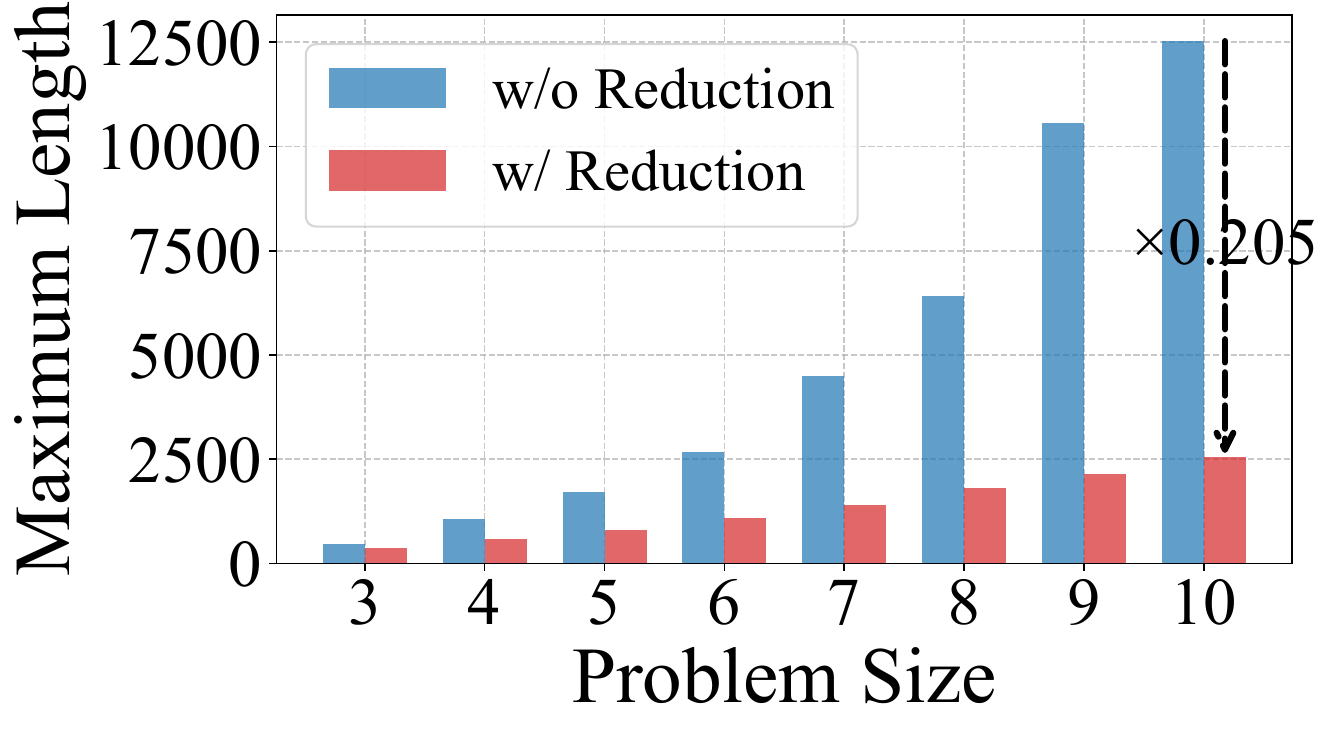}}
\hspace{15pt}
\subfigure[QBF]{
\includegraphics[height=75pt]{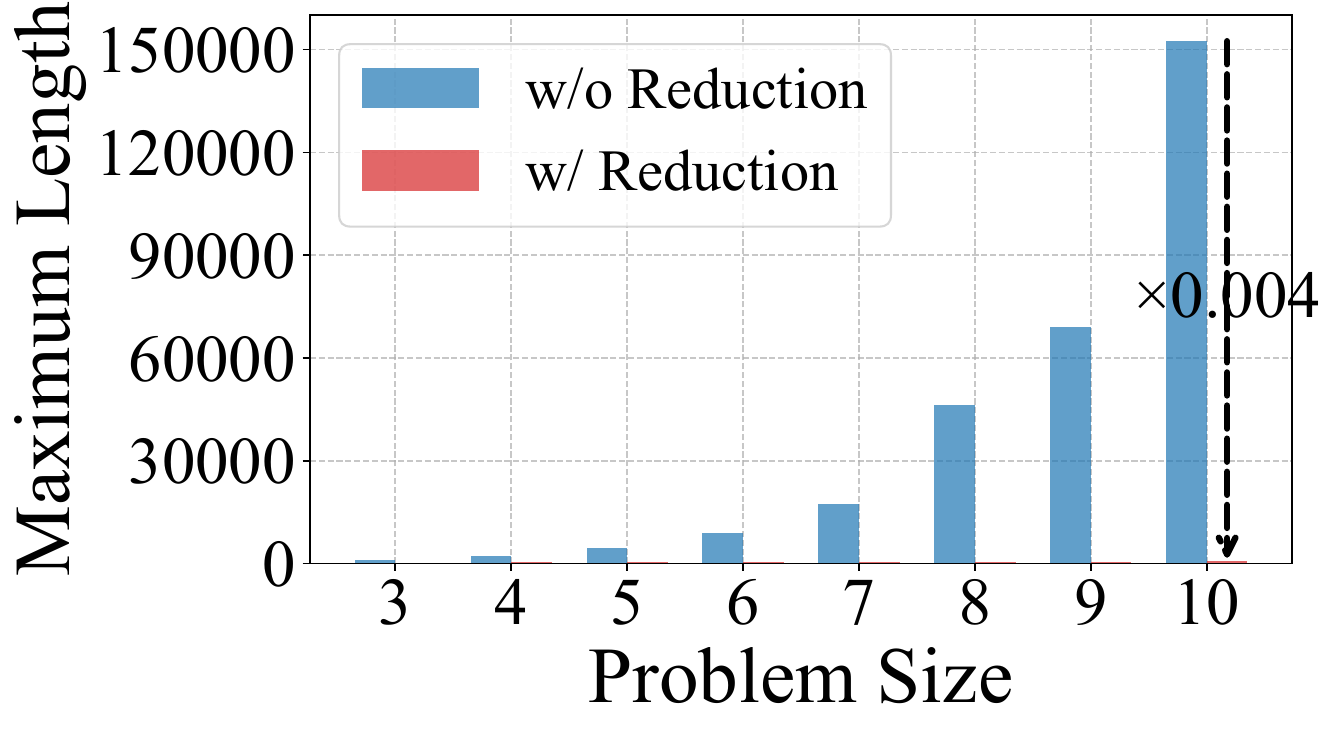}}
\hspace{15pt}
\subfigure[Einstein's puzzle]{
\includegraphics[height=75pt]{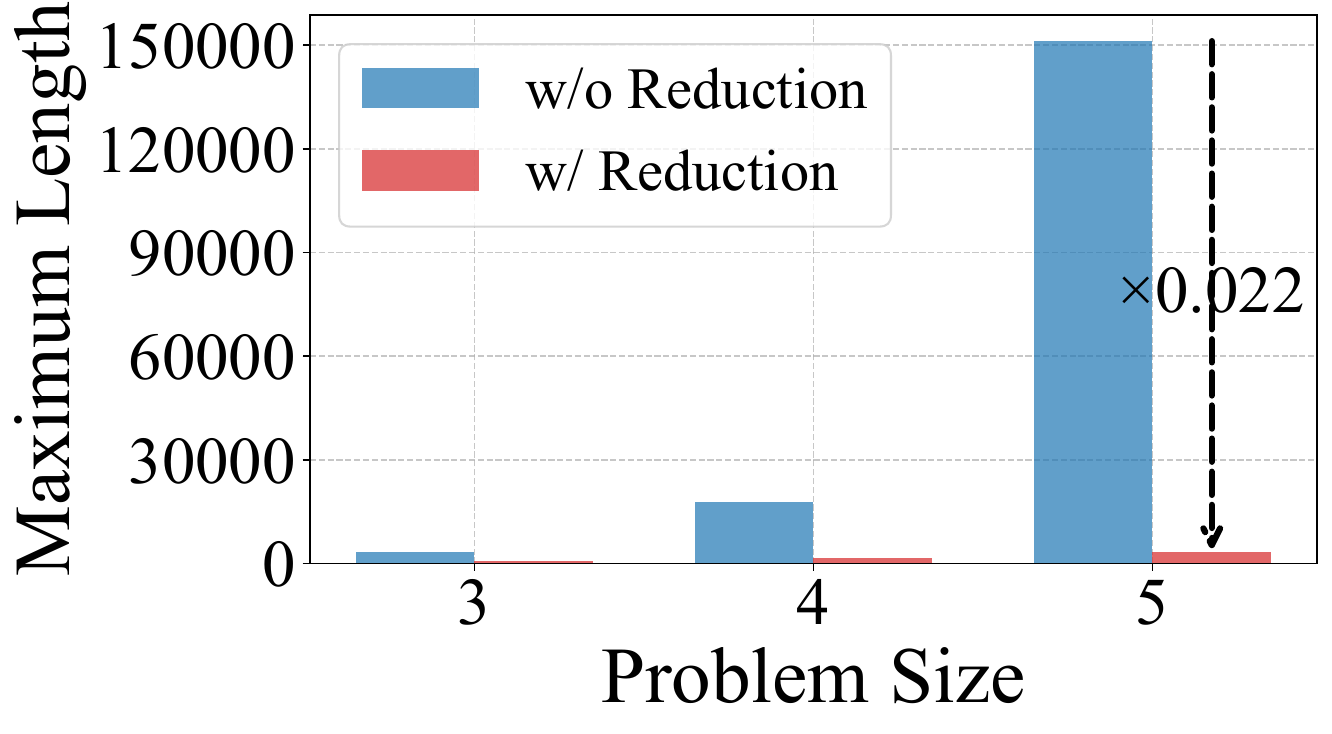}}
\vspace{-10pt}
\caption{Maximal sequence length with and without the reduction rule.\vspace{-10pt}} \label{fig_statistics}
\end{figure*}

See Fig.~\ref{fig_puzzle} for an illustration of how reduction rules can be applied to solve the Einstein puzzle, which consists of the following steps in one round of iteration: \textbf{(a)} Propagating constraints to eliminate impossible attributes combinations; \textbf{(b)} Use the tail recursion rule to merge results from constraints propagation and update the house states; \textbf{(c)} Iteratively explore different solution branches and discard intermediate reasoning steps from each branch, only preserving the final answer. As shown in Fig.~\ref{fig_statistics}, for 5$\times$5 puzzle, the maximal sequence reduces dramatically from $151,192$ to $3,335$ (without tail recursion this number is $7,705$).

\section{Experiments} \label{sec_exp}

\paragraph{Training} The training of PENCIL is nearly identical to that of CoT with a key difference being how the data is processed. Specifically, the training pipeline of PENCIL consists of the following steps:

\emph{For data preparation}, we implement the algorithms for solving the problems mentioned in Sec.~\ref{sec_usepencil}, generates the corresponding scaffolded CoT (\ref{eq_scaffolded}) with special tokens \call, \sep, \return as we run the algorithm, and then transform the long scaffolded CoT sequence into a set of smaller sequences $\{x^{(1)}, x^{(2)}, \ldots, x^{(r+1)}\}$ that ends with either \return or EOS. 

\emph{During training}, the loss function is crucial for the success of training PENCIL. In particular, we need not compute loss on every single token in each shorter sequence $x^{(i)}$, but only those that are generated starting from last iteration's reduction step (i.e. $x^{(i)}\backslash x^{(i-0.5)}$). We maintain an index for each $x^{(i)}$ for storing the information of the index where the model generation starts. We can either feed all shorter sequences into one batch (which is our default choice in experiments), which makes it possible to reuse the KV cache of other sequences to reduce training computes, or randomly sample from these sequences from all problem instance, which would lead to similar performance.

\begin{table*}[t!]
\centering
\resizebox{0.88\textwidth}{!}{ 
\hspace{-30pt}
\begin{minipage}{0.49\textwidth} 
    \centering
    \setlength{\tabcolsep}{1mm}
    \begin{tabular}{@{}l|l|c|c|c|c|c|c|c|c}
    \toprule
    \multirow{1}{*}{} & \multirow{1}{*}{$n=$} & 3 & 4 & 5 & 6 & 7 & 8 & 9 & 10 \\ \midrule
    \multirow{1}{*}{{\small Baseline}} & Acc. & 66 & 57 & 46 & 51 & 46 & 51 & 49 & 51 \\ \midrule
    \multirow{2}{*}{CoT} & Acc. & 100 & 100 & 100 & 99 & 84 & 63 & 54 & 50 \\
                         & TR.  & 99.6 & 99.0 & 98.0 & 96.2 & 74.0 & 69.9 & 63.8 & 51.4 \\ \midrule
    \multirow{2}{*}{{\small PENCIL}} & Acc. & 100 & 100 & 100 & 99 & 99 & 100 & 100 & 100 \\
                                     & TR.  & 100 & 99.0 & 97.1 & 95.9 & 91.8 & 93.3 & 92.9 & 83.0 \\ \bottomrule
    \end{tabular}
\end{minipage}
\hspace{40pt}
\begin{minipage}{0.49\textwidth} 
    \centering
    \setlength{\tabcolsep}{1mm}
    \begin{tabular}{@{}l|l|c|c|c|c|c|c|c|c}
    \toprule
    \multirow{1}{*}{} & \multirow{1}{*}{$n=$} & 3 & 4 & 5 & 6 & 7 & 8 & 9 & 10 \\ \midrule
    \multirow{1}{*}{{\small Baseline}} & Acc. & 90 & 82 & 85 & 68 & 60 & 69 & 71 & 66 \\ \midrule
    \multirow{2}{*}{CoT} & Acc. & 100 & 100 & 97 & 94 & 74 & 72 & 69 & 73 \\
                         & TR.  & 100 & 100 & 98.3 & 93.9 & 65.1 & 49.4 & 40.7 & 32.8 \\ \midrule
    \multirow{2}{*}{{\small PENCIL}} & Acc. & 100 & 100 & 100 & 100 & 100 & 100 & 100 & 100 \\
                                     & TR.  & 100 & 100 & 100 & 100 & 100 & 100 & 100 & 100 \\ \bottomrule
    \end{tabular}
\end{minipage}
} 
\vspace{-5pt}
\caption{Performance comparison on SAT (left) and QBF (right). Acc denotes the Accuracy (\%) and TR denotes the trace rate (\%).\vspace{-5pt}} \label{tab_qbf}
\end{table*}

\begin{figure*}[t!]
\centering
\subfigure[SAT]{
\includegraphics[height=70pt]{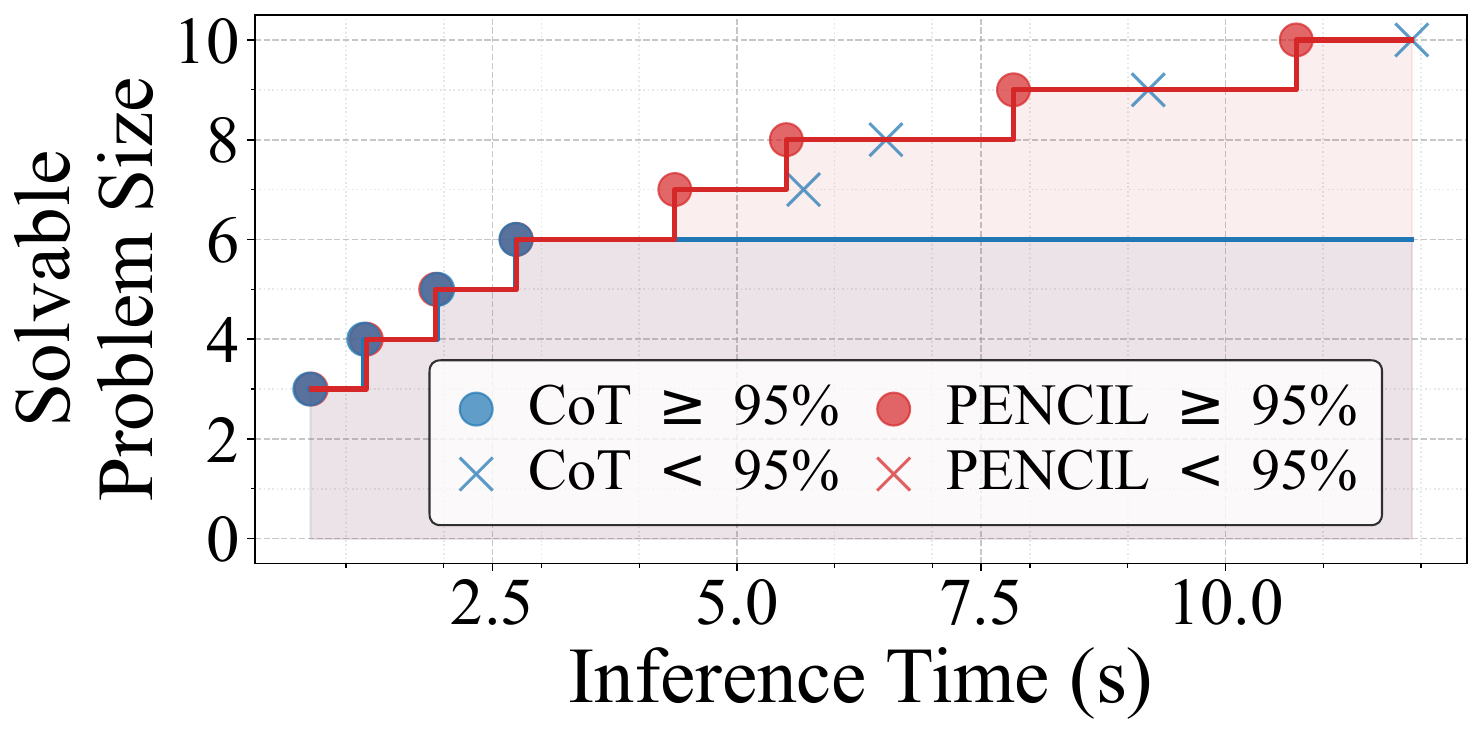}}
\hspace{20pt}
\subfigure[QBF]{
\includegraphics[height=70pt]{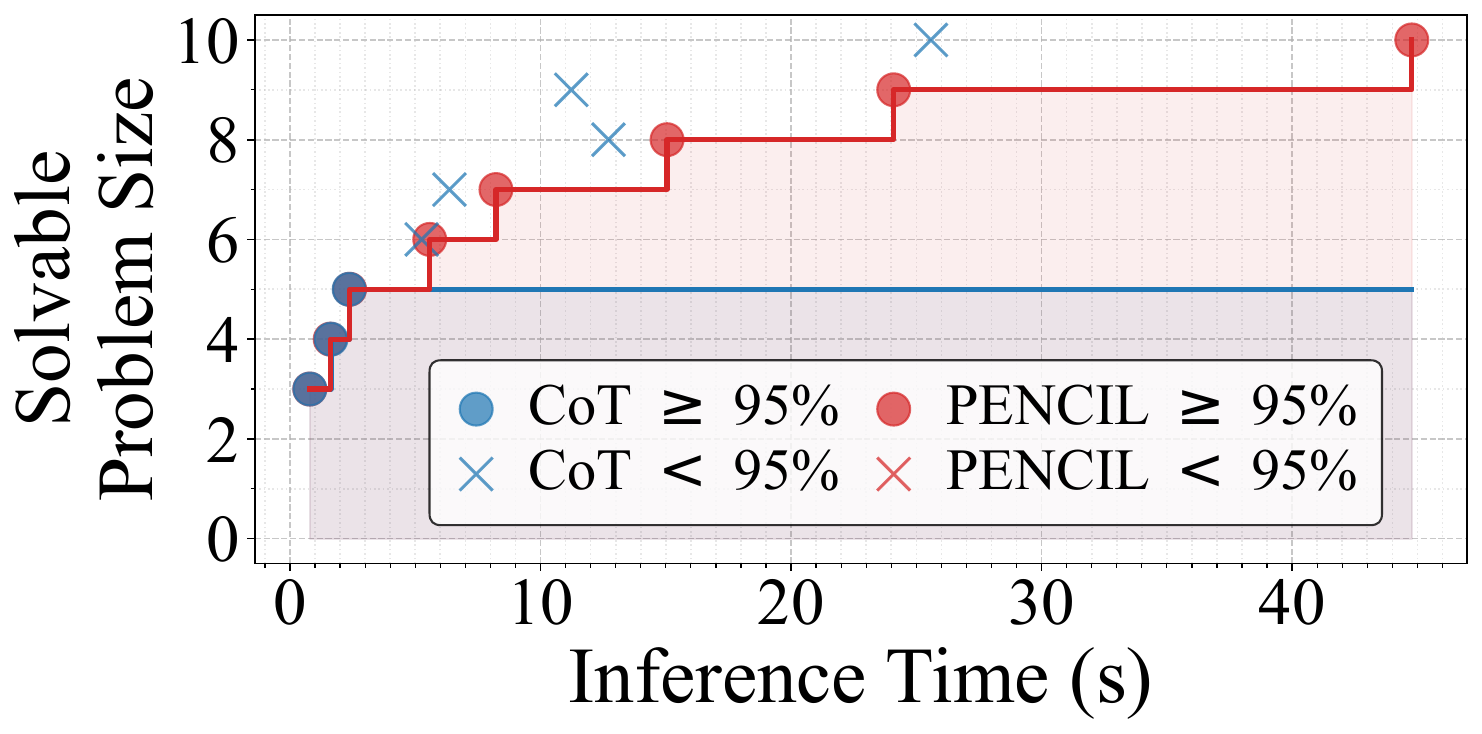}}
\hspace{20pt}
\subfigure[Einstein's puzzle]{
\includegraphics[height=70pt]{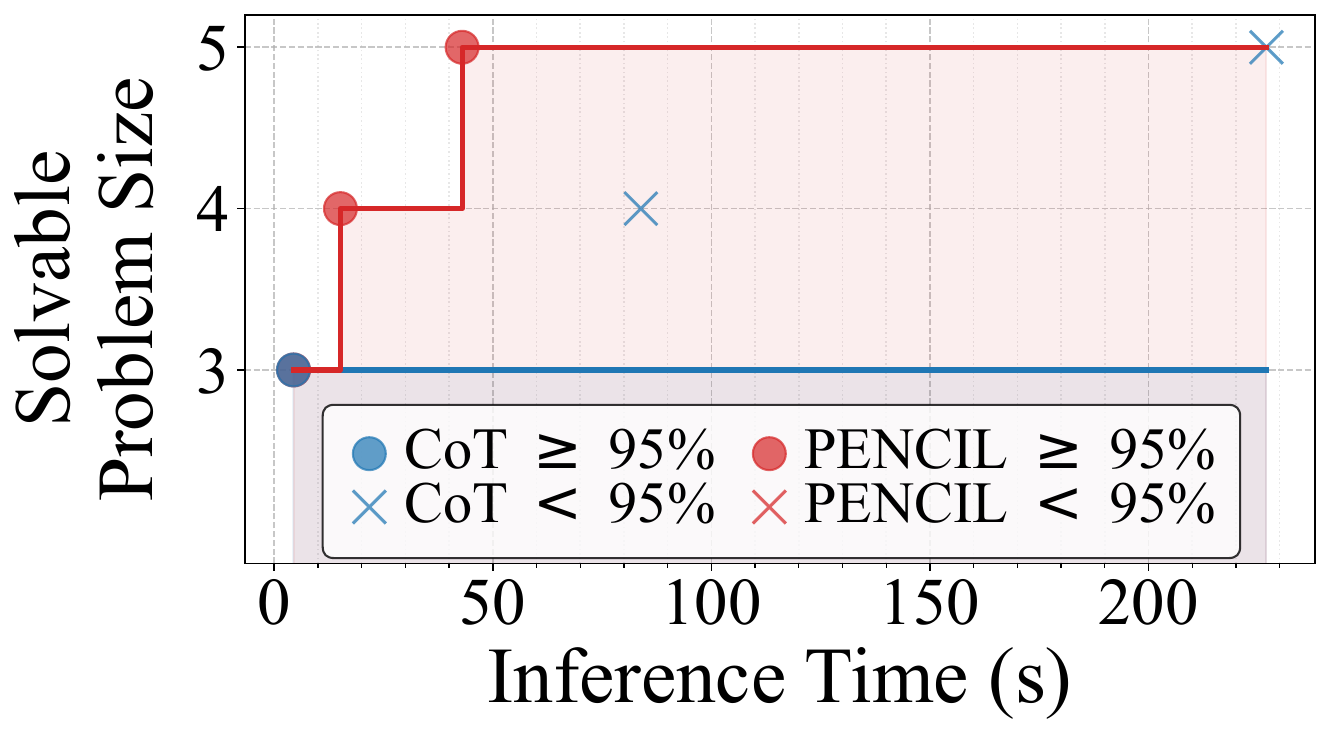}}
\vspace{-5pt}
\caption{Comparison of maximally solvable problem size (with $\geq 95$\% accuracy) given different inference time budgets.\vspace{-5pt}}  \label{fig_time}
\end{figure*}

\begin{figure*}[t!]
\centering
\subfigure[QBF $n=3$]{
\includegraphics[width=0.235\textwidth]{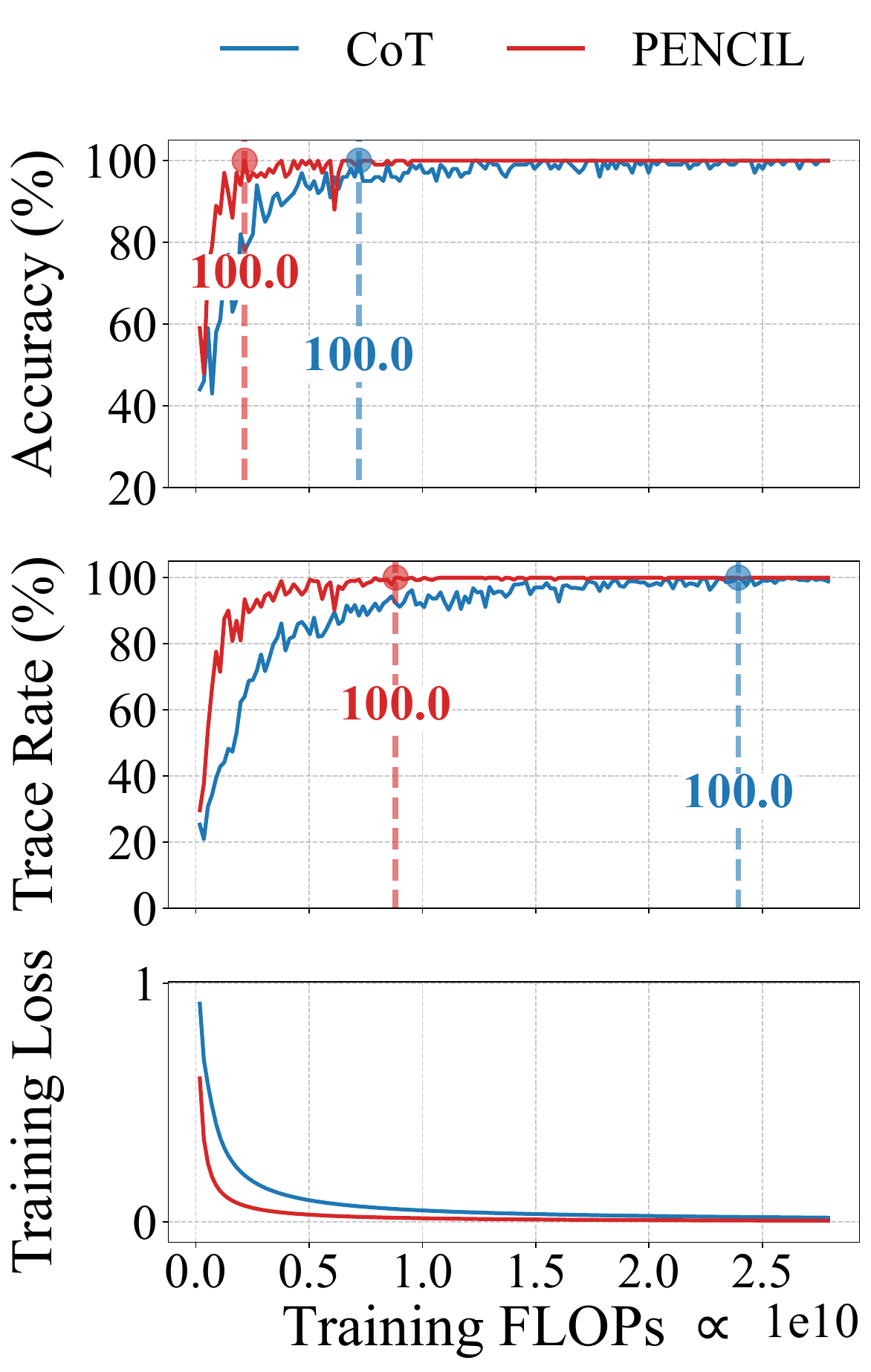}}
\subfigure[QBF $n=4$]{
\includegraphics[width=0.235\textwidth]{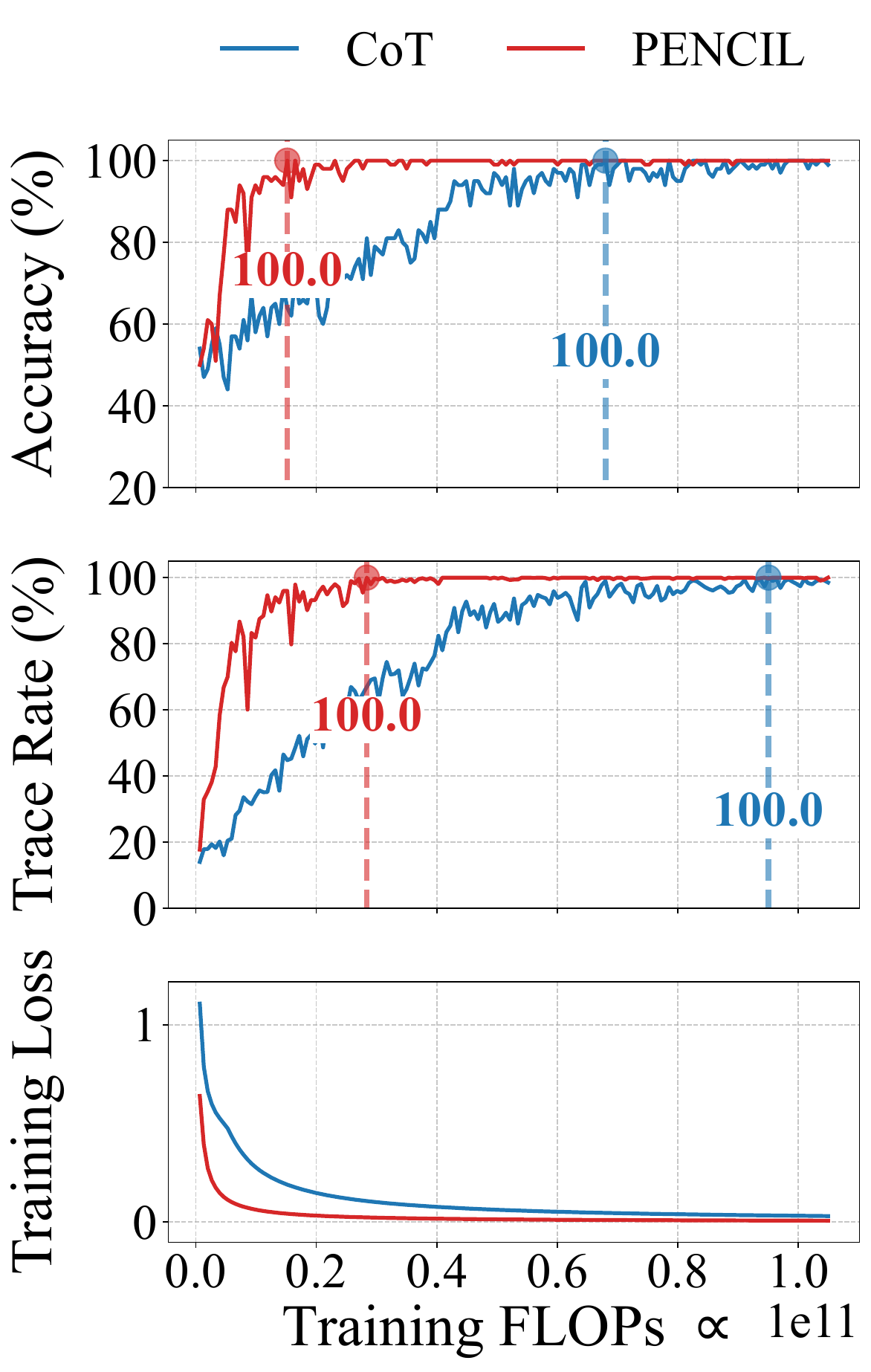}}
\subfigure[QBF $n=5$]{
\includegraphics[width=0.235\textwidth]{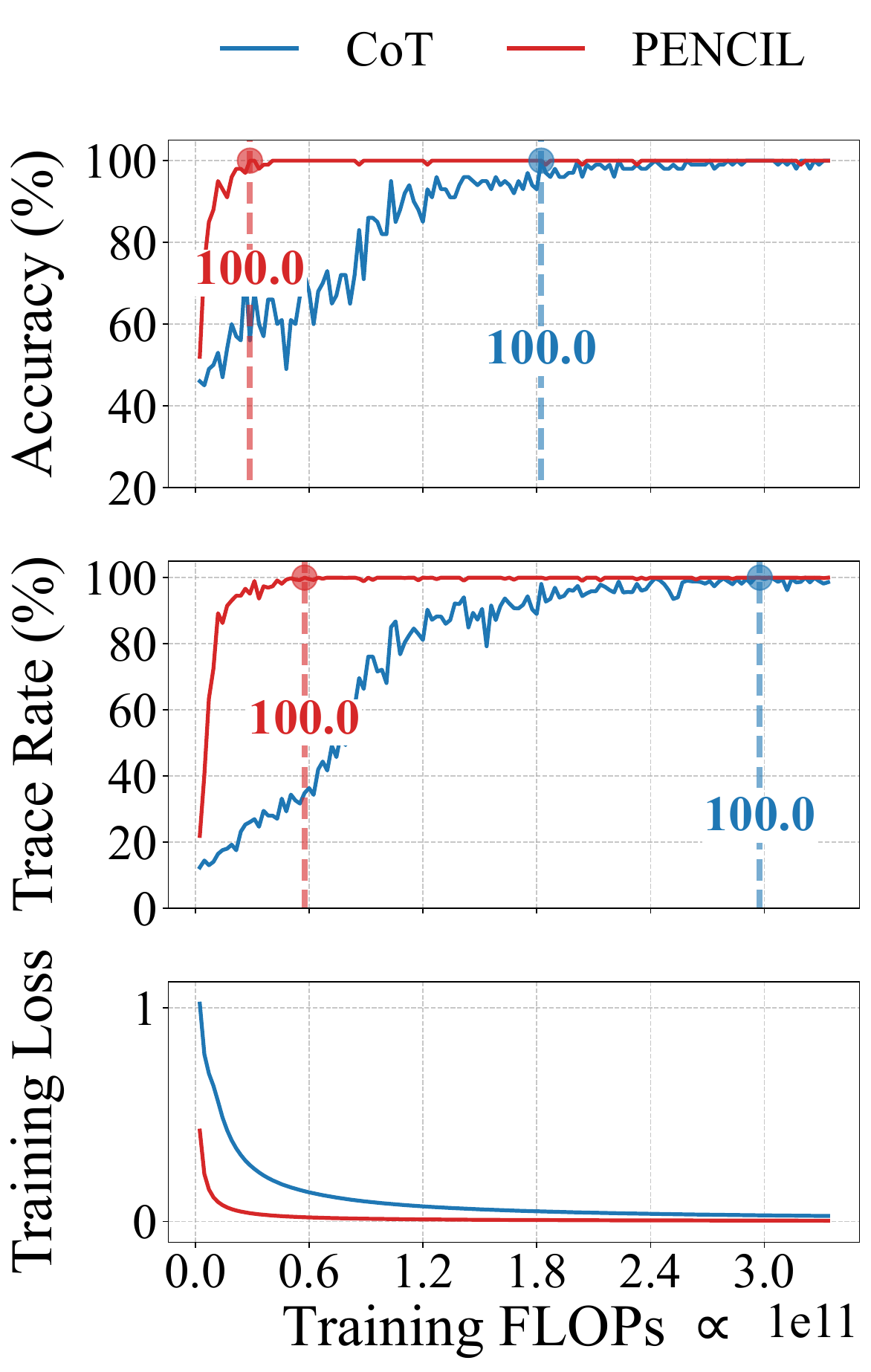}}
\subfigure[QBF $n=6$]{
\includegraphics[width=0.244\textwidth]{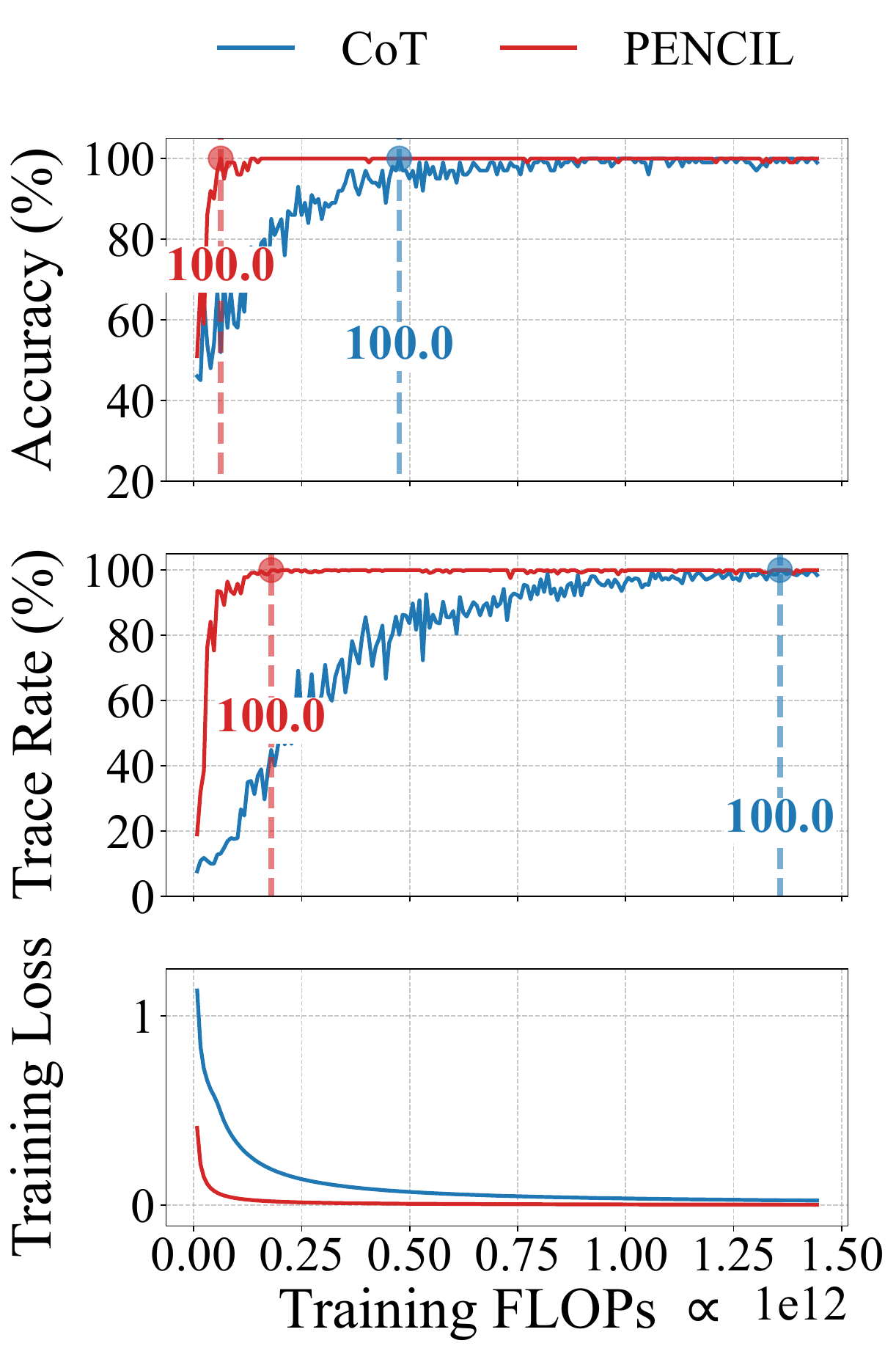}}
\vspace{-5pt}
\caption{Comparison of convergence speed for training on the QBF problem (with $n$ ranges from $3$ to $6$). Circles and vertical lines indicate the first time each method reaches optimal performance. The x-axis is the FLOPs budget for self-attention calculated based on (\ref{eq_flops}). See Fig.~\ref{fig_iterations} in Appendix where the x-axis is the number of iterations.\vspace{-5pt}} \label{fig_compute}
\end{figure*}

\textbf{Implementation}~~
Unless otherwise stated, for model architecture, we choose a 6-layer transformer with $10.63$M parameters for SAT and QBF problems, and an 8-layer transformer with $25.19$M parameters for the more complex Einstein's puzzle. All experiments use a context window of $2048$ tokens and rotary positional encoding~\citep{su2024roformer}; we truncate the sequence to the maximal context window to fit into the model for all methods if it exceeds the model's capacity. We use the same batch size and learning rate for all methods across experiments. 

\textbf{Experimental Setting}~~
We adopt the online learning setting where models train until convergence with unconstrained data access, mirroring the common scenarios in language model training where data can be effectively infinite~\citep{hoffmann2022training}. To ensure fair comparison, we include special tokens in the CoT, which might benefit its training by introducing additional structural information. 

\textbf{Evaluation Protocol}~~
We evaluate on a held-out validation set of 100 problem instances using two metrics: accuracy (percentage of correct predictions) and trace rate (percentage of reasoning steps matching the ground truth). For all problems, the labels for different classes are balanced.

\subsection{Results on SAT and QBF}

\textbf{Performance}~~ As shown in Table~\ref{tab_qbf}, both CoT and PENCIL significantly outperform the baseline (i.e. without using CoT) and achieve almost perfect performance ($\geq 95$\% accuracy) on small problems ($n \leq 6$ for SAT and $5$ for QBF).  While CoT's performance degrades sharply when problem size increases - dropping to $50$\% accuracy on SAT and $61$\% on QBF when $n=10$, PENCIL maintains near-perfect accuracy across all problem sizes. Furthermore, PENCIL's consistently high trace rate (above $90$\% for most problem sizes) indicates that it precisely follows the intended algorithm's reasoning steps.

\textbf{Test-Time Scalability}~~ Figure~\ref{fig_time} compares the test-time scalability of CoT and PENCIL given different inference time budget. For both SAT and QBF problems, PENCIL can effectively solve larger problems with increased time budget, handling up to $n=10$ with inference time around $10$s and $40$s respectively while CoT struggles to scale up even when given more time. This is because the reduction rule enables PENCIL to keep the reasoning length growing polynomially rather than exponentially with problem size, significantly reducing the requirement of space during generation.

\textbf{Convergence}~~ Figure~\ref{fig_compute} compares the convergence speed of CoT and PENCIL on the QBF problem given fixed training FLOPs budget calculated based on (\ref{eq_flops}). To isolate the impact of memory constraints, which limit the expressiveness of models, we allow unlimited context window length in this experiment, enabling both methods to potentially achieve perfect performance. Since since for larger problems CoT's space consumption becomes prohibitively large and will cause out-of-memory, we only report results for $n=3$ to $6$. The results show that PENCIL can effectively save computation, and thus can consistently achieve better performance under the same compute budget and converge faster, with the gap becoming more significant as problem size increases.
\begin{figure}[t!]
    \centering
    \begin{minipage}[t]{0.48\linewidth}
        \vspace{0pt} 
        \centering
        \resizebox{\textwidth}{!}{%
        \setlength{\tabcolsep}{4mm}{%
        \begin{tabular}{@{}c|l||c|c}
        \toprule
        \multirow{1}{*}{Puzzle Size} & & \multirow{1}{*}{CoT} & PENCIL \\
        \midrule
        \multirow{2}{*}{$5 \times 5$} & Accuracy (\%) & $25$ & $97$ \\
                                     & Trace Rate (\%) & $2.97$ & $78.27$ \\
        \midrule
        \multirow{2}{*}{$4 \times 4$} & Accuracy (\%) & $34$ & $100$ \\
                                     & Trace Rate (\%) & $8.33$ & $86.52$ \\
        \midrule
        \multirow{2}{*}{$3 \times 3$} & Accuracy (\%) & $99$ & $99$ \\
                                     & Trace Rate (\%) & $99.37$ & $99.66$ \\
        \bottomrule
        \end{tabular}}}
        \captionof{table}{Comparison of performance w/o and with the reduction rule on Einstein's puzzle.}
        \label{tab_puzzle}
    \end{minipage}
    \hfill
    \begin{minipage}[t]{0.48\linewidth}
        \vspace{0pt} \
        \centering
        \includegraphics[width=\linewidth]{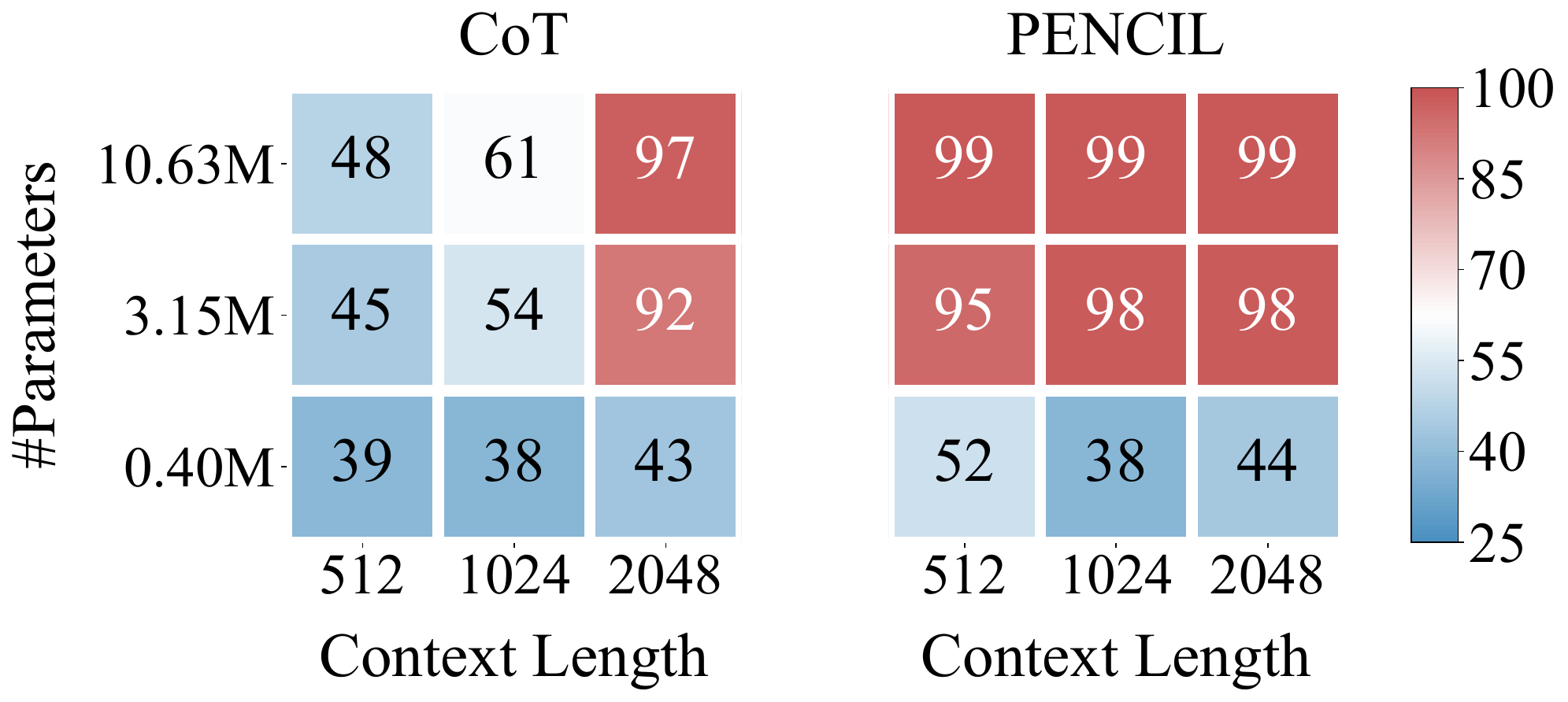}
        \vspace{-17pt}
        \captionof{figure}{Effects of model size and context length on accuracy for 3$\times$3 Einstein's puzzle.}
        \label{fig_size}
    \end{minipage}
    \vspace{-10pt}
\end{figure}

\subsection{Results on Einstein's Puzzle} \label{sec_puzzle}

Besides of the original challenging 5$\times$5 Einstein's puzzle, we also consider two simplified variants: 3$\times$3, 4$\times$4. For each size of the puzzle, we generate $10,000$ training instances by randomly assigning attributes to houses and deriving valid constraints that ensure a unique solution. The accuracy is evaluated based on whether the model can successfully answer the question "who owns the Fish" on $100$ unseen validation samples.

\textbf{Main Results}~~ Table~\ref{tab_puzzle} reports the performance with and without using the reduction rule to solve different sizes of Einstein's puzzles. Remarkably, PENCIL solves the original 5$\times$5 puzzle at 97\% accuracy using only 25.19M parameters (significantly smaller than GPT-2) and 2048 context length (the same as GPT-2), with average inference time per sample $42.98$s. In comparison, CoT fails catastrophically on puzzles beyond 3$\times$3, with accuracy dropping to 25\% (i.e. close to random guessing) on 5$\times$5 puzzles, despite using the same architecture and training.

\textbf{Effects of Model Size}~~ As shown in Figure~\ref{fig_size}, PENCIL achieves consistently high accuracy with sufficient model capacity (with $\geq$ 3.15M parameters, i.e. a $4$-layer transformer) even with limited context length, while CoT requires both larger models and longer context to achieve comparable performance. However, when the model size is too small, both methods fail to solve the puzzle effectively, suggesting a minimum model capacity threshold.

\newcommand{\acc}{{\text{accept}}}
\newcommand{\rej}{{\text{reject}}}
\newcommand{\tm}{\mathsf{TM}}
\newcommand{\state}{s}
\newcommand{\pencil}{\text{PENCIL}}
\newcommand{\scroll}{\text{SCROLL}}

\section{Universal Efficient Computation Power of PENCIL} \label{sec_theory}

A natural theoretical question arises as to \textbf{how powerful is PENCIL on general tasks?} In this section, we answer it by theoretically showing that \textbf{PENCIL can perform universal space-efficient computation for solving any task}. More specifically, we prove that PENCIL using transformers as the base model can simulate Turing machines with optimal efficiency in both time and space. Our main result can be summarized informally as follows (see detailed statements in \Cref{thm:main_formal}, \Cref{sec:proof_main}):

\begin{theorem}[Main, Informal] \label{thm:main}
For any Turing Machine, there exists a fixed finite-size transformer such that for any input, on which the computation of Turing Machine uses $T$ steps and $S$ space, PENCIL with this transformer computes the same output with $\mathcal O(T)$ generated tokens and using maximal context length of $\mathcal O(S)$. 
\end{theorem}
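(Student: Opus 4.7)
The plan is to construct, for any fixed Turing machine $\tm$, a fixed-size transformer $\tf$ whose iterated application under PENCIL simulates $\tm$ step-by-step while keeping the context bounded by $\mathcal O(S)$. The heart of the construction is a tail-recursion loop that maintains the invariant that, after each effective reduction, the context has the form $X\,\call\,c$, where $c$ is a compact $\mathcal O(S)$-sized encoding of $\tm$'s current configuration (state, head position, and the portion of tape used so far). A naive approach that rewrites $c$ once per simulated $\tm$ step would cost $\Theta(S)$ generated tokens per step and hence $\Theta(TS)$ tokens in total; to reach the claimed $\mathcal O(T)$ bound, I would amortize by batching $\Theta(|c|)=\Theta(S)$ $\tm$ steps between successive applications of the tail-recursion rule~(\ref{eq_tail}).

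Concretely, within one outer iteration the transformer would (i) simulate one $\tm$ step at a time, emitting only a short $\mathcal O(1)$-token ``diff'' per step that records the transition taken (new symbol written, head movement, new state); (ii) detect, via rotary-position-aware attention to the most recent $\call$ token, that $\Theta(|c|)$ diffs have now been emitted; and (iii) emit $\sep\,\call\,c'\,\return$, where $c'$ is the full configuration reached after those $\Theta(|c|)$ $\tm$ steps, triggering~(\ref{eq_tail}) and leaving $X\,\call\,c'$ in context. If $\tm$ halts mid-batch, the transformer instead emits $\sep\,\textbf{A}\,\return$, triggering the standard reduction~(\ref{eq_g}) with the halting output $\textbf{A}$. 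A short preamble at the start converts the raw input $x$ into $c_0$ in $\mathcal O(|x|)=\mathcal O(S)$ tokens. The transformer itself is built by adapting standard transformer-simulates-$\tm$ constructions~\citep{merrill2023expresssive,feng2024towards,li2024chain}: the feed-forward layers encode $\tm$'s finite transition table, while attention heads (a) aggregate signed head movements across diffs to locate the current head, (b) retrieve the most recent diff that wrote to that cell (or fall back to the corresponding value in $c$), and (c) emit the compacted $c'$ once the batch-end condition is met.

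For the accounting, each outer iteration simulates $\Theta(S)$ $\tm$ steps while producing $\Theta(S)$ diffs and one $\Theta(S)$-length compaction, i.e.\ $\mathcal O(S)$ generated tokens per iteration; there are $\mathcal O(T/S)$ iterations, summing to $\mathcal O(T)$ generated tokens in total, and the context within any iteration is at most $|X|+|\call\,c\,\delta_1\cdots\delta_{|c|}\,\sep\,\call\,c'\,\return|=\mathcal O(S)$. The main technical obstacle is step~(i): designing a single, input-length-independent transformer whose attention can, uniformly for arbitrarily large $S$, extract the current tape symbol under the head from a configuration-plus-diff sequence, and whose positional mechanism can robustly count diffs up to $|c|$ and trigger compaction exactly when a batch is full. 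This requires careful composition of attention heads for retrieval, counting, and conditional branching (emit-diff vs.\ emit-compaction) on top of the rotary positional encoding — the technical core that must be instantiated in the full proof.
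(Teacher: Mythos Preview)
Your plan matches the paper's approach closely: use tail recursion to maintain a compact configuration $c$, batch $\Theta(|c|)$ Turing-machine steps between compactions so that the $\Theta(|c|)$ rewrite cost is amortized, and argue that a fixed transformer can implement the per-step diff, the batch-full trigger, and the compaction. Two small caveats: your accounting ``$\mathcal O(T/S)$ iterations, each simulating $\Theta(S)$ steps'' implicitly treats the space as constant, but $|c|$ grows from $|x|$ up to $S$ over the run, so one needs the more careful summation in the paper's Proposition~\ref{thm:scroll_space_time} (the $\mathcal O(T)$ bound still holds because tokens-per-iteration and steps-per-iteration are both $\Theta(|c_i|)$); and the paper's formal construction does not use rotary encoding but rather the specific choices $n\mapsto n$ positional embedding, average-hard attention, and gated-ReLU feed-forward, with expressiveness established via a purpose-built language (\ours) rather than by adapting the prior transformer-simulates-TM constructions you cite.
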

\vspace{-5pt}

\begin{figure*}[t!]
    \centering
    \includegraphics[width=\linewidth]{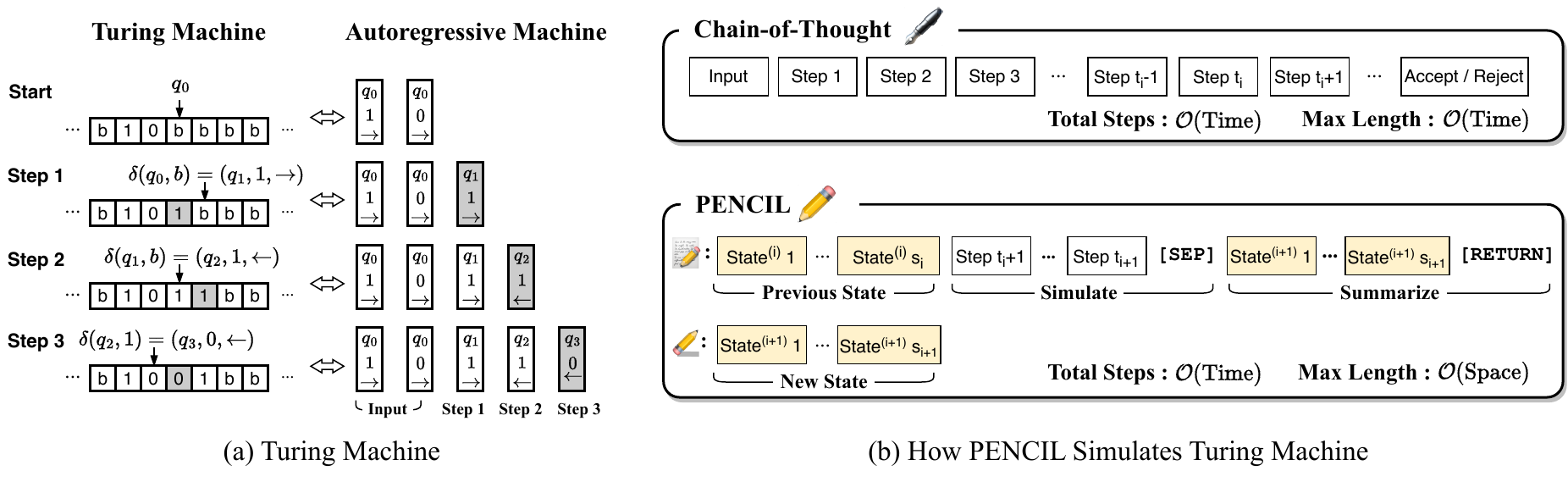}
    \vspace{-10pt}
    \caption{\textbf{(a)} Autoregressive machine encodes each step of Turing machine's computation as a triplet containing the state, tape symbol, and movement direction. \textbf{(b)} PENCIL simulates Turing machine iteratively using two phases: simulating computation steps from the previous state (i.e. State$^{(i)}$), and summarizing into the new state (i.e. State$^{(i+1)}$). \vspace{-10pt}}  \label{fig_theory}
\end{figure*}

This result is a significant improvement over CoT~\citep{perez2021attention,merrill2023expresssive}, which showed that even though CoT can perform universal computation, it does so inefficiently; that is, it requires the context length to grow at the same rate as the time $\mathcal O(T)$ required to solve those problems. This is a fundamental limitation since most meaningful computations require much less memory than time (i.e. $S \ll T$). To the best of our knowledge, PENCIL is the first approach that provably enables universal space-efficient computation for transformers. A direct implication of \Cref{thm:main} is:

\begin{corollary}\label{cor:main_separation}
With polynomial maximal context length (to input length), PENCIL with transformers can solve all problems in $\PSPACE$ (solvable by a Turing machine using polynomial space) while standard CoT with any poly-time next-token generator can only solve $\P$ (solvable by a Turing machine using polynomial time).\footnote{Poly-time next-token generator includes transformers, state-space models~\citep{gu2021efficiently}. Exceptions include usage of infinite-precision version of transcendental functions like $\exp$.}
\end{corollary}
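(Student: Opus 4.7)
The plan is to derive both halves of the corollary as direct consequences of Theorem~\ref{thm:main} together with elementary complexity bookkeeping; no new machinery is required.

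For the standard CoT direction, first I would observe that in standard chain-of-thought the context grows by exactly one token per call to the next-token predictor and never shrinks. Hence if the maximal context length is bounded by some polynomial $p(n)$ in the input length $n$, the total number of next-token invocations is at most $p(n)-n=\mathrm{poly}(n)$. Each invocation runs in polynomial time by the poly-time assumption on the next-token generator, so the entire computation terminates in $\mathrm{poly}(n)$ time and can be simulated by a polynomial-time Turing machine, placing the decided language in $\P$. (The converse containment, that every language in $\P$ is decidable by a suitable poly-time CoT, is standard via token-by-token simulation of the polynomial-time TM and is typically taken as given.)

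For the PENCIL direction I would invoke Theorem~\ref{thm:main} directly. Fix $L\in\PSPACE$ and let $\tm$ be a Turing machine deciding $L$ using space $S(n)=\mathrm{poly}(n)$; by the standard configuration-counting argument $\tm$ halts in time at most $T(n)\le 2^{\mathcal O(S(n))}$. Theorem~\ref{thm:main} then yields a fixed finite-size transformer for which PENCIL outputs $\tm$'s decision with $\mathcal O(T(n))$ generated tokens and maximal context length $\mathcal O(S(n))=\mathrm{poly}(n)$. Since the corollary constrains only the maximal context length, the possibly exponential token count is irrelevant, and $L$ is solved within the stated budget.

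The step I expect to require the most care, more a subtlety than an obstacle, is the framing: the separation compares the two paradigms in a single resource (maximal context length) while letting wall-clock generation time differ, which is precisely the regime in which PENCIL's reduction rule decouples memory from time whereas standard CoT's write-only context forces them to coincide. I would make this explicit to preempt the objection that PENCIL's advantage comes from generating more tokens rather than from reusing memory. Optionally, I would note in a remark that the PENCIL containment is in fact an equality, since any PENCIL run with polynomial maximal context length can itself be simulated by a polynomial-space Turing machine that stores the current sequence and re-runs the fixed-size transformer, yielding exactly $\PSPACE$; this strengthens the separation from $\P$ versus $\PSPACE$-inclusion to a genuine characterization.
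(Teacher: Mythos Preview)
Your proposal is correct and matches the paper's approach: the paper treats \Cref{cor:main_separation} as an immediate consequence of \Cref{thm:main} without spelling out a proof, and your argument is precisely the natural expansion of that implication—bounding CoT by $\P$ via the observation that polynomial context forces polynomially many poly-time calls, and obtaining $\PSPACE\subseteq\text{PENCIL}$ by applying \Cref{thm:main} to a poly-space machine whose running time is at most exponential. Your optional remark that PENCIL with polynomial maximal context is exactly $\PSPACE$ (by simulating the current sequence plus transformer in polynomial space) is a nice sharpening the paper does not state explicitly.
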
 
\vspace{-5pt}

It is well-known that $\P\subset \NP \subset \PSPACE$ and widely-conjectured that $\P\subsetneq \PSPACE$ (a weaker assumption than the famous $\P\neq \NP$ hypothesis). Under this assumption, any $\PSPACE$-complete problem (e.g., QBF~\citep{stockmeyer1973word} cannot be solved by CoT using polynomial length. In contrast, PENCIL can solve these problems with polynomial maximal context length, which is a significant improvement in the computational power. Similarly, under a slightly stronger yet widely-accepted assumption called Exponential Time Hypothesis (ETH, \citet{impagliazzo2001complexity}), even SAT requires exponential length and thus cannot be solved by CoT efficiently. 

\textbf{Proof Overview}~~ The remaining of this section provides an overview and the key ideas for the proof of \Cref{thm:main} (the complete proof is deferred to \Cref{sec:proof_main}). In high level, the proof contains the following three steps:

\textbf{\Cref{subsec:autoregressive_machine}:} We define a new abstract computational model called \emph{Autoregressive Machine}, which formalizes the computation of Turing machines as a process of generating token sequences (as illustrated in \Cref{fig_theory}(a)), and introduces the \emph{State Function} that transforms sequences into shorter ones (i.e. the state) representing Turing machine's configuration. 

\textbf{\Cref{subsec:space_efficient_simulation}:} We show that by iteratively \emph{simulating} the next-token generation of the autoregressive machine and \emph{summarizing} the generated tokens into its state periodically, PENCIL can reduce the maximal context length to the optimal level $\mathcal O(S)$ while maintaining the running time at $\mathcal O(T)$ (as illustrated in \Cref{fig_theory}(b)). 

\textbf{\Cref{subsec:expressiveness}:} Finally, we form a new programming language called Full-Access Sequence Processing (FASP) and use it to establish that, under specific choices of the model architecture, finite-sized transformers are expressive enough to perform this iterative generation and summarization process, thus completing the proof.

\subsection{Autoregressive Machine and Complexity}\label{subsec:autoregressive_machine} 

We begin by defining \emph{autoregressive machine} as a general purpose computation model. It subsumes Turing machine as an example and can potentially include other models.

\begin{definition}[Autoregressive Machine]\label{defi:iterative_ntg}
  An \emph{autoregressive machine} is a tuple $\mathcal{M} = (\Sigma, \nxt, \Sigma_{\text{accept}}, \Sigma_{\text{reject}})$, where $\Sigma$ is a finite alphabet, $\nxt: \Sigma^* \to \Sigma$ is a next-token generator, and $\Sigma_{\text{accept}}, \Sigma_{\text{reject}} \subseteq \Sigma$ are accepting and rejecting tokens. For any input $x \in \Sigma^*$, $\mathcal{M}$ iteratively generates one token per step and appends it to the current sequence, with $\f^{k}_\nxt(x)$ denoting the sequence after $k$ iterations where $\f_\nxt(x) = (x, \nxt(x))$. The machine halts when it generates a token in $\Sigma_{\text{accept}}$ or $\Sigma_{\text{reject}}$.
\end{definition}
\vspace{-5pt}

To achieve space efficiency in computation, we need a mechanism to compress the growing computational trace into a minimal representation that preserves only the information necessary for future steps. This is formalized by:


\begin{definition}[State Function]\label{defi:state_func}
A function $\state:\Sigma^*\to\Sigma^*$ is a \emph{state} function of a autoregressive machine $\mathcal{M} = (\Sigma, \nxt, \Sigma_{\text{accept}}, \Sigma_{\text{reject}})$ if \textbf{(1)} $\nxt\circ \state = \nxt$; \textbf{(2)} for all $x,x',y\in\Sigma^{*}$, $\state(x)=\state(x')\Longrightarrow\state((x,y)) = \state((x',y))$; \textbf{(3)} $\state^2=\state$.
\end{definition} 
\vspace{-5pt}

Note the above definition automatically implies that the future trace of the autoregressive machine $\mathcal{M}$, i.e. $\nxt^k(x)$ for $k=1,2,\ldots$, can be uniquely determined by the state function $\state$ of $\mathcal{M}$. Formally, $\state\circ\f_\nxt^k\circ \state = \state\circ\f_\nxt^k$ and $\nxt^{k+1}=\nxt^{k+1}\circ \state$ for any $k\ge 0$ (see \Cref{lem:state_func_property} in Appendix). In other words, $\state$ defines a equivalent class over all possible computational traces of $\mathcal{M}$, where the mapping $x\mapsto\state(x)$ erases irrelevant information while preserving the essential information for future computation.

  
Correspondingly, \emph{time complexity} $T(\mathcal{M},x)$ can be defined as the number of steps the autoregressive machine $\mathcal{M}$ takes to halt on input $x$. \emph{Space complexity} $S(\mathcal{M},\state,x)$ is defined as the maximal length of the states $(\state\circ\f_\nxt)^k(x)$ for all steps $k$. This quantifies the minimal memory required to continue the computation at any point.

\textbf{Example: Turing Machine}~~Indeed, Turing machine can be represented as a autoregressive machine by letting each transition step produce a single token (encoding the new state, symbol, and head movement), formalized as follows (see proof in Appendix~\ref{sec:TM}):

\begin{lemma}[Turing Machine as $\mathcal M$] \label{lemma:tm_as_write_only}
Any Turing machine $\tm$ can be represented as a autoregressive machine $\mathcal M_{\tm}$ associated with a state function $s_{\tm}$ that preserves its time and space complexity.
\end{lemma}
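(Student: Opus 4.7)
The plan is to simulate $\tm = (Q, \Gamma, \delta, q_0, q_{\text{acc}}, q_{\text{rej}})$ by an autoregressive machine that emits one token per TM transition, with each token encoding the triplet $(q, \gamma, m)$ consisting of the new control state, the symbol written under the head, and the head movement $m\in\{L,R\}$ (as sketched in Fig.~\ref{fig_theory}(a)). I would take $\Sigma = \Gamma \cup (Q\times\Gamma\times\{L,R\}) \cup \{\#_L,\#_H,\#_R\}$, where the three markers delimit a \emph{canonical configuration} $\#_L\,\tau_L\,\#_H\,q\,\#_R\,\tau_R$: the non-blank tape contents $\tau_L,\tau_R$ to the left and right of the head, together with the current control state $q$. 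Both $\nxt$ and $s_\tm$ begin with a common parser: given $x\in\Sigma^*$, locate the rightmost canonical prefix (or, if none exists, treat $x$ as a raw TM input $w\in\Gamma^*$ and form the initial configuration with $q_0$ at position $1$), then fold any trailing triplets into this configuration by applying $\delta$ one transition at a time; a fixed default is returned on malformed inputs. From the resulting configuration $(q,\text{tape},\text{pos})$, $\nxt$ outputs the triplet $\delta(q,\text{tape}[\text{pos}])$ and $s_\tm$ outputs its canonical encoding. Finally, I declare $\Sigma_{\text{accept}}$ (resp.\ $\Sigma_{\text{reject}}$) to consist of those triplets whose first coordinate is $q_{\text{acc}}$ (resp.\ $q_{\text{rej}}$).

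The three defining properties of a state function (\Cref{defi:state_func}) then reduce to the observation that $\nxt$ and $s_\tm$ depend on their argument only through the extracted configuration: (1) $\nxt\circ s_\tm = \nxt$ because the canonical encoding of a configuration parses back to the same configuration; (2) if $s_\tm(x)=s_\tm(x')$ then $(x,y)$ and $(x',y)$ share the extracted configuration for every $y\in\Sigma$, yielding $s_\tm((x,y))=s_\tm((x',y))$; (3) $s_\tm^2=s_\tm$ since $s_\tm(x)$ is already canonical and the parser is the identity on canonical inputs. For time complexity, each application of $\f_\nxt$ produces one triplet and advances $\tm$ by one transition, so $\mathcal M_\tm$ halts after exactly $T(\tm,x)$ steps. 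For space complexity, the canonical encoding occupies $|\tau_L|+|\tau_R|+O(1)$ tokens, which is bounded by the number of distinct tape cells ever touched, so $S(\mathcal M_\tm, s_\tm, x) = O(S(\tm, x))$.

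The main point requiring care is ensuring the parser inside $s_\tm$ is well-defined on \emph{all} of $\Sigma^*$ — not merely on reachable histories — so that the three state-function equations hold as literal identities rather than only on computation paths of interest. The rightmost-canonical-prefix scan above handles this uniformly: every sequence arising from iterating $\f_\nxt$ on a raw input has the shape (raw input)$\cdot$(triplet)$^{*}$, while every sequence arising from iterating $s_\tm\circ \f_\nxt$ has either the shape (canonical config) or (canonical config)$\cdot$(triplet), both of which the parser interprets correctly, and the fixed default makes the remaining malformed cases consistent. With this bookkeeping discharged, all required properties and complexity bounds follow, completing the proof.
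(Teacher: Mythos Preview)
Your construction is correct and establishes the lemma, but it takes a different route from the paper. The paper uses a \emph{homogeneous} alphabet $\Sigma = Q \times \mathcal{A} \times \{-1,0,1\}$ consisting solely of transition triplets: the next-token generator is simply $\nxt(x) = \delta(\update(c_0, x))$, where $\update$ folds the whole triplet sequence into the blank initial configuration $c_0$, and the state function is $s_\tm(x) = \embed(\update(c_0, x))$, where $\embed$ re-encodes a configuration as a canonical triplet sequence (sweep right across the non-blank region, then move back to the head). Because both the raw TM input and the compressed state are themselves triplet sequences, no parser or case split is needed---$\nxt$ and $s_\tm$ factor through $\update(c_0,\cdot)$ uniformly on all of $\Sigma^*$, and properties (1)--(3) then drop out of the translational equivalence of configurations. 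Your heterogeneous alphabet with explicit markers is arguably more transparent, but pays for that with the rightmost-canonical-prefix parser and the malformed-input default, whose correctness on arbitrary $\Sigma^*$ you must argue separately (as you do). One small slip: your canonical encoding places the bare control state $q$ between $\#_H$ and $\#_R$, yet $Q$ is not listed in your $\Sigma$; you need either $Q \subseteq \Sigma$ or to encode $q$ through one of the triplets already present.
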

\vspace{-5pt}

Specifically, the time complexity of $\mathcal{M}_{\tm}$ equals the Turing machine's total step count, and the space complexity of $\mathcal{M}_{\tm}$ matches the Turing machine’s actual memory usage.

\subsection{Space and Time-Efficient Simulation using PENCIL}\label{subsec:space_efficient_simulation}

For proving \Cref{thm:main}, we consider a variant of PENCIL with a simplified reduction rule $\g'$, whichis already powerful enough for space-efficient universal simulation
\begin{equation} \label{eq:scroll}
  \g' :\quad  \textbf{T} ~\sep~ \textbf{T'} ~\return ~~\Rightarrow~~
    \textbf{T'}
\end{equation}
This rule uses one less special token than our initial reduction rule (\ref{eq_g}) and can be expressed by it through tail recursion~\eqref{eq_tail}, i.e. by substituting $\textbf{T} \gets \call \textbf{T}$ and $\textbf{T'} \gets \call \textbf{T'}$ in (\ref{eq:scroll}). 
For our proof, we simply set $\textbf{T'} = s(\textbf{T})$, since the state contains the minimal information for future computation per definition. Therefore, the question remains as to when to trigger (\ref{eq:scroll}) and summarize:

\textbf{Space-Efficient but Time-Inefficient Solution}~~ Naively, if PENCIL trigger the summarization procedure too frequently, e.g. after every new token generation, the maximal context length would be bounded by $\mathcal O(S)$. However, this approach would blow up the time complexity by a factor proportional to the space complexity, i.e. $\mathcal O(S\cdot T)$, making it highly time inefficient.

\textbf{Space and Time Efficient Solution}~~To achieve both optimal time and space efficiency (up to some multiplicative constant), PENCIL can keep generating new tokens to simulate running autoregressive machine, and trigger the summarization only when the length of $\textbf{T}$ exceeds a certain threshold. In particular, we define the time (i.e. the number of tokens generated so far) to apply $i$-th summarization/reduction rule $t_i$ as the smallest integer larger than $t_{i-1}$ such that length of the state $\textbf{T'}$ is \emph{smaller than half} of the length of $\textbf{T} = f_\nxt^{t_i-t_{i-1}}\circ \state\circ f_\nxt^{t_{i-1}}(x)$, where $\state\circ f_\nxt^{t_{i-1}}(x)$ is the state reduced from the last iteration and $t_i-t_{i-1}$ is the number of simulated steps of autoregressive machine in the current iteration. Correspondingly, we can define the trace of PENCIL as $x^{(i)} = $
\begin{align}\label{eq:scroll_trace}
f_\nxt^{t_i-t_{i-1}}\circ \state\circ f_\nxt^{t_{i-1}}(x) ~,~ \sep ~,~ \state \circ f_\nxt^{t_i}(x) ~,~ \return
\end{align}
where $\state \circ f_\nxt^{t_i}(x)$ is equivalent to $\state \circ f_\nxt^{t_i-t_{i-1}} \circ \state \circ f_\nxt^{t_{i-1}}(x)$ per \Cref{defi:state_func}. In short, PENCIL compresses the current sequence into its state representation whenever its length exceeds twice the state length, enforcing space stays within $\mathcal O(S)$ without performing reductions so frequently that the overall time cost exceeds $\mathcal O(T)$. Formally:
\begin{proposition}\label{thm:scroll_space_time}
For any autoregressive machine $\mathcal{M} = (\Sigma, \nxt, \Sigma_{\text{accept}}, \Sigma_{\text{reject}})$ with state function $\state$, if a next-token predictor $f_{\pi_\theta}$ accurately generates the next token in (\ref{eq:scroll_trace}) from the prefix for every $i$ on any input $x\in\Sigma^*$, then $\operatorname{PENCIL}_{f_{\pi_\theta},\g'}$ can simulate $\mathcal{M}$ by using $\mathcal O(T(\mathcal{M},x))$ steps and a maximal sequence length of $\mathcal O(S(\mathcal{M},\state,x))$.
\end{proposition}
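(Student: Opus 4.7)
The plan is to decompose the claim into three parts: (i) \emph{correctness}, that PENCIL halts on the same token as $\mathcal{M}$; (ii) a \emph{space} bound of $\mathcal{O}(S)$ on every intermediate sequence; and (iii) a \emph{time} bound of $\mathcal{O}(T)$ on the total number of generated tokens, writing $T:=T(\mathcal{M},x)$ and $S:=S(\mathcal{M},\state,x)$. Correctness proceeds by induction on the reduction index $i$: after the $i$-th reduction the retained prefix equals $\state\circ f_\nxt^{t_i}(x)$, and the consequence of \Cref{defi:state_func} already noted in the text (that future tokens produced by $\nxt$ depend only on $\state$, formally $\nxt^{k+1}=\nxt^{k+1}\circ\state$) ensures the downstream trace from this shortened prefix is identical to $\mathcal{M}$'s trace from the full $f_\nxt^{t_i}(x)$. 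Combined with the hypothesis that $f_{\pi_\theta}$ correctly reproduces each next token of (\ref{eq:scroll_trace}), PENCIL halts on the same token as $\mathcal{M}$.

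For the space bound, let $\ell_i := |\state\circ f_\nxt^{t_i}(x)| \le S$ and $\Delta_i := t_i-t_{i-1}$. Because $t_i$ is by definition the \emph{smallest} index at which the halving condition $|\textbf{T'}|<|\textbf{T}|/2$ fires, at step $t_i-1$ it has not yet fired, so $|\state\circ f_\nxt^{t_i-1}(x)|\ge (\ell_{i-1}+\Delta_i-1)/2$. Since the left-hand side is at most $S$, this rearranges to $\ell_{i-1}+\Delta_i\le 2S+1$. Hence $\textbf{T}$ just before the $i$-th reduction has length at most $2S+1$, and the full reducible prefix $\textbf{T}\,\sep\,\textbf{T'}\,\return$ has length at most $3S+\mathcal{O}(1)$, giving the claimed $\mathcal{O}(S)$ bound.

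The main technical step is the time bound, where the worry is that appending a summary of size up to $S$ after every iteration could inflate the total cost by a factor of $\Theta(S)$. I would resolve this via an amortization argument driven by the triggering condition itself: $2\ell_i\le \ell_{i-1}+\Delta_i$, i.e. $\ell_i\le \tfrac12\ell_{i-1}+\tfrac12\Delta_i$. Summing over $i=1,\dots,r$ and telescoping gives $\ell_r+\tfrac12\sum_{i=1}^{r-1}\ell_i\le \tfrac12\ell_0+\tfrac12\sum_{i=1}^r\Delta_i$, so $\sum_{i=1}^r\ell_i = \mathcal{O}(\ell_0+T)=\mathcal{O}(T)$ using $\ell_0\le |x|\le S\le T$. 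Since iteration $i$ emits $\Delta_i$ simulation tokens plus at most $\ell_i+\mathcal{O}(1)$ summary/separator tokens, and since $\sum_i\Delta_i\le T$ and $r\le T$ (as $\Delta_i\ge 1$), the total token count is $\mathcal{O}(T)$. The main obstacle I anticipate is bookkeeping around the terminal iteration where $\mathcal{M}$ halts before the halving condition fires (the tail tokens are then charged directly to $T$ with no summary emitted) and the pre-first-reduction prefix (absorbed into the $\ell_0$ term); everything else is immediate from \Cref{defi:state_func} and the definition of $\g'$.
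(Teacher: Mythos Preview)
Your argument is correct and mirrors the paper's structure: the space bound from the failed halving test at $t_i-1$, and the time bound by telescoping an inequality tied to the trigger. The one difference worth noting is the time-bound inequality itself. You use $2\ell_i \le \ell_{i-1}+\Delta_i$, which is literally the halving condition at $t_i$ and holds for any state function; the paper instead writes the dual $\Delta_i+\ell_i\ge 2\ell_{i-1}$, which is what its subsequent telescoping matches but which does not follow from the definition of $t_i$ for an arbitrary $\state$ (it would need the state length to be nondecreasing---true for the Turing-machine encoding of \Cref{sec:TM}, but not guaranteed by \Cref{defi:state_func}). Either inequality telescopes to $\sum_i\ell_i=\mathcal{O}(T+\ell_0)$, so the conclusions agree, and your derivation is the more robust one for the proposition as stated. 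One small correction: in your chain $\ell_0\le|x|\le S\le T$, the step $\ell_0\le|x|$ is not guaranteed by \Cref{defi:state_func}; what you actually need (and what holds by definition of $S$) is just $\ell_0\le S$, after which both you and the paper tacitly assume $S=\mathcal{O}(T)$ to absorb it.
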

\vspace{-5pt}

Note that this result applies not just to Turing machines but to any computational model representable as an autoregressive machine with a suitable state function.


\subsection{$\ours$ for Proving Transformer Expressiveness} \label{subsec:expressiveness}


Now we complete our proof by demonstrating that transformers are indeed expressive enough to produce the trace described in (\ref{eq:scroll_trace}) under specific architectural choices including Gated ReLU activation~\citep{dauphin2017language}, positional embedding $n\mapsto n$, and average-hard casual attention~\citep{merrill2022saturated}; details are specified in Appendix~\ref{sec:transformer_architecture}. 

\textbf{Full-Access Sequence Processing ($\ours$)}~~ Since directly constructing a transformer is challenging, following \citet{weiss2021thinking,yang2024counting}, we developed a novel programming language called \ours, where each code in \ours~represents a sequence-to-embedding mapping. The language defines a set of primitives or functions (termed \emph{Closed Operators}) for writing the program and allows defining customized operators. Depending on positional encoding and activation functions allowed to use in transformers, \ours~has different variants supporting an increasingly rich family of primitives. A formal introduction of \ours~and variants thereof is deferred to Appendix~\ref{sec:FASP} and \ref{sec:special_cases}. 

\ours~is useful for the proof because it precisely characterizes the class of functions that can be implemented by finite-size transformers with average-hard casual attention, denoted by $\mathcal H_{\text{TF}}$ (see formal definition in  \Cref{def:transformer_function_classes}):
\begin{lemma}[\Cref{thm:fasp_equivalent_tf}, Informal]\label{thm:fasp_informal}$\ours = \mathcal H_{\text{TF}}$.
\end{lemma}
\vspace{-0.2cm}

Thus our proof reduces to a \ours~program that executes the space and time efficient solution mentioned in Sec~\ref{subsec:space_efficient_simulation}. 

\ours~is more powerful than RASP~\citep{weiss2021thinking} because RASP cannot simulate certain hard attentions, as its selection mechanism uses boolean condition function based only query and local key. As a result, RASP cannot even retrieve the value vector of the key that is the closest to the query, which is essential in our construction (\ours~code).

\textbf{Program in \ours}~~ In a high level, the program implements the following three operations simultaneously (which is exactly the premise of \Cref{thm:scroll_space_time}):

\textbf{1. Summarization Trigger}:  Detecting when to transition from the simulation phase to summarization phase by dynamically comparing the length of the current sequence with the its state length throughout the generation process.

\textbf{2. Simulation}: During the simulation phase, generating the next token of the autoregressive machine that simulates one step of the Turing machine.

\textbf{3. Summarization}: During the summarization phase, computing the compressed state representation of the current token sequence.

The specific program is given in Appendix~\ref{sec:proof_main}, which completes the proof for \Cref{thm:main}. This technique can also be used be prove other expressiveness results of transformer with CoT, e.g.~\citet{merrill2023expresssive}.







\section{Related Work} \label{app_rw}

\textbf{Structured Reasoning}~~ A key distinction of scaffolded reasoning approaches stems from how space is managed during generation. At one extreme, Chain-of-Thought~\citep{wei2022chain,nye2021show,kojima2022large} demonstrates that explicit intermediate steps can dramatically improve performance on complex problems, but at the expense of unbounded context growth. This limitation has motivated approaches leveraging reasoning structures such as trees and graphs~\citep{yao2024tree,long2023large,besta2024graph,sel2023algorithm,chen2022program}, adopting task decomposition strategies~\citep{zhou2022least,drozdov2022compositional,khot2022decomposed} or some other prompting frameworks~\citep{zelikman2022star,madaan2024self,suzgun2024meta}. While these methods enable more complex reasoning patterns, they require carefully crafted prompts and multiple rounds of interactions, whereas our approach achieves structured reasoning through end-to-end training.

\textbf{Test-Time Scaling}~~  Extensive work has focused on addressing the computational bottlenecks of transformer architectures, particularly during long-context inference. One line of research explores architectural innovations through sparse and local attention patterns~\citep{beltagy2020longformer,kitaev2020reformer,zaheer2020big,choromanski2020rethinking}, while another focuses on memory optimization via KV-cache reduction~\citep{zhang2023h2o,fu2024lazyllm,li2024snapkv,nawrot2024dynamic} and strategic context pruning~\citep{kim2022learned,jiang2023llmlingua}. However, these approaches still rely on next-token prediction that fundamentally treats the context window as append-only storage, leading to inherently inefficient space utilization.

\textbf{Computational Power / Limitation of CoT}~~ While transformers can theoretically simulate Turing machines~\citep{perez2021attention,merrill2023expresssive,strobl2024formal,nowak2024representational} with CoT, their practical computational power is fundamentally constrained by context window limitations. Particularly, we show that even with CoT, transformers with inherent space constraints would fail to handle problems requiring extensive intermediate computation. This parallels classical space-bounded computation theory, where memory management is crucial for algorithmic capabilities~\citep{arora2009computational,garrison2024memory}. Our approach addresses this limitation by enabling more efficient use of the context.

\section{Conclusion}

This paper identifies a fundamental limitation of CoT where intermediate computations accumulate indefinitely in the context, and introduce PENCIL to address this. PENCIL adopts a simple reduction rule to “clean up” unneeded reasoning steps as soon as they are finalized. This mechanism effectively transforms long traces into compact representations, enabling efficient training and allowing the model to handle substantially larger problems under the same memory constraints. Extensive experiments are done to demonstrate the effectiveness of PENCIL to handle inherently challenging tasks with less computes and smaller memory. 
\clearpage



\bibliography{ref}

\begin{thebibliography}{57}
\providecommand{\natexlab}[1]{#1}
\providecommand{\url}[1]{\texttt{#1}}
\expandafter\ifx\csname urlstyle\endcsname\relax
  \providecommand{\doi}[1]{doi: #1}\else
  \providecommand{\doi}{doi: \begingroup \urlstyle{rm}\Url}\fi

\bibitem[Arora and Barak(2009)]{arora2009computational}
S.~Arora and B.~Barak.
\newblock \emph{Computational complexity: a modern approach}.
\newblock Cambridge University Press, 2009.

\bibitem[Baader and Nipkow(1998)]{baader1998term}
F.~Baader and T.~Nipkow.
\newblock \emph{Term rewriting and all that}.
\newblock Cambridge university press, 1998.

\bibitem[Beltagy et~al.(2020)Beltagy, Peters, and Cohan]{beltagy2020longformer}
I.~Beltagy, M.~E. Peters, and A.~Cohan.
\newblock Longformer: The long-document transformer.
\newblock \emph{arXiv preprint arXiv:2004.05150}, 2020.

\bibitem[Besta et~al.(2024)Besta, Blach, Kubicek, Gerstenberger, Podstawski, Gianinazzi, Gajda, Lehmann, Niewiadomski, Nyczyk, et~al.]{besta2024graph}
M.~Besta, N.~Blach, A.~Kubicek, R.~Gerstenberger, M.~Podstawski, L.~Gianinazzi, J.~Gajda, T.~Lehmann, H.~Niewiadomski, P.~Nyczyk, et~al.
\newblock Graph of thoughts: Solving elaborate problems with large language models.
\newblock In \emph{Proceedings of the AAAI Conference on Artificial Intelligence}, volume~38, pages 17682--17690, 2024.

\bibitem[Chen et~al.(2022)Chen, Ma, Wang, and Cohen]{chen2022program}
W.~Chen, X.~Ma, X.~Wang, and W.~W. Cohen.
\newblock Program of thoughts prompting: Disentangling computation from reasoning for numerical reasoning tasks.
\newblock \emph{arXiv preprint arXiv:2211.12588}, 2022.

\bibitem[Choromanski et~al.(2020)Choromanski, Likhosherstov, Dohan, Song, Gane, Sarlos, Hawkins, Davis, Mohiuddin, Kaiser, et~al.]{choromanski2020rethinking}
K.~Choromanski, V.~Likhosherstov, D.~Dohan, X.~Song, A.~Gane, T.~Sarlos, P.~Hawkins, J.~Davis, A.~Mohiuddin, L.~Kaiser, et~al.
\newblock Rethinking attention with performers.
\newblock \emph{arXiv preprint arXiv:2009.14794}, 2020.

\bibitem[Dauphin et~al.(2017)Dauphin, Fan, Auli, and Grangier]{dauphin2017language}
Y.~N. Dauphin, A.~Fan, M.~Auli, and D.~Grangier.
\newblock Language modeling with gated convolutional networks.
\newblock In \emph{International conference on machine learning}, pages 933--941. PMLR, 2017.

\bibitem[Drozdov et~al.(2022)Drozdov, Sch{\"a}rli, Aky{\"u}rek, Scales, Song, Chen, Bousquet, and Zhou]{drozdov2022compositional}
A.~Drozdov, N.~Sch{\"a}rli, E.~Aky{\"u}rek, N.~Scales, X.~Song, X.~Chen, O.~Bousquet, and D.~Zhou.
\newblock Compositional semantic parsing with large language models.
\newblock In \emph{The Eleventh International Conference on Learning Representations}, 2022.

\bibitem[Dziri et~al.(2024)Dziri, Lu, Sclar, Li, Jiang, Lin, Welleck, West, Bhagavatula, Le~Bras, et~al.]{dziri2024faith}
N.~Dziri, X.~Lu, M.~Sclar, X.~L. Li, L.~Jiang, B.~Y. Lin, S.~Welleck, P.~West, C.~Bhagavatula, R.~Le~Bras, et~al.
\newblock Faith and fate: Limits of transformers on compositionality.
\newblock \emph{Advances in Neural Information Processing Systems}, 36, 2024.

\bibitem[Feng et~al.(2024)Feng, Zhang, Gu, Ye, He, and Wang]{feng2024towards}
G.~Feng, B.~Zhang, Y.~Gu, H.~Ye, D.~He, and L.~Wang.
\newblock Towards revealing the mystery behind chain of thought: a theoretical perspective.
\newblock \emph{Advances in Neural Information Processing Systems}, 36, 2024.

\bibitem[Fu et~al.(2024)Fu, Cho, Merth, Mehta, Rastegari, and Najibi]{fu2024lazyllm}
Q.~Fu, M.~Cho, T.~Merth, S.~Mehta, M.~Rastegari, and M.~Najibi.
\newblock Lazyllm: Dynamic token pruning for efficient long context llm inference.
\newblock \emph{arXiv preprint arXiv:2407.14057}, 2024.

\bibitem[Gao et~al.(2023)Gao, Xiong, Gao, Jia, Pan, Bi, Dai, Sun, and Wang]{gao2023retrieval}
Y.~Gao, Y.~Xiong, X.~Gao, K.~Jia, J.~Pan, Y.~Bi, Y.~Dai, J.~Sun, and H.~Wang.
\newblock Retrieval-augmented generation for large language models: A survey.
\newblock \emph{arXiv preprint arXiv:2312.10997}, 2023.

\bibitem[Garrison(2024)]{garrison2024memory}
E.~Garrison.
\newblock Memory makes computation universal, remember?
\newblock \emph{arXiv preprint arXiv:2412.17794}, 2024.

\bibitem[Gu et~al.(2021)Gu, Goel, and R{\'e}]{gu2021efficiently}
A.~Gu, K.~Goel, and C.~R{\'e}.
\newblock Efficiently modeling long sequences with structured state spaces.
\newblock \emph{arXiv preprint arXiv:2111.00396}, 2021.

\bibitem[Guo et~al.(2025)Guo, Yang, Zhang, Song, Zhang, Xu, Zhu, Ma, Wang, Bi, et~al.]{guo2025deepseek}
D.~Guo, D.~Yang, H.~Zhang, J.~Song, R.~Zhang, R.~Xu, Q.~Zhu, S.~Ma, P.~Wang, X.~Bi, et~al.
\newblock Deepseek-r1: Incentivizing reasoning capability in llms via reinforcement learning.
\newblock \emph{arXiv preprint arXiv:2501.12948}, 2025.

\bibitem[Hoffmann et~al.(2022)Hoffmann, Borgeaud, Mensch, Buchatskaya, Cai, Rutherford, Casas, Hendricks, Welbl, Clark, et~al.]{hoffmann2022training}
J.~Hoffmann, S.~Borgeaud, A.~Mensch, E.~Buchatskaya, T.~Cai, E.~Rutherford, D.~d.~L. Casas, L.~A. Hendricks, J.~Welbl, A.~Clark, et~al.
\newblock Training compute-optimal large language models.
\newblock \emph{arXiv preprint arXiv:2203.15556}, 2022.

\bibitem[Impagliazzo and Paturi(2001)]{impagliazzo2001complexity}
R.~Impagliazzo and R.~Paturi.
\newblock On the complexity of k-sat.
\newblock \emph{Journal of Computer and System Sciences}, 62\penalty0 (2):\penalty0 367--375, 2001.

\bibitem[Jiang et~al.(2023)Jiang, Wu, Lin, Yang, and Qiu]{jiang2023llmlingua}
H.~Jiang, Q.~Wu, C.-Y. Lin, Y.~Yang, and L.~Qiu.
\newblock Llmlingua: Compressing prompts for accelerated inference of large language models.
\newblock \emph{arXiv preprint arXiv:2310.05736}, 2023.

\bibitem[Kaplan et~al.(2020)Kaplan, McCandlish, Henighan, Brown, Chess, Child, Gray, Radford, Wu, and Amodei]{kaplan2020scaling}
J.~Kaplan, S.~McCandlish, T.~Henighan, T.~B. Brown, B.~Chess, R.~Child, S.~Gray, A.~Radford, J.~Wu, and D.~Amodei.
\newblock Scaling laws for neural language models.
\newblock \emph{arXiv preprint arXiv:2001.08361}, 2020.

\bibitem[Khot et~al.(2022)Khot, Trivedi, Finlayson, Fu, Richardson, Clark, and Sabharwal]{khot2022decomposed}
T.~Khot, H.~Trivedi, M.~Finlayson, Y.~Fu, K.~Richardson, P.~Clark, and A.~Sabharwal.
\newblock Decomposed prompting: A modular approach for solving complex tasks.
\newblock \emph{arXiv preprint arXiv:2210.02406}, 2022.

\bibitem[Kim et~al.(2022)Kim, Shen, Thorsley, Gholami, Kwon, Hassoun, and Keutzer]{kim2022learned}
S.~Kim, S.~Shen, D.~Thorsley, A.~Gholami, W.~Kwon, J.~Hassoun, and K.~Keutzer.
\newblock Learned token pruning for transformers.
\newblock In \emph{Proceedings of the 28th ACM SIGKDD Conference on Knowledge Discovery and Data Mining}, pages 784--794, 2022.

\bibitem[Kitaev et~al.(2020)Kitaev, Kaiser, and Levskaya]{kitaev2020reformer}
N.~Kitaev, {\L}.~Kaiser, and A.~Levskaya.
\newblock Reformer: The efficient transformer.
\newblock \emph{arXiv preprint arXiv:2001.04451}, 2020.

\bibitem[Kojima et~al.(2022)Kojima, Gu, Reid, Matsuo, and Iwasawa]{kojima2022large}
T.~Kojima, S.~S. Gu, M.~Reid, Y.~Matsuo, and Y.~Iwasawa.
\newblock Large language models are zero-shot reasoners.
\newblock \emph{Advances in neural information processing systems}, 35:\penalty0 22199--22213, 2022.

\bibitem[Li et~al.(2024{\natexlab{a}})Li, Huang, Yang, Venkitesh, Locatelli, Ye, Cai, Lewis, and Chen]{li2024snapkv}
Y.~Li, Y.~Huang, B.~Yang, B.~Venkitesh, A.~Locatelli, H.~Ye, T.~Cai, P.~Lewis, and D.~Chen.
\newblock Snapkv: Llm knows what you are looking for before generation.
\newblock \emph{arXiv preprint arXiv:2404.14469}, 2024{\natexlab{a}}.

\bibitem[Li et~al.(2024{\natexlab{b}})Li, Liu, Zhou, and Ma]{li2024chain}
Z.~Li, H.~Liu, D.~Zhou, and T.~Ma.
\newblock Chain of thought empowers transformers to solve inherently serial problems.
\newblock \emph{arXiv preprint arXiv:2402.12875}, 2024{\natexlab{b}}.

\bibitem[Liu et~al.(2024)Liu, Lin, Hewitt, Paranjape, Bevilacqua, Petroni, and Liang]{liu2024lost}
N.~F. Liu, K.~Lin, J.~Hewitt, A.~Paranjape, M.~Bevilacqua, F.~Petroni, and P.~Liang.
\newblock Lost in the middle: How language models use long contexts.
\newblock \emph{Transactions of the Association for Computational Linguistics}, 12:\penalty0 157--173, 2024.

\bibitem[Long(2023)]{long2023large}
J.~Long.
\newblock Large language model guided tree-of-thought.
\newblock \emph{arXiv preprint arXiv:2305.08291}, 2023.

\bibitem[Madaan et~al.(2024)Madaan, Tandon, Gupta, Hallinan, Gao, Wiegreffe, Alon, Dziri, Prabhumoye, Yang, et~al.]{madaan2024self}
A.~Madaan, N.~Tandon, P.~Gupta, S.~Hallinan, L.~Gao, S.~Wiegreffe, U.~Alon, N.~Dziri, S.~Prabhumoye, Y.~Yang, et~al.
\newblock Self-refine: Iterative refinement with self-feedback.
\newblock \emph{Advances in Neural Information Processing Systems}, 36, 2024.

\bibitem[Merrill and Sabharwal(2023)]{merrill2023expresssive}
W.~Merrill and A.~Sabharwal.
\newblock The expresssive power of transformers with chain of thought.
\newblock \emph{arXiv preprint arXiv:2310.07923}, 2023.

\bibitem[Merrill et~al.(2022)Merrill, Sabharwal, and Smith]{merrill2022saturated}
W.~Merrill, A.~Sabharwal, and N.~A. Smith.
\newblock Saturated transformers are constant-depth threshold circuits.
\newblock \emph{Transactions of the Association for Computational Linguistics}, 10:\penalty0 843--856, 2022.

\bibitem[Muennighoff et~al.(2025)Muennighoff, Yang, Shi, Li, Fei-Fei, Hajishirzi, Zettlemoyer, Liang, Cand{\`e}s, and Hashimoto]{muennighoff2025s1}
N.~Muennighoff, Z.~Yang, W.~Shi, X.~L. Li, L.~Fei-Fei, H.~Hajishirzi, L.~Zettlemoyer, P.~Liang, E.~Cand{\`e}s, and T.~Hashimoto.
\newblock s1: Simple test-time scaling.
\newblock \emph{arXiv preprint arXiv:2501.19393}, 2025.

\bibitem[Nawrot et~al.(2024)Nawrot, {\L}a{\'n}cucki, Chochowski, Tarjan, and Ponti]{nawrot2024dynamic}
P.~Nawrot, A.~{\L}a{\'n}cucki, M.~Chochowski, D.~Tarjan, and E.~M. Ponti.
\newblock Dynamic memory compression: Retrofitting llms for accelerated inference.
\newblock \emph{arXiv preprint arXiv:2403.09636}, 2024.

\bibitem[Nowak et~al.(2024)Nowak, Svete, Butoi, and Cotterell]{nowak2024representational}
F.~Nowak, A.~Svete, A.~Butoi, and R.~Cotterell.
\newblock On the representational capacity of neural language models with chain-of-thought reasoning.
\newblock \emph{arXiv preprint arXiv:2406.14197}, 2024.

\bibitem[Nye et~al.(2021)Nye, Andreassen, Gur-Ari, Michalewski, Austin, Bieber, Dohan, Lewkowycz, Bosma, Luan, et~al.]{nye2021show}
M.~Nye, A.~J. Andreassen, G.~Gur-Ari, H.~Michalewski, J.~Austin, D.~Bieber, D.~Dohan, A.~Lewkowycz, M.~Bosma, D.~Luan, et~al.
\newblock Show your work: Scratchpads for intermediate computation with language models.
\newblock \emph{arXiv preprint arXiv:2112.00114}, 2021.

\bibitem[O'Donnell(1985)]{o1985equational}
M.~J. O'Donnell.
\newblock \emph{Equational logic as a programming language}.
\newblock Springer, 1985.

\bibitem[OpenAI(2024)]{openai2024reasoning}
OpenAI.
\newblock Learning to reason with llms, September 2024.
\newblock URL \url{https://openai.com/index/learning-to-reason-with-llms/}.

\bibitem[P{\'e}rez et~al.(2021)P{\'e}rez, Barcel{\'o}, and Marinkovic]{perez2021attention}
J.~P{\'e}rez, P.~Barcel{\'o}, and J.~Marinkovic.
\newblock Attention is turing-complete.
\newblock \emph{Journal of Machine Learning Research}, 22\penalty0 (75):\penalty0 1--35, 2021.

\bibitem[Prosser(1993)]{prosser1993hybrid}
P.~Prosser.
\newblock Hybrid algorithms for the constraint satisfaction problem.
\newblock \emph{Computational intelligence}, 9\penalty0 (3):\penalty0 268--299, 1993.

\bibitem[Ramachandran et~al.(2017)Ramachandran, Zoph, and Le]{ramachandran2017searching}
P.~Ramachandran, B.~Zoph, and Q.~V. Le.
\newblock Searching for activation functions.
\newblock \emph{arXiv preprint arXiv:1710.05941}, 2017.

\bibitem[Sel et~al.(2023)Sel, Al-Tawaha, Khattar, Jia, and Jin]{sel2023algorithm}
B.~Sel, A.~Al-Tawaha, V.~Khattar, R.~Jia, and M.~Jin.
\newblock Algorithm of thoughts: Enhancing exploration of ideas in large language models.
\newblock \emph{arXiv preprint arXiv:2308.10379}, 2023.

\bibitem[Selman et~al.(1996)Selman, Mitchell, and Levesque]{selman1996generating}
B.~Selman, D.~G. Mitchell, and H.~J. Levesque.
\newblock Generating hard satisfiability problems.
\newblock \emph{Artificial intelligence}, 81\penalty0 (1-2):\penalty0 17--29, 1996.

\bibitem[Shazeer(2020)]{shazeer2020glu}
N.~Shazeer.
\newblock Glu variants improve transformer.
\newblock \emph{arXiv preprint arXiv:2002.05202}, 2020.

\bibitem[Snell et~al.(2024)Snell, Lee, Xu, and Kumar]{snell2024scaling}
C.~Snell, J.~Lee, K.~Xu, and A.~Kumar.
\newblock Scaling llm test-time compute optimally can be more effective than scaling model parameters.
\newblock \emph{arXiv preprint arXiv:2408.03314}, 2024.

\bibitem[Stockmeyer and Meyer(1973)]{stockmeyer1973word}
L.~J. Stockmeyer and A.~R. Meyer.
\newblock Word problems requiring exponential time (preliminary report).
\newblock In \emph{Proceedings of the fifth annual ACM symposium on Theory of computing}, pages 1--9, 1973.

\bibitem[Strobl et~al.(2024)Strobl, Merrill, Weiss, Chiang, and Angluin]{strobl2024formal}
L.~Strobl, W.~Merrill, G.~Weiss, D.~Chiang, and D.~Angluin.
\newblock What formal languages can transformers express? a survey.
\newblock \emph{Transactions of the Association for Computational Linguistics}, 12:\penalty0 543--561, 2024.

\bibitem[Su et~al.(2024)Su, Ahmed, Lu, Pan, Bo, and Liu]{su2024roformer}
J.~Su, M.~Ahmed, Y.~Lu, S.~Pan, W.~Bo, and Y.~Liu.
\newblock Roformer: Enhanced transformer with rotary position embedding.
\newblock \emph{Neurocomputing}, 568:\penalty0 127063, 2024.

\bibitem[Suzgun and Kalai(2024)]{suzgun2024meta}
M.~Suzgun and A.~T. Kalai.
\newblock Meta-prompting: Enhancing language models with task-agnostic scaffolding.
\newblock \emph{arXiv preprint arXiv:2401.12954}, 2024.

\bibitem[Wang et~al.(2024)Wang, Dong, Cheng, Liu, Yan, Gao, and Wei]{wang2024augmenting}
W.~Wang, L.~Dong, H.~Cheng, X.~Liu, X.~Yan, J.~Gao, and F.~Wei.
\newblock Augmenting language models with long-term memory.
\newblock \emph{Advances in Neural Information Processing Systems}, 36, 2024.

\bibitem[Wei et~al.(2022)Wei, Wang, Schuurmans, Bosma, Xia, Chi, Le, Zhou, et~al.]{wei2022chain}
J.~Wei, X.~Wang, D.~Schuurmans, M.~Bosma, F.~Xia, E.~Chi, Q.~V. Le, D.~Zhou, et~al.
\newblock Chain-of-thought prompting elicits reasoning in large language models.
\newblock \emph{Advances in neural information processing systems}, 35:\penalty0 24824--24837, 2022.

\bibitem[Weiss et~al.(2021)Weiss, Goldberg, and Yahav]{weiss2021thinking}
G.~Weiss, Y.~Goldberg, and E.~Yahav.
\newblock Thinking like transformers.
\newblock In \emph{International Conference on Machine Learning}, pages 11080--11090. PMLR, 2021.

\bibitem[Wos et~al.(1992)Wos, Overbeek, Lusk, and Boyle]{wos1992automated}
L.~Wos, R.~Overbeek, E.~Lusk, and J.~Boyle.
\newblock \emph{Automated reasoning introduction and applications}.
\newblock McGraw-Hill, Inc., 1992.

\bibitem[Yang and Chiang(2024)]{yang2024counting}
A.~Yang and D.~Chiang.
\newblock Counting like transformers: Compiling temporal counting logic into softmax transformers.
\newblock \emph{arXiv preprint arXiv:2404.04393}, 2024.

\bibitem[Yao et~al.(2024)Yao, Yu, Zhao, Shafran, Griffiths, Cao, and Narasimhan]{yao2024tree}
S.~Yao, D.~Yu, J.~Zhao, I.~Shafran, T.~Griffiths, Y.~Cao, and K.~Narasimhan.
\newblock Tree of thoughts: Deliberate problem solving with large language models.
\newblock \emph{Advances in Neural Information Processing Systems}, 36, 2024.

\bibitem[Zaheer et~al.(2020)Zaheer, Guruganesh, Dubey, Ainslie, Alberti, Ontanon, Pham, Ravula, Wang, Yang, et~al.]{zaheer2020big}
M.~Zaheer, G.~Guruganesh, K.~A. Dubey, J.~Ainslie, C.~Alberti, S.~Ontanon, P.~Pham, A.~Ravula, Q.~Wang, L.~Yang, et~al.
\newblock Big bird: Transformers for longer sequences.
\newblock \emph{Advances in neural information processing systems}, 33:\penalty0 17283--17297, 2020.

\bibitem[Zelikman et~al.(2022)Zelikman, Wu, Mu, and Goodman]{zelikman2022star}
E.~Zelikman, Y.~Wu, J.~Mu, and N.~Goodman.
\newblock Star: Bootstrapping reasoning with reasoning.
\newblock \emph{Advances in Neural Information Processing Systems}, 35:\penalty0 15476--15488, 2022.

\bibitem[Zhang et~al.(2023)Zhang, Sheng, Zhou, Chen, Zheng, Cai, Song, Tian, R{\'e}, Barrett, et~al.]{zhang2023h2o}
Z.~Zhang, Y.~Sheng, T.~Zhou, T.~Chen, L.~Zheng, R.~Cai, Z.~Song, Y.~Tian, C.~R{\'e}, C.~Barrett, et~al.
\newblock H2o: Heavy-hitter oracle for efficient generative inference of large language models.
\newblock \emph{Advances in Neural Information Processing Systems}, 36:\penalty0 34661--34710, 2023.

\bibitem[Zhou et~al.(2022)Zhou, Sch{\"a}rli, Hou, Wei, Scales, Wang, Schuurmans, Cui, Bousquet, Le, et~al.]{zhou2022least}
D.~Zhou, N.~Sch{\"a}rli, L.~Hou, J.~Wei, N.~Scales, X.~Wang, D.~Schuurmans, C.~Cui, O.~Bousquet, Q.~Le, et~al.
\newblock Least-to-most prompting enables complex reasoning in large language models.
\newblock \emph{arXiv preprint arXiv:2205.10625}, 2022.

\end{thebibliography}
\bibliographystyle{abbrvnat}  

\newpage
\appendix

\onecolumn

\tableofcontents

\clearpage

\section{Computational Benefits of PENCIL} \label{app_compute}

To quantify the computational gap between PENCIL and CoT, consider using a standard causal-masking transformer and an ideal case where one uses KV cache for storing key and value matrices for subsequent computation, the corresponding FLOPs for self-attention (which is typically the bottleneck for very long sequences, see \citet{kaplan2020scaling} for a more precise method for estimating the FLOPs) required for a problem instance $x \in \Sigma^n$ is proportional to:
\begin{equation} \label{eq_flops}
\scalebox{0.9}{$
\begin{split}
&\sum\nolimits_{i=1}^{r+1} \big(|x^{(i-0.5)}|+|x^{(i)}|+1\big)\cdot\underbrace{\big(|x^{(i)}|-|x^{(i-0.5)}|\big)}_{\text{number of generated tokens}}\\
+~~&\sum\nolimits_{i=1}^{r} \big(|x^{(i)} \cap x^{(i+0.5)}| + |x^{(i+0.5)}| + 1\big) \cdot \underbrace{\big|x^{(i+0.5)} \backslash x^{(i)}\big|}_{\text{length of the answer \textbf{A}}} 
\end{split}
$}
\end{equation}
where $x^{(i)} \cap x^{(i+0.5)}$ represents the shared context \textbf{C} before the $\call$ token, and $x^{(i+0.5)} \backslash x^{(i)}$ denotes the answer \textbf{A} between $\sep$ and $\return$ tokens. The first term accounts for model generation steps, while the second term captures the computation cost of reduction steps where KV cache must be recomputed for \textbf{A} after merging it back into the context (since the prefix has been changed). 

\section{Additional Experimental Results}

\begin{figure*}[h]
\centering
\subfigure[QBF $n=3$]{
\includegraphics[width=0.24\textwidth]{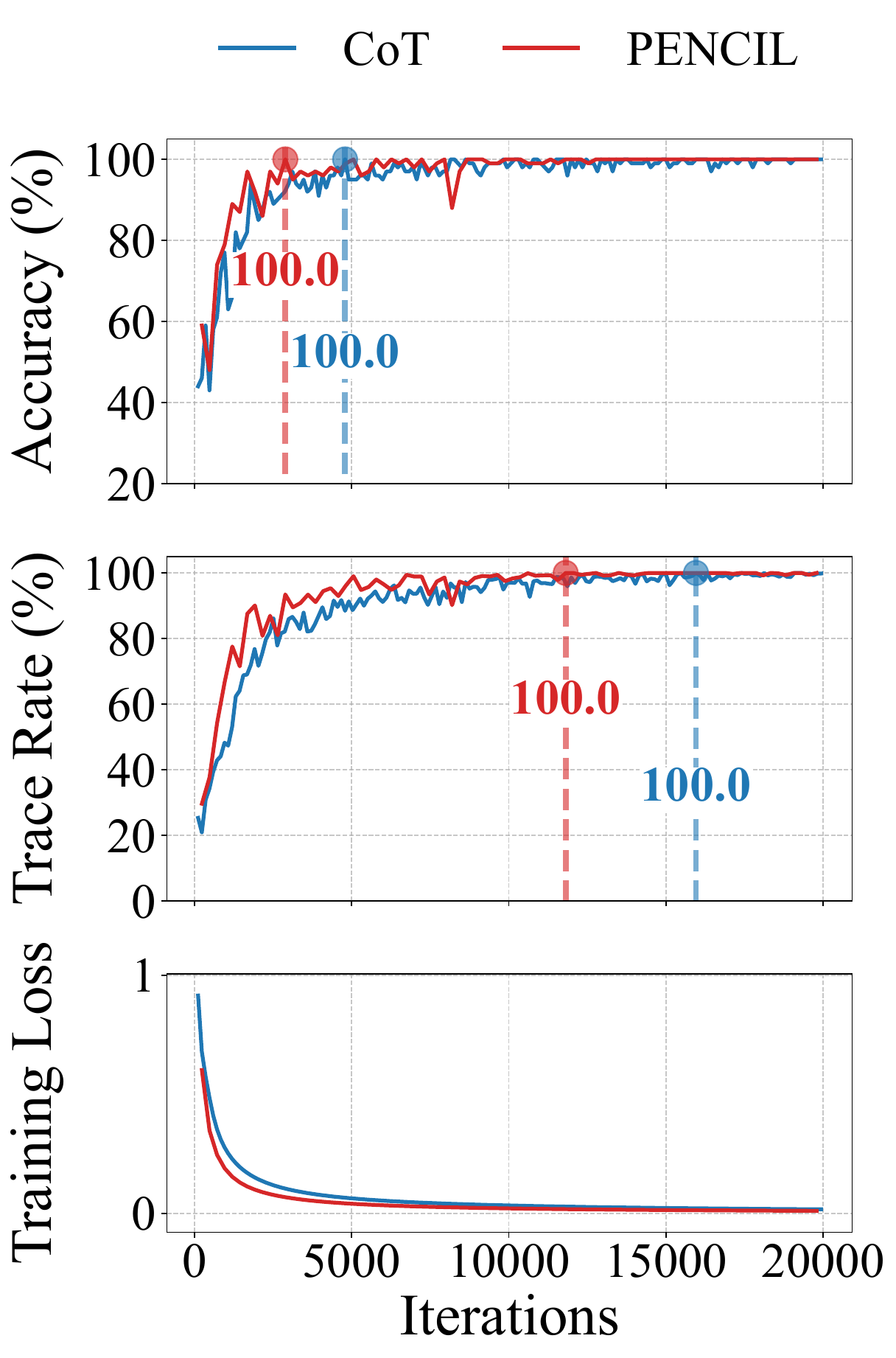}}
\subfigure[QBF $n=4$]{
\includegraphics[width=0.24\textwidth]{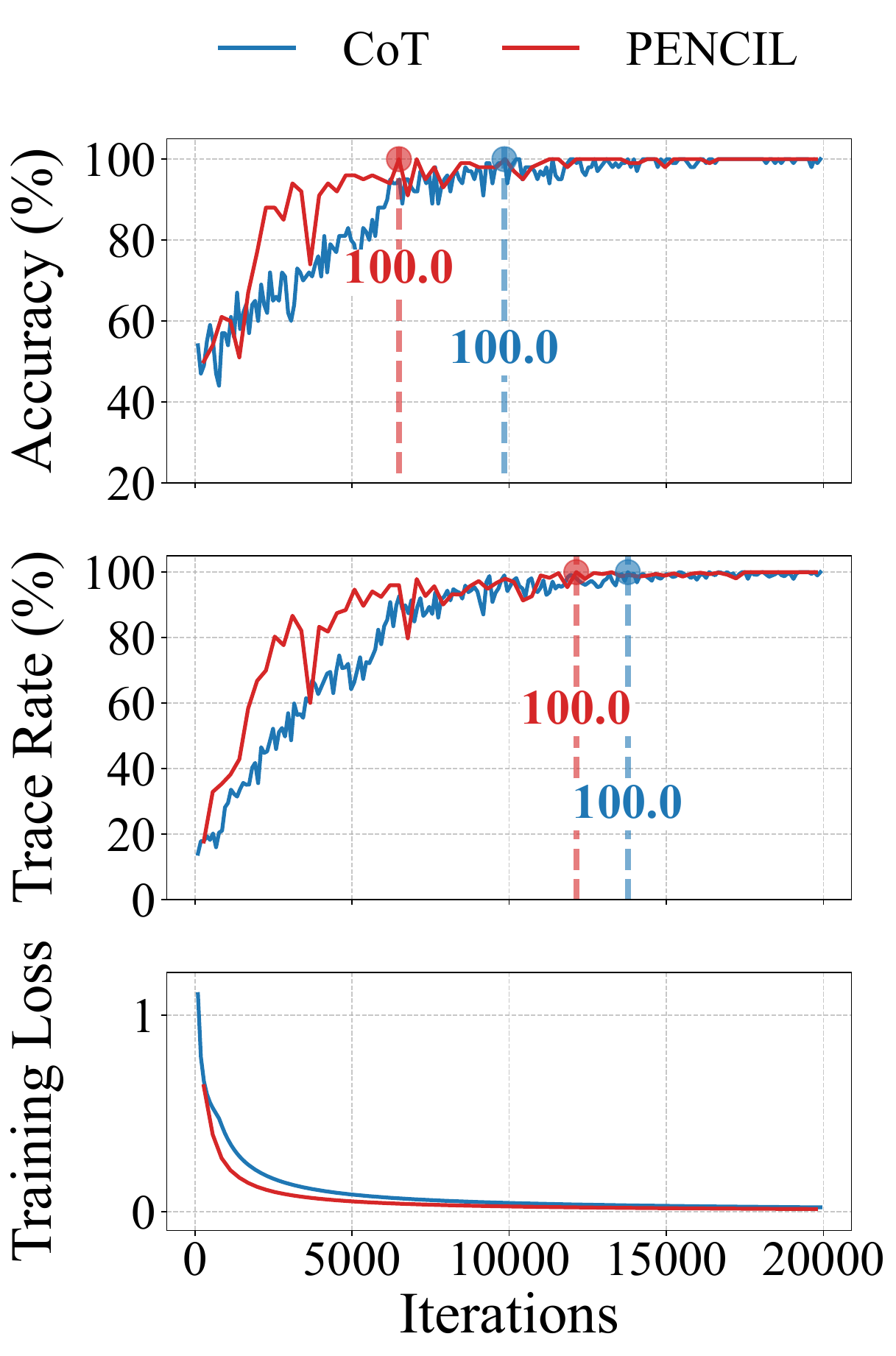}}
\subfigure[QBF $n=5$]{
\includegraphics[width=0.236\textwidth]{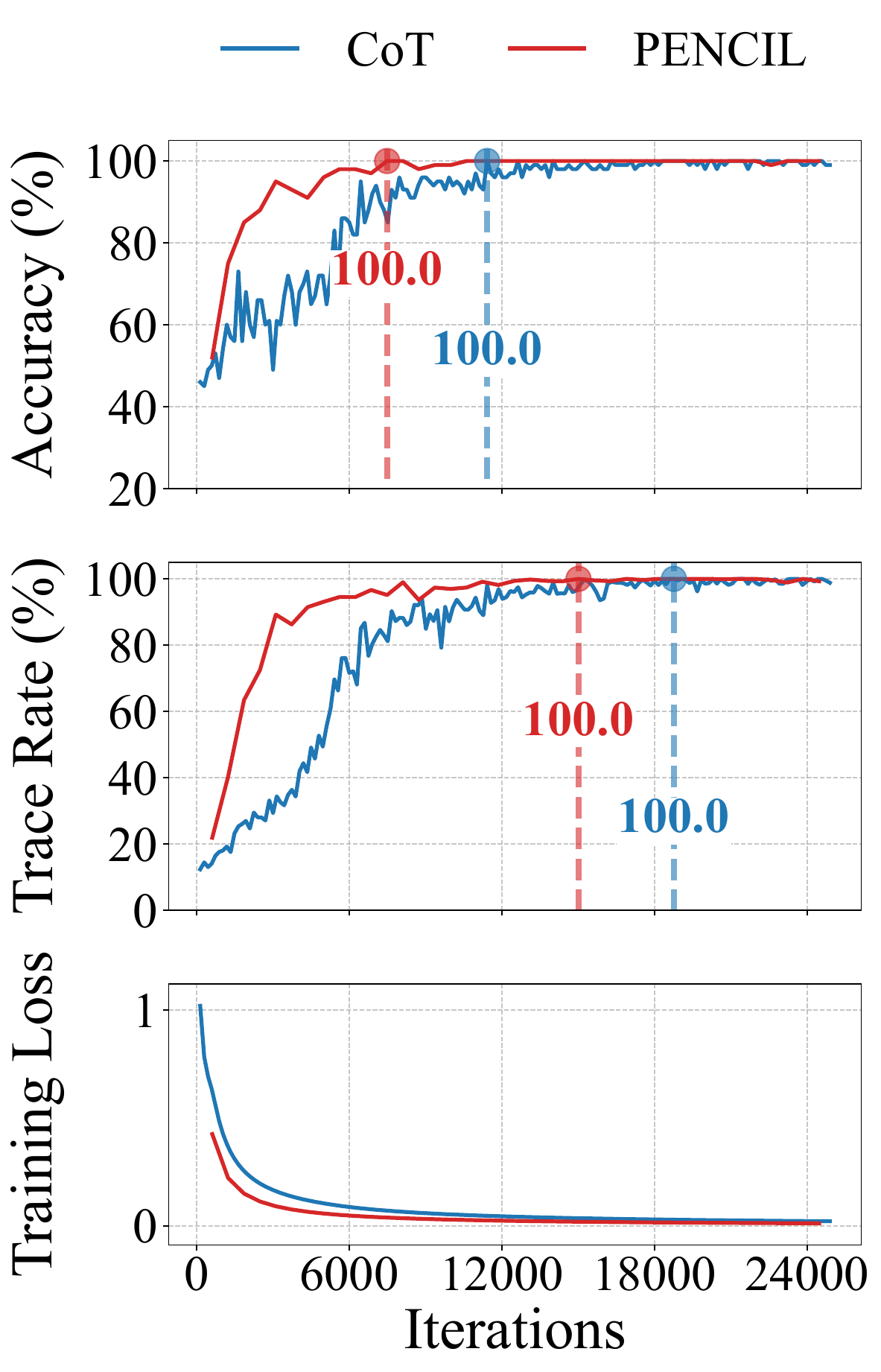}}
\subfigure[QBF $n=6$]{
\includegraphics[width=0.236\textwidth]{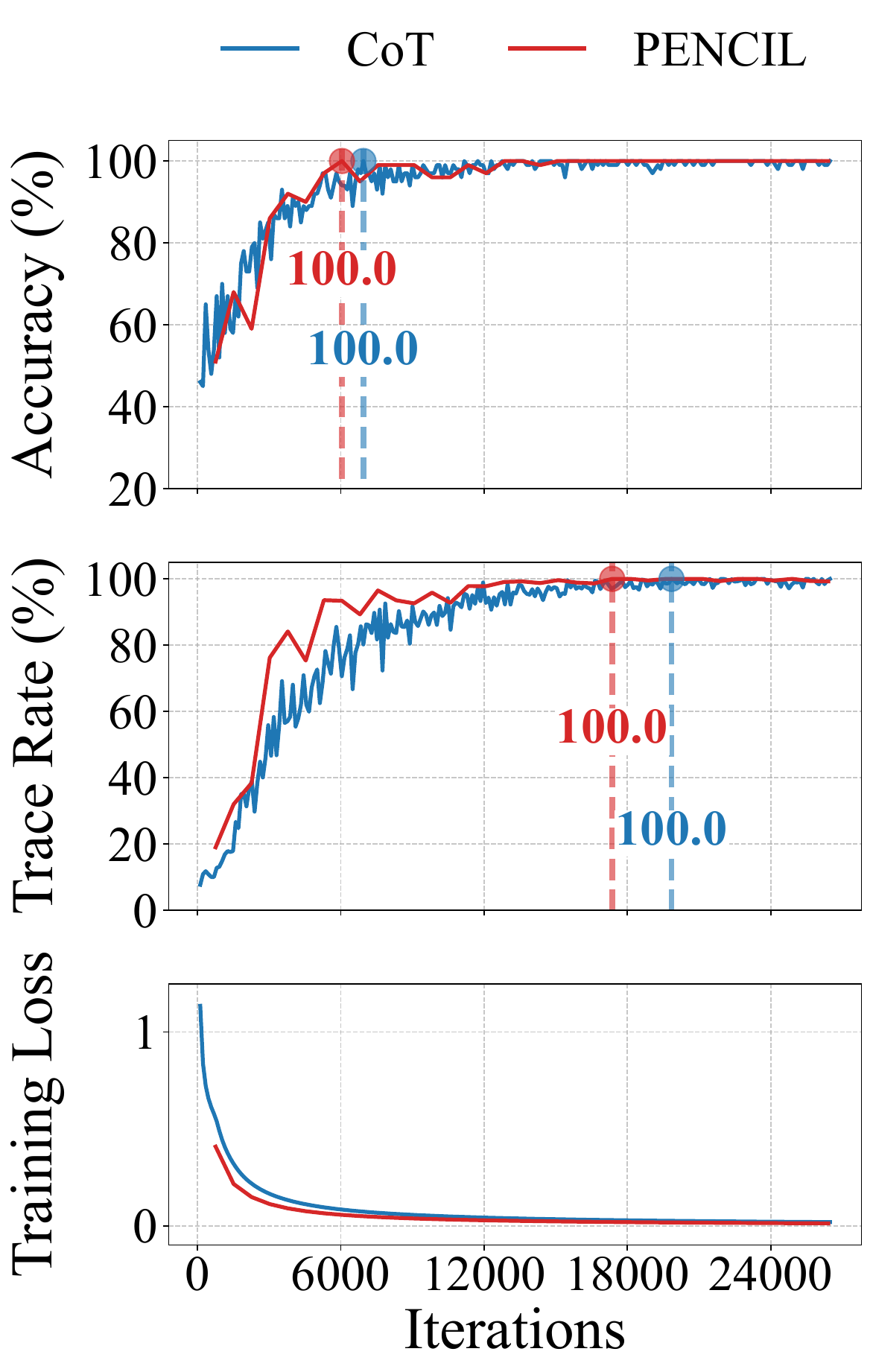}}
\vspace{-10pt}
\caption{Comparison of convergence speed for training on the QBF problem (with $n$ ranges from $3$ to $6$). Circles and vertical lines indicate the first time each method reaches optimal performance. The x-axis is number of iterations.\vspace{-10pt}} \label{fig_iterations}
\end{figure*}

\clearpage

\section{Turing Machine as Autoregressive Machine}\label{sec:TM}

We will restate the definition of a single-tape Turing machine, then show how each of its steps can be turned into tokens generated by an autoregressive machine $\mathcal M_\tm$, associated with a {state function} that captures only the machine’s current configuration. 

\subsection{Definition of Turing Machine}

A single-tape Turing machine is defined by:

\begin{definition}[Turing Machine]
\label{defi:TM_appendix}
A single-tape Turing machine works on a infinitely long ``Tape'' on both of its ends with cells indexed by integers $\mathbb{Z}$. It is specified by a 7-tuple
\begin{equation}
\tm \;=\; (\mathcal{A}, b, Q, q_0, \delta, Q_{\acc}, Q_{\rej}),
\end{equation}
where:
\begin{itemize}
    \item $\mathcal{A}$ is a finite tape alphabet.
    \item $b \in \mathcal{A}$ is the designated blank symbol.
    \item $Q$ is a finite set of control states.
    \item $q_0 \in Q$ is the initial control state.
    \item $\delta: Q \times \mathcal{A} \;\to\; Q \times (\mathcal{A}\setminus \{b\}) \times \{-1,0, 1\}$ is the transition function.
    \item $Q_{\acc} \subseteq Q$ is the set of accepting states.
    \item $Q_{\rej} \subseteq Q$ is the set of rejecting states, disjoint from $Q_{\acc}$.
\end{itemize}
\paragraph{Computation of Turing Machines.} At the beginning of the computation, the initial tape content $\sigma'_0 \in \mathcal{A}^{\mathbb{Z}}$ is set by the input $\sigma \in (\mathcal{A} \setminus \{b\})^*$ for the cells indexed from $0$ through $|\sigma|-1$ and the other cells contain $b$. The head of the machine is at the position $|\sigma|$ and its control state is initialized to $q_0\in Q$. For convenience we use the $p_t$ to denote the head position at step $t$.
 In each time step $0 \leq t$, the machine computes $(q',a',d') = \delta(q_t,a_t)$, where $q_t$ is the control state of the Turing machine at step $t$ and $a_t=\sigma'_t[p_t]$ is the symbol on the infinite-long tape before step $t$ update $\sigma'_t$ at the Turing machine's head position $p_t$. 
 Then the Turing machine moves its position to $p_{t+1} = p_t + d'$, change the symbol on the current tape to $a'$, and updates its new control state to $q_{t+1}=q'$.  The Turing machine halts only when reaching an accept/reject state in $Q_{\acc}\cup Q_{\rej}$, otherwise it runs forever. We denote the output of Turing machine on input $\sigma$ by $\tm(\sigma)$, and we set $\tm(\sigma)=1$ is the final state is in $Q_{\acc}$ and $\tm(\sigma)=0$ is the final control state is in $Q_{\rej}$. 
\end{definition}

The computation of Turing machine is intrinsically an iterated process --- applying the same transition rule $\delta$ until the halting condition is met. Such iterated models can naturally be described as an autoregressive machine~(\Cref{subsec:TM_as_AM}). We will give the formal definition (\Cref{def:TM_as_AM}) of Turing Machine as an autoregressive machine in \Cref{subsec:TM_as_AM}. Towards that, we will first introduce a few more useful notations.

\begin{definition}[Configuration]\label{defi:configuration}
    The \emph{configuration} of a Turing machine is defined as the tuple of $(q,\sigma',p)\in Q \times \mathcal{A}^{\mathbb{Z}} \times \mathbb{Z} \triangleq C$, where $q$ is its current control state, $\sigma$ is the current symbols on the tape, starting from the leftmost non-blank one to the rightmost non-blank one, and $p$ is its current head position relative to the leftmost non-blank symbol. The configuration can be thought as a snapshot or the "global" state of Turing machine, which completely determines its future computation steps. 
\end{definition}
We also extend the update rule $\delta$ to the configuration space as follows: for any configuration $c=(q,\sigma',p)\in C$, we define 
\begin{align}
    \delta(q, \sigma', p) \triangleq \delta(q, \sigma'[p]).
\end{align}

\newcommand{\update}{\mathsf{Update}}
\begin{definition}[Space of Update and Update Rule]\label{defi:update}
    We define the space of the update as the range of transition function $\delta$, denoted by 
    \begin{align}\label{eq:update_space}
        \Sigma = Q \times (\mathcal{A}\setminus \{b\}) \times \{-1,0, 1\}. 
    \end{align}
    Given a configuration $c=(q,\sigma',p)\in C $ and update $x = (q',a,d)\in \Sigma$, we define the updated configuration of $c$ with $x$ as 
    \begin{align}
        \update(x,c)= (\tilde q,\sigma',\tilde p)
    \end{align}
    where $\tilde p = p+d$, and $\tilde \sigma'[i] = \sigma'[i]$ for all $i\in\mathbb{Z}, i\neq p$ and $\tilde \sigma'[p] = a$. We denote the update function as $\update(c,x)$. 
    We also extend the notion of update function to any sequence of updates $x_{1:n}=(x_1,\ldots,x_n)\in\Sigma^n$ and and configuration $c$, where we define $\update(c,x_{1:n}) = \update(\update(c,x_{1:n-1}),x_n)$ recursively.
\end{definition}

Given the update rule $\delta$, the transition rule of the configuration of the Turing machine is defined as  
\begin{align*}
g_\delta: Q \times \mathcal{A}^\mathbb{Z} \times \mathbb{Z} &\to Q \times \mathcal{A}^\mathbb{Z} \times \mathbb{Z} \\
g_\delta(q, \sigma', p) \triangleq &\quad \update(\delta(q, \sigma', p), (q, \sigma', p)).
\end{align*}
Denoting configuration as at step $t$ as $c_t = (q_t, \sigma'_t, p_t)\in Q \times \mathcal{A}^\mathbb{Z} \times \mathbb{Z}$ with $c_0 = (q_0, \sigma'_0, |\sigma|)$, the configuration of Turing Machine at each step $t$ can be formally defined as $(q_{t+1}, \sigma'_{t+1}, p_{t+1}) \triangleq g_\delta(q_t, \sigma'_t, p_t) = g_\delta^{t+1}(c_0)$.

\begin{definition}[Translationally Equivalent Configurations]
    \label{defi:translation_equiv}
    Two Turing machine configurations $c_1 = (q_1, \sigma'_1, p_1)$ and $c_2 = (q_2, \sigma'_2, p_2)$ are said to be \emph{translationally equivalent} (denoted by $c_1 \sim c_2$)if:
    \begin{enumerate}
        \item They have the same control state: $q_1 = q_2$
        \item There exists an integer $k$ such that:
            \begin{itemize}
                \item Their tape contents are equivalent up to translation: $\sigma'_1[i] = \sigma'_2[i-k]$ for all $i \in \mathbb{Z}$
                \item Their head positions are equivalent up to the same translation: $p_1 = p_2 + k$
            \end{itemize}
    \end{enumerate}
    Translationally equivalent configurations will produce the same future computation behavior, differing only in the absolute positions of symbols on the tape, which is formally described by the following \Cref{lemma:tm_translation_equiv}.
    \end{definition}

We omit the proof of the following lemma, which is straightforward from the definition of Turing machine configuration and update rule.
\begin{lemma}[Translational Equivalence of Turing Machine Configurations]\label{lemma:tm_translation_equiv}
    For any Turing machine $\tm$ and any configurations $c_1, c_2 \in C$, if $c_1 \sim c_2$, then $\delta(c_1) = \delta(c_2)$ and that for any update $x\in\Sigma$, $\update(c_1,x) \sim \update(c_2,x)$. 
    As a result,  $g^k_\delta(c_1) \sim g^k_\delta(c_2)$ for any $k\in\mathbb{N}$.
\end{lemma}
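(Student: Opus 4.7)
The lemma is a mechanical verification from the definition of translational equivalence, with a simple induction for the iterated case. I would treat the three claims in the order stated, since each builds on the previous.

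For $\delta(c_1) = \delta(c_2)$, the idea is that translational equivalence forces the head to sit on the same symbol in both configurations. I would unpack $c_1 \sim c_2$ to obtain $q_1 = q_2$ together with a translation constant $\tau \in \mathbb{Z}$ satisfying $\sigma'_1[i] = \sigma'_2[i - \tau]$ for all $i \in \mathbb{Z}$ and $p_1 = p_2 + \tau$. Substituting $i = p_1$ gives $\sigma'_1[p_1] = \sigma'_2[p_2]$, so the extended rule $\delta(c) = \delta(q, \sigma'[p])$ evaluates identically on $c_1$ and $c_2$.

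For $\update(c_1, x) \sim \update(c_2, x)$ with any $x = (q', a, d) \in \Sigma$, I would show that the \emph{same} translation constant $\tau$ continues to witness equivalence after the update. The control-state and head-shift conditions are immediate from the definition of $\update$, since both configurations receive the same new control state $q'$ and their heads move by the same $d$. For the tape condition, I would split on the index $i$: at the single head cell ($i = p_1$, equivalently $i - \tau = p_2$) both tapes are overwritten with the common symbol $a$; at every other index neither tape is changed, so the prior translational equivalence of $\sigma'_1$ and $\sigma'_2$ is inherited verbatim.

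With these two building blocks, the final claim $g_\delta^k(c_1) \sim g_\delta^k(c_2)$ follows by induction on $k$. The base $k = 0$ is trivial. For the step, since $g_\delta(c) = \update(c, \delta(c))$, the first part guarantees that $\delta$ produces the same output on two equivalent configurations, and the second part guarantees that applying this common update preserves translational equivalence with the same constant $\tau$. The only real obstacle is pure bookkeeping: keeping the translation constant $\tau$ fixed across all three parts and carefully handling the single index $i = p_1$ in the tape-update case. Nothing here requires more than unpacking definitions.
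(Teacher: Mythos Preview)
Your proposal is correct and is precisely the straightforward unpacking of definitions that the paper alludes to; the paper actually omits the proof entirely, stating only that it is ``straightforward from the definition of Turing machine configuration and update rule,'' so your write-up is exactly the kind of argument the authors had in mind.
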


\subsection{Construction of Autoregressive Machine}\label{subsec:TM_as_AM}
We now build a {autoregressive machine} $\mathcal{M_\tm}$ from $\tm$ by letting each Turing step correspond to the generation of a single token (new state, symbol written, head movement).

\begin{definition}[Autoregressive Representation of a Turing Machine]
\label{def:TM_as_AM}
Let $\tm = (\mathcal{A}, b, Q, q_0, \delta, Q_{\acc}, Q_{\rej})$ be a single-tape Turing machine. We define a autoregressive machine
\begin{equation}
\mathcal{M}_\tm \;=\; (\Sigma, \nxt, \Sigma_{\acc}, \Sigma_{\rej})
\end{equation}
as follows:

$\bullet$~~\textbf{Alphabet / Tokens $\Sigma \;=\; Q \times \mathcal{A}  \times \{-1,1,0\}$}:  
Each token $(q,a,d)\in\Sigma$ represents a configuration that means “the machine transitions to state $q$, writes symbol $a$ on the current cell, and moves the head in direction $d$,” where $\mathrm{N}$ indicates “no move” if desired. Furthermore, we let $\Sigma_{\acc} = Q_{\acc}\times (\mathcal{A}\setminus \{b\}) \times \{-1,1,0\}$ and $\Sigma_{\rej} = Q_{\rej}\times (\mathcal{A}\setminus \{b\}) \times \{-1,1,0\}.$

$\bullet$~~\textbf{Next-Token Generator $\nxt : \Sigma^*\to \Sigma$}: Let $c_0 = (q_0,b^{\mathbb{Z}},0)$ be the initial configuration of the Turing machine, we define the next-token generator $\nxt$ by $\nxt(\cdot)\triangleq \delta(\update(c_0,\cdot))$. That is, given an input token sequence $x = (x_1,\ldots,x_n)\in \Sigma^*$, the next token is the next Turing Machine update after the configuration $c_n$ obtained by applying the updates $x_1,\ldots,x_n$ to the initial configuration $c_0$.
\end{definition}

\newcommand{\embed}{\texttt{embed}}
\newcommand{\maxpos}{\texttt{max\_pos}}
\newcommand{\minpos}{\texttt{min\_pos}}
\begin{definition}[Maximum and Minimum Non-Blank Positions]
\label{defi:max_min_pos}
For any tape configuration $\sigma' \in \mathcal{A}^\mathbb{Z}$ with finitely many non-blank symbols and position $p$, we define:
\begin{itemize}
    \item $\maxpos(\sigma') = \max\{j\in\mathbb{Z}\mid \sigma'[j] \neq b\}$, which is the position of the rightmost non-blank symbol on the tape or head position, whichever is larger.
    \item $\minpos(\sigma') = \min\{j\in\mathbb{Z}\mid \sigma'[j] \neq b\}$, which is the position of the leftmost non-blank symbol on the tape or head position, whichever is smaller.
\end{itemize}
\end{definition}

\newcommand{\computemove}{\texttt{compute\_move}}
\begin{definition}[Embedding Function from Turing Machine to Autoregressive Machine]
    \label{defi:embedding_TM}
    Given a Turing machine \(\tm\) and its corresponding autoregressive machine \(\mathcal{M}_\tm\), we define an embedding function
    \[\embed : C \rightarrow \Sigma^*\]
    that maps a Turing machine configuration $c = (q, \sigma', p) \in C$ to a sequence of tokens in $\Sigma^*$ that represents the configuration in the autoregressive machine, where $\sigma'$ only has finitely many non-blank symbol $b$. Specifically: \footnote{Note we only need to consider the case where $\minpos(\sigma')-1 \leq p \leq \maxpos(\sigma')+1$ since Turing Machine has to write non-blank tokens on every tape cell it visits.}
    \[\embed(q, \sigma', p) = (x_1, x_2, \ldots, x_n)\]
    where $n = \maxpos(\sigma') - \minpos(\sigma') + [\maxpos(\sigma') - p-1]_+ + 1$, and each $x_i = (q_i,a_i,d_i)$ is defined as:
    \begin{align}
        q_i = q, \quad a_i = \sigma'\left[\sum_{j=1}^{i-1} d_j + \minpos(\sigma') \right],
    \end{align}
    and 
    \begin{align}
        d_i =&\computemove(i,p,\maxpos(\sigma'),\minpos(\sigma')) \\
        \triangleq &
        \begin{cases}
        +1 & \text{if } 1 \leq i \leq  \maxpos(\sigma')-\minpos(\sigma') \\
        +1 & \text{if } i = \maxpos(\sigma')-\minpos(\sigma')+ 1 \land p = \maxpos(\sigma')+1 \\
        0 & \text{if } i = \maxpos(\sigma')-\minpos(\sigma')+ 1 \land p = \maxpos(\sigma') \\
        -1 & \text{if } n\ge i \ge \maxpos(\sigma')-\minpos(\sigma')+ 1 \land p \le \maxpos(\sigma')-1.
        \end{cases}  \notag
    \end{align}
\end{definition}

This is a standard construction used to show transformer can simulate Turing machine~\citep{perez2021attention,merrill2022saturated} which allows the tape contents to be reconstructed from the computation history. 

From the definition of the embedding function, we can see that the embedding of a configuration $c$ of Turing Machine into a series of tokens in $\Sigma$ of Autoregressive Machine that encode the control state, the symbols on the tape, and the head position. The embedding function is translationally invariant by defintiion and we omit the proof here.
\begin{lemma}[Embedding is Translationally Invariant]
\label{lemma:embedding_translation_invariant}
For any Turing machine $\tm$ and any configurations $c_1, c_2 \in C$, if $c_1 \sim c_2$, then $\embed(c_1) = \embed(c_2)$.
\end{lemma}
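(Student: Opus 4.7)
The plan is to unpack the two definitions --- translational equivalence (\Cref{defi:translation_equiv}) and the embedding formula (\Cref{defi:embedding_TM}) --- and then verify token-by-token that the two embeddings coincide, using the fact that every quantity appearing in $\embed$ is either translation-invariant or shifts by the same constant $k$. Write $c_1 = (q_1,\sigma'_1,p_1)$ and $c_2 = (q_2,\sigma'_2,p_2)$ with $c_1 \sim c_2$, so there exists $k\in\mathbb{Z}$ with $q_1 = q_2$, $p_1 = p_2 + k$, and $\sigma'_1[i] = \sigma'_2[i-k]$ for all $i\in\mathbb{Z}$.

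First I would argue that $\maxpos$ and $\minpos$ shift by exactly $k$: since the non-blank set $\{i : \sigma'_1[i]\neq b\}$ is the $k$-translate of $\{i : \sigma'_2[i]\neq b\}$, we get $\maxpos(\sigma'_1) = \maxpos(\sigma'_2)+k$ and $\minpos(\sigma'_1) = \minpos(\sigma'_2)+k$. Consequently the ``translation-free'' quantities
\[
\maxpos(\sigma'_1) - \minpos(\sigma'_1),\qquad \maxpos(\sigma'_1) - p_1,\qquad p_1 - \minpos(\sigma'_1)
\]
all match their $c_2$-counterparts. From the first two one reads off $n_1 = n_2$, so the two embeddings at least have the same length, and the control-state components are identical because $q_1 = q_2$. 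Next I would compare the direction components $d_i^{(1)}$ and $d_i^{(2)}$ produced by $\computemove$: each branch is selected by comparing $i$ with $\maxpos(\sigma')-\minpos(\sigma')$ (and $+1$) and $p$ with $\maxpos(\sigma')$, and these comparisons depend only on the three invariant differences above, so $d_i^{(1)} = d_i^{(2)}$ for every $i$. A short induction on $i$ then shows that the prefix sum $\sum_{j<i} d_j^{(1)} + \minpos(\sigma'_1)$ equals $\sum_{j<i} d_j^{(2)} + \minpos(\sigma'_2) + k$, and plugging this into $\sigma'_1[\cdot] = \sigma'_2[\cdot - k]$ makes the $k$ cancel, yielding $a_i^{(1)} = a_i^{(2)}$.

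I expect the only real bookkeeping to be the four-way case analysis of $\computemove$, in particular checking that the ``tail'' steps (the ones after the head has walked the span $[\minpos(\sigma'),\maxpos(\sigma')]$ once) produce the correct final head position in both configurations; since the span length and the offset of $p$ from $\maxpos(\sigma')$ are translation-invariant, every case lines up. Degenerate situations ($n=0$, or $p$ outside $[\minpos(\sigma')-1,\maxpos(\sigma')+1]$) are excluded by the standing assumption in \Cref{defi:embedding_TM} that the head never leaves the written region by more than one cell, which is preserved under $\sim$. Once these boundary cases are confirmed, the equality $\embed(c_1) = \embed(c_2)$ follows coordinatewise.
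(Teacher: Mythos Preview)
Your proposal is correct and is exactly the straightforward verification the paper has in mind: the paper itself omits the proof entirely, remarking only that ``the embedding function is translationally invariant by definition.'' Your argument that $\maxpos$, $\minpos$, and $p$ all shift by the same $k$---so that every difference entering $n$, $\computemove$, and the tape-lookup index is unchanged---is the one-line reason behind that remark; the induction you mention for the prefix sums is in fact unnecessary, since $d_j^{(1)}=d_j^{(2)}$ for all $j$ already gives equality of the partial sums directly.
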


\begin{theorem}\label{thm:tm_as_am}
The autoregressive machine $\mathcal{M}_\tm$ defined in Definition~\ref{def:TM_as_AM} faithfully simulates the Turing machine $\tm$ in the sense that, for any input $x\in\mathcal{A}^*$, the output of $\mathcal{M}_\tm$ on $x$ (accept or reject) is the same as the output of $\tm$ on $x$.

More specifically, the equivalence is established by the following property. Recall $c_0 = (q_0,b^{\mathbb{Z}},0)$, it holds that for any configuration $c=(q,\sigma',p)\in C$ and non-negative integer $k$, 
\begin{align}\label{eq:tm_as_am}
    \update(c_0, f^k_\nxt(\embed(c))) \sim g^k_\delta(c).
\end{align} 
\end{theorem}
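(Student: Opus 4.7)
The plan is to prove the key equivalence \eqref{eq:tm_as_am} by induction on $k$. Once it is established for every configuration $c$ and every $k$, the first claim of the theorem is immediate: specialize $c$ to the configuration encoding the Turing machine's input tape and head (so $\embed(c)$ is the input fed to $\mathcal{M}_\tm$), observe that $\Sigma_\acc$ and $\Sigma_\rej$ are defined by projecting onto $Q_\acc$ and $Q_\rej$, and use that translationally equivalent configurations share the same control state. Thus $\mathcal{M}_\tm$ generates an accepting (resp.\ rejecting) token exactly when $\tm$ reaches an accepting (resp.\ rejecting) state, so their outputs coincide.

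For the base case $k=0$, I must show $\update(c_0,\embed(c))\sim c$ for every $c=(q,\sigma',p)$ with finitely many non-blank cells. I would read off $\embed(c)=(x_1,\ldots,x_n)$ directly from \Cref{defi:embedding_TM} and track the cumulative displacement $P_i=\sum_{j\le i} d_j$ along with the symbol written at the visited cell $P_{i-1}$. Since every token has control-state component $q$, the final state is $q$; since $a_i=\sigma'[P_{i-1}+\minpos(\sigma')]$, cell $j$ of the resulting tape ends up holding $\sigma'[j+\minpos(\sigma')]$ for $0\le j\le \maxpos(\sigma')-\minpos(\sigma')$ and blank elsewhere; and a short computation of $P_n$ shows the final head lands at $p-\minpos(\sigma')$. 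Together these three facts say the output configuration is $c$ shifted by $k=\minpos(\sigma')$, i.e.\ $\update(c_0,\embed(c))\sim c$. The analysis splits along the three branches of $\computemove$ (head at $\maxpos(\sigma')+1$, at $\maxpos(\sigma')$, or at most $\maxpos(\sigma')-1$); only the third branch involves a backward sweep in which previously written cells are revisited.

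For the inductive step, set $c^{(k)}\triangleq\update(c_0,f^k_\nxt(\embed(c)))$ and assume $c^{(k)}\sim g^k_\delta(c)$. Unfolding $f^{k+1}_\nxt=f_\nxt\circ f^k_\nxt$ together with the recursion for $\update$ gives $\update(c_0,f^{k+1}_\nxt(\embed(c)))=\update(c^{(k)},\nxt(f^k_\nxt(\embed(c))))$. By \Cref{def:TM_as_AM} the appended token equals $\delta(c^{(k)})$, so the whole expression is exactly $g_\delta(c^{(k)})$. \Cref{lemma:tm_translation_equiv} then yields $g_\delta(c^{(k)})\sim g_\delta(g^k_\delta(c))=g^{k+1}_\delta(c)$, closing the induction.

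The main obstacle is the bookkeeping in the base case: in the backward-sweep branch of $\computemove$, the same cell is written twice, and I must check that the second write (which reads $\sigma'$ at exactly the offset where the first write stored the same value, by construction of the formula for $a_i$) reproduces the earlier content; without this consistency, the final tape would not be a translated copy of $\sigma'$. The inductive step itself, in contrast, is essentially a one-line application of \Cref{lemma:tm_translation_equiv} combined with the identity $\nxt=\delta\circ\update(c_0,\cdot)$ from \Cref{def:TM_as_AM}.
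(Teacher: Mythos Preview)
Your proposal is correct and follows essentially the same approach as the paper: induction on $k$, with the base case showing $\update(c_0,\embed(c))\sim c$ via a translation by $\minpos(\sigma')$, and the inductive step reducing to $g_\delta(c^{(k)})\sim g_\delta(g^k_\delta(c))$ via \Cref{lemma:tm_translation_equiv} together with $\nxt=\delta\circ\update(c_0,\cdot)$. You are in fact more explicit than the paper about the backward-sweep double-write consistency in the base case, which the paper glosses over.
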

\begin{proof}[Proof of \Cref{thm:tm_as_am}]
    We will prove equation \eqref{eq:tm_as_am} by induction on $k$. First, recall that for any input sequence $x\in\Sigma^*$, $\nxt(x)$ is defined as $\delta(\update(c_0,x))$, where $\delta$ is applied to the configuration resulting from updating the initial configuration with the sequence $x$.

        \paragraph{Base Case ($k=0$):} 
        For any configuration $c=(q,\sigma',p)\in C$, we need to show $\update(c_0, \embed(c))\sim c$.

        Let's denote $m_{\min} = \minpos(\sigma')$ and $m_{\max} = \maxpos(\sigma')$. By Definition~\ref{defi:embedding_TM}, $\embed(c)$ is a sequence $(x_1, x_2, \ldots, x_n)$ where each token $x_i = (q_i, a_i, d_i)$ encodes the state $q$, the symbol at a specific position, and a movement direction.

        When we apply this sequence to the initial configuration $c_0 = (q_0, b^{\mathbb{Z}}, 0)$, we perform the following operations:

        1. The embedding first writes all non-blank symbols from the leftmost position $m_{\min}$ to the rightmost position $m_{\max}$ by moving right.
        2. If needed, additional movements are generated to ensure the head ends at the correct position $p$.
        3. All tokens share the same control state $q$.

        After applying the entire sequence $\embed(c)$ to $c_0$, we obtain a configuration $c' = (q', \sigma'', p')$ where:
        \begin{itemize}
            \item $q' = q$ (all tokens in the embedding share the same control state)
            \item $\sigma''[i] = \sigma'[i+m_{\min}]$ for all $i \in \{0, 1, \ldots, m_{\max}-m_{\min}\}$ (the tape contents are shifted)
            \item $\sigma''[i] = b$ for all other positions
            \item $p' = p - m_{\min}$ (the head position is shifted accordingly)
        \end{itemize}

        This defines a translational equivalence between $c'$ and $c$ with translation constant $k = -m_{\min}$, as:
        \begin{enumerate}
            \item They have the same control state: $q' = q$
            \item The tape contents are translated: $\sigma''[i] = \sigma'[i+k]$ for all $i \in \mathbb{Z}$
            \item The head positions are translated: $p' = p + k$
        \end{enumerate}

        Therefore, $\update(c_0, \embed(c)) \sim c$, which proves the base case.

        \paragraph{Inductive Step:} 
        Assume equation \eqref{eq:tm_as_am} holds for some $k \geq 0$, i.e., $\update(c_0, f^k_\nxt(\embed(c))) \sim g^k_\delta(c)$.
        
        Let $c'_k = \update(c_0, f^k_\nxt(\embed(c)))$. By the induction hypothesis, $c'_k \sim g^k_\delta(c)$.
        
        For the $(k+1)$-th step, we have:
        \begin{align}
            f^{k+1}_\nxt(\embed(c)) = (f^k_\nxt(\embed(c)), \nxt(f^k_\nxt(\embed(c))))
            = (f^k_\nxt(\embed(c)), \delta(c'_k))
        \end{align}
        
        Therefore:
        \begin{align}
            \update(c_0, f^{k+1}_\nxt(\embed(c))) = \update(c'_k, \delta(c'_k))
            = g_\delta(c'_k)
        \end{align}
        
        By \Cref{lemma:tm_translation_equiv}, since $c'_k \sim g^k_\delta(c)$, we have:
        \begin{align}
            g_\delta(c'_k) \sim g_\delta(g^k_\delta(c)) = g^{k+1}_\delta(c)
        \end{align}
        
        This proves that $\update(c_0, f^{k+1}_\nxt(\embed(c))) \sim g^{k+1}_\delta(c)$, completing the induction.
        
        Since acceptance or rejection depends only on the final state (which is preserved exactly in the relation $\sim$), $\mathcal{M}_\tm$ accepts $x$ if and only if $\tm$ accepts $x$.
\end{proof}

\subsection{Construction of State Function $s_\tm$}

Although $\mathcal{M}$ writes out {every} Turing step, we can define a {state function} $s_{\tm}$ that condenses the final sequence into a minimal representation of the tape. 

\begin{definition}[State Function $s_\tm$]
\label{def_TM_statefunc}
Let $\mathcal{M}_\tm=(\Sigma,\nxt,\Sigma_{\acc},\Sigma_{\rej})$ be the autoregressive machine representation of Turing machine from Definition~\ref{def:TM_as_AM}. We define its state function $s_\tm:\;\Sigma^*\;\to\;\Sigma^*$ as the following
\begin{equation}
s_\tm(x) = \embed(\update(c_0,x)), \quad \forall x\in\Sigma^*,
\end{equation}
where $c_0 = (q_0,b^{\mathbb{Z}},0)$ is the initial configuration.

\end{definition}

We claim that the constructed $s_\tm$ satisfies all three properties in Definition~\ref{defi:state_func}:
\begin{enumerate}
\item[\textbf{(1)}] \textbf{Next-Token Preservation} $(\nxt\circ s_\tm=\nxt)$: 
We need to prove that for any $x \in \Sigma^*$, $\nxt(s_\tm(x)) = \nxt(x)$. Let $c = \update(c_0, x)$ be the configuration after applying sequence $x$ to the initial configuration $c_0$. By definition of $s_\tm$, we have $s_\tm(x) = \embed(c)$. By the definition of $\nxt$ in Definition \ref{def:TM_as_AM}, $\nxt(x) = \delta(\update(c_0, x)) = \delta(c)$. Similarly, $\nxt(s_\tm(x)) = \nxt(\embed(c)) = \delta(\update(c_0, \embed(c)))$. From Theorem \ref{thm:tm_as_am}, Equation \eqref{eq:tm_as_am} with $k=0$, we have $\update(c_0, \embed(c)) \sim c$. Since $\delta$ is invariant under translational equivalence (Lemma \ref{lemma:tm_translation_equiv}), we have $\delta(\update(c_0, \embed(c))) = \delta(c)$. Therefore, $\nxt(s_\tm(x)) = \delta(c) = \nxt(x)$, which proves the property.

\item[\textbf{(2)}] \textbf{Future-Trace Preservation}: 
We need to prove that for any $x, x' \in \Sigma^*$ and $y \in \Sigma^*$, if $s_\tm(x) = s_\tm(x')$, then $s_\tm((x,y)) = s_\tm((x',y))$. Let $c = \update(c_0, x)$ and $c' = \update(c_0, x')$. By definition of $s_\tm$, $s_\tm(x) = \embed(c)$ and $s_\tm(x') = \embed(c')$. Since $s_\tm(x) = s_\tm(x')$, we have $\embed(c) = \embed(c')$. This implies that $c \sim c'$, as $\embed$ maps translationally equivalent configurations to identical sequences. For any sequence of tokens $y = (y_1, \ldots, y_m) \in \Sigma^*$, let $c_y = \update(c, y)$ and $c'_y = \update(c', y)$. By Lemma \ref{lemma:tm_translation_equiv}, since $c \sim c'$, we have $c_y \sim c'_y$. Therefore, $s_\tm((x,y)) = \embed(c_y) = \embed(c'_y) = s_\tm((x',y))$, which proves the property.

\item[\textbf{(3)}] \textbf{Idempotence} $(s_\tm^2=s_\tm)$: 
We need to prove that for any $x \in \Sigma^*$, $s_\tm(s_\tm(x)) = s_\tm(x)$. Let $c = \update(c_0, x)$. By definition, $s_\tm(x) = \embed(c)$. Now, $s_\tm(s_\tm(x)) = s_\tm(\embed(c)) = \embed(\update(c_0, \embed(c)))$. From Theorem \ref{thm:tm_as_am}, Equation \eqref{eq:tm_as_am} with $k=0$, we have $\update(c_0, \embed(c)) \sim c$. Since $\embed$ maps translationally equivalent configurations to identical sequences by \Cref{lemma:embedding_translation_invariant}, we have: $s_\tm(s_\tm(x))= \embed(\update(c_0, \embed(c))) = \embed(c) = s_\tm(x)$. This completes the proof.
\end{enumerate}

\paragraph{Proof of Lemma~\ref{lemma:tm_as_write_only} (Time and Space Preservation).}
By construction, each Turing step of $\tm$ corresponds to precisely one token generation under the next-token predictor $\nxt$ in $\mathcal{M}_{\tm}$. Consequently, the total number of tokens generated before halting matches the Turing machine’s step count, ensuring \emph{time complexity} is preserved exactly. Moreover, the state function $s_{\tm}$ “compresses” the entire history of tokens into a short sequence that encodes only the currently used tape cells plus head position. Since a Turing machine at most needs space proportional to the number of non-blank cells and the head’s location, the maximum length $\max_{k}|s_{\tm}(\f_{\nxt}^k(x))|$ is bounded by the tape usage of $\tm$. This shows \emph{space complexity} is also preserved. Hence, the constructed$\mathcal{M}_{\tm}$ and $s_{\tm}$ simulate $\tm$ \emph{optimally} in both time and space.

\section{Notations and Transformer Architecture}\label{sec:transformer_architecture}
Let $\Sigma$ be a finite vocabulary. A {decoder-only transformer} $\nxt_\theta: \Sigma^* \to \Sigma$ with $h$ heads, $L$ layers, hidden dimension $d$, and feed-forward width $w$ is defined as follows, with all parameters and operations in $\mathbb{R}$. We will first introduce the standard transformer architecture then list all non-standard architectural modifications useful for proving the main theorem.

\subsection{Standard Notations} \label{subsec:standard_notations}

\begin{definition}[Seq-to-Embedding Function Space] \label{def_seq_embed}
$\F(B)$ is defined as the class of all functions mapping from $\Sigma^* \to B$. We also define $\F = \cup_{d\in \mathbb{N}^+} \F(\mathbb{R}^d)$ as the union of all such classes across real spaces of all output dimensions.
\end{definition}

\begin{definition}[Canonical Extension to Seq-to-Seq Function]\label{def_seq_seq}
Let $A,B$ be two arbitrary sets and function $\psi:A^*\to B$ be a mapping from sequences to elements from $A$ to $B$. We define its canonical sequence-to-sequence extension $\seq{\psi}: A^* \to B^*$ as follows: for any input sequence $x=(x_1,\ldots,x_n) \in A^*$ of length $n$ to an output sequence constructed iteratively as
\begin{equation}
[\seq{\psi}(x)]_i = \psi(x_1,\ldots,x_i) \quad \text{for } i = 1,\ldots,n
\end{equation}
where $x_1,\ldots,x_i$ is the prefix of length $i$ of sequence $x$. 
\end{definition}

\begin{definition}[Probability Simplex]
For any natural number $n$, the $n$-dimensional probability simplex (with $n+1$ coordinates) is defined as 
\begin{equation}
\Delta^{n} = \left\{ (x_1, x_1, \ldots, x_{n+1}) \in {\mathbb{R}}^{n+1} \;\middle|\; x_i\ge 0, \forall i\in[n+1] \wedge \sum_{i=1}^{n+1} x_i = 1 \right\}.
\end{equation}
\end{definition}

\begin{definition}[Softmax]\label{def:softmax}
For any vector $x \in \mathbb{R}^m$ and temperature parameter $\beta > 0$, the \emph{softmax} function $\softmax_\beta: \mathbb{R}^m \to \Delta^{m-1}$ is defined as:
\begin{equation}
[\softmax_\beta(x)]_i = \frac{\exp(x_i/\beta)}{\sum_{j=1}^{m} \exp(x_j/\beta)} \quad \text{for } i = 1,\ldots,m
\end{equation}
where $\Delta^{m-1}$ denotes the $(m-1)$-dimensional probability simplex. When $\beta=1$, we simply write $\softmax$ without the subscript.
\end{definition}

In our analysis we will consider the instance-wise limit when $\beta\to 0$, which leads to the Average-Hard Attention (AHA)~\citep{merrill2022saturated} or Hardmax: 

\begin{definition}[Hardmax]\label{def:hardmax}
For any vector $x \in \mathbb{R}^n$, we define the \emph{hardmax} function, $\softmax_0: \mathbb{R}^n \to \Delta^{n-1}$, as the instance-wise limit of $0$ temperature limit of softmax
\begin{equation}
\softmax_0(x) \triangleq \lim_{\beta \to 0} \softmax_\beta(x).
\end{equation}
\end{definition}

The following lemma shows the explicit form of the hardmax function. Its proof is deferred to \Cref{subsec:omitted_proofs_transformer}.
\begin{lemma}[Hardmax Explicit Form]
\label{lemma:hardmax_form}
For any vector $x \in \mathbb{R}^n$, the zero-temperature softmax function outputs a uniform distribution over the set of indices achieving the maximum value:
\begin{equation}
[\softmax_0(x)]_i = \begin{cases}
\frac{1}{|\arg\max_j x_j|} & \text{if } i \in \arg\max_j x_j \\
0 & \text{otherwise}
\end{cases}
\end{equation}
\end{lemma}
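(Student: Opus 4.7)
The plan is to prove \Cref{lemma:hardmax_form} by an elementary normalization-and-limit argument. Let $x \in \mathbb{R}^n$, set $M = \max_j x_j$, and let $A = \arg\max_j x_j = \{j : x_j = M\}$. The central trick is to factor out the dominant exponential $\exp(M/\beta)$ from both the numerator and denominator of the softmax expression, which is numerically stable and isolates the asymptotic behavior cleanly.

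First, I rewrite
\begin{equation*}
[\softmax_\beta(x)]_i \;=\; \frac{\exp(x_i/\beta)}{\sum_{j=1}^{n} \exp(x_j/\beta)} \;=\; \frac{\exp((x_i - M)/\beta)}{\sum_{j=1}^{n} \exp((x_j - M)/\beta)},
\end{equation*}
where the second equality follows by multiplying numerator and denominator by $\exp(-M/\beta)$. This rewriting makes each exponent non-positive, which is the key reduction.

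Next, I analyze the limit $\beta \to 0^+$ termwise. For each $j \in A$, the exponent $(x_j - M)/\beta$ equals $0$, so $\exp((x_j - M)/\beta) = 1$ identically in $\beta$. For each $j \notin A$, we have $x_j - M < 0$, so $(x_j - M)/\beta \to -\infty$ and hence $\exp((x_j - M)/\beta) \to 0$. Summing the $n$ terms in the denominator, the limit is exactly $|A|$. The numerator tends to $1$ when $i \in A$ and to $0$ when $i \notin A$. Combining these, and using that the denominator's limit $|A| \ge 1$ is strictly positive so the quotient limit equals the ratio of limits, gives
\begin{equation*}
\lim_{\beta \to 0^+} [\softmax_\beta(x)]_i \;=\; \begin{cases} 1/|A| & i \in A, \\ 0 & i \notin A, \end{cases}
\end{equation*}
which matches the claimed formula by \Cref{def:hardmax}.

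This is really a one-step calculation with no serious obstacle; the only point that requires a sentence of care is justifying the interchange of limit and quotient, which is immediate since the denominator has a strictly positive limit. No uniformity in $x$ is needed because \Cref{def:hardmax} defines $\softmax_0$ as the instance-wise (pointwise in $x$) limit.
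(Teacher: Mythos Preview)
Your proof is correct and takes essentially the same approach as the paper: both factor out the dominant exponential $\exp(M/\beta)$, observe that the surviving terms are exactly the $|A|$ indices in the argmax set, and conclude the limit is uniform over that set. Your version is slightly more concise and adds the explicit remark that the quotient limit is justified by the denominator having a strictly positive limit, but the argument is the same.
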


\subsection{Transformer Layers}\label{def:transformer_layers}
Below we define the modules used standard transformer architecture. For simplicity, we define each module as a parametrized function mapping from sequences to embeddings (\Cref{def_seq_embed}), which can be extended to sequences-to-sequences by the canonical extension (\Cref{def_seq_seq}).
\begin{enumerate}
\item \textbf{Token Embeddings (TE)}~~
A \emph{Token Embedding} layer parametrized by parameters $\theta_{\te}\in \mathbb{R}^{d\times |\Sigma|}$ is a function $\te_{\theta_\te}: \Sigma \to \mathbb{R}^d$, which maps each element $x \in \Sigma$ to a $d$-dimensional vector $\theta_{\te}(x) \in \mathbb R^d$. We abuse the notation and extend the definition to sequences, that is, $\te_{\theta_\te}: \Sigma^* \to \mathbb{R}^d$ where $\te(x_1,\ldots,x_n) = \te(x_n)$ for any positive integer $n$ and $x_1,\ldots, x_n \in \Sigma$.

\item \textbf{Positional Embedding (PE)}~~
For $d_\pe\in\mathbb{N}$, let $\phi_\pe:\mathbb{N}^+\to \mathbb{R}^{d_\pe}$ be a parameter-free feature function.\footnote{A particular case which we will be interested in is the 1-dimensional feature $\phi_\pe(i) = i$.} A \emph{positional embedding} layer parametrized by parameters $\theta_{\pe}\in \mathbb{R}^{d\times d_\pe}$, $\pe_{\theta_\pe}: \mathbb{N}^+ \to \mathbb{R}^d$ maps each position $i\in\mathbb{N}^+$ to a $d$-dimensional vector $\pe(i)\triangleq \theta_\pe\cdot \phi_\pe(i)$. We abuse the notation and extend the definition to sequences, that is, $\pe_{\theta_\pe}: \Sigma^* \to \mathbb{R}^d$ where $\pe(x_1,\ldots,x_n) = \pe(n) = \theta_\pe \cdot \phi_\pe(n)$ for any positive integer $n$ and $x_1,\ldots, x_n \in \Sigma$. 

\item \textbf{Attention}~~
A (parameter-free) \emph{Attention} mechanism with temperature parameter $\beta \geq 0$ is a function $\attn_\beta: ( \mathbb{R}^{d_\attn} \times \mathbb{R}^{d_\attn} \times \mathbb{R}^{d'_\attn})^* \to \mathbb{R}^{d'_\attn}$ for $d_\attn,d'_\attn\in\mathbb{N}^+$. For a sequence of tuples of query/key/value vectors $(q_i,k_i,v_i)_{i=1}^n \in ( \mathbb{R}^{d_\attn} \times \mathbb{R}^{d_\attn} \times \mathbb{R}^{d'_\attn})^n$, the attention mechanism computes:
\begin{equation}
\alpha = \softmax_\beta\left( \left(q_n \cdot k_j\right)_{j=1}^n\right) \in \mathbb{R}^{n},
\end{equation}
where $\beta$ is the temperature parameter. In our analysis we will use $\beta \rightarrow 0$ (see \Cref{def:hardmax}) and we denote $\attn_0$ by $\AHA$ (Average-Hard Attention). The output of attention is then computed as a weighted sum of value vectors:
\begin{equation}
\attn_\beta((q_i,k_i,v_i)_{i=1}^n) = \sum_{j=1}^{n} \alpha_{j}v_{j}.
\end{equation}

\item \textbf{Single-Head Self-Attention Layer (SA)}~~
A \emph{Single-Head Self-Attention} layer parametrized by parameters $\theta_{\sa} = \left( W_Q, W_K, W_V, W_O \right)$ is a function $\sa_{\theta_{\sa}}: (\mathbb{R}^d)^* \to \mathbb{R}^{d}$. For a sequence of embeddings $(h_1, h_2, \ldots, h_n)$, the projection matrices $W_Q, W_K,W_V,W_O \in \mathbb{R}^{d_\sa \times d}$ map each embedding to query, key, and value vectors:
\begin{equation}
q = W_Q \cdot h_n,
\quad
k_{j} = W_K \cdot h_j,
\quad
v_{j} = W_V \cdot h_j.
\end{equation}
For a decoder-only (causal) transformer, the last position $n$ can only attend to positions $j \le n$. The output is computed using the attention mechanism:
\begin{equation}
\sa_{\theta_{\sa}}(h_1, h_2, \ldots, h_n)
=
W_O^\top \cdot \attn_\beta((q_i,k_i,v_i)_{i=1}^n).
\end{equation}

\item \textbf{Multi-Head Self-Attention Layer (MHA)}~~
A \emph{Multi-Head Self-Attention} layer parametrized by parameters $\theta_{\mha} = ( \theta_{\sa}^1, \theta_{\sa}^2, \ldots, \theta_{\sa}^h )$ is a function $\mha_{\theta_{\mha}}: (\mathbb{R}^d)^* \to \mathbb{R}^{d}$, where each $\theta_{\sa}^k = ( W_Q^k, W_K^k, W_V^k, W_O^k )$ for $k=1,\ldots,H$ parametrizes a separate single-head attention.
 For a sequence of embeddings $(h_1, h_2, \ldots, h_n)\in (\mathbb{R}^d)^n$, the multi-head attention output is defined as the concatenation of outputs from all individual attention heads:\footnote{We note our definition of multi-head attention is slightly different from the most classic definition of transformer, where the dimension of each head is the model dimension divided by the number of heads. We inflate the head dimension to model dimension for each head to ensure more attention heads is always better so things are simplified.}
\begin{equation}
\mha_{\theta_{\mha}}(h_1, h_2, \ldots, h_n)
~=~
\sum_{i=1}^{H} \sa_{\theta_{\sa}^i}(h_1, h_2, \ldots, h_n).
\end{equation}

This formulation allows the model to jointly attend to information from different representation subspaces. 

\item \textbf{Feed-Forward (FF)}~~
A \emph{Feed-Forward} layer with single activation function $\sigma:\mathbb{R}^k\to \mathbb{R}$ and parametrized by parameters $\theta_{\ff,\sigma} = \left( W_0, W_1, \ldots, W_k \right)$ is a function $\ff^\sigma_{\theta_{\ff}}: \mathbb{R}^d \to \mathbb{R}^d$, where $W_0, W_1, \ldots, W_k \in \mathbb{R}^{d_{\ff} \times d}$. 
\begin{align}\label{eq:glu}
    [\ff_{\theta_{\ff}}(h)]_i
    ~=~
    \sum_{j=1}^{d_{\ff}} W_{0,ji}\cdot  \sigma\left(\sum_{r=1}^d W_{1,jr} h_r, \sum_{r=1}^d W_{2,jr} h_r, \ldots,\sum_{r=1}^d W_{k,jr} h_r\right)
\end{align}
We also extend our definition of \emph{Feed-Forward} layer to the case with a finite set of activation functions, denoted by $\op_\act$. In this case we create a copy of feedforward layer for each of the activation function and define $\ff_{\theta_{\ff}} = \sum_{\sigma\in\op_\act} \ff^\sigma_{\theta_{\ff,\sigma}}$ with $\theta_{\ff} = \left(\theta_{\ff,\sigma}\right)_{\sigma\in\op_\act}$ where $\theta_{\ff,\sigma}$ is the parameter of the feedforward layer with activation function $\sigma$.
Similar to token embedding, we extend the definition to sequences, that is, $\ff_{\theta_{\ff}}: {\mathbb{R}^d}^* \to \mathbb{R}^d$ where $\ff_{\theta_{\ff}}(h_1,\ldots,h_n) = \ff_{\theta_{\ff}}(h_n)$ for any positive integer $n$ and $h_1,\ldots, h_n \in \mathbb{R}^d$.

\item \textbf{Identity and Residual Connections}~~
For any embedding dimension $d\in\mathbb{N}^+$, we will use the identity function $\id_d: (\mathbb{R}^d)^* \to \mathbb{R}^d$ to represent the residual connections in transformer layers. Similar to token embedding, we extend the definition to sequences, that is, $\id_d: {\mathbb{R}^d}^* \to \mathbb{R}^d$ where $\id_d(h_1,\ldots,h_n) = h_n$ for any positive integer $n$ and $h_1,\ldots, h_n \in \mathbb{R}^d$.

\item \textbf{Linear Projection Layer}~~
A \emph{Linear Projection Layer} parametrized by parameters $\theta_{\proj} \in \mathbb{R}^{d_\proj \times d}$ is a function $\proj_{\theta_{\proj}}: (\mathbb{R}^d)^* \to \mathbb{R}^{d_\proj}$. For a sequence of embeddings $(h_1, h_2, \ldots, h_n)$, the linear projection layer applies a linear transformation to the last embedding in the sequence:
\begin{equation}
    \proj_{\theta_{\proj}}(h_1, h_2, \ldots, h_n) 
~=~
\theta_{\proj} \cdot h_n.
\end{equation}



\item \textbf{Decoding Layer}~~
A \emph{(Greedy) Decoding Layer} is a special projection layer followed by argmax, parametrized by $\theta_{\dec} \in \mathbb{R}^{|\Sigma| \times d}$, where $d_\proj = |\Sigma|$. For a sequence of embeddings $(h_1, h_2, \ldots, h_n)$, the decoding layer first applies a linear projection to the last embedding:
\begin{equation}
    \dec_{\theta_{\dec}}(h_1, h_2, \ldots, h_n) = \theta_{\dec} \cdot h_n \in \mathbb{R}^{|\Sigma|}.
\end{equation}
Then, the next token is deterministically selected by taking the argmax:
\begin{equation}
x_{n+1} = \arg\max_{x\in\Sigma}\ [\dec_{\theta_{\dec}}(h_1, h_2, \ldots, h_n)]_x.
\end{equation}
Here we assume the argmax is well-defined, i.e., the maximum is unique.
\end{enumerate}

\begin{definition}[Transformer Layer]
\label{def:causal_transformer_layer}
A single {transformer layer} $\F_{\theta_\mha,\theta_\ff}: (\mathbb{R}^d)^* \to \mathbb{R}^d$ with residual connection and set of activation fucntions $\op_\act$, and average-hard attention is defined as: 
\begin{align}
    \tf_{\theta_\mha,\theta_\ff} = \left(\ff_{\theta_\ff} + \id_d \right)\circ \left(\seq{\mha_{\theta_\mha}} + \seq{\id_d}\right)
\end{align}
The {sequence-to-sequence} version of the layer is defined as:
\begin{align}
    \seq{\tf_{\theta_\mha,\theta_\ff}} = \left(\seq{\ff_{\theta_\ff}} + \seq{\id_d} \right)\circ \left(\seq{\mha_{\theta_\mha}} + \seq{\id_d}\right)
\end{align}
\end{definition}

\begin{definition}[Transformer as Next-Token Generator]
    \label{def:transformer_next_token_generator}
    Let $\theta = (\theta_\te, (\theta_\mha^{(\ell)})_{\ell=1}^L, (\theta_\ff^{(\ell)})_{\ell=1}^L, \theta_\dec)$ be the parameters of the transformer. The end-to-end next token generator $\nxt_\theta: \Sigma^* \to \Sigma$ is defined as:
    \begin{align}
        \nxt_\theta = \dec_{\theta_\dec}\circ \bigl(\bigcirc_{\ell=1}^L\seq{\tf_{\theta_\mha^{(\ell)},\theta_\ff^{(\ell)}}} \bigr)\circ\bigl(\seq{\pe_{\theta_\pe}} + \seq{\te_{\theta_\te}}\bigr),
    \end{align}
where $\bigcirc_{\ell=1}^L f_l$ means the composition of functions $f_L\circ f_{L-1}\circ\cdots\circ f_1$.
\end{definition}

\subsection{Function Classes Implementable by Transformers}\label{def:transformer_function_classes}

To understand what kind of next-token generator can be implemented by a transformer in the sense of \Cref{def:transformer_next_token_generator}, it is very useful to understand the class of seq-to-embedding functions implementable by transformers. After all, the next-token generator is a sequence-to-embedding function followed by a decoding layer. We define the class of seq-to-embedding functions implementable by transformers as follows:
\begin{definition}[Class of Embedding Functions Implementable by Transformers]\label{def:transformer_function_classes}
For any positive integers $d_\proj$, we define $\F_{\tf[\phi_\pe;\op_{\act}]}(d_\proj)$ as the class of seq-to-embedding functions ${\psi}: \Sigma^* \rightarrow \mathbb{R}^{d_\proj}$ that can be computed by fixed-size transformers (independent of the length of input sequence). That is, there exist positive integers $d, d_\ff, d_\sa, H$,$L$, and $\theta = (\theta_\te, (\theta_\mha^{(\ell)})_{\ell=1}^L, (\theta_\ff^{(\ell)})_{\ell=1}^L, \theta_\dec)$ with matching dimensions such that:
\begin{align}
    \psi = \proj_{\theta_\proj}\circ \bigl(\bigcirc_{\ell=1}^L\seq{\tf_{\theta_\mha^{(\ell)},\theta_\ff^{(\ell)}}} \bigr)\circ\bigl(\seq{\pe} + \seq{\te_{\theta_\te}}\bigr)
\end{align}
Finally we define $\F_{\tf[\phi_\pe;\op_{\act}]} = \cup_{d_\proj\in \mathbb{N}^+} \F_{\tf[\phi_\pe;\op_{\act}]}(d_\proj)$.
\end{definition}

Finally, we define the function class that can be implemented by a token embedding layer, a positional embedding layer, a single-head attention layer, a multi-head self-attention layer, a feed-forward layer, a linear projection layer, a transformer layer, and a decoding layer, with all possible input embedding dimensions and output embedding dimensions, as $\F_\te, \F_\pe, \op_\sa, \op_\mha, \op_\ff, \op_\proj, \op_\tf, \op_\dec$ respectively.

 For simplicity, we do not assume rounding like standard floating point arithmetics like \citep{li2024chain} and forward pass of transformer is done in full precision. However, because we only use average-hard attention and do not use layernorm, all the intermediate computation in the forward at position $n$ only requires $O(\log (n))$ precision. More concretely, all the intermediate steps can be written exactly as ratio of two integers bounded by a polynomial of $n$ independent of the input (but depending on Turing Machine). In the later parts of paper, we will still use $\mathbb{R}^d$ to be the codomain of the seq-to-embedding funcitons, but it can easily be replaced by $\mathbb{Q}^d$ with polynomial upper bound (in terms of input length) for the denominators and numerators.

\subsection{Closed Operators}\label{subsec:local_operators}
\begin{definition}[Average Hard Attention Operator]
    \label{def:aha}
    For any $d,d'\in\mathbb{N}^+$, we define the \emph{average-hard attention} operator $\aha: \F(\mathbb{R}^{d}) \times \F(\mathbb{R}^{d}) \times \F(\mathbb{R}^{d'}) \to \F(\mathbb{R}^{d'})$ as the operator induced by average-hard attention $\AHA$. Formally, for any three seq-to-embedding functions $q, k \in \F(\mathbb{R}^{d})$ and $v \in \F(\mathbb{R}^{d'})$, and any integer $n$, and any sequence $x\in\Sigma^n$, we define 
    \begin{align}
    \aha(q, k, v)(x) = \AHA(\seq{(q,k,v)}(x))=\sum_{j \le n} \alpha_j v(x_{1:j})
    \end{align}
    where $\alpha = \softmax_0\left((q(x) \cdot k(x_{1:j}))_{j=1}^n\right)$ are the attention weights using the hardmax function from Definition~\ref{def:hardmax}. $\seq{(q,k,v)}(x)$ is a sequence of length $n$ where the $i$th term is $(q(x_{1:i}),k(x_{1:i}),v(x_{1:i}))$ Specifically, $\alpha_j$ is non-zero only for positions $j$ that maximize the dot product $q(x) \cdot k(x_{1:j})$, with equal weight assigned to all such maximizing positions. 
\end{definition} 

\begin{definition}[Local Operator]\label{def:local_operator}
    We say an operator $\omega: \F(\mathbb{R}^{d_1}) \times \F(\mathbb{R}^{d_2})\times \ldots \times \F(\mathbb{R}^{d_k}) \to \F(\mathbb{R}^{d'})$ is \emph{local} for some positive integers $k$, $d'$, and $\{d_i\}_{i=1}^k$ iff there exists a function $\phi_\omega: \mathbb{R}^{\sum_{i=1}^k d_i} \to \mathbb{R}^{d'}$ such that for any $\psi_i \in \F(\mathbb{R}^{d_i})$, $\omega(\psi_1, \ldots, \psi_k) = \phi_\omega\circ[\psi_1, \ldots, \psi_k]$. 
\end{definition}

\begin{definition}[Direct Sum and Concatenation]
\label{def:direct_sum}
We use  $[u, v]$ denotes the concatenation of vectors $u$ and $v$. For two real vector spaces $\mathbb{R}^{d_1}$ and $\mathbb{R}^{d_2}$, their direct sum $\mathbb{R}^{d_1} \oplus \mathbb{R}^{d_2}$ is defined as the set of the concatenation of their individual elements:
\begin{equation}
\mathbb{R}^{d_1} \oplus \mathbb{R}^{d_2} = \{[v_1, v_2] \mid v_1 \in \mathbb{R}^{d_1}, v_2 \in \mathbb{R}^{d_2}\} =\mathbb{R}^{d_1+d_2}.
\end{equation}

For two functions $\phi_1: \mathbb{R}^{d_1} \to \mathbb{R}^{d'_1}$ and $\phi_2: \mathbb{R}^{d_2} \to \mathbb{R}^{d'_2}$, their direct sum $\phi_1 \oplus \phi_2: \mathbb{R}^{d_1} \oplus \mathbb{R}^{d_2} \to \mathbb{R}^{d'_1} \oplus \mathbb{R}^{d'_2}$ is defined as:
\begin{equation}
(\phi_1 \oplus \phi_2)([v_1, v_2]) = [\phi_1(v_1), \phi_2(v_2)] \quad \text{for all } v_1 \in \mathbb{R}^{d_1}, v_2 \in \mathbb{R}^{d_2}.
\end{equation}

For two function spaces $\op_1 = \{f: \mathbb{R}^{d_1} \to \mathbb{R}^{d'_1}\}$ and $\op_2 = \{g: \mathbb{R}^{d_2} \to \mathbb{R}^{d'_2}\}$, their direct sum $\op_1 \oplus \op_2$ is defined as:
\begin{equation}
\op_1 \oplus \op_2 = \{f \oplus g \mid f \in \op_1, g \in \op_2\}
\end{equation}
where each element is a function from $\mathbb{R}^{d_1} \oplus \mathbb{R}^{d_2}$ to $\mathbb{R}^{d'_1} \oplus \mathbb{R}^{d'_2}$. 

For two seq-to-embedding functions $\psi_1 \in \F(\mathbb{R}^{d_1})$ and $\psi_2 \in \F(\mathbb{R}^{d_2})$, their concatenation $[\psi_1, \psi_2]: \Sigma^* \to \mathbb{R}^{d_1 + d_2}$ is defined as:
\begin{equation}
[\psi_1, \psi_2](x) = [\psi_1(x), \psi_2(x)] \quad \text{for all } x \in \Sigma^*.
\end{equation}
\end{definition}

\begin{definition}[Closed Operators]
    \label{def:closed_operator}
  A \emph{closed operator} is a mapping $\omega: \F(\mathbb{R}^{d_1}) \times \F(\mathbb{R}^{d_2})\times \ldots \times \F(\mathbb{R}^{d_k}) \to \F(\mathbb{R}^{d'})$, for some positive integer $k$, that is $\omega({\psi}_1,\ldots,{\psi}_k) \in \F_{\tf[\phi_\pe;\op_{\act}]}$ for any ${\psi}_1,\ldots,{\psi}_k \in \F_{\tf[\phi_\pe;\op_{\act}]}$. 
\end{definition}

\section{Full-Access Sequence Processing}\label{sec:FASP}
Following the footsteps of \citep{weiss2021thinking,yang2024counting}, we define a more powerful version of RASP, called \emph{Full-Access Sequence Processing} language. Our language is poewrful than \rasp\ and \crasp\ in the following two senses: (1). \ours\  support sequence of vectors as opposed to sequence of numbers only. (2). We allow simulating standard hard attention mechanism, while \rasp\ must decide whether to ``select'' (attend) some entry only based on the indivual pair of key and query, but not the comparison between the rest pairs. \ours\ is provably equivalent to the expressiveness of transformers with average-hard attention and casual masking. 

\begin{definition}[\ours]\label{def:FASP}
Let $\phi_\pe:\mathbb{N}^+\to \mathbb{R}^{\pe}$ be a feature function for positional embedding and $\op_{\act}$ be the class of activation functions. We define the $\ours[\phi_\pe;\op_{\act}]$ program as the process of defining a sequence of token-sequence-to-embedding $\psi_1,\ldots,\psi_n\in\F$ using $\ours[\phi_\pe;\op_{\act}]$ operators. The program is defined as follows: at each step $t\in[n]$, the program maintains a set of defineable seq-to-embedding functions $\dfnb_t$, and defines a new function by concatenation functions in $\dfnb_t$, or applying local operators (corresponding to MLP), or non-local operators (corresponding to average-hard attention) to some function in $\dfnb_t$. Finally we add the newly defined function to $\dfnb_t$, which yields $\dfnb_{t+1}$.  In detail, we define the defineable functions at step $t\in[n]$: 
\begin{align}
    \dfnb_t \triangleq \F_{\te}\cup\{\phi_\pe\}\cup \{\psi_i \mid  1\le i\le t-1\}.
\end{align}
Note this also implies that $\dfnb_t  = \dfnb_{t-1}\cup \{\psi_t\}$.

$\psi_t$ at step $t$ has to be defined by applying one of the following four \emph{primitive} operators on already-defiend functions from $\dfnb_t$:
\begin{enumerate}
    \item \textbf{Concatenation}: $\psi_t = [\psi, \psi']$, where $\psi, \psi' \in \dfnb_t$. This operator concatenates the output embedding vector of two functions into a longer vector.
    
    \item \textbf{Average-Hard Attention}: $\psi_t = \aha(\psi, \psi', \psi'')$, where $\psi, \psi', \psi'' \in \dfnb_t$ and $\psi$,$\psi'$ have the same output dimension. This implements average-hard attention with query $\psi$, key $\psi'$, and value $\psi''$.
    
    \item \textbf{Linear Projection}: $\psi_t = \phi \circ \psi$, where $\psi \in \dfnb_t$ and $\phi$ is a linear transformation with arbitrary output dimension.
    
    \item \textbf{Nonlinear Activation}: $\psi_t = \phi \circ \psi$, where $\phi:\mathbb{R}^k\to\mathbb{R}\in \op_\act, \psi\in \dfnb_t\cap \F(k)$ for some positive integer $k$. \footnote{We allow multi-variable activation functions like Gated ReLU (ReGLU), $x,y\mapsto x[y]_+$.}  
\end{enumerate}

We denote the set of all such final outputed seq-to-embedding functions defineable by \ours as some $\psi_i$ with position embedding $\phi_\pe$ and activation functions $\op_\act$  as $\ours[\phi_\pe;\op_{\act}]$.

In particular, when we want to use \ours to define or represent a function mapping from a sequence of tokens $\Sigma^*$ to a single token in $\Sigma$, we could simply require to the embedding of dimension of $|\Sigma|$, assume an implicit order over $\Sigma$ so the index maps to a token in $\Sigma$ and return the index (token) with the largest value in the last function defined.\footnote{We could assume an arbitrary order to break ties, but we omit this for simplicity. In our examples we always ensure the argmax is unique.} 
\end{definition}

\begin{theorem}\label{thm:fasp_equivalent_tf}
For any positional encoding feature function $\phi_\pe$ and activation function class $\op_{\act}$, it holds that
$\ours[\phi_\pe;\op_{\act}] = \F_{\tf[\phi_\pe;\op_{\act}]}$.
\end{theorem}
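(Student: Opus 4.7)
}
The plan is to prove the two inclusions $\ours[\phi_\pe;\op_{\act}]\subseteq \F_{\tf[\phi_\pe;\op_{\act}]}$ and $\F_{\tf[\phi_\pe;\op_{\act}]}\subseteq\ours[\phi_\pe;\op_{\act}]$ separately. In both directions, the guiding idea is that the residual stream of a transformer plays the same role as the set $\dfnb_t$ of already-defined seq-to-embedding functions in a \ours\ program: each newly computed function can be stored in fresh coordinates of the residual stream, while all previously computed ones remain accessible through the identity branch.

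For the inclusion $\ours\subseteq\F_{\tf}$, I would proceed by induction on the number of steps $t$ in the \ours\ program defining $\psi_1,\ldots,\psi_n$. The induction hypothesis is that there is a single transformer whose residual stream, at every position, simultaneously stores all of $\te(x), \phi_\pe(n), \psi_1(x_{:n}),\ldots,\psi_{t-1}(x_{:n})$ in designated (disjoint) coordinate blocks. The base case, $t=1$, is handled by the token embedding layer $\te_{\theta_\te}$ together with the positional embedding $\pe_{\theta_\pe} = \theta_\pe\cdot\phi_\pe$, both of which are already explicit transformer constructions. For the inductive step I add a single transformer layer that implements the new primitive operator and writes the result into previously unused coordinates of the residual stream, keeping all earlier blocks untouched by zero-padding the relevant read/write matrices. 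Concatenation requires no computation (it is just a choice of which coordinates to regard as the output). Average-hard attention matches a single-head $\mha$ layer with $\beta\to 0$, where the query/key/value projections $W_Q,W_K,W_V$ read the corresponding previously stored blocks and $W_O$ writes to the new block. Linear projection is absorbed into $W_O$ of a trivial attention head or into a feed-forward layer. A nonlinear activation $\sigma\in\op_\act$ is exactly what the generalized $\ff^\sigma$ layer in \eqref{eq:glu} computes (possibly with multiple input coordinates). Finally, the output of the \ours\ program is an argmax over the last defined function of dimension $|\Sigma|$, which is precisely the role of $\dec_{\theta_\dec}$.

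For the reverse inclusion $\F_{\tf}\subseteq\ours$, I would simulate a given transformer layer by layer with \ours\ primitives. The initial residual stream $\te(x_n)+\pe(n)$ is obtained by concatenating the token embedding function (which lies in $\F_\te\subseteq\dfnb_t$) and $\phi_\pe$ (also in $\dfnb_t$), then applying a linear projection primitive to realize $\theta_\te$ and $\theta_\pe$ and summing via a further linear projection. For each transformer layer, I decompose $\mha$ into the sum of $H$ single-head attentions; each single head is realized by first applying three linear-projection primitives to obtain seq-to-embedding functions $q,k,v\in\dfnb_t$, then invoking the $\aha(q,k,v)$ primitive, and finally applying another linear projection for $W_O^\top$. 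The residual branch is captured by concatenating with the previous layer's output and summing via a linear projection. The feed-forward sublayer is expanded into a sequence of linear projections interleaved with activation primitives for each $\sigma\in\op_\act$, which matches \eqref{eq:glu} exactly. The decoding layer $\dec_{\theta_\dec}$, being a linear projection into $\mathbb{R}^{|\Sigma|}$ followed by argmax, fits the \ours\ output convention directly.

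The main obstacle I expect is the bookkeeping in the forward direction: a \ours\ program can freely reuse any element of $\dfnb_t$ at any later step, so the simulating transformer must preserve \emph{all} previously computed functions in its residual stream indefinitely. This is handled by carving the residual stream into a growing disjoint union of coordinate blocks and verifying that each newly added layer's attention/feed-forward weights read only from the designated source blocks and write only into the designated target block (with zeros elsewhere), so the residual connection keeps every earlier block intact. Verifying this invariant across the induction, together with checking that the causal mask in $\sa$ does not interfere with \ours's access to all prefix positions (since \ours\ already only permits a position to attend to its prefix via the canonical extension $\seq{\cdot}$), is mechanical but is where the bulk of the technical care lies.
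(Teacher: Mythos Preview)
Your proposal is correct and follows essentially the same two-direction induction as the paper: induction on program steps for $\ours\subseteq\F_{\tf}$ and induction on transformer layers for $\F_{\tf}\subseteq\ours$, with the same decomposition of each primitive and each sublayer. The only stylistic difference is that the paper packages your residual-stream bookkeeping into separate ``closed operator'' lemmas (\Cref{lem:concatenation_is_closed}, \Cref{lem:local_closed_operators}, \Cref{lem:aha_closed}, supported by \Cref{lem:direct_sum}) and then invokes them in the induction, whereas you describe the zero-padding and block-writing mechanics inline; the underlying argument is the same.
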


The high-level idea towards the proof of \Cref{thm:fasp_equivalent_tf} is to show that the four operators that generates new functions in $\ours[\phi_\pe;\op_{\act}]$ are also \emph{closed} under the class of embedding functions that can be implemented by transformers, namely $\F_{\tf[\phi_\pe;\op_{\act}]}$. We defer its full proof to \Cref{subsec:omitted_proofs_fasp} and only sketch the high-level idea via providing some key lemmas below.

As the base case, i.e., when the number of transformer layers is $0$, we know that the class of seq-to-embedding functions is simply the class of embedding functions, including both token embedding and positional embedding. 
\begin{lemma}
\label{lem:embedding_operators}
The function classes corresponding to token embedding and positional embeddings are subsets of $\F_{\tf[\phi_\pe;\op_{\act}]}$. Formally, $\F_{\pe},\F_{\te}\subseteq \F_{\tf[\phi_\pe;\op_{\act}]}$.
\end{lemma}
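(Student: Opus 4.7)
The plan is to realize any token or positional embedding function by the most trivial possible transformer: one in which every learnable attention and feed-forward parameter is set to zero, so that each transformer layer collapses—via its residual connection—to the identity on the sequence of input embeddings. The overall transformer then simply returns whatever sum of token and positional embeddings was fed in at the bottom, and we match the target function by zeroing out whichever of $\theta_\te$ or $\theta_\pe$ is not needed.

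Concretely, to show $\F_\te \subseteq \F_{\tf[\phi_\pe;\op_\act]}$, I would start from an arbitrary $\te_{\theta_\te}:\Sigma^*\to\mathbb{R}^{d_\te}$, set the transformer's hidden dimension to $d=d_\te$, reuse the same $\theta_\te$ in the transformer's token embedding, and pick $\theta_\pe=0$. I would then take the minimum number of layers allowed by the definition of $\F_{\tf[\phi_\pe;\op_\act]}$, namely $L=1$, set all weight matrices in $\theta_\mha^{(1)}$ and in every $\theta_{\ff,\sigma}^{(1)}$ (across $\sigma\in\op_\act$) to zero, and set $\theta_\proj=I_{d_\te}$. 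The verification is essentially two short observations: if $W_V=0$ then every value vector is zero and the attention output is zero regardless of the attention distribution; and if the outer matrix $W_0$ of each activation block is zero then the feed-forward output is zero by (\ref{eq:glu}). Combined with the residual connections in \Cref{def:causal_transformer_layer}, the single transformer layer is the identity on the embedding stream, so the end-to-end function is $\te_{\theta_\te}$.

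The argument for $\F_\pe \subseteq \F_{\tf[\phi_\pe;\op_\act]}$ is symmetric: reuse the given $\theta_\pe$ in the transformer's positional embedding layer (this is well-defined because the transformer uses the same fixed feature function $\phi_\pe$), set $\theta_\te=0$, and employ the same zero-weight single-layer transformer with identity projection. The output is then $\theta_\pe\cdot \phi_\pe(n)=\pe_{\theta_\pe}(x)$ on any input of length $n$, as desired.

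There is essentially no obstacle: the lemma is a sanity check that the atomic building blocks sit inside the full transformer-implementable class. The only mild subtlety is that the definition of $\F_{\tf[\phi_\pe;\op_\act]}$ insists on a positive integer number of layers, so we cannot take ``$L=0$''; we instead spend one zero-weight layer whose residual connection absorbs this constraint. All other verifications are direct from the definitions collected in \Cref{def:transformer_layers} and \Cref{def:transformer_next_token_generator}.
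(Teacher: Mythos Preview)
Your proposal is correct and is exactly the construction one would expect; the paper in fact treats this lemma as a sanity check and does not spell out a proof beyond invoking it as the base case in the proof of \Cref{thm:fasp_equivalent_tf}. You have actually been more careful than the paper in noting that the definition of $\F_{\tf[\phi_\pe;\op_{\act}]}$ requires $L$ to be a positive integer, so a zero-layer transformer is not literally available and one must absorb a single identity-acting layer via zeroed weights and residual connections.
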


Next we will also identify two main types of closed operators: concatenation and transformer layer, where the latter includes local operators by feedforward networks with non-linear activation functions and non-local operators by average-hard attention.

\begin{lemma}[Closedness Under Concatenation, Direct Sum, and Sum]\label{lem:direct_sum}
    We have the following closedness property for seq-to-embedding functions under concatenation, direct sum, and sum:
\begin{enumerate}
    \item For any set $\F\in \{ \F_\pe,\F_\te\}$, for any $\psi_1, \psi_2\in \F$, their concatenation $[\psi_1, \psi_2]\in \F$.
    \item For any $d,d'\in\mathbb{N}$, let $0_{d,d'}:\mathbb{R}^{d}\to\mathbb{R}^{d'}$ be the zero function (mapping every input to $0\in\mathbb{R}^{d'}$). For any set $\op\in \{ \op_\sa,\op_\mha,\op_\ff, \op_\proj\}$, (a). $0_{d,d'}\in\op$ and (b).for any $\phi \in \op $, the direct sum $\phi\oplus 0_{d,d'}\in \op$.
    \item For any set $\op\in \{ \op_\mha,\op_\ff, \op_\proj\}$, $\op= \op+\op\triangleq \{\phi_1+\phi_2\mid \phi_1,\phi_2\in\op\}$. Moreover, $\op_\mha$ is the sum closure of $\op_\sa$, that is, $\op_\mha=\{\sum_{j=1}^h\phi_j\mid \phi_j\in\op_\sa, h\in\mathbb{N}^+\}$.
    \item For any set $\op\in \{\op_\mha,\op_\ff, \op_\proj, \op_\tf, \{\id_d\mid d\in\mathbb{N}\}\}$, for any $\phi_1, \phi_2\in \op $, their direct sum $\phi_1\oplus\phi_2\in \op$.
\end{enumerate}
\end{lemma}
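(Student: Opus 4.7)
The plan is to verify each closure property by exhibiting explicit parameter choices that realize the desired function. The common principle underlying every construction is that transformer components admit a block-diagonal parameter layout in which two independent computations can be run in parallel on disjoint coordinate blocks of a single widened layer.

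For part 1, given two token embeddings $\te_{\theta_1}, \te_{\theta_2}$ with parameters in $(\mathbb{R}^{d_1})^\Sigma$ and $(\mathbb{R}^{d_2})^\Sigma$, I would define a single token embedding $\theta \in (\mathbb{R}^{d_1+d_2})^\Sigma$ by coordinate-wise stacking $\theta(x) = [\theta_1(x), \theta_2(x)]$, which produces exactly $[\te_{\theta_1}, \te_{\theta_2}]$. For positional embeddings, since $\pe_{\theta_\pe}(n) = \theta_\pe \cdot \phi_\pe(n)$ uses a shared feature function $\phi_\pe$, I stack the two projection matrices vertically. For part 2(a), setting every weight matrix to zero in any of the parameterized classes realizes the zero function $0_{d,d'}$ with the input and output dimensions chosen accordingly. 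For part 2(b), padding all weight matrices with zero columns on the new input coordinates and zero rows on the new output coordinates yields a parameterization of $\phi \oplus 0_{d,d'}$ inside the same class, because the extended input coordinates are never read and the extended output coordinates always emit zero.

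For part 3, sum closure for $\op_\ff$ follows by block-stacking two feed-forward layers of hidden widths $d_{\ff,1}, d_{\ff,2}$ into one of hidden width $d_{\ff,1}+d_{\ff,2}$ (stacking $W_0, W_1, \ldots, W_k$ along the hidden dimension), using that the contraction by $W_0$ is linear and therefore splits additively over the two blocks of hidden units. Sum closure for $\op_\proj$ is immediate since a sum of linear maps is linear. For $\op_\mha$, by the definition of multi-head attention in \Cref{def:transformer_layers} an element of $\op_\mha$ is itself a finite sum of elements of $\op_\sa$, so concatenating the head lists of two MHA layers realizes both sum closure and the identity $\op_\mha = \{\sum_{j=1}^h \phi_j \mid \phi_j \in \op_\sa, h\in\mathbb{N}^+\}$.

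For part 4, direct-sum closure for the parameterized classes follows from the decomposition $\phi_1 \oplus \phi_2 = (\phi_1 \oplus 0) + (0 \oplus \phi_2)$ together with parts 2(b) and 3. For $\{\id_d \mid d\in\mathbb{N}\}$ the equality $\id_{d_1} \oplus \id_{d_2} = \id_{d_1+d_2}$ is immediate. For $\op_\tf$, I would decompose each transformer layer as in \Cref{def:causal_transformer_layer} into its MHA, FF, and residual identity components and then distribute the direct sum over composition, sum, and these individual components. The only real bookkeeping obstacle is maintaining dimensional consistency between the MHA-residual and FF-residual sub-layers under direct sum so that the residual additions $\seq{\mha}+\seq{\id_d}$ and $\ff+\id_d$ remain well-typed; using the common widened embedding dimension $d = d_1 + d_2$ throughout both sub-layers handles this automatically, after which the remaining verification reduces to routine block-matrix checks.
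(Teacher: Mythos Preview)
Your proposal is correct and follows essentially the same approach as the paper: explicit block-diagonal parameter constructions for parts 1–3, and the decomposition $\phi_1 \oplus \phi_2 = (\phi_1 \oplus 0) + (0 \oplus \phi_2)$ combined with the component-wise distribution of $\oplus$ over the $\tf$ layer structure for part 4. The paper's writeup spells out a few of the block matrices explicitly (e.g., the padded $W_Q', W_K', W_V', W_O'$ for $\op_\sa$), but the underlying ideas are identical to yours.
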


\begin{lemma}\label{lem:concatenation_is_closed}
    The concatenation operator is closed over $\F_{\tf[\phi_\pe;\op_{\act}]}$, that is, $[\cdot,\cdot]: \F_{\tf[\phi_\pe;\op_{\act}]}^2 \to \F_{\tf[\phi_\pe;\op_{\act}]}$. 
\end{lemma}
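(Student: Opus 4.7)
The plan is to give an explicit construction of a transformer realizing $[\psi_1,\psi_2]$ by running the two given realizations of $\psi_1$ and $\psi_2$ in parallel on disjoint coordinate blocks. The heavy lifting for this direct-sum construction is already packaged in \Cref{lem:direct_sum}, so the proof amounts to threading those closure properties through the definition of a transformer realization.

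First I unpack the hypothesis: since $\psi_i\in\F_{\tf[\phi_\pe;\op_\act]}$ for $i=1,2$, there exist positive integers $L_i,d^{(i)},d_\proj^{(i)}$ and parameters giving token embedding $\te^{(i)}\in\F_\te$, positional embedding $\pe^{(i)}\in\F_\pe$, a stack of transformer layers $(\tf^{(i,\ell)})_{\ell=1}^{L_i}\subset\op_\tf$, and a final projection $\proj^{(i)}\in\op_\proj$ whose composition equals $\psi_i$. To align the two depths, I pad the shallower realization with identity transformer layers up to $L=\max(L_1,L_2)$. Such an identity layer lies in $\op_\tf$ because setting both the MHA and FF parameters to the zero functions — which belong to $\op_\mha$ and $\op_\ff$ respectively by part 2 of \Cref{lem:direct_sum} — collapses the layer $(\ff+\id)\circ(\mha+\id)$ to the identity $\id_{d^{(i)}}$.

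Second, I assemble the combined transformer layer by layer via direct sums. The combined token embedding $[\te^{(1)},\te^{(2)}]$ lies in $\F_\te$ by part 1 of \Cref{lem:direct_sum}, and likewise the combined positional embedding $[\pe^{(1)},\pe^{(2)}]$ lies in $\F_\pe$. For each $\ell\in[L]$, the combined $\ell$-th layer $\tf^{(1,\ell)}\oplus\tf^{(2,\ell)}$ lies in $\op_\tf$ by part 4 of the same lemma, and the combined projection $\proj^{(1)}\oplus\proj^{(2)}$ lies in $\op_\proj$ by the same clause. A straightforward induction on $\ell$ then shows that, on any input $x$, the hidden state at layer $\ell$ of the combined model is exactly the concatenation of the hidden states at layer $\ell$ of the two constituent models; the base case is immediate from the form of the combined embeddings, and the inductive step uses that direct-summed operators act block-diagonally. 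Applying the combined projection at the end yields $[\psi_1(x),\psi_2(x)]$, so $[\psi_1,\psi_2]\in\F_{\tf[\phi_\pe;\op_\act]}(d_\proj^{(1)}+d_\proj^{(2)})\subseteq\F_{\tf[\phi_\pe;\op_\act]}$.

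The only non-trivial step is the depth-matching via identity padding; the rest is structural bookkeeping delegated to \Cref{lem:direct_sum}. The main conceptual point — that direct-summing two attention or feedforward modules can be realized by a single module with block-diagonal weight matrices (and, for multi-head attention, by taking the disjoint union of heads so that each head looks only at its own half) — is entirely absorbed by part 4 of \Cref{lem:direct_sum}, so no additional attention-specific computation is needed here.
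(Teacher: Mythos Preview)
Your proof is correct and follows essentially the same approach as the paper: pad the shallower transformer with identity layers to equalize depths, then take direct sums of the embeddings, transformer layers, and projections using the closure properties from \Cref{lem:direct_sum}. Your version is slightly more explicit about why identity layers lie in $\op_\tf$ and about the inductive correctness of the block-diagonal action, but these are minor elaborations on the same construction.
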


\begin{lemma}[Local Closed Operators]
\label{lem:local_closed_operators}
A local operator $\omega$ is closed over $\F_{\tf[\phi_\pe;\op_{\act}]}$, that is, $\omega: \F_{\tf[\phi_\pe;\op_{\act}]}(d_1) \times \F_{\tf[\phi_\pe;\op_{\act}]}(d_2)\times \ldots \times \F_{\tf[\phi_\pe;\op_{\act}]}(d_k) \to \F_{\tf[\phi_\pe;\op_{\act}]}(\mathbb{R}^{d'})$ for some positive integers $k$, $d'$, and $\{d_i\}_{i=1}^k$, if its equivalent local function $\phi_\omega$ can be implemented by a multi-layer network with activation functions in $\op_\act$

\end{lemma}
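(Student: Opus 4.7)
}

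The plan is to reduce the statement to two ingredients: first, that the concatenation $[\psi_1,\ldots,\psi_k]$ lies in $\F_{\tf[\phi_\pe;\op_{\act}]}(\sum_i d_i)$, which is granted by \Cref{lem:concatenation_is_closed}; and second, that post-composition with $\phi_\omega$ preserves membership in $\F_{\tf[\phi_\pe;\op_{\act}]}$. The crux of the argument is therefore to show that for any seq-to-embedding function $\psi\in \F_{\tf[\phi_\pe;\op_{\act}]}$, the composition $\phi_\omega\circ\psi$ is again implementable by a finite-size transformer with the same positional feature $\phi_\pe$ and the same activation family $\op_\act$.

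Since $\phi_\omega$ is realizable as a multi-layer network whose nonlinearities are in $\op_\act$, I would write $\phi_\omega=L_\ell\circ\sigma_\ell\circ L_{\ell-1}\circ\cdots\circ\sigma_1\circ L_0$ with affine layers $L_i$ and activations $\sigma_i\in\op_\act$, and then realize each stage by appending one extra transformer layer on top of the transformer computing $\psi$. A transformer layer applies $(I+\ff)\circ(I+\mha)$ at each position. I would set the multi-head attention to zero, using $0_{d,d}\in\op_\mha$ from \Cref{lem:direct_sum}(2)(a), so that only the feed-forward with residual $h\mapsto h+\ff(h)$ acts pointwise. The main obstacle is exactly this residual-plus-attention rigidity: one cannot overwrite a coordinate, only add to it. I would resolve this by a standard scratch-space construction, enlarging the hidden dimension via the padding (direct-sum) closures in \Cref{lem:direct_sum}(2)(b) and (4), so that the existing embedding is preserved on one block of coordinates while the next affine-then-activation stage $L_i\circ\sigma_i$ is computed into a freshly zero-initialized block. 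The next transformer layer then reads from the freshly populated block and writes into a new one, and so on.

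Iterating this for $\ell$ layers produces a transformer whose hidden state at every position carries the value $\phi_\omega(\psi(x))$ in some designated block of coordinates. A final linear projection $\proj$ extracts that block, yielding the seq-to-embedding function $\phi_\omega\circ\psi=\omega(\psi_1,\ldots,\psi_k)$ as an element of $\F_{\tf[\phi_\pe;\op_{\act}]}(d')$ by definition. Composing the transformer for $[\psi_1,\ldots,\psi_k]$ obtained from \Cref{lem:concatenation_is_closed} with the transformer just constructed finishes the proof.

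I expect the main technical obstacle to be the bookkeeping for the scratch-space construction, namely verifying that the zeroed attention sublayer and the block-diagonal feed-forward still lie in $\op_\mha$ and $\op_\ff$ respectively after direct summation with padding coordinates, and that the implicit dimension-matching constraints in \Cref{def:causal_transformer_layer} are respected. All of this follows mechanically from \Cref{lem:direct_sum}, in particular from the closure of $\op_\mha,\op_\ff,\op_\proj$ under direct sums and from $0\in\op_\mha$, so the argument is conceptually clean but notationally careful.
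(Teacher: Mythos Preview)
The proposal is correct and takes essentially the same approach as the paper: use \Cref{lem:concatenation_is_closed} to concatenate the inputs, then append extra transformer layers with zeroed attention so that only the feed-forward sublayer acts, implementing $\phi_\omega$ stage by stage, and finish with a linear projection. Your scratch-space construction to handle the residual connection is more explicit than the paper's version, which instead absorbs the old projection $\proj_{\theta_\proj}$ into the first linear map of the new feed-forward block and appeals to composition of closed operators for the multi-layer case; both amount to the same bookkeeping.
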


Besides the local operators induced feedforward networks, we also have the following non-local closed operator induced by attention~(\Cref{lem:aha_closed}).

\begin{lemma}[AHA is a Closed Operator]
\label{lem:aha_closed}
Average-hard attention is a closed operator over $\F_{\tf[\phi_\pe;\op_{\act}]}$, that is, for any $q, k \in \F_{\tf[\phi_\pe;\op_{\act}]}(\mathbb{R}^d)$ and $v \in \F_{\tf[\phi_\pe;\op_{\act}]}(\mathbb{R}^{d'})$, we have $\aha(q, k, v) \in \F_{\tf[\phi_\pe;\op_{\act}]}(\mathbb{R}^{d'})$.
\end{lemma}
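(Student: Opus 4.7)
The plan is to bolt one extra self-attention layer onto a transformer that already produces $(q,k,v)$ at every prefix. Two ingredients make this clean. First, closure under concatenation, \Cref{lem:concatenation_is_closed}, gives a single transformer computing $[q,k,v]$. Second, causal masking guarantees that the hidden state at position $i$ of a decoder-only transformer depends only on $x_{1:i}$, so evaluating a seq-to-embedding function at the prefix $x_{1:i}$ is the same as reading the $i$-th coordinate of its seq-to-seq forward pass on the full input $x$.

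Concretely, I first invoke \Cref{lem:concatenation_is_closed} on $q,k,v$ to obtain parameters $(\theta_\te,\theta_\pe,\{\theta_\mha^{(\ell)},\theta_\ff^{(\ell)}\}_{\ell=1}^L,\theta_\proj)$ such that, writing $B=\bigl(\bigcirc_{\ell=1}^L\seq{\tf_{\theta_\mha^{(\ell)},\theta_\ff^{(\ell)}}}\bigr)\circ(\seq{\pe_{\theta_\pe}}+\seq{\te_{\theta_\te}})$, one has $\theta_\proj\cdot[B(x)]_n=[q(x),k(x),v(x)]$. By causal masking, $\theta_\proj\cdot[B(x)]_i=[q(x_{1:i}),k(x_{1:i}),v(x_{1:i})]$ at every position $i$. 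Decompose $\theta_\proj=[P_q;P_k;P_v]$ according to the three blocks. I then append one more transformer layer. Its single-head attention is set to $W_Q=P_q$, $W_K=P_k$, $W_V=P_v$, with $W_O^{\top}$ a linear map that writes the attention output into a dedicated $d'$-wide block of the residual stream and zeros elsewhere. By the paper's average-hard attention (\Cref{def:aha}), the output of this layer at position $n$ equals $\sum_j\alpha_j v(x_{1:j})$ where $\alpha$ is uniform on $\arg\max_j q(x)\cdot k(x_{1:j})$, i.e. exactly $\aha(q,k,v)(x)$. I choose the following feed-forward sublayer to be the zero map (permitted by \Cref{lem:direct_sum}), and a final $\proj$ reads out the dedicated block to give $\aha(q,k,v)(x)\in\mathbb{R}^{d'}$.

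The only place any real care is needed is dimension bookkeeping around the appended layer: one must match the shared head width $d_\sa$ with $d$ (padding $q,k$ with zeros if $d_\sa>d$), fit the $d'$-dimensional value into the same $d_\sa$-wide head, and widen the residual stream so that it carries a fresh $d'$-block to host the attention output without colliding with the quantities already stored in $[B(x)]_n$. Each of these is the standard pad-with-zeros / direct-sum-with-identity-or-zero idiom licensed by \Cref{lem:direct_sum}, which also lets me inflate the original body's MHA and FF modules to live in the widened stream without changing their behaviour on the original coordinates. The conceptual content of the lemma is simply that \Cref{def:aha} mirrors, token-for-token, what a single causal average-hard attention head already computes, once $q,k,v$ have been made available in the residual stream; the rest is bookkeeping.
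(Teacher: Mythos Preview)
Your proposal is correct and follows exactly the approach the paper sketches: concatenate $q,k,v$ via \Cref{lem:concatenation_is_closed}, then append one transformer layer whose attention head absorbs the trailing projection into $W_Q,W_K,W_V$ and computes $\aha(q,k,v)$, mirroring how the proof of \Cref{lem:local_closed_operators} absorbs the projection into the added feed-forward layer. The paper omits the details you supply (residual bookkeeping, widening the stream, zeroing the FF sublayer), but your construction is the intended one.
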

The proof of \Cref{lem:aha_closed} is similar to that of \Cref{lem:local_closed_operators}, which uses the definition of $\F_{\tf[\phi_\pe;\op_{\act}]}$ and the closedness property of concatenation~(\Cref{lem:concatenation_is_closed}). The proof is straightforward and omitted.

\subsection{Custom Operators in FASP}\label{subsec:custom_operators}
To further improve the convenience of coding in FASP and proving certain functions can be expressed by constant depth transformers uniformly, we introduce an extension to \ours, which instead of allowing the four primitive operators, we also allow other closed operators~\Cref{def:closed_operator}. Below we are going to introduce a specific grammar that allows us to build new custom operators that are commonly used in transformer models. These operators are not primitive operators in FASP, but can be easily implemented by composition of the primitive operators defined in \Cref{def:FASP}. Those custom operators are closed under the class of embedding functions that can be implemented by transformers, namely $\F_{\tf[\phi_\pe;\op_{\act}]}$, since each primitive operator is closed.

\begin{definition}[Custom Closed Operators]
    Let $\omega:\F(\mathbb{R}^{d_1}) \times \F(\mathbb{R}^{d_2})\times \ldots \times \F(\mathbb{R}^{d_k}) \to \F(\mathbb{R}^{d'})$ be an operator and let its input be $\tilde \psi_1,\ldots, \tilde \psi_{k}$. We say $\omega$ is a \emph{custom closed operator} if it can be expressed as a composition of primitive operators in \ours and other previously defined custom closed operators\footnote{Those operators cannot be defined with $\omega$}.

    In detail, the definition of $\omega$ via composition is similar to \ours and is as follows: 
    \begin{itemize}
        \item at each step $t\in[n]$, the program maintains a set of defineable seq-to-embedding functions $\dfnb_t \triangleq \F_{\te}\cup\{\phi_\pe\}\cup \{\psi_i \mid  1\le i\le t-1\} {\color{purple}\cup \{ \tilde \psi_j\mid 1\le j\le k\}}$.
        \item at each step $t\in[n]$, the program defines a new function $\psi_t$ by applying either one of the four primitive operators in \ours, or a previously defined custom closed operator to some functions in $\dfnb_t$.
        \item the operator $\omega$ returns the last function defined in the program, i.e., $\psi_n$, on input of $\tilde \psi_1,\ldots, \tilde \psi_{k}$.
    \end{itemize}

    When the definition via composition is short, we also write them in an inline format without explicitly naming the intermediate $\psi_i$.
\end{definition}

\begin{example}[Addition]\label{ex:addition}
    We define the \emph{addition} operator $\add: \F(\mathbb{R}) \times \F(\mathbb{R}) \to \F(\mathbb{R})$ as the operator that takes two seq-to-embedding functions $\psi, \psi' \in \F(\mathbb{R}^{d})$ and outputs their element-wise sum:
    \begin{align}
        \add(\psi, \psi')(x) = \psi(x) + \psi'(x) \quad \text{for all } x \in \Sigma^*.
    \end{align}
    \begin{lemma}\label{lem:concatenation_is_closed}
        The concatenation operator is closed over $\F_{\tf[\phi_\pe;\op_{\act}]}$, that is, $[\cdot,\cdot]: \F_{\tf[\phi_\pe;\op_{\act}]}^2 \to \F_{\tf[\phi_\pe;\op_{\act}]}$. 
    \end{lemma}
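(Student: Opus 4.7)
The plan is to combine the two transformers that compute $\psi_1$ and $\psi_2$ into a single larger transformer that computes the concatenation $[\psi_1,\psi_2]$, by taking a component-wise direct sum of their parameters. Let $\psi_j\in\F_{\tf[\phi_\pe;\op_{\act}]}$ for $j\in\{1,2\}$ be implemented by a transformer with hidden dimension $d^{(j)}$ and $L^{(j)}$ layers, so that $\psi_j = \proj_{\theta^{(j)}_\proj}\circ\bigl(\bigcirc_{\ell=1}^{L^{(j)}}\seq{\tf^{(j)}_\ell}\bigr)\circ\bigl(\seq{\pe_{\theta^{(j)}_\pe}}+\seq{\te_{\theta^{(j)}_\te}}\bigr)$ by definition.

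First I would equalize depths. Set $L=\max(L^{(1)},L^{(2)})$ and pad the shallower transformer with $L-L^{(j)}$ identity layers. By \Cref{lem:direct_sum}(2), the zero function lies in both $\op_\mha$ and $\op_\ff$; since a transformer layer is $\tf_{\theta_\mha,\theta_\ff}=(\ff_{\theta_\ff}+\id)\circ(\seq{\mha_{\theta_\mha}}+\seq{\id})$, setting both $\mha$ and $\ff$ to zero makes the residual connections drive the layer to be the identity. So this padding does not alter the computed function, and both transformers can be assumed to have depth exactly $L$.

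Second, I would construct the combined transformer with hidden dimension $d^{(1)}+d^{(2)}$ by stacking components. The combined token embedding $[\te_{\theta^{(1)}_\te},\te_{\theta^{(2)}_\te}]$ lies in $\F_\te$ and the combined positional embedding $[\pe_{\theta^{(1)}_\pe},\pe_{\theta^{(2)}_\pe}]$ lies in $\F_\pe$, both by \Cref{lem:direct_sum}(1). For each $\ell\in[L]$ the combined layer is the direct sum $\tf^{(1)}_\ell\oplus\tf^{(2)}_\ell$, which lies in $\op_\tf$ by \Cref{lem:direct_sum}(4). Finally, the direct sum of projections $\theta^{(1)}_\proj\oplus\theta^{(2)}_\proj$ lies in $\op_\proj$, again by \Cref{lem:direct_sum}(4), and outputs a vector of dimension equal to the sum of the two output dimensions.

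Third, a short induction on $\ell$ shows that after the $\ell$-th combined layer, the hidden state at every position equals the concatenation of the hidden states produced by the two original transformers at that position. The base case follows because the combined embeddings are concatenations of the individual embeddings; the inductive step follows from the defining identity $(\phi_1\oplus\phi_2)([h^{(1)},h^{(2)}])=[\phi_1(h^{(1)}),\phi_2(h^{(2)})]$. Applying the direct-sum projection at the end yields exactly $[\psi_1(x),\psi_2(x)]$, establishing $[\psi_1,\psi_2]\in\F_{\tf[\phi_\pe;\op_{\act}]}$. The main obstacle is essentially bookkeeping: the nontrivial structural content is verifying that direct sums of attention and feed-forward operators are themselves valid attention and feed-forward operators, which is precisely the content of \Cref{lem:direct_sum}(4). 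Once that is available, the remainder is parameter packing together with a routine induction on layer depth.
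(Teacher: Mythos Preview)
Your proposal is correct and follows essentially the same approach as the paper: equalize depths by padding with identity layers, then take the direct sum of embeddings, transformer layers, and projections, invoking \Cref{lem:direct_sum} parts (1) and (4) at each step. The only differences are presentational—you spell out the padding argument via zero $\mha$/$\ff$ components and the residual, and you include an explicit induction on layer depth, whereas the paper simply asserts the construction and that $\id_d\in\op_\tf$.
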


    \begin{lemma}[Local Closed Operators]
    \label{lem:local_closed_operators}
    A local operator $\omega$ is closed over $\F_{\tf[\phi_\pe;\op_{\act}]}$, that is, $\omega: \F_{\tf[\phi_\pe;\op_{\act}]}(d_1) \times \F_{\tf[\phi_\pe;\op_{\act}]}(d_2)\times \ldots \times \F_{\tf[\phi_\pe;\op_{\act}]}(d_k) \to \F_{\tf[\phi_\pe;\op_{\act}]}(\mathbb{R}^{d'})$ for some positive integers $k$, $d'$, and $\{d_i\}_{i=1}^k$, if its equivalent local function $\phi_\omega$ can be implemented by a multi-layer network with activation functions in $\op_\act$
    
    \end{lemma}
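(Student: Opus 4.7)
The plan is to reduce the statement to two independent claims: (i) concatenation of finitely many functions in $\F_{\tf[\phi_\pe;\op_{\act}]}$ stays in $\F_{\tf[\phi_\pe;\op_{\act}]}$, and (ii) post-composition with any function $\phi_\omega$ that is a multi-layer network with activations in $\op_\act$ preserves membership in $\F_{\tf[\phi_\pe;\op_{\act}]}$. Claim (i) follows by induction on $k$ from Lemma~\ref{lem:concatenation_is_closed}: one concatenation packs two transformer-implementable embeddings into a single transformer by taking direct sums of their parameters (Lemma~\ref{lem:direct_sum} item 4), then iterating gives $[\psi_1,\ldots,\psi_k]\in\F_{\tf[\phi_\pe;\op_{\act}]}(\mathbb{R}^{\sum_i d_i})$. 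Since by the definition of local operator $\omega(\psi_1,\ldots,\psi_k)=\phi_\omega\circ[\psi_1,\ldots,\psi_k]$, it then remains only to establish (ii).

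For (ii), I would write $\phi_\omega = L_M\circ\cdots\circ L_1$, decomposing it into elementary layers each of the form $L_j(h)=W_{0,j}^\top\sigma_j(W_{1,j}h,\ldots,W_{k_j,j}h)$ with $\sigma_j\in\op_\act$, matching exactly the form of a single feed-forward block in (\ref{eq:glu}). Starting from the transformer that implements $[\psi_1,\ldots,\psi_k]$, I append $M$ additional transformer layers, one per $L_j$. In the $j$-th appended layer I zero out the MHA sublayer by setting every head to the zero single-head operator (using $0\in\op_\sa$ and closure of $\op_\mha$ under sums from Lemma~\ref{lem:direct_sum} item 3), so the MHA+residual path simply forwards its input; then I pick FF parameters so that $\ff(h)=L_j(h)-h$, which is itself a two-layer MLP with activation $\sigma_j\in\op_\act$ and hence expressible by one FF block. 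After the residual, the layer outputs $h+(L_j(h)-h)=L_j(h)$, as desired. Composing all $M$ layers realizes $\phi_\omega$ on top of $[\psi_1,\ldots,\psi_k]$, and a final $\proj_{\theta_\proj}$ reads off the $d'$ output coordinates, yielding a fixed-size transformer for $\omega(\psi_1,\ldots,\psi_k)$.

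The main technical obstacle is bookkeeping around residual connections and dimension matching. Because every transformer sublayer is forced to have equal input and output dimension due to the residual, I cannot simply shrink or grow the hidden state across layers; instead I would embed everything into a common larger hidden dimension using the direct-sum padding $\phi\oplus 0_{d,d'}$ from Lemma~\ref{lem:direct_sum} item 2, carrying the original hidden state unchanged in the extra coordinates needed by later $L_j$'s. A related subtlety is that the FF in (\ref{eq:glu}) has no explicit bias term, so to implement an affine $L_j$ with biases I would reserve one coordinate of the hidden state to carry the constant $1$ throughout, which is installed once by the token embedding and preserved by the residual connections in every subsequent layer. Since $k$, $\phi_\omega$, all $d_i$, and $d'$ are fixed quantities independent of the input sequence, the resulting transformer has fixed size, establishing $\omega(\psi_1,\ldots,\psi_k)\in\F_{\tf[\phi_\pe;\op_{\act}]}(\mathbb{R}^{d'})$.
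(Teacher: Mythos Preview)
Your proof is correct and follows essentially the same strategy as the paper: use concatenation closure (Lemma~\ref{lem:concatenation_is_closed}) to form $[\psi_1,\ldots,\psi_k]$, then append transformer layers with zeroed-out MHA whose FF sublayers realize the layers of $\phi_\omega$, extending to arbitrary depth by composition. The paper's version differs only in minor bookkeeping---it absorbs the trailing $\proj_{\theta_\proj}$ of the concatenated transformer directly into the first-layer weights of the appended FF block rather than handling dimension matching via direct-sum padding---but the core construction is identical.
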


    Besides the local operators induced feedforward networks, we also have the following non-local closed operator induced by attention~(\Cref{lem:aha_closed}).

    \begin{lemma}[AHA is a Closed Operator]
    \label{lem:aha_closed}
    Average-hard attention is a closed operator over $\F_{\tf[\phi_\pe;\op_{\act}]}$, that is, for any $q, k \in \F_{\tf[\phi_\pe;\op_{\act}]}(\mathbb{R}^d)$ and $v \in \F_{\tf[\phi_\pe;\op_{\act}]}(\mathbb{R}^{d'})$, we have $\aha(q, k, v) \in \F_{\tf[\phi_\pe;\op_{\act}]}(\mathbb{R}^{d'})$.
    \end{lemma}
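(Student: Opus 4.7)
The plan is to show directly that given $q, k, v$ each implementable by some fixed transformer (i.e., in $\F_{\tf[\phi_\pe;\op_{\act}]}$), we can construct a single transformer whose embedding at each position equals $\aha(q,k,v)$ evaluated on the corresponding prefix. First, by the already-established closedness under concatenation (Lemma~\ref{lem:concatenation_is_closed}), the function $[q, k, v] : \Sigma^* \to \mathbb{R}^{2d+d'}$ lies in $\F_{\tf[\phi_\pe;\op_{\act}]}$. Let $T_0$ be a transformer that, at every position $j$, produces the embedding $[q(x_{1:j}),\, k(x_{1:j}),\, v(x_{1:j})]$ in the residual stream, padded on the right with a fresh block of $d'$ zero coordinates. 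The padding uses the direct-sum closedness in Lemma~\ref{lem:direct_sum}, which also lets us enlarge $d$ without changing the realized function.

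Next, I would append one additional transformer layer to $T_0$ whose sole job is to implement the AHA operation. In the multi-head self-attention block, pick a single head with $W_Q$ reading off the first $d$ coordinates of the residual stream (recovering the $q$-part), $W_K$ reading off the middle $d$ coordinates (the $k$-part), and $W_V$ reading off the $d'$ coordinates holding the $v$-part. Set $W_O$ to write the attention output into the reserved fresh $d'$-coordinate block. Because $\F_{\tf[\phi_\pe;\op_{\act}]}$ is defined with average-hard attention (Definition~\ref{def:aha} and the hardmax of Definition~\ref{def:hardmax}), the output of this head at the last position equals
\begin{equation}
\sum_{j \le n} \alpha_j\, v(x_{1:j}) \quad\text{with}\quad \alpha = \softmax_0\!\left((q(x)\cdot k(x_{1:j}))_{j=1}^n\right),
\end{equation}
which is exactly $\aha(q,k,v)(x)$. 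Choose the feed-forward sublayer of this final layer to be the zero map (available via the $0_{d,d'} \in \op_\ff$ clause of Lemma~\ref{lem:direct_sum}), so that after the residual connection only the intended AHA value sits in the fresh $d'$-slot.

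Finally, compose with a linear projection layer $\proj_\theta$ that extracts precisely the fresh $d'$-coordinate block. The composition $T_0$, the appended transformer layer, and this projection is a fixed-size transformer in the sense of the definition of $\F_{\tf[\phi_\pe;\op_{\act}]}$, and its seq-to-embedding function is $\aha(q,k,v)$, proving membership in $\F_{\tf[\phi_\pe;\op_{\act}]}(\mathbb{R}^{d'})$.

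The main obstacle is purely bookkeeping rather than anything conceptual: the residual connection keeps re-injecting the concatenated $[q,k,v]$ embedding, so care must be taken to reserve unused coordinates in the residual stream to host the attention output cleanly and then to project only those coordinates out at the end. This is handled entirely by the direct-sum / zero-padding closedness properties already established in Lemma~\ref{lem:direct_sum}, together with the fact that $\F_{\tf[\phi_\pe;\op_{\act}]}$ is insensitive to the choice of hidden dimension so long as it is large enough.
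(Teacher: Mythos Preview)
Your proposal is correct and takes essentially the same approach the paper intends: concatenate $q,k,v$ into a single transformer via Lemma~\ref{lem:concatenation_is_closed}, then append one more transformer layer whose attention head realizes the $\aha$ computation (with the feed-forward sublayer zeroed out), exactly mirroring the structure of the proof of Lemma~\ref{lem:local_closed_operators} but with the roles of the MHA and FF sublayers swapped. The paper itself omits the proof and only remarks that it is ``similar to that of Lemma~\ref{lem:local_closed_operators}''; your write-up is in fact more explicit than the paper about handling the residual connection via a reserved zero block and a final projection, and about absorbing the existing output projection into $W_Q,W_K,W_V$.
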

    The proof of \Cref{lem:aha_closed} is similar to that of \Cref{lem:local_closed_operators}, which uses the definition of $\F_{\tf[\phi_\pe;\op_{\act}]}$ and the closedness property of concatenation~(\Cref{lem:concatenation_is_closed}). The proof is straightforward and omitted.

    The addition operator is a custom closed operator, as it can be expressed as a composition of the primitive operators in \ours:

    \begin{algorithm}[H]
    \SetAlgoLined
    \SetKwInOut{Input}{Input}
    \SetKwInOut{Output}{Output}
    \SetKwFunction{Fadd}{add}
    \Input{Two seq-to-embedding functions $\psi_1, \psi_2 \in \F(\mathbb{R}^d)$}
    \Output{A seq-to-embedding function $\psi^* \in \F(\mathbb{R})$}
    \ \ $\psi_{\text{cat}} \gets [\psi_1, \psi_2]$ \quad // Concatenate the two functions\\
    $\psi^* \gets (\psi_{\text{cat}})_1 + (\psi_{\text{cat}})_2 $  \quad // Linear transformation -- summation over both coordinates\\
    \Return{$\psi^*$}
    \caption{Implementation of addition operator, $\add(\psi_1, \psi_2)$}
    \end{algorithm}

    Alternatively, in the inline format for composition, we can simply write:
    $\add(\psi, \psi') = \psi + \psi'$.
\end{example}

\subsection{Fine-Grained Types of Seq-to-Embedding Functions}\label{subsec:fine_grained_types}

So far we have been talking about seq-to-embedding functions whose ranges are $\mathbb{R}^d$. It turns out to be useful to consider more fine-grained types of seq-to-embedding functions whose range are only subset of $\mathbb{R}^d$. The main benefit of restricting output types is that it also simmplifies the construction of following operators, as they they only need to be defined on seq-to-embedding functions with smaller domains. In particular, we will be interested in and use the following three types:
\begin{itemize}
\item \textbf{Binary Seq-to-Embedding Functions}: These are seq-to-embedding functions whose range is $\{0, 1\}^d$. We denote the set of all such functions as $\F(\{0, 1\}^d)$.
\item \textbf{Integer Seq-to-Embedding Functions}: These are seq-to-embedding functions whose range is $\mathbb{Z}^d$. We denote the set of all such functions as $\F(\mathbb{Z}^d)$.
\item \textbf{One-Hot Seq-to-Embedding Functions}: Given a finite set $A$, we define $\F(\onehot(A))$ as the class of seq-to-embedding functions whose range is the set of one-hot encodings of elements in $A$. Specifically, for any $\psi \in \F(\onehot(A))$ and any input $x \in \Sigma^*$, $\psi(x) \in \{e_a : a \in A\}$ where $e_a \in \{0,1\}^{|A|}$ is the one-hot encoding of element $a \in A$. (See definition of $\onehot$ below, \Cref{def:onehot})

One-hot embedding will be particularly useful at the last line of \ours, when we need to take argmax of the output embedding to get the final token. A recommended practice here for the readability of the code here is to ensure the last embedding before argmax computes the one-hot embedding of the desired output token.
\end{itemize}

\begin{definition}[One-Hot Encoding]\label{def:onehot}
    We  define the one-hot encoding operator $\onehot_A: A \to \{0,1\}^{|A|}$ for any finite set $A$ as:
\begin{equation}
[\onehot_A(a)]_i = \begin{cases}
1 & \text{if $a$ is the $i$-th element of $A$ under some fixed ordering} \\
0 & \text{otherwise}
\end{cases}
\end{equation}
We use $\onehot(A)\triangleq \{\onehot_A(a)\mid a\in A\}$ to denote the set of all one-hot encoding operators for all finite sets $A$.

The inverse operation, which maps a one-hot vector back to the corresponding element, is denoted as $\onehot_A^{-1}: \{0,1\}^{|A|} \to A$, defined as:
\begin{equation}
\onehot_A^{-1}(v) = a \text{ where $a$ is the $i$-th element of $A$ and $v_i = 1$}
\end{equation}

When the set $A$ is clear from context, we may simply write $\onehot$ and $\onehot^{-1}$ for brevity.
\end{definition}

\newcommand{\isfirst}{\texttt{is\_first}}
\newcommand{\invseqlen}{\texttt{inv\_seq\_len}}
\newcommand{\reglu}{\texttt{ReGLU}}
\newcommand{\relu}{\texttt{ReLU}}
\newcommand{\sq}{\texttt{square}}

\section{Notable Special Cases of $\ours$}\label{sec:special_cases} 

In this section, we would like to discuss some special cases of $\ours$ that are of particular interest. We consider four special cases of $\ours$, from less expressive to more expressive (see \Cref{lem:closure_expressive_components}), that are of particular interest: $\ours[0;[\cdot]_+]$ (\Cref{subsec:expressiveness_0}), $\ours[0;[\cdot]_+,\times]$ (\Cref{subsec:expressiveness_0_plus_times}), $\ours[\isfirst;[\cdot]_+,\times]$ (\Cref{subsec:expressiveness_isfirst}) and $\ours[\SEQ;[\cdot]_+,\times]$ (\Cref{subsec:expressiveness_seq}). 

We first formally define the above mentioned positional embeddings and activation functions. We start with positional embeddings.
\begin{itemize}
    \item $0: \mathbb{N}^+ \to \{0\}$. We use $0$ to denote the constant position embedding that always outputs $0$, which is equivalent to not having positional encoding.
    
    \item $\isfirst: \mathbb{N}^+ \to \{0,1\}$. We use $\isfirst$ to denote the function that outputs $1$ if the input is the first position and $0$ otherwise. That is, $\isfirst(n) = \mathbf{1}[n=1]$.
    
    \item $\SEQ: \mathbb{N}^+ \to \mathbb{N}^+$. We use $\SEQ$ to denote the identity mapping over $\mathbb{N}^+$, which returns the position index itself. That is, $\SEQ(n)=n$. This allows the model to directly access the current sequence length.
\end{itemize}

Now we define the non-linear activation functions that will be used in this subsection.
\begin{itemize}
    \item $\relu (\text{or} [\cdot]_+): \mathbb{R} \to \mathbb{R}$. We define $\relu(x) =[x]_+ =\max(x,0)$ to bethe ReLU activation function, which outputs the input if it is positive and $0$ otherwise.
        
    \item $\multi (\text{or} \times): \mathbb{R} \times \mathbb{R} \to \mathbb{R}$. We use $\times$ to denote the multiplication function, which outputs the product of its two inputs.

    \item $\sq: \mathbb{R} \to \mathbb{R}$. We use this to denote the square function, which outputs the square of its input, i.e.,  $\sq(x)=x^2$.

    \item $\reglu: \mathbb{R} \times \mathbb{R} \to \mathbb{R}$. We use this to denote the ReGLU (Rectified Gated Linear Unit) activation, which multiplies the first input by the rectified second input, that is, $\reglu(x,y) = x[y]_+$.
\end{itemize}

\begin{lemma}
\label{lem:closure_expressive_components}
Let $\phi_\pe$ and $\phi_\pe'$ be two feature functions for positional embedding, and $\op_{\act}$ and $\op_{\act}'$ be two sets of activation functions. If $\phi_\pe' \in \ours[\phi_\pe;\op_{\act}]$ and $\op_{\act}' \subseteq \ours[\phi_\pe;\op_{\act}]$, then $\ours[\phi_\pe';\op_{\act}'] \subseteq \ours[\phi_\pe;\op_{\act}]$.
\end{lemma}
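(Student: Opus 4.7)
The plan is to show that any function definable in $\ours[\phi_\pe';\op_{\act}']$ can be transcribed, line by line, into a program in $\ours[\phi_\pe;\op_{\act}]$ by inlining FASP implementations of $\phi_\pe'$ and of each $\sigma' \in \op_{\act}'$ wherever they occur. Three of the four primitive operators in Definition~\ref{def:FASP} (concatenation, average-hard attention, and linear projection) are shared verbatim between the two FASP flavors and do not depend on either $\phi_\pe$ or $\op_\act$, so the only places where the two classes differ are (i) the initial availability of the positional embedding feature function and (ii) the nonlinear activation steps.

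Concretely, let $\psi \in \ours[\phi_\pe';\op_{\act}']$ be produced by a source program $\psi_1, \ldots, \psi_n = \psi$. I would build a target program in $\ours[\phi_\pe;\op_{\act}]$ by induction on $t$, maintaining the invariant that for every $\psi_t$ already defined there is a translated $\psi_t^\star$ in the target program with $\psi_t^\star = \psi_t$ on every input. Token embeddings are shared and are copied. For the positional embedding, the hypothesis $\phi_\pe' \in \ours[\phi_\pe;\op_{\act}]$ supplies a sub-program $P_\pe$ in the target language that realizes $\phi_\pe'$; I prepend $P_\pe$ to the target program once and reuse its output whenever the source program refers to $\phi_\pe'$. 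For an inductive step $\psi_t = [\psi, \psi']$, $\psi_t = \aha(\psi, \psi', \psi'')$, or $\psi_t = \phi \circ \psi$ with $\phi$ linear, I simply re-emit the same primitive on the translated arguments. For a nonlinear step $\psi_t = \sigma' \circ \psi$ with $\sigma' \in \op_{\act}'$, the hypothesis $\sigma' \in \ours[\phi_\pe;\op_{\act}]$ gives a sub-program $P_{\sigma'}$ that implements $\sigma'$ using only linear projections and activations from $\op_{\act}$; I splice $P_{\sigma'}$ into the target program with its input bound to the already-translated $\psi^\star$.

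To justify that the splice is legal I would invoke the closure lemmas already established: concatenation (\Cref{lem:concatenation_is_closed}), locality-based composition (\Cref{lem:local_closed_operators}), and attention closure (\Cref{lem:aha_closed}) together with the equivalence $\ours[\phi_\pe;\op_{\act}] = \F_{\tf[\phi_\pe;\op_{\act}]}$ from \Cref{thm:fasp_equivalent_tf}. Since every primitive used in the target program is a primitive of $\ours[\phi_\pe;\op_{\act}]$, and the program is a finite composition of such primitives over $\F_\te \cup \{\phi_\pe\}$, the final function $\psi^\star = \psi$ belongs to $\ours[\phi_\pe;\op_{\act}]$, giving the desired inclusion.

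The main obstacle, as I see it, is purely definitional: the statement ``$\sigma' \in \ours[\phi_\pe;\op_{\act}]$'' applies the FASP class (which nominally contains seq-to-token or seq-to-embedding functions) to what is really a local function on $\mathbb{R}^k$, and the statement ``$\phi_\pe' \in \ours[\phi_\pe;\op_{\act}]$'' does the same for a function on $\mathbb{N}^+$. The cleanest way to handle this is to lift both to their canonical seq-to-embedding extensions: $\phi_\pe'$ is read as the positional-only seq-to-embedding map $x \mapsto \phi_\pe'(|x|)$, and each $\sigma' \in \op_{\act}'$ is read as the local operator it induces, invoking \Cref{lem:local_closed_operators} to ensure that ``computable in $\ours[\phi_\pe;\op_{\act}]$'' is equivalent to ``computable by a feedforward network with activations in $\op_{\act}$.'' Once this type-lifting is made explicit, the substitution argument above goes through without further work.
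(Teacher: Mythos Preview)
Your proposal is correct and takes essentially the same substitution/inlining approach as the paper: replace each occurrence of $\phi_\pe'$ and each $\sigma'\in\op_{\act}'$ in the source program by its implementing sub-program in $\ours[\phi_\pe;\op_{\act}]$. The paper's proof is terser and does not invoke \Cref{lem:concatenation_is_closed}, \Cref{lem:local_closed_operators}, \Cref{lem:aha_closed}, or \Cref{thm:fasp_equivalent_tf}; those are unnecessary here because the spliced result is by construction a finite FASP program over the correct primitives, so membership in $\ours[\phi_\pe;\op_{\act}]$ is immediate without detouring through $\F_{\tf}$. Your observation about the type mismatch in the hypotheses (activations and positional features are not literally seq-to-token maps) is a genuine definitional point that the paper leaves implicit.
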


\begin{proof}[Proof of \Cref{lem:closure_expressive_components}]
Since $\phi_\pe' \in \ours[\phi_\pe;\op_{\act}]$, there exists a program in $\ours[\phi_\pe;\op_{\act}]$ that computes $\phi_\pe'$. Similarly, for each activation function $\sigma' \in \op_{\act}'$, there exists a program in $\ours[\phi_\pe;\op_{\act}]$ that computes $\sigma'$. Given any program in $\ours[\phi_\pe';\op_{\act}']$, we can transform it into a program in $\ours[\phi_\pe;\op_{\act}]$ by:
(1) replacing each use of $\phi_\pe'$ with its implementation in $\ours[\phi_\pe;\op_{\act}]$, and 
(2) replacing each activation function $\sigma' \in \op_{\act}'$ with its implementation in $\ours[\phi_\pe;\op_{\act}]$.
This transformation preserves the functionality of the original program, showing that $\ours[\phi_\pe';\op_{\act}'] \subseteq \ours[\phi_\pe;\op_{\act}]$.
\end{proof}

\begin{theorem}[Hierarchy of FASP Variants]
\label{thm:fasp_hierarchy}
The following containment relations hold between variants of FASP:
\begin{equation}
\ours[0;[\cdot]_+] \subseteq \ours[0;[\cdot]_+,\times] \subseteq \ours[\isfirst;[\cdot]_+,\times] \subseteq \ours[\SEQ;[\cdot]_+,\times]
\end{equation}
where each inclusion represents a strict increase in expressiveness.
\end{theorem}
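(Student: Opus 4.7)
The plan is to derive all three containments from Lemma~\ref{lem:closure_expressive_components}, which reduces each inclusion to showing that the larger positional encoding is implementable in the smaller system and similarly for activation sets. The three inclusions split conveniently: one expands the activation set while fixing the positional encoding, and two expand the positional encoding while fixing the activations, so in each case one hypothesis of the lemma is trivial and the other is a short FASP snippet.

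The first containment $\ours[0;[\cdot]_+] \subseteq \ours[0;[\cdot]_+,\times]$ is immediate, since the positional encodings coincide and $\{[\cdot]_+\} \subseteq \{[\cdot]_+,\times\}$. The second, $\ours[0;[\cdot]_+,\times] \subseteq \ours[\isfirst;[\cdot]_+,\times]$, uses that the constant-zero positional encoding is just the zero linear projection applied to $\isfirst$, a single primitive step. For the third, $\ours[\isfirst;[\cdot]_+,\times] \subseteq \ours[\SEQ;[\cdot]_+,\times]$, I exhibit $\isfirst$ as a FASP program over $\SEQ$ via the identity
\[
\isfirst(n) \;=\; [\,2-\SEQ(n)\,]_+ \qquad \text{for all } n\in\mathbb{N}^+,
\]
which is valid because $\SEQ(n)=n$ is a positive integer, so $2-n=1$ when $n=1$ and $2-n\le 0$ when $n\ge 2$. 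To code this up I take a constant token embedding $\te_c\in\F_\te$ producing the scalar $2$, concatenate with $\SEQ$, apply the linear projection $(a,b)\mapsto a-b$, and finish with ReLU.

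Strictness requires a separating function in each gap. For the first, the scalar product $(u,v)\mapsto u\cdot v$ cannot be realized by any fixed-size FASP program with only ReLU activations, since such a program computes only a piecewise-linear function of its real-valued inputs. For the second, with no positional encoding the forward pass is invariant under permutations of equal-token positions, so recognizing ``is this the first position?''—witnessed directly by $\isfirst$—is impossible. For the third, a sequence-length indicator such as $n\mapsto \mathbf{1}[n=2]$ is expressible via two shifted ReLUs of $\SEQ$ but collapses under $\isfirst$, which fails to distinguish any two indices $n\ge 2$. The containments reduce to bookkeeping via the closure lemma and the one-line identity $\isfirst(n)=[2-n]_+$; the main obstacle is making the strictness arguments rigorous, since the second and third gaps require invariance-type arguments that would be formalized by structural induction on the FASP program, tracking which equivalence classes of inputs the intermediate seq-to-embedding functions can distinguish.
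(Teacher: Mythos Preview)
The paper states this theorem without an explicit proof. Your derivation of the three containments from Lemma~\ref{lem:closure_expressive_components} is correct and is exactly the intended approach; the one nontrivial step---expressing $\isfirst$ inside $\ours[\SEQ;[\cdot]_+,\times]$---is handled correctly by your identity $\isfirst(n)=[2-n]_+$ (equivalently, one could use the paper's integer comparison $\EQ(\SEQ,\mathbf{1})$ from \Cref{subsec:expressiveness_0}).

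Your strictness arguments, however, contain a concrete error. For the third gap you propose $n\mapsto\mathbf{1}[n=2]$ as a separator, reasoning that ``$\isfirst$ fails to distinguish any two indices $n\ge 2$.'' This conflates the positional embedding $\isfirst$ itself with what FASP programs built over it can compute. The paper explicitly constructs $\isposk{k}$---in particular $\mathbf{1}[n=2]$---inside $\ours[\isfirst;[\cdot]_+]$ in \Cref{subsec:expressiveness_isfirst}: once $\isfirst$ is available, averaging it gives $\invseqlen=\AVE(\isfirst)=1/n$, and from $1/n$ any fixed position index is detectable via two $\GEQ_0$ thresholds. So your proposed witness lies in the smaller class and does not separate. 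A valid witness would have to be something genuinely inaccessible from $1/n$ under ReLU, multiplication, and attention---plausibly $\SEQ$ itself---but that requires a different argument than the one you sketch.

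Your first-gap argument is also on shaky ground: you assert that a ReLU-only FASP program ``computes only a piecewise-linear function of its real-valued inputs,'' but FASP inputs are token sequences over a finite alphabet, not real vectors, and the $\aha$ primitive performs hardmax selection, which is not affine. The piecewise-linear intuition applies to the local feed-forward layer, not to the full program with attention. Your second-gap argument is essentially the paper's own observation (stated informally just before \Cref{subsec:expressiveness_isfirst}) that without any positional encoding $\pi_\theta(a^n)=\pi_\theta(a)$, so that part is on the right track. The paper itself does not supply proofs of strictness, so there is nothing to compare against there---but as written your proposal establishes only the non-strict containments.
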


\paragraph{Why we care about $\ours[0;[\cdot]_+,\times]$} Most modern LLM architectures use 2-layer MLP with gated linear units (GLU)~\citep{dauphin2017language} as the activation function~(\Cref{eq:glu}), such as SwishGLU~\citep{shazeer2020glu}, which is a variant of GLU with Swish activation~\citep{ramachandran2017searching}. For simplicity, we focus on ReGLU, which is a variant of GLU with ReLU activation~\citep{dauphin2017language}, and also the limit of SwishGLU as the Swish activation approaches ReLU by letting $\beta\to \infty$.


\begin{theorem}[Equivalent Expressiveness of Different Activation Sets]\label{thm:equivalent_activation_sets}
The following function classes are equivalent:
$\ours[0;[\cdot]_+, \times] = \ours[0;[\cdot]_+, \sq] = \ours[0;\reglu]$.
\end{theorem}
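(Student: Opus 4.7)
The plan is to apply \Cref{lem:closure_expressive_components} three times; since all three variants share the positional encoding $0$, each containment reduces to showing that the activation functions of one set can be implemented inside the function class of another. Concretely, I need four expressibility facts: (i) $\reglu \in \ours[0;[\cdot]_+,\times]$; (ii) $\sq \in \ours[0;[\cdot]_+,\times]$; (iii) $\times \in \ours[0;[\cdot]_+,\sq]$; and (iv) both $[\cdot]_+$ and $\times$ lie in $\ours[0;\reglu]$. Each of these amounts to exhibiting a short $\ours$ program that realizes the desired activation as a closed operator acting on arbitrary input streams.

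Facts (i) and (ii) are by direct substitution: $\reglu(x,y) = x \cdot [y]_+$ is one ReLU followed by one multiplication, and $\sq(x) = x \cdot x$ requires only concatenating a stream with itself and multiplying. Fact (iii) is the polarization identity
\begin{equation*}
x \cdot y \;=\; \tfrac{1}{4}\bigl(\sq(x+y) \;-\; \sq(x-y)\bigr),
\end{equation*}
which I would implement by concatenating the two input streams, producing $x+y$ and $x-y$ through a single linear projection, squaring each coordinate, and finally subtracting and scaling by $\tfrac14$ through another linear projection; note that no ReLU is needed for this direction.

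Fact (iv) is the most delicate and splits in two. Multiplication is expressed via
\begin{equation*}
x\cdot y \;=\; \reglu(x,y) \;-\; \reglu(x,-y),
\end{equation*}
which follows by a case analysis on the sign of $y$ using the identity $[y]_+ - [-y]_+ = y$; the subtraction and the negation $y\mapsto -y$ are realized as linear projections after concatenating $x$ and $y$. For ReLU I would write $[x]_+ = \reglu(c,x)$ where $c$ is the constant $1$ function. The main obstacle is that the linear-projection primitive in $\ours$ carries no bias term, so a constant cannot be created from thin air by scaling an existing embedding. I would resolve this by exploiting that $\F_\te \subseteq \dfnb_t$ from the outset of every $\ours$ program: choosing $\theta_\te$ so that $\te_{\theta_\te}(a) = 1$ for every $a \in \Sigma$ yields a constant $1$ seq-to-embedding function, from which any real constant is obtained by a single linear projection. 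With these four facts established, \Cref{lem:closure_expressive_components} gives the chain of inclusions $\ours[0;\reglu] \subseteq \ours[0;[\cdot]_+,\times] \subseteq \ours[0;[\cdot]_+,\sq]$ together with the matching reverse inclusions, yielding the claimed equality $\ours[0;[\cdot]_+, \times] = \ours[0;[\cdot]_+, \sq] = \ours[0;\reglu]$.
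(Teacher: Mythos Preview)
Your proof is correct and follows essentially the same strategy as the paper's: express each activation in terms of the others via elementary identities and invoke \Cref{lem:closure_expressive_components}. Your polarization identity $xy = \tfrac14\bigl((x+y)^2-(x-y)^2\bigr)$ differs only cosmetically from the paper's $xy = \tfrac12\bigl((x+y)^2-x^2-y^2\bigr)$, and your explicit construction of the constant $1$ via a token embedding (needed for $[x]_+ = \reglu(1,x)$) fills in a detail the paper leaves implicit---indeed, your argument order $\reglu(1,x)$ is the correct one, since under the paper's own definition $\reglu(x,1) = x\cdot[1]_+ = x$, so the paper's displayed identity has the arguments swapped.
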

\begin{proof}[Proof of \Cref{thm:equivalent_activation_sets}]
We prove that $\ours[0;[\cdot]_+, \times] = \ours[0;[\cdot]_+, \sq] = \ours[0;\reglu]$ by showing that both $\ours[0;[\cdot]_+, \sq]$ and $\ours[0;\reglu]$ are equivalent to $\ours[0;[\cdot]_+, \times]$.

\paragraph{Equivalence of $\ours[0;[\cdot]_+, \times]$ and $\ours[0;[\cdot]_+, \sq]$:}
For the forward direction ($\ours[0;[\cdot]_+, \times] \subseteq \ours[0;[\cdot]_+, \sq]$), we show that multiplication can be expressed using square and ReLU:
\begin{align}
\multi(x,y) = x \cdot y = \frac{(x+y)^2 - x^2 - y^2}{2} = \frac{\sq(x+y) - \sq(x) - \sq(y)}{2}
\end{align}

For the reverse direction ($\ours[0;[\cdot]_+, \sq] \subseteq \ours[0;[\cdot]_+, \times]$), we observe that square is simply multiplication with itself:
\begin{align}
\sq(x) = x^2 = x \cdot x = \multi(x, x)
\end{align}

\paragraph{Equivalence of $\ours[0;[\cdot]_+, \times]$ and $\ours[0;\reglu]$:}
For the forward direction ($\ours[0;[\cdot]_+, \times] \subseteq \ours[0;\reglu]$), we need to show that both ReLU and multiplication can be expressed using ReGLU:
\begin{align}
\relu(x) &= [x]_+ = \reglu(x,1) \\
\multi(x,y) &= x \cdot y = \reglu(x,y) - \reglu(x,-y)
\end{align}

For the reverse direction ($\ours[0;\reglu] \subseteq \ours[0;[\cdot]_+, \times]$), we can directly express ReGLU using ReLU and multiplication:
\begin{align}
\reglu(x,y) = x[y]_+ = x \cdot [y]_+ = \multi(x,\relu(y))
\end{align}

Therefore, $\ours[0;[\cdot]_+, \times] = \ours[0;[\cdot]_+, \sq] = \ours[0;\reglu]$.
\end{proof}

\subsection{Expressiveness of $\ours[0;[\cdot]_+]$}\label{subsec:expressiveness_0}

By \Cref{lem:local_closed_operators}, all the local operators that can be written as MLP with ReLU activation are in $\ours[0;[\cdot]_+]$. This includes:
\begin{enumerate}
    \item Arithmetic operators over reals (addition, subtraction, max, min):
            \begin{itemize}
                \item $\add:\F(\mathbb{R}) \times \F(\mathbb{R}) \to \F(\mathbb{R})$. See \Cref{ex:addition}. We also write $\psi_1+\psi_2$ for $\add(\psi_1, \psi_2)$. 
                \item $\minus:\F(\mathbb{R}) \times \F(\mathbb{R}) \to \F(\mathbb{R})$, $\minus(\psi_1, \psi_2) \triangleq \add(\psi_1, - \psi_2)$. We also write $\psi_1-\psi_2$ for $\minus(\psi_1, \psi_2)$. 
                \item $\MAX:\F(\mathbb{R}) \times \F(\mathbb{R}) \to \F(\mathbb{R})$, $\MAX(\psi_1, \psi_2) \triangleq [\psi_1-\psi_2]_+  \psi_2$.
                \item $\MIN:\F(\mathbb{R}) \times \F(\mathbb{R}) \to \F(\mathbb{R})$, $\MIN(\psi_1, \psi_2) \triangleq -[\psi_1-\psi_2]_+  \psi_2$.
            \end{itemize}
        \item Boolean operators(AND, OR, NOT, XOR):
        For any $\psi_1, \psi_2 \in \F(\{0,1\})$, boolean operators are defined as:
        \begin{itemize}
            \item $\AND: \F(\{0,1\}) \times \F(\{0,1\}) \to \F(\{0,1\})$, 
                    $\AND(\psi_1, \psi_2) \triangleq \min(\psi_1, \psi_2)$.  We also denote it as $\psi_1 \land \psi_2$.
            \item $\NOT: \F(\{0,1\}) \to \F(\{0,1\})$, defined as:
                    $\NOT(\psi) \triangleq 1 - \psi$. We also denote it as $\lnot \psi$.
            \item $\OR: \F(\{0,1\}) \times \F(\{0,1\}) \to \F(\{0,1\})$, defined as:
                    $\OR(\psi_1, \psi_2) \triangleq \lnot(\lnot \psi_1 \land \lnot \psi_2)$. We also denote it as $\psi_1 \lor \psi_2$.
            \item $\XOR: \F(\{0,1\}) \times \F(\{0,1\}) \to \F(\{0,1\})$, defined as:
                    $\XOR(\psi_1, \psi_2) \triangleq (\psi_1 \lor \psi_2) \land \neg (\psi_1 \land \psi_2)$. We also denote it as $\psi_1 \veebar \psi_2$.
        \end{itemize}
    \item Comparison operators over integers (less than, equality, etc.);
    \begin{itemize}
        \item $\LEQ(\psi_1, \psi_2)$: For every input $x\in\Sigma^*$, the less-than-or-equal operator $\LEQ: \F(\mathbb{Z}) \times \F(\mathbb{Z}) \to \F(\{0,1\})$ returns $1$ if the first argument $\psi_1$ is less than or equal to the second argument $\psi_2$, otherwise it returns $0$. Because it is a comparison operator defined only over integers, it admits the following equivalent definition:
            \begin{align}
                        \LEQ(\psi_1, \psi_2) \triangleq [\psi_2 - \psi_1 + 1]_+ - [\psi_2 - \psi_1]_+ \quad (\text{written as } \psi_1 \leq \psi_2)         \end{align}
                    \item The remaining comparison operators can be derived from $\LT$, which all have type $\F(\mathbb{Z}) \times \F(\mathbb{Z}) \to \F(\{0,1\})$:
                        \begin{align}
                            \GEQ(\psi_1, \psi_2)  &\triangleq \LEQ(\psi_2, \psi_1) \quad (\text{written as } \psi_1 \geq \psi_2) \\
                            \EQ(\psi_1,\psi_2)    &\triangleq \LEQ(\psi_1, \psi_2) \land \LEQ(\psi_2, \psi_1) \quad (\text{written as } \psi_1 = \psi_2) \\
                            \LT(\psi_1, \psi_2)   &\triangleq \LEQ(\psi_1,\psi_2-1) \quad (\text{written as } \psi_1 < \psi_2) \\
                            \GT(\psi_1, \psi_2)   &\triangleq \LT(\psi_2, \psi_1) \quad (\text{written as } \psi_1 > \psi_2) \\
                            \NEQ(\psi_1, \psi_2)  &\triangleq \NOT(\EQ(\psi_1, \psi_2)) \quad (\text{written as } \psi_1 \neq \psi_2)         \end{align}
                    \end{itemize}
        It is worth noting that $\EQ$  can be extended to vector inputs, $ \cup_{d\in\mathbb{N}^+} \F(\mathbb{Z}^d) \times \F(\mathbb{Z}^d) $ by comparing each coordinate of the two vectors and take the logical AND of all the results. Similarly we can extend $\NEQ$ to vector inputs by still setting it to be $\NOT\circ \EQ$.
    \item All operators on finite discrete inputs (with one-hot encoding). Namely all operators with signature $\F(\onehot(A_1))\times \F(\onehot(A_2))\times \ldots \times \F(\onehot(A_n))\to \F$ for finite sets $A_1, A_2, \ldots, A_n$. In particular this includes the kronecker-product operator $\otimes:\F(\onehot(A_1))\times \F(\onehot(A_2))\to \F(\onehot(A_1 \times A_2)$, where 
    \begin{align}
        \otimes(\psi_1, \psi_2)(x)= (\psi_1 \otimes \psi_2)(x) =\psi_1(x) \otimes\psi_2(x),
    \end{align} 
    for any $x\in\Sigma^*$. Here $\otimes$ on RHS is just the usual kronecker product in on vector space. For simplicity, we will use $a_1\in A_1$ and $a_2\in A_2$ to denote the coordinates of $\psi_1$ and $\psi_2$ respectively, and use $(a_1,a_2)$ to denote the coordinate of $\psi_1\otimes \psi_2$. We can construct $\psi_1 \otimes \psi_2$ by setting $(\psi_1 \otimes \psi_2)_{(a_1,a_2)}= {\psi_1}_{a_1} \AND {\psi_2}_{a_2}$ for all $a_1\in A_1$ and $a_2\in A_2$. 
    \footnote{Note this construction uses the fact that each coordinate of $\psi_1,\psi_2$ is boolean. We cannot define the kronecker product on infinite domains integers or reals without multiplication/square/gated ReLu activation. Will return to this in next subsection~(\Cref{subsec:expressiveness_0_plus_times}).}
\end{enumerate}

\newcommand{\SEQMAX}{\texttt{seq\_max}}
\newcommand{\SEQMIN}{\texttt{seq\_min}}
\newcommand{\SEQAND}{\texttt{seq\_and}}
\newcommand{\SEQOR}{\texttt{seq\_or}}

We can also define the following non-local closed operators:
\begin{enumerate}
\item \textbf{Running Average}: For any $\psi \in \F(\mathbb{R})$, the running average operator $\AVE: \F(\mathbb{R}) \rightarrow \F(\mathbb{R})$ computes the average of all prefix function values. For any input $x \in \Sigma^*$ of length $n$:
    \begin{align}
        \AVE(\psi)(x) = \frac{1}{n}\sum_{j=1}^{n} \psi(x_{1:j})
    \end{align}
    This can be constructed using average-hard attention with constant queries and keys: $\AVE(\psi) \triangleq \aha(\mathbf{1}, \mathbf{1}, \psi)$.

\item \textbf{Running Maximum}: For any $\psi \in \F(\mathbb{R})$, the running maximum operator $\SEQMAX: \F(\mathbb{R}) \rightarrow \F(\mathbb{R})$ returns the maximum value across all prefixes. For any input $x \in \Sigma^*$ of length $n$:
    \begin{align}
        \SEQMAX(\psi)(x) = \max_{j=1,\ldots,n} \psi(x_{1:j})
    \end{align}
    This can be constructed as $\SEQMAX(\psi) \triangleq \aha(\psi, \psi, \psi)$, where the position with maximum value receives all attention.

\item \textbf{Running Minimum}: For any $\psi \in \F(\mathbb{R})$, the running minimum operator $\SEQMIN: \F(\mathbb{R}) \rightarrow \F(\mathbb{R})$ returns the minimum value across all prefixes. For any input $x \in \Sigma^*$ of length $n$:
    \begin{align}
        \SEQMIN(\psi)(x) = \min_{j=1,\ldots,n} \psi(x_{1:j})
    \end{align}
    This can be implemented by negating the maximum of the negated function: $\SEQMIN(\psi) \triangleq -\SEQMAX(-\psi)$.

\item \textbf{Running Logical AND}: For any $\psi \in \F(\{0,1\})$, the running logical AND operator $\SEQAND: \F(\{0,1\}) \rightarrow \F(\{0,1\})$ computes the conjunction of all prefix values. For any input $x \in \Sigma^*$ of length $n$:
    \begin{align}
        \SEQAND(\psi)(x) = \bigwedge_{j=1}^{n} \psi(x_{1:j})
    \end{align}
    Since binary values are used, this is equivalent to the running minimum: $\SEQAND(\psi) \triangleq \SEQMIN(\psi)$.

\item \textbf{Running Logical OR}: For any $\psi \in \F(\{0,1\})$, the running logical OR operator $\SEQOR: \F(\{0,1\}) \rightarrow \F(\{0,1\})$ computes the disjunction of all prefix values. For any input $x \in \Sigma^*$ of length $n$:
    \begin{align}
        \SEQOR(\psi)(x) = \bigvee_{j=1}^{n} \psi(x_{1:j})
    \end{align}
    Since binary values are used, this is equivalent to the running maximum: $\SEQOR(\psi) \triangleq \SEQMAX(\psi)$.
\end{enumerate}

\subsection{Expressiveness of $\ours[0;[\cdot]_+,\times]$}\label{subsec:expressiveness_0_plus_times}

$\ours[0;[\cdot]_+,\times]$ allows one more activation function, $x,y\mapsto x\times y$ on top of $\ours[0;[\cdot]_+]$ discussed in the previous section. We first recall that multiplication activation induces the following multiplication operator $\multi: \F(\mathbb{R}) \times \F(\mathbb{R}) \to \F(\mathbb{R})$, which is defined as:
    \begin{align}
        \multi(\psi_1, \psi_2)(x) \triangleq \psi_1(x)\cdot \psi_2(x)
    \end{align}
for any $x\in \Sigma^*$. We will use $\psi_1 \cdot \psi_2$, $\psi_1\psi_2$, or $\psi_1\times \psi_2$ to denote $\multi(\psi_1, \psi_2)$ hereafter.

In $\ours[0;[\cdot]_+,\times]$ we have the following closed operator:
\paragraph{Conditional Operator}
We define a conditional operator $\ITE: \F(\{0,1\}) \times \F(\mathbb{R}^{d}) \times \F(\mathbb{R}^{d}) \to \F(\mathbb{R}^{d})$ for control flow, which selects between values based on a condition:
    \begin{align}
        \ITE(\psi_{\text{cond}}, \psi_{\text{true}}, \psi_{\text{false}})(x) = \begin{cases}
            \psi_{\text{true}}(x) & \text{if } \psi_{\text{cond}}(x) = 1 \\
            \psi_{\text{false}}(x) & \text{if } \psi_{\text{cond}}(x) = 0
        \end{cases}
    \end{align}
    
    This can be constructed directly from previously defined closed operators:
    \begin{align}
        \ITE(\psi_{\text{cond}}, \psi_{\text{true}}, \psi_{\text{false}}) \triangleq \psi_{\text{cond}} \cdot \psi_{\text{true}} + (\neg \psi_{\text{cond}})\cdot \psi_{\text{false}}).
    \end{align}


\newcommand{\isposk}[1]{\texttt{is\_pos\_}#1}
\subsection{Expressiveness of $\ours[\isfirst;[\cdot]_+,\times]$}\label{subsec:expressiveness_isfirst}

We first recall the definition of $\isfirst$:
\begin{align}
    \isfirst(n) = \mathbf{1}[n=1].
\end{align}
where $\mathbf{1}[\cdot]$ is the indicator function. In practice, it is important for language model to know whether the current position is the first position, and it is standard to use [BOS] token to indicate the beginning of the sequence. By using $\isfirst$ position embedding, we achieve the similar effect as using [BOS] token. It is easy to prove that LLM cannot count without any positional embedding, even with softmax attention. Concretely, without positional encoding, for any parameter $\theta$, any token $a\in\Sigma$, any integer $n$, $\pi_\theta(a^n) = \pi_\theta(a)$. So in some sense $\isfirst$ is the minimal positional embedding that allows LLM to count.

Simply adding $\isfirst$ position embedding allows us to define the following closed operators in $\ours[\isfirst;[\cdot]_+]$, and thus also in $\ours[\isfirst;[\cdot]_+,\times]$:

\begin{itemize}
\item $\invseqlen$: We define $\invseqlen(n) = 1/n$ as the inverse of sequence length by constructing
\begin{align}
    \invseqlen \triangleq \AVE(\SEQ = \mathbf 1).
\end{align}
This operator computes the inverse of the current sequence length, which is useful for normalizing operations that depend on sequence length.

\item $\isposk{k}$: We define $\isposk{k}(n) = \mathbf{1}[n=k]$ as the indicator function for the $k$-th position. This can be constructed as:
\begin{align}
    \isposk{k} = \GEQ_0(k+1 - k(k+1)\cdot\invseqlen) \land \GEQ_0(k(k+1)\cdot\invseqlen - k - 1)
\end{align}
where $\GEQ_0: \F((-\infty,-1] \cup [0,\infty)) \to \F(\{0,1\})$ is defined as $\GEQ_0(\psi) = [\psi+1]_+ - [\psi]_+$. $\GEQ_0$ satisfies that for any $x\in\Sigma^*$, $\GEQ_0(\psi)(x) = 1$ if $\psi(x)\geq 0$ and $0$ if $\psi(x)\leq -1$. 

This works because at position $n$, we have $\invseqlen(n) = 1/n$. When $n=k$, both $k+1 - k(k+1)/n = k+1 - k(k+1)/k = k+1 - (k+1) = 0$ and $k(k+1)/n - k - 1 = k(k+1)/k - k - 1 = (k+1) - k - 1 = 0$, so both terms are $\leq 0$. When $n \neq k$, at least one of the expressions will be $> 0$, making the result false.

\item $\rha$: We define Rightmost-Hard Attention $\rha:\F(\mathbb{Z}^{d'}) \times \F(\mathbb{Z}^{d'}) \times \F(\mathbb{R}^{d}) \to \F(\mathbb{R}^{d})$ as the hard-attention which breaks tie by picking most recent argmax of attention score for any positive integer $d,d'$. That is, for any $x \in \Sigma^n$:
\begin{align}
\rha(\psi_q, \psi_k, \psi_v)(x) ~=~ \psi_v(x_{1:j^*})
\end{align}
where $j^*$ is the rightmost position with maximal query-key match:
\begin{align}
j^* = \max\{j \mid \psi_q(x) \cdot \psi_k(x_{1:j}) = \max_{k \leq n} \psi_q(x) \cdot \psi_k(x_{1:k})\}.
\end{align}

This can be implemented using the $\aha$ primitive with augmented query and key vectors:
\begin{align}
\rha(\psi_q, \psi_k, \psi_v) ~\triangleq~ \aha\left([\psi_q, \mathbf 1],[\psi_k, \invseqlen], \psi_v\right).
\end{align}
For any two positions $j < j'$ with identical query-key match scores in the original space, the augmented scores will differ by $-1/j + 1/j'$, which is always positive since $-1/j > -1/j'$ when $j < j'$. This ensures that when multiple positions have the same original match score, the rightmost position (largest $j$) will achieve the highest augmented score, making $\rha$ select it as the unique maximum. 
\end{itemize}

We also have the following variant of rightmost hard attention $\rha$ which relies on the multiplication activation, $\RBM$:

\paragraph{Rightmost Best Match}  For any positive integer $d, d'$, we define $\RBM: \F(\mathbb{Z}^{d'}) \times \F(\mathbb{Z}^{d'}) \times \F(\mathbb{R}^{d}) \to \F(\mathbb{R}^{d})$ as the variant of rightmost hard attention which minimizes the $\ell_2$ distance between key and query, as supposed to maximize their inner product. That is, for any $x \in \Sigma^n$: 
\begin{align}
\RBM(\psi_q, \psi_k, \psi_v)(x) ~=~ \psi_v(x_{1:j^*})
\end{align}
where $j^*$ is the rightmost position with maximal query-key match quantified by the $\ell_2$ norm:
\begin{align}
j^* = \max \left( \arg\min_{j \leq n} \|\psi_q(x) - \psi_k(x_{1:j})\|_2\right),
\end{align}
This can be implemented using the $\rha$ and the multiplication operator:
\begin{align}
\RBM(\psi_q, \psi_k, \psi_v) ~\triangleq~ \rha\left([\psi_q ,\mathbf 1],[2 \psi_k, - \psi_k^\top \psi_k], \psi_v\right),
\end{align}
For ant input $x$, this definition retrieves the value at the position $k$ that maximizes $2\psi_q(x)^\top \psi_k(x_{1:k}) - \psi_k(x_{1:k})^\top \psi_k(x_{1:k})$, or equivalently, minimizes $\|\psi_q(x) - \psi_k(x_{1:k})\|_2^2$.

\paragraph{Rightmost Exact Match}  For any positive integer $d, d'$, we define $\REM: \F(\mathbb{Z}^{d'}) \times \F(\mathbb{Z}^{d'}) \times \F(\mathbb{R}^{d}) \times \F(\mathbb{R}^{d}) \to \F(\mathbb{R}^{d})$ as the variant of rightmost best match (and thus variant of rightmost hard attention) which returns the value $\psi_v$ associated with the rightmost key $\psi_k$ that exactly matches the query $\psi_q$, and otherwise returns the default value $\psi_{d}$. That is, for any $x \in \Sigma^n$: 
\begin{align}
&\REM(\psi_q, \psi_k, \psi_v, \psi_{d})(x) \notag\\
\triangleq &\ITE(\RBM(\psi_q,\psi_k,\psi_k) = \psi_q, \psi_v, \psi_{d}).
\end{align}

\subsection{Expressiveness of $\ours[\SEQ;[\cdot]_+,\times]$}\label{subsec:expressiveness_seq}

We end this section by considering the most expressive case $\ours[\SEQ;[\cdot]_+,\times]$ so far, where $\SEQ$ is the identity mapping over $\mathbb{N}^+$. With positional embedding $\SEQ$, we can define the following partial sum operator, $\sum$, which is closed in $\ours[\SEQ;\times]$, and thus also $\ours[\SEQ;[\cdot]_+,\times]$.

\paragraph{Partial Sum:}  We define $\SUM: \F(\mathbb{R}) \rightarrow \F(\mathbb{R})$ as the operator that computes the running sum. That is, for any $x \in \Sigma^n$:
\begin{align}
\SUM(\psi)(x) ~=~ \sum_{j=1}^{n} \psi(x_{1:j})
\end{align}
This can be constructed by scaling the average operator, i.e., $\SUM(\psi) = \AVE(\psi) \cdot \SEQ$. 

We note that the ability of transformer to express or compute $\SEQ$ (e.g., in terms of precision) is necessary to define the partial sum operator, as the sum of the constant token embedding of value $1$ immediately gives the sequence length,which implies that any transformer class that can compute partial sum necessarily can also compute $\SEQ$, even without any non-linear actiation function.

We also note that with $\SUM$ as a closed operator in $\ours[\SEQ;[\cdot]_+,\times]$, it is clear that $\ours[\SEQ;[\cdot]_+,\times]$ is a superset of C-RASP~\citep{yang2024counting}.

\section{Proof of Theorem~\ref{thm:main}: Main Result} \label{sec:proof_main}

We state the formal version of \Cref{thm:main} as follows:

\begin{theorem}[Main] \label{thm:main_formal}
Let $\tm = (\mathcal{A}, b, Q, q_0, \delta, Q_{\acc}, Q_{\rej})$ be any single-tape Turing machine that has time complexity $T(x)$ and space complexity $S(x)$ on input $x \in (\mathcal{A} \setminus \{b\})^*$. There exists a transformer with constant depth, constant embedding dimension, Gated ReLU activation, and positional embedding $n \mapsto n$, average hard attention, such that for the next-token predictor $\pi_\theta$ implemented by this transformer and the reduction rule $\g'$ defined in (\ref{eq:scroll}), the following holds:
\begin{enumerate}
    \item $\operatorname{PENCIL}_{f_{\pi_\theta},\g'}$ produces the same output (accept or reject) as $\tm$ on $x$.
    
    \item The total number of tokens generated by $\operatorname{PENCIL}_{f_{\pi_\theta},\g'}$ is $\mathcal O( T(x))$.
    
    \item The maximal context length used by $\operatorname{PENCIL}_{f_{\pi_\theta},\g'}$ during generation is at most $\mathcal O(S(x))$.
\end{enumerate}
\end{theorem}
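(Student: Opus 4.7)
The plan is to assemble the three building blocks already laid out in the excerpt. By \Cref{lemma:tm_as_write_only}, the Turing machine $\tm$ is realized by an autoregressive machine $\mathcal{M}_\tm$ with a state function $s_\tm$ that preserves time and space complexities exactly. By \Cref{thm:scroll_space_time}, it then suffices to produce a \emph{constant-size} next-token predictor that, on any prefix taken from the trace (\ref{eq:scroll_trace}), returns the correct next symbol -- either the next simulation token $\nxt(\cdot)$, the $\sep$ triggering reduction, the next token of $s_\tm$ during summarization, or the terminating $\return$. By \Cref{thm:fasp_equivalent_tf} together with \Cref{thm:equivalent_activation_sets}, producing such a transformer (under the architectural assumptions in the theorem) is equivalent to writing a program in $\ours[\SEQ;[\cdot]_+,\times]$. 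So the whole argument reduces to exhibiting a single FASP program.

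The FASP program has four components. First, a \emph{parser} that uses $\REM$ to locate the rightmost $\return$ and the rightmost $\sep$ in the context; these cuts identify the committed state, the in-progress simulation segment, and any in-progress summarization segment. Second, a \emph{simulator} that computes the current control state and head position from the simulation segment ($\SUM$ over the $d$-coordinates of update tokens, combined with the initial head position encoded in the committed state), retrieves the symbol under the head by $\REM$-matching the head-position value against the per-token partial-sum values attached to previously emitted tokens (falling back to the blank $b$ when no match), and then applies the fixed transition table $\delta$ via a local operator on one-hot encodings to emit the next $(q',a',d')$. Third, a \emph{summarizer} that, once $\sep$ has been emitted, outputs the sequence $\embed(q,\sigma',p)$ of \Cref{defi:embedding_TM} one token at a time, using $\SEQMAX$ and $\SEQMIN$ gated by the indicator of non-blank writes to obtain $\maxpos$ and $\minpos$, the partial-sum head positions to recover the symbols, and a FASP-level counter (via $\SEQ$ and the position of the last $\sep$) to index within the summary, finishing with $\return$ once $\computemove$ reaches the end. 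Fourth, a \emph{trigger} that, during simulation, compares the current simulation-segment length to twice the committed-state length (both obtainable from $\SEQ$ and the parsed boundaries using $\LEQ$) and emits $\sep$ as soon as the threshold is crossed. Gluing these four pieces into one program whose final line projects onto $\Sigma\cup\{\sep,\return\}$ via a one-hot $\ITE$ cascade selecting by phase yields the required predictor.

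The main obstacle is the tape-read step inside the simulator: the head position is not a literal position in the token sequence but a computed value (a signed partial sum of movement increments), and we must retrieve the most recent write at that abstract coordinate. The solution is to attach to every token, as auxiliary data flowing through the residual stream, the head position at which it was written -- itself a $\SUM$ over prior movement tokens -- and then use $\REM$ (\Cref{subsec:expressiveness_isfirst}) with the current head position as query and the sequence of past head positions as keys to recover the written symbol. This is exactly why $\SEQ$ (to form partial sums) and multiplication (to implement $\REM$ via $\RBM$) are both needed, fixing the ambient FASP class as $\ours[\SEQ;[\cdot]_+,\times]$. A secondary complication is that the committed-state prefix itself encodes the tape in the $\embed$ format rather than as raw writes, so before any simulation resumes the initial head position and tape content within that segment must first be reconstructed from the embedded form; this too reduces to $\SUM$ and $\REM$ applied to the committed prefix, and hence fits within the same toolkit.

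Once the four pieces above are verified to realize the required $\pi_\theta$ as a closed operator in $\ours[\SEQ;[\cdot]_+,\times]$, \Cref{thm:fasp_equivalent_tf} and \Cref{thm:equivalent_activation_sets} convert it into a constant-depth, constant-width transformer with Gated ReLU activation, positional embedding $n\mapsto n$, and average-hard causal attention. Applying \Cref{thm:scroll_space_time} to this predictor delivers conclusions (2) and (3) of the theorem directly, while the equivalence of outputs in conclusion (1) follows from \Cref{thm:tm_as_am} because each correctly generated simulation token corresponds to one Turing step and the halting tokens lie in $\Sigma_{\acc}$ or $\Sigma_{\rej}$ by construction.
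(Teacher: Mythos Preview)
Your high-level plan---realize $\tm$ as the autoregressive machine $\mathcal{M}_\tm$ with state function $s_\tm$ via \Cref{lemma:tm_as_write_only}, reduce via \Cref{thm:scroll_space_time} to building a single next-token predictor for the trace (\ref{eq:scroll_trace}), then exhibit that predictor as a $\ours[\SEQ;[\cdot]_+,\times]$ program and convert via \Cref{thm:fasp_equivalent_tf} and \Cref{thm:equivalent_activation_sets}---is exactly the paper's route, and your simulator and summarizer components (head position as $\SUM$ of move coordinates; tape read by $\REM$ against per-token partial sums with blank default; summary via $\SEQMAX/\SEQMIN$ and $\computemove$) essentially match the paper's FASP construction.

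The one genuine gap is a misreading of the context structure that breaks your parser and trigger. After $\g'$ fires, the context is \emph{only} $s_\tm\circ f_\nxt^{t_{i-1}}(x)$: there is no $\return$ token left in it, and the state tokens are already in the $(q,a,d)$ format of $\Sigma$---the \emph{same} format as subsequent simulation tokens (see \Cref{defi:embedding_TM}). Hence there is no boundary for your parser to locate between a ``committed state'' and a ``simulation segment,'' your trigger condition (``simulation-segment length $\ge 2\times$ committed-state length'') cannot be evaluated as stated, and your ``secondary complication'' dissolves: the $\embed$ output \emph{is} a sequence of write-move triples, so no reconstruction step is needed before resuming simulation. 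The paper's remedy is to treat the entire pre-$\sep$ segment uniformly, compute $\maxpos$, $\minpos$, and the head position over \emph{all} moves, derive the \emph{expected} summary length dynamically from these, and emit $\sep$ once the current sequence length reaches twice that value---which is precisely the $t_i$ criterion \Cref{thm:scroll_space_time} assumes. With your trigger as written, even if it were implementable, you could not invoke \Cref{thm:scroll_space_time} directly and would have to redo the time analysis for the altered schedule.
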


\paragraph{Problem Setup} Our goal is to construct a learnable model that can replicate PENCIL's model generation process, since the reduction process can be realized by the reduction rule. 
 Specifically, at each iteration \(i\), starting from a {compressed state}
\begin{equation}\label{eq:compressed}
x^{(i-0.5)} 
~\triangleq~
\state 
\,\circ\,
f_\nxt^{\,t_{i-1}}(x) 
~\in~ \Sigma^*,
\end{equation}
we need to construct a model that can autoregressively produce the {extended sequence}
\begin{equation} \label{eq:targetseq}
x^{(i)} 
~\triangleq~
\bigl(\,
f_\nxt^{\,t_i - t_{i-1}} 
\circ
\state 
\circ
f_\nxt^{\,t_{i-1}}(x)
,\;\sep,\;
\state\circ f_\nxt^{\,t_i}(x)
,\;\return
\bigr) ~\in~ \Sigma^*.
\end{equation}
Intuitively, \(x^{(i)}\) includes a newly generated block of uncompressed tokens representing the computations of Turing machine, followed by a separator \(\sep\), followed by an updated compressed state representing Turing machine's current memory, and finally the token \(\return\).

The base case \(x^{(0.5)} \triangleq x\) serves as the initial prompt.  Iteration \(i\) then starts from \(x^{(i-0.5)}\) and ends with \(x^{(i)}\). Here $\pi:\hat \Sigma^*\to \hat \Sigma$ is the next-token generator in the autoregressive machine that simulates Turing Machine, where $\hat \Sigma =\mathcal{Q}\times \mathcal{A} \times \{-1,0,1\}$.
To implement this mapping, PENCIL uses a transformer as the next-token generator \(\nxt_\theta\colon \Sigma^*\to \Sigma\) where transformer vocabulary is $\Sigma\triangleq \hat \Sigma \cup \{\sep,\return\}$ and \(\theta\) is the transformer parameter. It suffices to show that there is a next-token generator \(\nxt' \in \ours[n;[\cdot]_+,\times]\) (or equivalently, expressible by a transformer with $n\mapsto n$ positional embedding, average-hard attention and Gated ReLU activation) that can
\begin{enumerate}[itemsep=0pt, topsep=0pt, partopsep=0pt, parsep=0pt]
    \item simulate the next-token generator in the autoregressive machine that simulates Turing Machine.
    \item generate the special token $\sep$ at the earliest time that the length will be halved after summarization.
    \item simulate the summarization process.
\end{enumerate} 


\paragraph{Transformer Construction as $\ours$ Program:} The construction of the transformer is defined by the following $\ours$ program where each line uses a close operator to construct a new transformer model in the desired class.  Now the vocabulary $\Sigma$ of transformer will be  For readability, we use colored keywords: orange for primitive functions, red for non-local closed operators, and blue for local closed operators. 

 We below clarify the new primitive seq-to-embedding functions used here. \footnote{We are proving for the simplified reduction rule only. The proof extends to the original PENCIL reduction rule in a straight-forward manner.}
\begin{enumerate}
    \item \texttt{get\_symbol} : $\Sigma \to \onehot(A)$ - Maps a token to a one-hot encoding of the symbol part of the token, extracting the symbol from state-symbol-move triples. Returns a one-hot vector in the symbol alphabet space.
    \item \texttt{get\_move} : $\Sigma \to \{-1,0,1\}$ - Maps a token to a scalar value representing the move direction (-1 for left, 0 for stay, 1 for right) extracted from state-symbol-move triples.
    \item \texttt{get\_state} : $\Sigma \to \onehot(Q)$ - Maps a token to a one-hot vector of the state part, extracting the state information from state-symbol-move triples.
\end{enumerate}
Most of the closed operators used in the program below are all already defined in \Cref{sec:special_cases}, except \texttt{transition}, which maps one hot embedding of state and symbol to the onehot embedding of (next state, next symbol, next move) in $\Sigma$. The following program thus completes the proof of \Cref{thm:main}

\newpage
{
\captionsetup[lstlisting]{labelformat=empty} 
\lstdefinestyle{customcode}{
    basicstyle=\small\ttfamily,
    mathescape=true,
    commentstyle=\color{green!50!black},
    language=python,
    keywordstyle=\color{blue},
    stringstyle=\color{red},
    frame=lines,
    framesep=5pt,
    tabsize=4,
    columns=fullflexible,
    breaklines=true,
    keepspaces=false,
    backgroundcolor=\color{white},
    showstringspaces=false,
    captionpos=b,
    emphstyle=[1]\color{orange},
    keywords={
        equal, max, min, transition, if_then_else, 
        not, and, onehot, >, <, >=, <=, compute_move, ReLU
    },
    emph={[1]get_token, get_move, get_symbol,seq_len},
    emphstyle=[2]\color{red},
    emph={[2]seq_sum,aha,rha,exact_match,seq_max,seq_min,rightmost_exact_match,seq_and,seq_or},
    literate=%
    {0}{{{\color{orange}0}}}{1}
    {1}{{{\color{orange}1}}}{1}
    {2}{{{\color{orange}2}}}{1},
}

\begin{lstlisting}[style=customcode]
# Detect separator token
is_sep = (get_token = onehot([SEP]))
exist_sep = seq_or(is_sep)

# Phase masks to distinguish between simulation and summarization phases
sim_phase_mask = not exist_sep
sum_phase_mask = exist_sep and (not is_sep)

# Position tracking for Simulation, frozen in SUMMARIZATION (after [SEP] is generated)
next_sim_pos = seq_sum(get_move and sim_phase_mask)
current_sim_pos = next_sim_pos - (get_move and sim_phase_mask)
max_pos = seq_max(current_sim_pos)
min_pos = seq_min(current_sim_pos)
expected_sum_len = max_pos - min_pos  + ReLU(max_pos- next_sim_pos -1) + 1

# SIMULATION Phase
# Get current symbol at head position
current_symbol = rightmost_exact_match(next_sim_pos,current_sim_pos,get_symbol,onehot(b))
# Compute next step based on transition function
simulation_step = transition(get_state, current_symbol)

# Decide whether to continue simulation or switch to summarization
end_simulation = sequence_len >= 2 * expected_sum_len
simulation=if_then_else(end_simulation, onehot([SEP]), simulation_step)

# SUMMARIZATION Phase
current_sum_pos = seq_sum(get_move and sum_phase_mask)
current_sum_len = seq_sum(sum_phase_mask)

# Decide the next move in SUMMARIZATION PHASE
next_move = compute_move(current_sum_len, next_sim_pos, max_pos, min_pos)

# By construction, exact match always happens.
summary_symbol=rightmost_best_match(current_sum_pos+min_pos,current_sim_pos,get_symbol)
summary_step = get_state $\otimes$ summary_symbol $\otimes$ onehot(next_move)

# Check if we've reached the final position in summarization
end_summary = (current_sum_len = expected_sum_len)
summary = if_then_else(end_summary, onehot([RETURN], summary_step))

# MAIN - Select appropriate action based on current phase
result = if_then_else(exist_sep, summary, simulation)
\end{lstlisting}
}
\newpage

\section{Omitted Proofs from \Cref{sec_theory} for Genreal Autoregressive Machines}\label{sec:proofs_general_AM}

\begin{lemma}\label{lem:state_func_property}
  Let $\state$ be a state function of a autoregressive machine $\mathcal{M} = (\Sigma, \nxt, \Sigma_{\text{accept}}, \Sigma_{\text{reject}})$. It holds that $\state\circ\f_\nxt^k\circ \state = \state\circ\f_\nxt^k$ and that $\nxt^{k+1}=\nxt^{k+1}\circ \state$ for any $k\ge 0$. 
 \end{lemma}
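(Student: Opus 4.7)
The plan is to prove both equalities by a single induction on $k$, exploiting the fact that the three axioms of a state function interlock: axiom (1) says the next token depends only on the state, axiom (2) propagates state-equivalence when we append a suffix, and axiom (3) gives idempotence as the base case.

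First I would establish claim (A), $\state \circ \f_\nxt^k \circ \state = \state \circ \f_\nxt^k$, by induction on $k$. The base case $k=0$ is immediate: $\f_\nxt^0$ is the identity, so the claim reduces to $\state \circ \state = \state$, which is axiom (3). For the inductive step, fix $x\in\Sigma^*$ and let $y \triangleq \f_\nxt^k(x)$ and $y' \triangleq \f_\nxt^k(\state(x))$. The induction hypothesis gives $\state(y) = \state(y')$. By axiom (1),
\begin{equation*}
\nxt(y) \;=\; \nxt(\state(y)) \;=\; \nxt(\state(y')) \;=\; \nxt(y'),
\end{equation*}
so $\f_\nxt^{k+1}(x) = (y, t)$ and $\f_\nxt^{k+1}(\state(x)) = (y', t)$ share the same appended token $t \triangleq \nxt(y)$. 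Applying axiom (2) with the length-one suffix $t$ (which is a valid choice since (2) is stated for arbitrary $y\in\Sigma^*$) yields $\state((y,t)) = \state((y',t))$, which is exactly $\state(\f_\nxt^{k+1}(\state(x))) = \state(\f_\nxt^{k+1}(x))$ and closes the induction.

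Next I would derive claim (B), $\nxt^{k+1} = \nxt^{k+1} \circ \state$, as essentially a byproduct of the same inductive argument. Reading $\nxt^{k+1}(x)$ as $\nxt(\f_\nxt^k(x))$, the equation $\nxt(y) = \nxt(y')$ established mid-induction is precisely the statement $\nxt(\f_\nxt^k(x)) = \nxt(\f_\nxt^k(\state(x)))$ at level $k$. The $k=0$ version of (B) is $\nxt = \nxt\circ\state$, which is axiom (1) directly.

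I do not anticipate a genuine obstacle: the whole argument is a three-line induction once one unwinds the definitions. The only point to state carefully is the instantiation of axiom (2) with a single-token suffix, and the observation that (A) and (B) can be proved in tandem so that neither needs to be reproven from scratch.
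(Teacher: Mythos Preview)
Your proof is correct and follows essentially the same strategy as the paper's: an induction on $k$ using the three axioms of a state function. The only organisational difference is that the paper first establishes the $k=1$ identity $\state\circ\f_\nxt\circ\state=\state\circ\f_\nxt$ as a standalone lemma and then uses it algebraically in the inductive step (inserting and removing $\state$ after a single $\f_\nxt$), whereas you start from $k=0$ and invoke axioms (1) and (2) directly at each level; both routes yield the same result with comparable effort.
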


\begin{proof}[Proof of \Cref{lem:state_func_property}]
   For any $z\in\Sigma^*$, we have that $\state^2(z)=\state(z)$. 
   Now let $x=\state(z),x'=z$ and $y=\nxt(z)=\nxt(\state(z))$, since $\state(x)=\state(x')$, we have $\state((x,y)) = \state((x',y))$, which further implies that
   \begin{equation}
\state(\f_\nxt(\state(z))) = \state((x,y)) = \state((x',y)) = \state(\f_\nxt(z)).
   \end{equation}
   Therefore, $\state\circ\f_\nxt\circ \state = \state\circ\f_\nxt$. 
   Now we use induction to prove that $\state\circ\f_\nxt^k\circ \state = \state\circ\f_\nxt^k$ for all $k\in\mathbb{N}^+$.
    The base case $k=1$ is already proved. 
   Now suppose $\state\circ\f_\nxt^k\circ \state = \state\circ\f_\nxt^k$, we have 
\begin{equation}
\state\circ\f_\nxt^{k+1}\circ \state = \state\circ\f_\nxt\circ\f_\nxt^k\circ \state = \state\circ\f_\nxt\circ\state \circ \f_\nxt^k\circ \state =  \state\circ\f_\nxt\circ\state \circ \f_\nxt^k = \state\circ\f_\nxt \circ \f_\nxt^k
\end{equation}
which completes the induction. 

   Now we turn to the second part, which is a simple consequence of the first part. Note that for $k\ge 1$, $\nxt^k= \nxt\circ \f_\nxt^{k-1} = \nxt\circ \state \circ \f_\nxt^{k-1}$. By first part, $\state \circ \f_\nxt^{k-1} = \state \circ \f_\nxt^{k-1}\circ\state$. This completes the proof of the second part.
\end{proof}
   
\subsection{Proof of Proposition~\ref{thm:scroll_space_time}}
\label{subsec:proof_scroll_space_time}

Recall that we partition the full generation into segments indexed by $i \in [I]$ where $I$ is the total number of iterations and each iteration corresponds to one effective reduction.  
Let $t_0=0$, and for each $i\ge1$, define $t_i$ to be the smallest integer {greater} than $t_{i-1}$ such that 
\begin{equation}
|\state \circ f_\nxt^{\,t_i}(x)| 
~\le~
\frac12
\Bigl|
\,f_\nxt^{\,t_i - t_{i-1}} 
\circ
\state 
\circ
f_\nxt^{\,t_{i-1}}(x)
\Bigr|,
\end{equation}
where $|\cdot|$ denotes sequence length.  
In words, $t_i$ is the next time step at which the (compressed) state is at most half the length of the newly generated segment. Each iteration $i$ therefore covers times from $t_{i-1}+1$ to $t_i$.  

We let $x^{(i)}$ denote the sequence
\begin{equation} \label{eq_extended_seq}
x^{(i)} 
~\triangleq~
\bigl(\,
f_\nxt^{\,t_i - t_{i-1}} 
\circ
\state 
\circ
f_\nxt^{\,t_{i-1}}(x)
,\;\sep,\;
\state\circ f_\nxt^{\,t_i}(x)
,\;\return
\bigr).
\end{equation}
Applying $\g_{\text{scroll}}$ then discards all tokens except the final compressed state
\begin{equation}
x^{(i+0.5)} ~\triangleq~ \state\circ f_\nxt^{\,t_i}(x)
\end{equation}
which is treated as the initial sequence for the next iteration.

\paragraph{Bounding the Maximum Sequence Length (Space)}
Consider any point immediately before the $\return$ of iteration $i$. By definition of $t_i$, we have
\begin{equation}
\bigl|\state\circ f_\nxt^{\,t_i-1}(x)\bigr|
~>~
\frac{1}{2} 
\Bigl|
\,f_\nxt^{\,t_i-1 - t_{i-1}}
\circ
\state
\circ
f_\nxt^{\,t_{i-1}}(x)
\Bigr|.
\end{equation}
Hence, if we look at the entire sequence (\ref{eq_extended_seq})
its length is at most
\begin{equation}
2\,\bigl|\state\circ f_\nxt^{\,t_i-1}(x)\bigr| \;+\; 2
~+~
\bigl|\state\circ f_\nxt^{\,t_i}(x)\bigr|
~+~
2
~=~
\mathcal O\!\bigl(S(\mathcal{M},\state,x)\bigr).
\end{equation}
Here the additional “$+2$” accounts for the two special tokens $\sep$ and $\return$, plus a small constant overhead.  Because $\state\circ f_\nxt^{\,t_i-1}(x)$ (and also $\state\circ f_\nxt^{\,t_i}(x)$) is at most $S(\mathcal{M},\state,x)$ in length, we conclude that at every $\return$, the sequence is $\mathcal O(S(\mathcal{M},\state,x))$ long.  This implies the {maximum} context length under PENCIL never exceeds $\mathcal O(S(\mathcal{M},\state,x))$.

\paragraph{Bounding the Total Number of Tokens (Time)}
Next, we show the total tokens generated (summing over all iterations) is $\mathcal O(T(\mathcal{M},x))$. The critical point is that our reduction rule does not trigger too frequently: if we were to compress immediately after every single token (e.g. each Turing‐machine step), we would incur an excessive time overhead. By only reducing when the sequence grows sufficiently large relative to the state size, we avoid inflating the total time cost. Formally, define
\begin{equation}
\ell_i 
~\triangleq~
\bigl(t_i - t_{i-1}\bigr)
~+~
\bigl|\state\circ f_\nxt^{\,t_i}(x)\bigr|
~+~
2,
\end{equation}
which represents the cost (length) of generating the new tokens in iteration $i$, plus the two special tokens (such as \sep\ and \return). We wish to bound $\sum_{i=1}^{I} \ell_i$.
From the definition of $t_i$, it follows that
\begin{equation}\label{eq:time_bound_corrected}
\bigl(t_i - t_{i-1}\bigr) 
~+~
\bigl|\state\circ f_\nxt^{\,t_i}(x)\bigr|
~\ge~
2\,\bigl|\state\circ f_\nxt^{\,t_{i-1}}(x)\bigr|.
\end{equation}
Summing up \eqref{eq:time_bound_corrected} from $i=1$ to $I$ gives us
\begin{equation}
\bigl(t_{I}-t_0\bigr)
~+~
\bigl|\state\circ f_\nxt^{\,t_I}(x)\bigr|
~\ge~
\sum_{i=1}^{I}
\bigl|\state\circ f_\nxt^{\,t_{i-1}}(x)\bigr|.
\end{equation}
where $|\state\circ f_\nxt^{\,t_0}(x)| = 0$.
Since $t_I \le T(\mathcal{M},x)$ (the total number of steps for $\mathcal{M}$), each iteration’s generation cost can be bounded by a linear function of $t_I$ plus the space used by the states. Concretely, summing up $\ell_i$ over $i$ yields
\begin{equation}
\sum_{i=1}^{I} \ell_i
~\le~
\sum_{i=1}^{I}
\Bigl[
\bigl(t_i - t_{i-1}\bigr)
~+~
\bigl|\state\circ f_\nxt^{\,t_i}(x)\bigr|
~+~
2
\Bigr]
~\le~
2\,t_I 
~+~
2\,I
~+~
\,\bigl|\state\circ f_\nxt^{\,t_I}(x)\bigr|.
\end{equation}
Since $I \le t_I$ (each iteration covers at least one time step) and $t_I\le T(\mathcal{M},x)$, we conclude $\sum_{i=1}^{I} \ell_i = \mathcal{O}\bigl(T(\mathcal{M},x)\bigr)$.

\paragraph{Conclusion} Together with our bound on the maximum sequence length, this shows that PENCIL simulates $\mathcal{M}$ using both \emph{optimal space} $S(\mathcal{M},\state,x)$ and \emph{optimal time} $T(\mathcal{M},x)$. Thus, we complete the proof of Proposition~\ref{thm:scroll_space_time}.

\section{Omitted Proofs}\label{sec:omitted_proofs}

\subsection{Omitted Proofs in \Cref{sec:transformer_architecture}}\label{subsec:omitted_proofs_transformer}

\begin{proof}[Proof of \Cref{lemma:hardmax_form}]
    Let $M = \arg\max_j x_j$ be the set of indices achieving the maximum value, and let $x_{\max} = \max_j x_j$. For any $i \in M$, we have $x_i = x_{\max}$, and for any $i \notin M$, we have $x_i < x_{\max}$. Consider the softmax function with temperature $\beta$:
    \begin{align*}
    [\softmax_\beta(x)]_i &= \frac{\exp(x_i/\beta)}{\sum_{j=1}^{n}\exp(x_j/\beta)}\\
    &= \frac{\exp(x_i/\beta)}{\sum_{j \in M}\exp(x_{\max}/\beta) + \sum_{j \notin M}\exp(x_j/\beta)}
    \end{align*}
    
    For $i \in M$, as $\beta \to 0$:
    \begin{align*}
    \lim_{\beta \to 0} [\softmax_\beta(x)]_i &= \lim_{\beta \to 0} \frac{\exp(x_{\max}/\beta)}{|M|\exp(x_{\max}/\beta) + \sum_{j \notin M}\exp(x_j/\beta)}\\
    &= \lim_{\beta \to 0} \frac{1}{|M| + \sum_{j \notin M}\exp((x_j-x_{\max})/\beta)}
    \end{align*}
    
    Since $x_j < x_{\max}$ for all $j \notin M$, we have $(x_j-x_{\max})/\beta \to -\infty$ as $\beta \to 0$, and thus $\exp((x_j-x_{\max})/\beta) \to 0$. This gives:
    \begin{equation}
    \lim_{\beta \to 0} [\softmax_\beta(x)]_i = \frac{1}{|M|} \quad \text{for all } i \in M
    \end{equation}
    
    For $i \notin M$, we have:
    \begin{align*}
    \lim_{\beta \to 0} [\softmax_\beta(x)]_i &= \lim_{\beta \to 0} \frac{\exp(x_i/\beta)}{|M|\exp(x_{\max}/\beta) + \sum_{j \notin M}\exp(x_j/\beta)}\\
    &= \lim_{\beta \to 0} \frac{\exp((x_i-x_{\max})/\beta)}{|M| + \sum_{j \notin M}\exp((x_j-x_{\max})/\beta)}
    \end{align*}
    
    Since $x_i < x_{\max}$, we have $(x_i-x_{\max})/\beta \to -\infty$ as $\beta \to 0$, so $\exp((x_i-x_{\max})/\beta) \to 0$, giving:
    \begin{equation}
    \lim_{\beta \to 0} [\softmax_\beta(x)]_i = 0 \quad \text{for all } i \notin M
    \end{equation}
    
    This proves that $\softmax_0(x)$ distributes probability mass uniformly over the indices achieving the maximum value of $x$.
\end{proof}

\subsection{Omitted Proofs in \Cref{sec:FASP}}\label{subsec:omitted_proofs_fasp}

\begin{proof}[Proof of \Cref{thm:fasp_equivalent_tf}]
    We will prove the theorem by showing both directions of the inclusion: $\ours[\phi_\pe;\op_{\act}] \subseteq \F_{\tf[\phi_\pe;\op_{\act}]}$ and $\F_{\tf[\phi_\pe;\op_{\act}]} \subseteq \ours[\phi_\pe;\op_{\act}]$.

    \paragraph{Direction 1: $\ours[\phi_\pe;\op_{\act}] \subseteq \F_{\tf[\phi_\pe;\op_{\act}]}$} 
    
    We show that any function definable in FASP can be implemented by a transformer. We prove this by induction on the number of steps in the FASP program. The base case is trivial as the initial set of definable functions $\dfnb_0$ includes token embeddings $\F_{\te}$ and positional embeddings $\phi_\pe$, which are directly implementable by transformer embedding layers, as established in Lemma~\ref{lem:embedding_operators}.
    
    For the inductive step, assume that all functions in $\dfnb_t$ can be implemented by transformers. Consider a new function $\psi_t$ defined at step $t$. We need to show that $\psi_t \in \F_{\tf[\phi_\pe;\op_{\act}]}$. There are four possible operators:
    
    1. \textbf{Concatenation}: If $\psi_t = [\psi, \psi']$ where $\psi, \psi' \in \dfnb_t$, then by the induction hypothesis, both $\psi$ and $\psi'$ can be implemented by transformers. By Lemma~\ref{lem:concatenation_is_closed}, we know that concatenation is a closed operator over $\F_{\tf[\phi_\pe;\op_{\act}]}$, thus $\psi_t \in \F_{\tf[\phi_\pe;\op_{\act}]}$.
    
    2. \textbf{Average-Hard Attention}: If $\psi_t = \aha(\psi, \psi', \psi'')$ where $\psi, \psi', \psi'' \in \dfnb_t$, by Lemma~\ref{lem:aha_closed}, average-hard attention is a closed operator over $\F_{\tf[\phi_\pe;\op_{\act}]}$. Therefore, $\psi_t \in \F_{\tf[\phi_\pe;\op_{\act}]}$.
    
    3. \textbf{Linear Projection}: If $\psi_t = W \cdot \psi$ where $\psi \in \dfnb_t$ and $W$ is a matrix, this defines a local operator as per Definition~\ref{def:local_operator}. By Theorem~\ref{lem:local_closed_operators}, any local operator implementable by a network with quadratic and ReLU activations is closed over $\F_{\tf[\phi_\pe;\op_{\act}]}$. Linear projection falls into this category, so $\psi_t \in \F_{\tf[\phi_\pe;\op_{\act}]}$.
    
    4. \textbf{Nonlinear Activation}: If $\psi_t = \phi \circ \psi$ where $\phi \in \op_\act$ and $\psi \in \dfnb_t$, this also defines a local operator. Since the activations in $\op_\act$ can be implemented by networks with quadratic and ReLU activations (as assumed in our framework), Theorem~\ref{lem:local_closed_operators} ensures that $\psi_t \in \F_{\tf[\phi_\pe;\op_{\act}]}$.
    
    Thus, any function in $\ours[\phi_\pe;\op_{\act}]$ can be implemented by a transformer, establishing that $\ours[\phi_\pe;\op_{\act}] \subseteq \F_{\tf[\phi_\pe;\op_{\act}]}$.
    
    \paragraph{Direction 2: $\F_{\tf[\phi_\pe;\op_{\act}]} \subseteq \ours[\phi_\pe;\op_{\act}]$} 
    
    We need to show that any transformer can be expressed as a FASP program. We prove this by induction on the number of layers in the transformer.
    
    For the base case, a 0-layer transformer just consists of token and positional embeddings, which are already in the initial set of definable functions $\dfnb_0$ in FASP.

    For the inductive step, assume that any transformer with $L$ layers can be expressed in FASP. Consider a transformer with $L+1$ layers. The first $L$ layers can be expressed in FASP by the induction hypothesis. Let's denote this as $\psi_{L}$. We need to show that adding the $(L+1)$-th layer maintains expressibility in FASP.
    
    The $(L+1)$-th layer consists of a multi-head self-attention sublayer followed by a feed-forward network:
    
    1. \textbf{Multi-Head Attention}: The multi-head attention can be decomposed into $h$ single-head attention, each of which can be expressed as $\aha(W_Q^i \cdot \psi_{L}, W_K^i \cdot \psi_{L}, W_V^i \cdot \psi_{L})$ for $i \in \{1, \ldots, h\}$, where $W_Q^i$, $W_K^i$, and $W_V^i$ are the query, key, and value projection matrices for the $i$-th head. The outputs of these heads are concatenated and projected through $W_O$, which can be represented as a linear projection in FASP.
    
    2. \textbf{Feed-Forward Network}: The feed-forward network applies a linear transformation followed by a nonlinear activation and another linear transformation. This can be directly expressed in FASP using the linear projection and nonlinear activation.
    
    3. \textbf{Residual Connections}: The residual connections simply add the input to the output of each sublayer, which can be expressed as addition (which is a linear transformation) in FASP.
    
    Therefore, any transformer with $L+1$ layers can be expressed in FASP, establishing that $\F_{\tf[\phi_\pe;\op_{\act}]} \subseteq \ours[\phi_\pe;\op_{\act}]$.
    
    Combining the two directions, we have $\ours[\phi_\pe;\op_{\act}] = \F_{\tf[\phi_\pe;\op_{\act}]}$, which completes the proof.
    \end{proof}

    \begin{proof}[Proof of \Cref{lem:direct_sum}]
        We prove each claim separately:
    
        \paragraph{(1) Token and Positional Embeddings:}
        For any $\psi_1, \psi_2 \in \F_\te$, let $\psi_1: \Sigma \to \mathbb{R}^{d_1}$ and $\psi_2: \Sigma \to \mathbb{R}^{d_2}$ be parameterized by $\theta_{\te}^1 \in (\mathbb{R}^{d_1})^\Sigma$ and $\theta_{\te}^2 \in (\mathbb{R}^{d_2})^\Sigma$ respectively. We define $[\psi_1, \psi_2]: \Sigma \to \mathbb{R}^{d_1+d_2}$ parameterized by $\theta_{\te} \in (\mathbb{R}^{d_1+d_2})^\Sigma$ where for each $\sigma \in \Sigma$, $\theta_{\te}(\sigma) = [\theta_{\te}^1(\sigma), \theta_{\te}^2(\sigma)]$. This directly implements the concatenation, showing that $[\psi_1, \psi_2] \in \F_\te$.
    
        The case for positional embeddings follows similarly. For any $\psi_1, \psi_2 \in \F_\pe$ with parameters $\theta_{\pe}^1 \in \mathbb{R}^{d_1 \times d_{\pe}}$ and $\theta_{\pe}^2 \in \mathbb{R}^{d_2 \times d_{\pe}}$, we can define $[\psi_1, \psi_2] \in \F_\pe$ with parameters $\theta_{\pe} = [\theta_{\pe}^1; \theta_{\pe}^2] \in \mathbb{R}^{(d_1+d_2) \times d_{\pe}}$.
    
        \paragraph{(2) Zero Function and Direct Sum with Zero:}
        The statement that $0 \in \op$ is straightforward as each operator allows setting all parameters (weight matrices and biases) to zero.
    
        For $\phi \oplus 0_{d,d'} \in \op$, consider any $\phi \in \op$:
        \begin{itemize}
            \item For $\op_\sa$: Given $\phi = \sa_{\theta_{\sa}}$ with parameters $\theta_{\sa} = (W_Q, W_K, W_V, W_O)$, we define $\phi \oplus 0$ as $\sa_{\theta'_{\sa}}$ with parameters $\theta'_{\sa} = (W'_Q, W'_K, W'_V, W'_O)$ where:
            \begin{align*}
                W'_Q = \begin{bmatrix} W_Q \\ \mathbf{0} \end{bmatrix}, \quad 
                W'_K = \begin{bmatrix} W_K \\ \mathbf{0} \end{bmatrix}, \quad
                W'_V = \begin{bmatrix} W_V \\ \mathbf{0} \end{bmatrix}, \quad
                W'_O = \begin{bmatrix} W_O & \mathbf{0} \end{bmatrix}
            \end{align*}
            
            \item For $\op_\mha$: The proof follows from the fact that $\op_\mha$ is composed of multiple $\op_\sa$ attention heads.
            
            \item For $\op_\ff$: it suffices to prove for the sub feedforward network corresponding to each activation $\sigma\in\op_\act$. Given $\sigma:\mathbb{R}^k\to\mathbb{R}$ and $\phi = \ff^\sigma_{\theta_{\ff,\sigma}}$ with parameters $\theta_{\ff,\sigma} = (W_i)_{i=0}^k$, we define $\phi \oplus 0$ as $\ff_{\theta'_{\ff}}$ with parameters $W'_i = \begin{bmatrix} W_i & \mathbf{0} \end{bmatrix}$.
    
            \item For $\op_\proj$: Given $\phi = \proj_{\theta_{\proj}}$ with parameter $\theta_{\proj} \in \mathbb{R}^{d_\proj \times d}$, we define $\phi \oplus 0_{d,d'}$ as $\proj_{\theta'_{\proj}}$ with $\theta'_{\proj} = \begin{bmatrix} \theta_{\proj} & \mathbf{0} \end{bmatrix}$.
        \end{itemize}
    
        \paragraph{(3) Closure Under Addition:}
        For any $\op \in \{\op_\mha, \op_\ff, \op_\proj\}$, we have $\op = \op + \op$:
        \begin{itemize}
            \item For $\op_\mha$: The sum $\phi_1 + \phi_2$ of two multi-head attention modules can be implemented by concatenating their attention heads into a single module with $h_1 + h_2$ heads.
            
            \item For $\op_\ff$: The sum of two feed-forward networks can be implemented by doubling the intermediate dimension and summing their outputs through appropriate matrix concatenation.
            
            \item For $\op_\proj$: The sum of two projection layers is simply implemented by adding their parameter matrices.
            
            \item By definition, $\op_\mha$ is the sum closure of $\op_\sa$ since multi-head attention is the sum of outputs from single-head attention modules.
        \end{itemize}
    
        \paragraph{(4) Direct Sum Closure:}
        For any set $\op\in \{\op_\mha,\op_\ff, \op_\proj, \{\id_d\mid d\in\mathbb{N}\}\}$, for any $\phi_1 \in \op$ with input dimension $d_1$ and output dimension $d'_1$, and $\phi_2 \in \op$ with input dimension $d_2$ and output dimension $d'_2$, their direct sum $\phi_1\oplus\phi_2\in \op$. This can be proved by decomposing the direct sum as:
        \begin{equation}
            \phi_1\oplus\phi_2 = (\phi_1\oplus 0) + (0\oplus\phi_2)
        \end{equation}
        where $0$ represents the appropriate zero function. From claim (2), we know that $\phi_1\oplus 0, 0\oplus\phi_2 \in \op$, and from claim (3), we know that $\op = \op+\op$. Therefore, $\phi_1\oplus\phi_2 \in \op$.
        
        For the identity function, note that $\id_d: (\mathbb{R}^d)^* \to \mathbb{R}^d$ can be implemented by any of the above operators with appropriate parameter choices. For instance, in $\op_\mha$, we can set each head to implement identity by using $W_Q = W_K = W_V = I$ and $W_O = I/h$ where $h$ is the number of heads. For $\op_\ff$, we can set $W_0 = W_1 = 0$, $W_2 = 0$, $b_0 = b_1 = 0$, and $b_2 = 0$. The direct sum of identity functions remains an identity function: $\id_{d_1} \oplus \id_{d_2} = \id_{d_1+d_2}$, which is again implementable by the same operators with appropriately sized parameters.
        For $\op_\tf$: Given any two transformer layers $\phi_1, \phi_2 \in \op_\tf$, where $\phi_1: (\mathbb{R}^{d_1})^* \to \mathbb{R}^{d_1}$ and $\phi_2: (\mathbb{R}^{d_2})^* \to \mathbb{R}^{d_2}$ with parameters $\theta_\mha^{(1)}, \theta_\ff^{(1)}$ and $\theta_\mha^{(2)}, \theta_\ff^{(2)}$ respectively, we need to show $\phi_1 \oplus \phi_2 \in \op_\tf$.
    
        By definition of $\op_\tf$ and transformer layers (Definition 3.16), we have:
        \begin{align}
        \phi_1 &= \left(\ff_{\theta_\ff^{(1)}} + \id_{d_1} \right) \circ \left(\seq{\mha_{\theta_\mha^{(1)}}} + \seq{\id_{d_1}}\right)\\
        \phi_2 &= \left(\ff_{\theta_\ff^{(2)}} + \id_{d_2} \right) \circ \left(\seq{\mha_{\theta_\mha^{(2)}}} + \seq{\id_{d_2}}\right)
        \end{align}
    
        For the direct sum $\phi_1 \oplus \phi_2$, we have:
            \begin{align}
            \phi_1 \oplus \phi_2 &= \left(\left(\ff_{\theta_\ff^{(1)}} \oplus \ff_{\theta_\ff^{(2)}}\right) + \left(\id_{d_1} \oplus \id_{d_2}\right)\right) \circ \left(\left(\seq{\mha_{\theta_\mha^{(1)}}} \oplus \seq{\mha_{\theta_\mha^{(2)}}}\right) + \left(\seq{\id_{d_1}} \oplus \seq{\id_{d_2}}\right)\right)\\
            &= \left(\left(\ff_{\theta_\ff^{(1)}} \oplus \ff_{\theta_\ff^{(2)}}\right) + \id_{d_1+d_2}\right) \circ \left(\left(\seq{\mha_{\theta_\mha^{(1)}}} \oplus \seq{\mha_{\theta_\mha^{(2)}}}\right) + \seq{\id_{d_1+d_2}}\right)
            \end{align}
    
        From our earlier results:
        1. $\ff_{\theta_\ff^{(1)}} \oplus \ff_{\theta_\ff^{(2)}} \in \op_\ff$ (claim 4)
        2. $\id_{d_1} \oplus \id_{d_2} = \id_{d_1+d_2}$ (claim 4)
        3. $\mha_{\theta_\mha^{(1)}} \oplus \mha_{\theta_\mha^{(2)}} \in \op_\mha$ (claim 4)
    
        Therefore, $\phi_1 \oplus \phi_2$ can be expressed as a transformer layer, which means $\phi_1 \oplus \phi_2 \in \op_\tf$.
    \end{proof}

    \begin{proof}[Proof of \Cref{lem:concatenation_is_closed}]
        We need to prove that for any $\psi_1, \psi_2 \in \F_{\tf[\phi_\pe;\op_{\act}]}$, their concatenation $[\psi_1, \psi_2] \in \F_{\tf[\phi_\pe;\op_{\act}]}$. Let $\psi_1: \Sigma^* \to \mathbb{R}^{d_1}$ and $\psi_2: \Sigma^* \to \mathbb{R}^{d_2}$ be two sequence-to-embedding functions in $\F_{\tf[\phi_\pe;\op_{\act}]}$. By definition, for $i \in \{1, 2\}$, there exist token embedding $\te_i \in \F_\te$, positional embedding $\pe_i \in \F_\pe$, transformer layers $\tf_{i,\ell}\in\op_\tf$ for $\ell \in \{1, \ldots, L_i\}$, and projection $\proj_i \in \op_\proj$ such that:
        \begin{align}
        \psi_i = \proj_i \circ \bigl(\bigcirc_{\ell=1}^{L_i}\seq{\tf_{i,\ell}} \bigr)\circ\bigl(\seq{\pe_i} + \seq{\te_i}\bigr)
        \end{align}
        
        Without loss of generality, we can assume $L_1 = L_2 = L$ (if not, we can pad the shallower transformer with identity layers since $\id_d \in \op_\tf$). We construct a transformer that computes $[\psi_1, \psi_2]$ as follows:
        
        \begin{enumerate}
        \item \textbf{Initial embedding layer:} By \Cref{lem:direct_sum}(1), we construct token and positional embeddings $\te = [\te_1, \te_2] \in \F_\te$ and $\pe = [\pe_1, \pe_2] \in \F_\pe$.
        
        \item \textbf{Transformer layers:} For each $\ell \in \{1, \ldots, L\}$, we define $\tf_\ell = \tf_{1,\ell} \oplus \tf_{2,\ell} \in \op_\tf$ by \Cref{lem:direct_sum}(4).
        
        \item \textbf{Projection layer:} We define $\proj = \proj_1 \oplus \proj_2 \in \op_\proj$ by \Cref{lem:direct_sum}(4).
        \end{enumerate}
        
        Thus, $[\psi_1, \psi_2] = \proj \circ \bigl(\bigcirc_{\ell=1}^{L}\seq{\tf_{\ell}} \bigr)\circ\bigl(\seq{\pe} + \seq{\te}\bigr)$ is expressible by a valid transformer with a constant number of layers, which proves $[\psi_1, \psi_2] \in \F_{\tf[\phi_\pe;\op_{\act}]}$.
        \end{proof}
        \begin{proof}[Proof of \Cref{lem:local_closed_operators}]
        First we claim that  if $\phi_\omega$ can be implemented by a $2$-layer feedforward network with ReGLU activation, then $\omega$ is closed over $\F_{\tf[\phi_\pe;\op_{\act}]}$. This is because for any $\psi_i\in \F_{\tf[\phi_\pe;\op_{\act}]}(d_i)$, we have $[\psi_1, \ldots, \psi_k]\in\F_{\tf[\phi_\pe;\op_{\act}]}$ since concatenation is closed. 
        Suppose $[\psi_1, \ldots, \psi_k]$ can be expressed as:
        \begin{align}
        [\psi_1, \ldots, \psi_k] = \proj_{\theta_\proj}\circ \bigl(\bigcirc_{\ell=1}^{L}\seq{\tf_{\theta_\mha^\ell,\theta_\ff^\ell}} \bigr)\circ\bigl(\seq{\pe} + \seq{\te_{\theta_\te}}\bigr)
        \end{align}
        
        Now, applying a 2-layer feedforward network $\phi_\omega$ to this concatenated output means:
        \begin{equation}
        \omega(\psi_1, \ldots, \psi_k) = \phi_\omega([\psi_1, \ldots, \psi_k])
        \end{equation}
        
        Adding a 2-layer feedforward network $\phi_\omega$ after this means:
        \begin{align}
        \omega(\psi_1, \ldots, \psi_k) &= \phi_\omega \circ \proj_{\theta_\proj}\circ \bigl(\bigcirc_{\ell=1}^{L}\seq{\tf_{\theta_\mha^\ell,\theta_\ff^\ell}} \bigr)\circ\bigl(\seq{\pe} + \seq{\te_{\theta_\te}}\bigr)
        \end{align}
        
        To prove this remains in $\F_{\tf[\phi_\pe;\op_{\act}]}$, we can construct an additional transformer layer $\tf_{\theta_\mha^{L+1},\theta_\ff^{L+1}}$ where:
        1. $\theta_\mha^{L+1}$ implements zero attention (all weights set to 0)
        2. $\theta_\ff^{L+1}$ implements $\phi_\omega \circ \proj_{\theta_\proj}$
        
        This is valid because the linear projection $\proj_{\theta_\proj}$ can be absorbed into the first layer of the feedforward network in $\phi_\omega$. Specifically, if $\phi_\omega$ has parameters $(W_0, W_1, W_2, b_0, b_1, b_2)$ and $\proj_{\theta_\proj}$ has parameter matrix $\theta_\proj$, then $\phi_\omega \circ \proj_{\theta_\proj}$ is equivalent to a feedforward network with parameters:
        $(W_0', W_1', W_2', b_0', b_1', b_2') = (W_0\theta_\proj, W_1\theta_\proj, W_2, b_0, b_1, b_2)$.
        
        Therefore,
        $\omega(\psi_1, \ldots, \psi_k) = \proj_{\theta_\proj'}\circ \bigl(\bigcirc_{\ell=1}^{L+1}\seq{\tf_{\theta_\mha^\ell,\theta_\ff^\ell}} \bigr)\circ\bigl(\seq{\pe} + \seq{\te_{\theta_\te}}\bigr)\in \F_{\tf[\phi_\pe;\op_{\act}]}$
        where $\theta_\ff^{L+1}$ implements the combined function $\phi_\omega \circ \proj_{\theta_\proj}$ and $\proj_{\theta_\proj'}$ is the identity projection. This completes the proof of the claim.
        
        Since composition of closed operators remains closed, the above claim extends to any number of layers, which are just composition of 2-layer networks. This completes the proof of \Cref{lem:local_closed_operators}.
        \end{proof}

\clearpage
\section{Example: Boolean Satisfiability (SAT)}  \label{app_sat} 

\subsection{Chain-of-Thought}

\textbf{{$\bullet$ Prompt:}}

\begin{lstlisting}
<|startoftext|> ( 4 ∨ ¬ 3 ∨ ¬ 2 ) ∧ ( ¬ 4 ∨ ¬ 2 ∨ 1 ) ∧ ( ¬ 2 ∨ ¬ 1 ∨ 3 ) ∧ ( 3 ∨ 4 ∨ 2 ) ∧ ( 3 ∨ ¬ 2 ∨ 4 ) ∧ ( ¬ 1 ∨ 4 ∨ 2 ) ∧ ( 1 ∨ ¬ 4 ∨ 2 ) ∧ ( 4 ∨ ¬ 2 ∨ 3 ) ∧ ( ¬ 4 ∨ 1 ∨ ¬ 3 ) ∧ ( 3 ∨ 2 ∨ 4 ) ∧ ( ¬ 3 ∨ 1 ∨ ¬ 4 ) ∧ ( ¬ 4 ∨ 2 ∨ ¬ 3 ) ∧ ( 2 ∨ 4 ∨ ¬ 3 ) ∧ ( 3 ∨ 4 ∨ 2 ) ∧ ( 3 ∨ ¬ 1 ∨ ¬ 4 ) ∧ ( ¬ 1 ∨ ¬ 4 ∨ ¬ 2 ) ∧ ( 2 ∨ 3 ∨ 4 ) <|endofprompt|>
\end{lstlisting}

\textbf{{$\bullet$ Response:}}

\begin{lstlisting}
[CALL] Question: ( 4 ∨ ¬ 3 ∨ ¬ 2 ) ∧ ( ¬ 4 ∨ ¬ 2 ∨ 1 ) ∧ ( ¬ 2 ∨ ¬ 1 ∨ 3 ) ∧ ( 3 ∨ 4 ∨ 2 ) ∧ ( 3 ∨ ¬ 2 ∨ 4 ) ∧ ( ¬ 1 ∨ 4 ∨ 2 ) ∧ ( 1 ∨ ¬ 4 ∨ 2 ) ∧ ( 4 ∨ ¬ 2 ∨ 3 ) ∧ ( ¬ 4 ∨ 1 ∨ ¬ 3 ) ∧ ( 3 ∨ 2 ∨ 4 ) ∧ ( ¬ 3 ∨ 1 ∨ ¬ 4 ) ∧ ( ¬ 4 ∨ 2 ∨ ¬ 3 ) ∧ ( 2 ∨ 4 ∨ ¬ 3 ) ∧ ( 3 ∨ 4 ∨ 2 ) ∧ ( 3 ∨ ¬ 1 ∨ ¬ 4 ) ∧ ( ¬ 1 ∨ ¬ 4 ∨ ¬ 2 ) ∧ ( 2 ∨ 3 ∨ 4 ) Try 1 = True [CALL] Question: ( 4 ∨ ¬ 3 ∨ ¬ 2 ) ∧ ( ¬ 2 ∨ 3 ) ∧ ( 3 ∨ 4 ∨ 2 ) ∧ ( 3 ∨ ¬ 2 ∨ 4 ) ∧ ( 4 ∨ 2 ) ∧ ( 4 ∨ ¬ 2 ∨ 3 ) ∧ ( 3 ∨ 2 ∨ 4 ) ∧ ( ¬ 4 ∨ 2 ∨ ¬ 3 ) ∧ ( 2 ∨ 4 ∨ ¬ 3 ) ∧ ( 3 ∨ 4 ∨ 2 ) ∧ ( 3 ∨ ¬ 4 ) ∧ ( ¬ 4 ∨ ¬ 2 ) ∧ ( 2 ∨ 3 ∨ 4 ) Try 2 = True [CALL] Question: ( 4 ∨ ¬ 3 ) ∧ ( 3 ) ∧ ( 3 ∨ 4 ) ∧ ( 4 ∨ 3 ) ∧ ( 3 ∨ ¬ 4 ) ∧ ( ¬ 4 ) Found 3 Let 3 = True [CALL] Question: ( 4 ) ∧ ( ¬ 4 ) Found 4 Let 4 = True [SEP] Answer: False [RETURN] [SEP] Answer: False [RETURN] Try 2 = False [CALL] Question: ( 3 ∨ 4 ) ∧ ( 4 ) ∧ ( 3 ∨ 4 ) ∧ ( ¬ 4 ∨ ¬ 3 ) ∧ ( 4 ∨ ¬ 3 ) ∧ ( 3 ∨ 4 ) ∧ ( 3 ∨ ¬ 4 ) ∧ ( 3 ∨ 4 ) Found 4 Let 4 = True [CALL] Question: ( ¬ 3 ) ∧ ( 3 ) Found ¬ 3 Let 3 = False [SEP] Answer: False [RETURN] [SEP] Answer: False [RETURN] [SEP] Answer: False [RETURN] Try 1 = False [CALL] Question: ( 4 ∨ ¬ 3 ∨ ¬ 2 ) ∧ ( ¬ 4 ∨ ¬ 2 ) ∧ ( 3 ∨ 4 ∨ 2 ) ∧ ( 3 ∨ ¬ 2 ∨ 4 ) ∧ ( ¬ 4 ∨ 2 ) ∧ ( 4 ∨ ¬ 2 ∨ 3 ) ∧ ( ¬ 4 ∨ ¬ 3 ) ∧ ( 3 ∨ 2 ∨ 4 ) ∧ ( ¬ 3 ∨ ¬ 4 ) ∧ ( ¬ 4 ∨ 2 ∨ ¬ 3 ) ∧ ( 2 ∨ 4 ∨ ¬ 3 ) ∧ ( 3 ∨ 4 ∨ 2 ) ∧ ( 2 ∨ 3 ∨ 4 ) Try 2 = True [CALL] Question: ( 4 ∨ ¬ 3 ) ∧ ( ¬ 4 ) ∧ ( 3 ∨ 4 ) ∧ ( 4 ∨ 3 ) ∧ ( ¬ 4 ∨ ¬ 3 ) ∧ ( ¬ 3 ∨ ¬ 4 ) Found ¬ 4 Let 4 = False [CALL] Question: ( ¬ 3 ) ∧ ( 3 ) ∧ ( 3 ) Found ¬ 3 Let 3 = False [SEP] Answer: False [RETURN] [SEP] Answer: False [RETURN] Try 2 = False [CALL] Question: ( 3 ∨ 4 ) ∧ ( ¬ 4 ) ∧ ( ¬ 4 ∨ ¬ 3 ) ∧ ( 3 ∨ 4 ) ∧ ( ¬ 3 ∨ ¬ 4 ) ∧ ( ¬ 4 ∨ ¬ 3 ) ∧ ( 4 ∨ ¬ 3 ) ∧ ( 3 ∨ 4 ) ∧ ( 3 ∨ 4 ) Found ¬ 4 Let 4 = False [CALL] Question: ( 3 ) ∧ ( 3 ) ∧ ( ¬ 3 ) ∧ ( 3 ) ∧ ( 3 ) Found 3 Let 3 = True [SEP] Answer: False [RETURN] [SEP] Answer: False [RETURN] [SEP] Answer: False [RETURN] [SEP] Answer: False [RETURN] <|endoftext|>
\end{lstlisting}

\subsection{PENCIL}

\textbf{{$\bullet$ Prompt:}}

\begin{lstlisting}
<|startoftext|> ( 4 ∨ ¬ 3 ∨ ¬ 2 ) ∧ ( ¬ 4 ∨ ¬ 2 ∨ 1 ) ∧ ( ¬ 2 ∨ ¬ 1 ∨ 3 ) ∧ ( 3 ∨ 4 ∨ 2 ) ∧ ( 3 ∨ ¬ 2 ∨ 4 ) ∧ ( ¬ 1 ∨ 4 ∨ 2 ) ∧ ( 1 ∨ ¬ 4 ∨ 2 ) ∧ ( 4 ∨ ¬ 2 ∨ 3 ) ∧ ( ¬ 4 ∨ 1 ∨ ¬ 3 ) ∧ ( 3 ∨ 2 ∨ 4 ) ∧ ( ¬ 3 ∨ 1 ∨ ¬ 4 ) ∧ ( ¬ 4 ∨ 2 ∨ ¬ 3 ) ∧ ( 2 ∨ 4 ∨ ¬ 3 ) ∧ ( 3 ∨ 4 ∨ 2 ) ∧ ( 3 ∨ ¬ 1 ∨ ¬ 4 ) ∧ ( ¬ 1 ∨ ¬ 4 ∨ ¬ 2 ) ∧ ( 2 ∨ 3 ∨ 4 ) <|endofprompt|>
\end{lstlisting}

\textbf{{$\bullet$ Response:}}
\begin{lstlisting}
Answer: False <|endoftext|>
\end{lstlisting}

\subsection{Internal Thinking Process of PENCIL}

{\textbf{Model Generation (1)}}
\begin{lstlisting}
[CALL] Question: ( ¬ 3 ∨ 4 ∨ 1 ) ∧ ( 1 ∨ 3 ∨ 2 ) ∧ ( ¬ 4 ∨ ¬ 3 ∨ ¬ 1 ) ∧ ( ¬ 3 ∨ ¬ 1 ∨ 2 ) ∧ ( 4 ∨ 1 ∨ 3 ) ∧ ( 4 ∨ 1 ∨ ¬ 3 ) ∧ ( ¬ 3 ∨ ¬ 1 ∨ 4 ) ∧ ( ¬ 1 ∨ 2 ∨ ¬ 4 ) ∧ ( ¬ 3 ∨ ¬ 1 ∨ 4 ) ∧ ( ¬ 3 ∨ 2 ∨ ¬ 4 ) ∧ ( ¬ 4 ∨ ¬ 1 ∨ 3 ) ∧ ( 2 ∨ 1 ∨ ¬ 3 ) ∧ ( 1 ∨ 4 ∨ 3 ) ∧ ( 2 ∨ ¬ 3 ∨ 4 ) ∧ ( 2 ∨ ¬ 4 ∨ 1 ) ∧ ( 1 ∨ 3 ∨ 2 ) ∧ ( 4 ∨ 2 ∨ ¬ 3 ) Try 1 = True [CALL] Question: ( ¬ 4 ∨ ¬ 3 ) ∧ ( ¬ 3 ∨ 2 ) ∧ ( ¬ 3 ∨ 4 ) ∧ ( 2 ∨ ¬ 4 ) ∧ ( ¬ 3 ∨ 4 ) ∧ ( ¬ 3 ∨ 2 ∨ ¬ 4 ) ∧ ( ¬ 4 ∨ 3 ) ∧ ( 2 ∨ ¬ 3 ∨ 4 ) ∧ ( 4 ∨ 2 ∨ ¬ 3 ) Try 2 = True [CALL] Question: ( ¬ 4 ∨ ¬ 3 ) ∧ ( ¬ 3 ∨ 4 ) ∧ ( ¬ 3 ∨ 4 ) ∧ ( ¬ 4 ∨ 3 ) Try 3 = True [CALL] Question: ( ¬ 4 ) ∧ ( 4 ) ∧ ( 4 ) Found ¬ 4 Let 4 = False [SEP] Answer: False [RETURN]
\end{lstlisting}

{\textbf{Reduction Rule (1)}}
\begin{lstlisting}
[CALL] Question: ( ¬ 3 ∨ 4 ∨ 1 ) ∧ ( 1 ∨ 3 ∨ 2 ) ∧ ( ¬ 4 ∨ ¬ 3 ∨ ¬ 1 ) ∧ ( ¬ 3 ∨ ¬ 1 ∨ 2 ) ∧ ( 4 ∨ 1 ∨ 3 ) ∧ ( 4 ∨ 1 ∨ ¬ 3 ) ∧ ( ¬ 3 ∨ ¬ 1 ∨ 4 ) ∧ ( ¬ 1 ∨ 2 ∨ ¬ 4 ) ∧ ( ¬ 3 ∨ ¬ 1 ∨ 4 ) ∧ ( ¬ 3 ∨ 2 ∨ ¬ 4 ) ∧ ( ¬ 4 ∨ ¬ 1 ∨ 3 ) ∧ ( 2 ∨ 1 ∨ ¬ 3 ) ∧ ( 1 ∨ 4 ∨ 3 ) ∧ ( 2 ∨ ¬ 3 ∨ 4 ) ∧ ( 2 ∨ ¬ 4 ∨ 1 ) ∧ ( 1 ∨ 3 ∨ 2 ) ∧ ( 4 ∨ 2 ∨ ¬ 3 ) Try 1 = True [CALL] Question: ( ¬ 4 ∨ ¬ 3 ) ∧ ( ¬ 3 ∨ 2 ) ∧ ( ¬ 3 ∨ 4 ) ∧ ( 2 ∨ ¬ 4 ) ∧ ( ¬ 3 ∨ 4 ) ∧ ( ¬ 3 ∨ 2 ∨ ¬ 4 ) ∧ ( ¬ 4 ∨ 3 ) ∧ ( 2 ∨ ¬ 3 ∨ 4 ) ∧ ( 4 ∨ 2 ∨ ¬ 3 ) Try 2 = True [CALL] Question: ( ¬ 4 ∨ ¬ 3 ) ∧ ( ¬ 3 ∨ 4 ) ∧ ( ¬ 3 ∨ 4 ) ∧ ( ¬ 4 ∨ 3 ) Try 3 = True Answer: False
\end{lstlisting}

{\textbf{Model Generation (2)}}
\begin{lstlisting}
[CALL] Question: ( ¬ 3 ∨ 4 ∨ 1 ) ∧ ( 1 ∨ 3 ∨ 2 ) ∧ ( ¬ 4 ∨ ¬ 3 ∨ ¬ 1 ) ∧ ( ¬ 3 ∨ ¬ 1 ∨ 2 ) ∧ ( 4 ∨ 1 ∨ 3 ) ∧ ( 4 ∨ 1 ∨ ¬ 3 ) ∧ ( ¬ 3 ∨ ¬ 1 ∨ 4 ) ∧ ( ¬ 1 ∨ 2 ∨ ¬ 4 ) ∧ ( ¬ 3 ∨ ¬ 1 ∨ 4 ) ∧ ( ¬ 3 ∨ 2 ∨ ¬ 4 ) ∧ ( ¬ 4 ∨ ¬ 1 ∨ 3 ) ∧ ( 2 ∨ 1 ∨ ¬ 3 ) ∧ ( 1 ∨ 4 ∨ 3 ) ∧ ( 2 ∨ ¬ 3 ∨ 4 ) ∧ ( 2 ∨ ¬ 4 ∨ 1 ) ∧ ( 1 ∨ 3 ∨ 2 ) ∧ ( 4 ∨ 2 ∨ ¬ 3 ) Try 1 = True [CALL] Question: ( ¬ 4 ∨ ¬ 3 ) ∧ ( ¬ 3 ∨ 2 ) ∧ ( ¬ 3 ∨ 4 ) ∧ ( 2 ∨ ¬ 4 ) ∧ ( ¬ 3 ∨ 4 ) ∧ ( ¬ 3 ∨ 2 ∨ ¬ 4 ) ∧ ( ¬ 4 ∨ 3 ) ∧ ( 2 ∨ ¬ 3 ∨ 4 ) ∧ ( 4 ∨ 2 ∨ ¬ 3 ) Try 2 = True [CALL] Question: ( ¬ 4 ∨ ¬ 3 ) ∧ ( ¬ 3 ∨ 4 ) ∧ ( ¬ 3 ∨ 4 ) ∧ ( ¬ 4 ∨ 3 ) Try 3 = True Answer: False Try 3 = False [CALL] Question: ( ¬ 4 ) Found ¬ 4 Let 4 = False [SEP] Answer: True [RETURN]
\end{lstlisting}

{\textbf{Reduction Rule (2)}}
\begin{lstlisting}
[CALL] Question: ( ¬ 3 ∨ 4 ∨ 1 ) ∧ ( 1 ∨ 3 ∨ 2 ) ∧ ( ¬ 4 ∨ ¬ 3 ∨ ¬ 1 ) ∧ ( ¬ 3 ∨ ¬ 1 ∨ 2 ) ∧ ( 4 ∨ 1 ∨ 3 ) ∧ ( 4 ∨ 1 ∨ ¬ 3 ) ∧ ( ¬ 3 ∨ ¬ 1 ∨ 4 ) ∧ ( ¬ 1 ∨ 2 ∨ ¬ 4 ) ∧ ( ¬ 3 ∨ ¬ 1 ∨ 4 ) ∧ ( ¬ 3 ∨ 2 ∨ ¬ 4 ) ∧ ( ¬ 4 ∨ ¬ 1 ∨ 3 ) ∧ ( 2 ∨ 1 ∨ ¬ 3 ) ∧ ( 1 ∨ 4 ∨ 3 ) ∧ ( 2 ∨ ¬ 3 ∨ 4 ) ∧ ( 2 ∨ ¬ 4 ∨ 1 ) ∧ ( 1 ∨ 3 ∨ 2 ) ∧ ( 4 ∨ 2 ∨ ¬ 3 ) Try 1 = True [CALL] Question: ( ¬ 4 ∨ ¬ 3 ) ∧ ( ¬ 3 ∨ 2 ) ∧ ( ¬ 3 ∨ 4 ) ∧ ( 2 ∨ ¬ 4 ) ∧ ( ¬ 3 ∨ 4 ) ∧ ( ¬ 3 ∨ 2 ∨ ¬ 4 ) ∧ ( ¬ 4 ∨ 3 ) ∧ ( 2 ∨ ¬ 3 ∨ 4 ) ∧ ( 4 ∨ 2 ∨ ¬ 3 ) Try 2 = True [CALL] Question: ( ¬ 4 ∨ ¬ 3 ) ∧ ( ¬ 3 ∨ 4 ) ∧ ( ¬ 3 ∨ 4 ) ∧ ( ¬ 4 ∨ 3 ) Try 3 = True Answer: False Try 3 = False Answer: True
\end{lstlisting}

{\textbf{Model Generation (3)}}
\begin{lstlisting}
[CALL] Question: ( ¬ 3 ∨ 4 ∨ 1 ) ∧ ( 1 ∨ 3 ∨ 2 ) ∧ ( ¬ 4 ∨ ¬ 3 ∨ ¬ 1 ) ∧ ( ¬ 3 ∨ ¬ 1 ∨ 2 ) ∧ ( 4 ∨ 1 ∨ 3 ) ∧ ( 4 ∨ 1 ∨ ¬ 3 ) ∧ ( ¬ 3 ∨ ¬ 1 ∨ 4 ) ∧ ( ¬ 1 ∨ 2 ∨ ¬ 4 ) ∧ ( ¬ 3 ∨ ¬ 1 ∨ 4 ) ∧ ( ¬ 3 ∨ 2 ∨ ¬ 4 ) ∧ ( ¬ 4 ∨ ¬ 1 ∨ 3 ) ∧ ( 2 ∨ 1 ∨ ¬ 3 ) ∧ ( 1 ∨ 4 ∨ 3 ) ∧ ( 2 ∨ ¬ 3 ∨ 4 ) ∧ ( 2 ∨ ¬ 4 ∨ 1 ) ∧ ( 1 ∨ 3 ∨ 2 ) ∧ ( 4 ∨ 2 ∨ ¬ 3 ) Try 1 = True [CALL] Question: ( ¬ 4 ∨ ¬ 3 ) ∧ ( ¬ 3 ∨ 2 ) ∧ ( ¬ 3 ∨ 4 ) ∧ ( 2 ∨ ¬ 4 ) ∧ ( ¬ 3 ∨ 4 ) ∧ ( ¬ 3 ∨ 2 ∨ ¬ 4 ) ∧ ( ¬ 4 ∨ 3 ) ∧ ( 2 ∨ ¬ 3 ∨ 4 ) ∧ ( 4 ∨ 2 ∨ ¬ 3 ) Try 2 = True [CALL] Question: ( ¬ 4 ∨ ¬ 3 ) ∧ ( ¬ 3 ∨ 4 ) ∧ ( ¬ 3 ∨ 4 ) ∧ ( ¬ 4 ∨ 3 ) Try 3 = True Answer: False Try 3 = False Answer: True [SEP] Answer: True [RETURN]
\end{lstlisting}

{\textbf{Reduction Rule (3)}}
\begin{lstlisting}
[CALL] Question: ( ¬ 3 ∨ 4 ∨ 1 ) ∧ ( 1 ∨ 3 ∨ 2 ) ∧ ( ¬ 4 ∨ ¬ 3 ∨ ¬ 1 ) ∧ ( ¬ 3 ∨ ¬ 1 ∨ 2 ) ∧ ( 4 ∨ 1 ∨ 3 ) ∧ ( 4 ∨ 1 ∨ ¬ 3 ) ∧ ( ¬ 3 ∨ ¬ 1 ∨ 4 ) ∧ ( ¬ 1 ∨ 2 ∨ ¬ 4 ) ∧ ( ¬ 3 ∨ ¬ 1 ∨ 4 ) ∧ ( ¬ 3 ∨ 2 ∨ ¬ 4 ) ∧ ( ¬ 4 ∨ ¬ 1 ∨ 3 ) ∧ ( 2 ∨ 1 ∨ ¬ 3 ) ∧ ( 1 ∨ 4 ∨ 3 ) ∧ ( 2 ∨ ¬ 3 ∨ 4 ) ∧ ( 2 ∨ ¬ 4 ∨ 1 ) ∧ ( 1 ∨ 3 ∨ 2 ) ∧ ( 4 ∨ 2 ∨ ¬ 3 ) Try 1 = True [CALL] Question: ( ¬ 4 ∨ ¬ 3 ) ∧ ( ¬ 3 ∨ 2 ) ∧ ( ¬ 3 ∨ 4 ) ∧ ( 2 ∨ ¬ 4 ) ∧ ( ¬ 3 ∨ 4 ) ∧ ( ¬ 3 ∨ 2 ∨ ¬ 4 ) ∧ ( ¬ 4 ∨ 3 ) ∧ ( 2 ∨ ¬ 3 ∨ 4 ) ∧ ( 4 ∨ 2 ∨ ¬ 3 ) Try 2 = True Answer: True
\end{lstlisting}

{\textbf{Model Generation (4)}}
\begin{lstlisting}
[CALL] Question: ( ¬ 3 ∨ 4 ∨ 1 ) ∧ ( 1 ∨ 3 ∨ 2 ) ∧ ( ¬ 4 ∨ ¬ 3 ∨ ¬ 1 ) ∧ ( ¬ 3 ∨ ¬ 1 ∨ 2 ) ∧ ( 4 ∨ 1 ∨ 3 ) ∧ ( 4 ∨ 1 ∨ ¬ 3 ) ∧ ( ¬ 3 ∨ ¬ 1 ∨ 4 ) ∧ ( ¬ 1 ∨ 2 ∨ ¬ 4 ) ∧ ( ¬ 3 ∨ ¬ 1 ∨ 4 ) ∧ ( ¬ 3 ∨ 2 ∨ ¬ 4 ) ∧ ( ¬ 4 ∨ ¬ 1 ∨ 3 ) ∧ ( 2 ∨ 1 ∨ ¬ 3 ) ∧ ( 1 ∨ 4 ∨ 3 ) ∧ ( 2 ∨ ¬ 3 ∨ 4 ) ∧ ( 2 ∨ ¬ 4 ∨ 1 ) ∧ ( 1 ∨ 3 ∨ 2 ) ∧ ( 4 ∨ 2 ∨ ¬ 3 ) Try 1 = True [CALL] Question: ( ¬ 4 ∨ ¬ 3 ) ∧ ( ¬ 3 ∨ 2 ) ∧ ( ¬ 3 ∨ 4 ) ∧ ( 2 ∨ ¬ 4 ) ∧ ( ¬ 3 ∨ 4 ) ∧ ( ¬ 3 ∨ 2 ∨ ¬ 4 ) ∧ ( ¬ 4 ∨ 3 ) ∧ ( 2 ∨ ¬ 3 ∨ 4 ) ∧ ( 4 ∨ 2 ∨ ¬ 3 ) Try 2 = True Answer: True [SEP] Answer: True [RETURN]
\end{lstlisting}

{\textbf{Reduction Rule (4)}}
\begin{lstlisting}
[CALL] Question: ( ¬ 3 ∨ 4 ∨ 1 ) ∧ ( 1 ∨ 3 ∨ 2 ) ∧ ( ¬ 4 ∨ ¬ 3 ∨ ¬ 1 ) ∧ ( ¬ 3 ∨ ¬ 1 ∨ 2 ) ∧ ( 4 ∨ 1 ∨ 3 ) ∧ ( 4 ∨ 1 ∨ ¬ 3 ) ∧ ( ¬ 3 ∨ ¬ 1 ∨ 4 ) ∧ ( ¬ 1 ∨ 2 ∨ ¬ 4 ) ∧ ( ¬ 3 ∨ ¬ 1 ∨ 4 ) ∧ ( ¬ 3 ∨ 2 ∨ ¬ 4 ) ∧ ( ¬ 4 ∨ ¬ 1 ∨ 3 ) ∧ ( 2 ∨ 1 ∨ ¬ 3 ) ∧ ( 1 ∨ 4 ∨ 3 ) ∧ ( 2 ∨ ¬ 3 ∨ 4 ) ∧ ( 2 ∨ ¬ 4 ∨ 1 ) ∧ ( 1 ∨ 3 ∨ 2 ) ∧ ( 4 ∨ 2 ∨ ¬ 3 ) Try 1 = True Answer: True
\end{lstlisting}

{\textbf{Model Generation (5)}}
\begin{lstlisting}
[CALL] Question: ( ¬ 3 ∨ 4 ∨ 1 ) ∧ ( 1 ∨ 3 ∨ 2 ) ∧ ( ¬ 4 ∨ ¬ 3 ∨ ¬ 1 ) ∧ ( ¬ 3 ∨ ¬ 1 ∨ 2 ) ∧ ( 4 ∨ 1 ∨ 3 ) ∧ ( 4 ∨ 1 ∨ ¬ 3 ) ∧ ( ¬ 3 ∨ ¬ 1 ∨ 4 ) ∧ ( ¬ 1 ∨ 2 ∨ ¬ 4 ) ∧ ( ¬ 3 ∨ ¬ 1 ∨ 4 ) ∧ ( ¬ 3 ∨ 2 ∨ ¬ 4 ) ∧ ( ¬ 4 ∨ ¬ 1 ∨ 3 ) ∧ ( 2 ∨ 1 ∨ ¬ 3 ) ∧ ( 1 ∨ 4 ∨ 3 ) ∧ ( 2 ∨ ¬ 3 ∨ 4 ) ∧ ( 2 ∨ ¬ 4 ∨ 1 ) ∧ ( 1 ∨ 3 ∨ 2 ) ∧ ( 4 ∨ 2 ∨ ¬ 3 ) Try 1 = True Answer: True [SEP] Answer: True [RETURN]
\end{lstlisting}

{\textbf{Reduction Rule (5)}}
\begin{lstlisting}
Answer: True
\end{lstlisting}

{\textbf{Model Generation (Final Response)}}
\begin{lstlisting}
Answer: True <|endoftext|>
\end{lstlisting}

\section{Example: Quantified Boolean Formula (QBF) } \label{app_qbf}

\subsection{Chain-of-Thought}

\textbf{{$\bullet$ Prompt:}}
\begin{lstlisting}
<|startoftext|> ∀ 3 ∀ 4 ∃ 1 ∀ 2 : #1 ( 3 ∨ ¬ 3 ) #2 ( ¬ 3 ∨ 3 ∨ 1 ) #3 ( 3 ∨ 1 ) #4 ( ¬ 1 ∨ ¬ 3 ∨ ¬ 4 ) #5 ( 1 ∨ ¬ 1 ) #6 ( ¬ 4 ∨ 4 ∨ 3 ) #7 ( ¬ 4 ∨ 4 ) #8 ( 4 ∨ 1 ) <|endofprompt|>
\end{lstlisting}

\textbf{{$\bullet$ Response:}}

{
\lstset{
  basicstyle=\ttfamily\fontsize{5.5}{7}\selectfont,    
}

\begin{lstlisting}
[CALL] Question: prefix_from ∀ 3 Try 3 = False [CALL] Question: prefix_from ∀ 4 Try 4 = False [CALL] Question: prefix_from ∃ 1 Try 1 = False [CALL] Question: prefix_from ∀ 2 Try 2 = False [CALL] Question: evaluate 1 = False 2 = False 3 = False 4 = False Check #0 ( 3 ∨ ¬ 3 ) True Check #1 ( ¬ 3 ∨ 3 ∨ 1 ) True Check #2 ( 3 ∨ 1 ) False [SEP] Answer: False [RETURN] [SEP] Answer: False [RETURN] Try 1 = True [CALL] Question: prefix_from ∀ 2 Try 2 = False [CALL] Question: evaluate 1 = True 2 = False 3 = False 4 = False Check #0 ( 3 ∨ ¬ 3 ) True Check #1 ( ¬ 3 ∨ 3 ∨ 1 ) True Check #2 ( 3 ∨ 1 ) True Check #3 ( ¬ 1 ∨ ¬ 3 ∨ ¬ 4 ) True Check #4 ( 1 ∨ ¬ 1 ) True Check #5 ( ¬ 4 ∨ 4 ∨ 3 ) True Check #6 ( ¬ 4 ∨ 4 ) True Check #7 ( 4 ∨ 1 ) True Formula = True [SEP] Answer: True [RETURN] Try 2 = True [CALL] Question: evaluate 1 = True 2 = True 3 = False 4 = False Check #0 ( 3 ∨ ¬ 3 ) True Check #1 ( ¬ 3 ∨ 3 ∨ 1 ) True Check #2 ( 3 ∨ 1 ) True Check #3 ( ¬ 1 ∨ ¬ 3 ∨ ¬ 4 ) True Check #4 ( 1 ∨ ¬ 1 ) True Check #5 ( ¬ 4 ∨ 4 ∨ 3 ) True Check #6 ( ¬ 4 ∨ 4 ) True Check #7 ( 4 ∨ 1 ) True Formula = True [SEP] Answer: True [RETURN] [SEP] Answer: True [RETURN] [SEP] Answer: True [RETURN] Try 4 = True [CALL] Question: prefix_from ∃ 1 Try 1 = False [CALL] Question: prefix_from ∀ 2 Try 2 = False [CALL] Question: evaluate 1 = False 2 = False 3 = False 4 = True Check #0 ( 3 ∨ ¬ 3 ) True Check #1 ( ¬ 3 ∨ 3 ∨ 1 ) True Check #2 ( 3 ∨ 1 ) False [SEP] Answer: False [RETURN] [SEP] Answer: False [RETURN] Try 1 = True [CALL] Question: prefix_from ∀ 2 Try 2 = False [CALL] Question: evaluate 1 = True 2 = False 3 = False 4 = True Check #0 ( 3 ∨ ¬ 3 ) True Check #1 ( ¬ 3 ∨ 3 ∨ 1 ) True Check #2 ( 3 ∨ 1 ) True Check #3 ( ¬ 1 ∨ ¬ 3 ∨ ¬ 4 ) True Check #4 ( 1 ∨ ¬ 1 ) True Check #5 ( ¬ 4 ∨ 4 ∨ 3 ) True Check #6 ( ¬ 4 ∨ 4 ) True Check #7 ( 4 ∨ 1 ) True Formula = True [SEP] Answer: True [RETURN] Try 2 = True [CALL] Question: evaluate 1 = True 2 = True 3 = False 4 = True Check #0 ( 3 ∨ ¬ 3 ) True Check #1 ( ¬ 3 ∨ 3 ∨ 1 ) True Check #2 ( 3 ∨ 1 ) True Check #3 ( ¬ 1 ∨ ¬ 3 ∨ ¬ 4 ) True Check #4 ( 1 ∨ ¬ 1 ) True Check #5 ( ¬ 4 ∨ 4 ∨ 3 ) True Check #6 ( ¬ 4 ∨ 4 ) True Check #7 ( 4 ∨ 1 ) True Formula = True [SEP] Answer: True [RETURN] [SEP] Answer: True [RETURN] [SEP] Answer: True [RETURN] [SEP] Answer: True [RETURN] Try 3 = True [CALL] Question: prefix_from ∀ 4 Try 4 = False [CALL] Question: prefix_from ∃ 1 Try 1 = False [CALL] Question: prefix_from ∀ 2 Try 2 = False [CALL] Question: evaluate 1 = False 2 = False 3 = True 4 = False Check #0 ( 3 ∨ ¬ 3 ) True Check #1 ( ¬ 3 ∨ 3 ∨ 1 ) True Check #2 ( 3 ∨ 1 ) True Check #3 ( ¬ 1 ∨ ¬ 3 ∨ ¬ 4 ) True Check #4 ( 1 ∨ ¬ 1 ) True Check #5 ( ¬ 4 ∨ 4 ∨ 3 ) True Check #6 ( ¬ 4 ∨ 4 ) True Check #7 ( 4 ∨ 1 ) False [SEP] Answer: False [RETURN] [SEP] Answer: False [RETURN] Try 1 = True [CALL] Question: prefix_from ∀ 2 Try 2 = False [CALL] Question: evaluate 1 = True 2 = False 3 = True 4 = False Check #0 ( 3 ∨ ¬ 3 ) True Check #1 ( ¬ 3 ∨ 3 ∨ 1 ) True Check #2 ( 3 ∨ 1 ) True Check #3 ( ¬ 1 ∨ ¬ 3 ∨ ¬ 4 ) True Check #4 ( 1 ∨ ¬ 1 ) True Check #5 ( ¬ 4 ∨ 4 ∨ 3 ) True Check #6 ( ¬ 4 ∨ 4 ) True Check #7 ( 4 ∨ 1 ) True Formula = True [SEP] Answer: True [RETURN] Try 2 = True [CALL] Question: evaluate 1 = True 2 = True 3 = True 4 = False Check #0 ( 3 ∨ ¬ 3 ) True Check #1 ( ¬ 3 ∨ 3 ∨ 1 ) True Check #2 ( 3 ∨ 1 ) True Check #3 ( ¬ 1 ∨ ¬ 3 ∨ ¬ 4 ) True Check #4 ( 1 ∨ ¬ 1 ) True Check #5 ( ¬ 4 ∨ 4 ∨ 3 ) True Check #6 ( ¬ 4 ∨ 4 ) True Check #7 ( 4 ∨ 1 ) True Formula = True [SEP] Answer: True [RETURN] [SEP] Answer: True [RETURN] [SEP] Answer: True [RETURN] Try 4 = True [CALL] Question: prefix_from ∃ 1 Try 1 = False [CALL] Question: prefix_from ∀ 2 Try 2 = False [CALL] Question: evaluate 1 = False 2 = False 3 = True 4 = True Check #0 ( 3 ∨ ¬ 3 ) True Check #1 ( ¬ 3 ∨ 3 ∨ 1 ) True Check #2 ( 3 ∨ 1 ) True Check #3 ( ¬ 1 ∨ ¬ 3 ∨ ¬ 4 ) True Check #4 ( 1 ∨ ¬ 1 ) True Check #5 ( ¬ 4 ∨ 4 ∨ 3 ) True Check #6 ( ¬ 4 ∨ 4 ) True Check #7 ( 4 ∨ 1 ) True Formula = True [SEP] Answer: True [RETURN] Try 2 = True [CALL] Question: evaluate 1 = False 2 = True 3 = True 4 = True Check #0 ( 3 ∨ ¬ 3 ) True Check #1 ( ¬ 3 ∨ 3 ∨ 1 ) True Check #2 ( 3 ∨ 1 ) True Check #3 ( ¬ 1 ∨ ¬ 3 ∨ ¬ 4 ) True Check #4 ( 1 ∨ ¬ 1 ) True Check #5 ( ¬ 4 ∨ 4 ∨ 3 ) True Check #6 ( ¬ 4 ∨ 4 ) True Check #7 ( 4 ∨ 1 ) True Formula = True [SEP] Answer: True [RETURN] [SEP] Answer: True [RETURN] [SEP] Answer: True [RETURN] [SEP] Answer: True [RETURN] [SEP] Answer: True [RETURN] <|endoftext|>
\end{lstlisting}
}

\subsection{PENCIL}

\textbf{{$\bullet$ Prompt:}}
\begin{lstlisting}
<|startoftext|> ∀ 3 ∀ 4 ∃ 1 ∀ 2 : #1 ( 3 ∨ ¬ 3 ) #2 ( ¬ 3 ∨ 3 ∨ 1 ) #3 ( 3 ∨ 1 ) #4 ( ¬ 1 ∨ ¬ 3 ∨ ¬ 4 ) #5 ( 1 ∨ ¬ 1 ) #6 ( ¬ 4 ∨ 4 ∨ 3 ) #7 ( ¬ 4 ∨ 4 ) #8 ( 4 ∨ 1 ) <|endofprompt|>
\end{lstlisting}

\textbf{{$\bullet$ Response:}}
\begin{lstlisting}
Answer: True <|endoftext|>
\end{lstlisting}

\subsection{Internal Thinking Process of PENCIL}

\textbf{Model Generation (1)}
\begin{lstlisting}
[CALL] Question: prefix_from ∀ 3 Try 3 = False [CALL] Question: prefix_from ∀ 4 Try 4 = False [CALL] Question: prefix_from ∃ 1 Try 1 = False [CALL] Question: prefix_from ∀ 2 Try 2 = False [CALL] Question: evaluate 1 = False 2 = False 3 = False 4 = False Check #0 ( 3 ∨ ¬ 3 ) True Check #1 ( ¬ 3 ∨ 3 ∨ 1 ) True Check #2 ( 3 ∨ 1 ) False [SEP] Answer: False [RETURN]
\end{lstlisting}

\textbf{Reduction Rule (1)}
\begin{lstlisting}
[CALL] Question: prefix_from ∀ 3 Try 3 = False [CALL] Question: prefix_from ∀ 4 Try 4 = False [CALL] Question: prefix_from ∃ 1 Try 1 = False [CALL] Question: prefix_from ∀ 2 Try 2 = False Answer: False
\end{lstlisting}

\textbf{Model Generation (2)}
\begin{lstlisting}
[CALL] Question: prefix_from ∀ 3 Try 3 = False [CALL] Question: prefix_from ∀ 4 Try 4 = False [CALL] Question: prefix_from ∃ 1 Try 1 = False [CALL] Question: prefix_from ∀ 2 Try 2 = False Answer: False [SEP] Answer: False [RETURN]
\end{lstlisting}

\textbf{Reduction Rule (2)}
\begin{lstlisting}
[CALL] Question: prefix_from ∀ 3 Try 3 = False [CALL] Question: prefix_from ∀ 4 Try 4 = False [CALL] Question: prefix_from ∃ 1 Try 1 = False Answer: False
\end{lstlisting}

\textbf{Model Generation (3)}
\begin{lstlisting}
[CALL] Question: prefix_from ∀ 3 Try 3 = False [CALL] Question: prefix_from ∀ 4 Try 4 = False [CALL] Question: prefix_from ∃ 1 Try 1 = False Answer: False Try 1 = True [CALL] Question: prefix_from ∀ 2 Try 2 = False [CALL] Question: evaluate 1 = True 2 = False 3 = False 4 = False Check #0 ( 3 ∨ ¬ 3 ) True Check #1 ( ¬ 3 ∨ 3 ∨ 1 ) True Check #2 ( 3 ∨ 1 ) True Check #3 ( ¬ 1 ∨ ¬ 3 ∨ ¬ 4 ) True Check #4 ( 1 ∨ ¬ 1 ) True Check #5 ( ¬ 4 ∨ 4 ∨ 3 ) True Check #6 ( ¬ 4 ∨ 4 ) True Check #7 ( 4 ∨ 1 ) True Formula = True [SEP] Answer: True [RETURN]
\end{lstlisting}

\textbf{Reduction Rule (3)}
\begin{lstlisting}
[CALL] Question: prefix_from ∀ 3 Try 3 = False [CALL] Question: prefix_from ∀ 4 Try 4 = False [CALL] Question: prefix_from ∃ 1 Try 1 = False Answer: False Try 1 = True [CALL] Question: prefix_from ∀ 2 Try 2 = False Answer: True
\end{lstlisting}

\textbf{Model Generation (4)}
\begin{lstlisting}
[CALL] Question: prefix_from ∀ 3 Try 3 = False [CALL] Question: prefix_from ∀ 4 Try 4 = False [CALL] Question: prefix_from ∃ 1 Try 1 = False Answer: False Try 1 = True [CALL] Question: prefix_from ∀ 2 Try 2 = False Answer: True Try 2 = True [CALL] Question: evaluate 1 = True 2 = True 3 = False 4 = False Check #0 ( 3 ∨ ¬ 3 ) True Check #1 ( ¬ 3 ∨ 3 ∨ 1 ) True Check #2 ( 3 ∨ 1 ) True Check #3 ( ¬ 1 ∨ ¬ 3 ∨ ¬ 4 ) True Check #4 ( 1 ∨ ¬ 1 ) True Check #5 ( ¬ 4 ∨ 4 ∨ 3 ) True Check #6 ( ¬ 4 ∨ 4 ) True Check #7 ( 4 ∨ 1 ) True Formula = True [SEP] Answer: True [RETURN]
\end{lstlisting}

\textbf{Reduction Rule (4)}
\begin{lstlisting}
[CALL] Question: prefix_from ∀ 3 Try 3 = False [CALL] Question: prefix_from ∀ 4 Try 4 = False [CALL] Question: prefix_from ∃ 1 Try 1 = False Answer: False Try 1 = True [CALL] Question: prefix_from ∀ 2 Try 2 = False Answer: True Try 2 = True Answer: True
\end{lstlisting}

\textbf{Model Generation (5)}
\begin{lstlisting}
[CALL] Question: prefix_from ∀ 3 Try 3 = False [CALL] Question: prefix_from ∀ 4 Try 4 = False [CALL] Question: prefix_from ∃ 1 Try 1 = False Answer: False Try 1 = True [CALL] Question: prefix_from ∀ 2 Try 2 = False Answer: True Try 2 = True Answer: True [SEP] Answer: True [RETURN]
\end{lstlisting}

\textbf{Reduction Rule (5)}
\begin{lstlisting}
[CALL] Question: prefix_from ∀ 3 Try 3 = False [CALL] Question: prefix_from ∀ 4 Try 4 = False [CALL] Question: prefix_from ∃ 1 Try 1 = False Answer: False Try 1 = True Answer: True
\end{lstlisting}

\textbf{Model Generation (6)}
\begin{lstlisting}
[CALL] Question: prefix_from ∀ 3 Try 3 = False [CALL] Question: prefix_from ∀ 4 Try 4 = False [CALL] Question: prefix_from ∃ 1 Try 1 = False Answer: False Try 1 = True Answer: True [SEP] Answer: True [RETURN]
\end{lstlisting}

\textbf{Reduction Rule (6)}
\begin{lstlisting}
[CALL] Question: prefix_from ∀ 3 Try 3 = False [CALL] Question: prefix_from ∀ 4 Try 4 = False Answer: True
\end{lstlisting}

\textbf{Model Generation (7)}
\begin{lstlisting}
[CALL] Question: prefix_from ∀ 3 Try 3 = False [CALL] Question: prefix_from ∀ 4 Try 4 = False Answer: True Try 4 = True [CALL] Question: prefix_from ∃ 1 Try 1 = False [CALL] Question: prefix_from ∀ 2 Try 2 = False [CALL] Question: evaluate 1 = False 2 = False 3 = False 4 = True Check #0 ( 3 ∨ ¬ 3 ) True Check #1 ( ¬ 3 ∨ 3 ∨ 1 ) True Check #2 ( 3 ∨ 1 ) False [SEP] Answer: False [RETURN]
\end{lstlisting}

\textbf{Reduction Rule (7)}
\begin{lstlisting}
[CALL] Question: prefix_from ∀ 3 Try 3 = False [CALL] Question: prefix_from ∀ 4 Try 4 = False Answer: True Try 4 = True [CALL] Question: prefix_from ∃ 1 Try 1 = False [CALL] Question: prefix_from ∀ 2 Try 2 = False Answer: False
\end{lstlisting}

\textbf{Model Generation (8)}
\begin{lstlisting}
[CALL] Question: prefix_from ∀ 3 Try 3 = False [CALL] Question: prefix_from ∀ 4 Try 4 = False Answer: True Try 4 = True [CALL] Question: prefix_from ∃ 1 Try 1 = False [CALL] Question: prefix_from ∀ 2 Try 2 = False Answer: False [SEP] Answer: False [RETURN]
\end{lstlisting}

\textbf{Reduction Rule (8)}
\begin{lstlisting}
[CALL] Question: prefix_from ∀ 3 Try 3 = False [CALL] Question: prefix_from ∀ 4 Try 4 = False Answer: True Try 4 = True [CALL] Question: prefix_from ∃ 1 Try 1 = False Answer: False
\end{lstlisting}

\textbf{Model Generation (9)}
\begin{lstlisting}
[CALL] Question: prefix_from ∀ 3 Try 3 = False [CALL] Question: prefix_from ∀ 4 Try 4 = False Answer: True Try 4 = True [CALL] Question: prefix_from ∃ 1 Try 1 = False Answer: False Try 1 = True [CALL] Question: prefix_from ∀ 2 Try 2 = False [CALL] Question: evaluate 1 = True 2 = False 3 = False 4 = True Check #0 ( 3 ∨ ¬ 3 ) True Check #1 ( ¬ 3 ∨ 3 ∨ 1 ) True Check #2 ( 3 ∨ 1 ) True Check #3 ( ¬ 1 ∨ ¬ 3 ∨ ¬ 4 ) True Check #4 ( 1 ∨ ¬ 1 ) True Check #5 ( ¬ 4 ∨ 4 ∨ 3 ) True Check #6 ( ¬ 4 ∨ 4 ) True Check #7 ( 4 ∨ 1 ) True Formula = True [SEP] Answer: True [RETURN]
\end{lstlisting}

\textbf{Reduction Rule (9)}
\begin{lstlisting}
[CALL] Question: prefix_from ∀ 3 Try 3 = False [CALL] Question: prefix_from ∀ 4 Try 4 = False Answer: True Try 4 = True [CALL] Question: prefix_from ∃ 1 Try 1 = False Answer: False Try 1 = True [CALL] Question: prefix_from ∀ 2 Try 2 = False Answer: True
\end{lstlisting}

\textbf{Model Generation (10)}
\begin{lstlisting}
[CALL] Question: prefix_from ∀ 3 Try 3 = False [CALL] Question: prefix_from ∀ 4 Try 4 = False Answer: True Try 4 = True [CALL] Question: prefix_from ∃ 1 Try 1 = False Answer: False Try 1 = True [CALL] Question: prefix_from ∀ 2 Try 2 = False Answer: True Try 2 = True [CALL] Question: evaluate 1 = True 2 = True 3 = False 4 = True Check #0 ( 3 ∨ ¬ 3 ) True Check #1 ( ¬ 3 ∨ 3 ∨ 1 ) True Check #2 ( 3 ∨ 1 ) True Check #3 ( ¬ 1 ∨ ¬ 3 ∨ ¬ 4 ) True Check #4 ( 1 ∨ ¬ 1 ) True Check #5 ( ¬ 4 ∨ 4 ∨ 3 ) True Check #6 ( ¬ 4 ∨ 4 ) True Check #7 ( 4 ∨ 1 ) True Formula = True [SEP] Answer: True [RETURN]
\end{lstlisting}

\textbf{Reduction Rule (10)}
\begin{lstlisting}
[CALL] Question: prefix_from ∀ 3 Try 3 = False [CALL] Question: prefix_from ∀ 4 Try 4 = False Answer: True Try 4 = True [CALL] Question: prefix_from ∃ 1 Try 1 = False Answer: False Try 1 = True [CALL] Question: prefix_from ∀ 2 Try 2 = False Answer: True Try 2 = True Answer: True
\end{lstlisting}

\textbf{Model Generation (11)}
\begin{lstlisting}
[CALL] Question: prefix_from ∀ 3 Try 3 = False [CALL] Question: prefix_from ∀ 4 Try 4 = False Answer: True Try 4 = True [CALL] Question: prefix_from ∃ 1 Try 1 = False Answer: False Try 1 = True [CALL] Question: prefix_from ∀ 2 Try 2 = False Answer: True Try 2 = True Answer: True [SEP] Answer: True [RETURN]
\end{lstlisting}

\textbf{Reduction Rule (11)}
\begin{lstlisting}
[CALL] Question: prefix_from ∀ 3 Try 3 = False [CALL] Question: prefix_from ∀ 4 Try 4 = False Answer: True Try 4 = True [CALL] Question: prefix_from ∃ 1 Try 1 = False Answer: False Try 1 = True Answer: True
\end{lstlisting}

\textbf{Model Generation (12)}
\begin{lstlisting}
[CALL] Question: prefix_from ∀ 3 Try 3 = False [CALL] Question: prefix_from ∀ 4 Try 4 = False Answer: True Try 4 = True [CALL] Question: prefix_from ∃ 1 Try 1 = False Answer: False Try 1 = True Answer: True [SEP] Answer: True [RETURN]
\end{lstlisting}

\textbf{Reduction Rule (12)}
\begin{lstlisting}
[CALL] Question: prefix_from ∀ 3 Try 3 = False [CALL] Question: prefix_from ∀ 4 Try 4 = False Answer: True Try 4 = True Answer: True
\end{lstlisting}

\textbf{Model Generation (13)}
\begin{lstlisting}
[CALL] Question: prefix_from ∀ 3 Try 3 = False [CALL] Question: prefix_from ∀ 4 Try 4 = False Answer: True Try 4 = True Answer: True [SEP] Answer: True [RETURN]
\end{lstlisting}

\textbf{Reduction Rule (13)}
\begin{lstlisting}
[CALL] Question: prefix_from ∀ 3 Try 3 = False Answer: True
\end{lstlisting}

\textbf{Model Generation (14)}
\begin{lstlisting}
[CALL] Question: prefix_from ∀ 3 Try 3 = False Answer: True Try 3 = True [CALL] Question: prefix_from ∀ 4 Try 4 = False [CALL] Question: prefix_from ∃ 1 Try 1 = False [CALL] Question: prefix_from ∀ 2 Try 2 = False [CALL] Question: evaluate 1 = False 2 = False 3 = True 4 = False Check #0 ( 3 ∨ ¬ 3 ) True Check #1 ( ¬ 3 ∨ 3 ∨ 1 ) True Check #2 ( 3 ∨ 1 ) True Check #3 ( ¬ 1 ∨ ¬ 3 ∨ ¬ 4 ) True Check #4 ( 1 ∨ ¬ 1 ) True Check #5 ( ¬ 4 ∨ 4 ∨ 3 ) True Check #6 ( ¬ 4 ∨ 4 ) True Check #7 ( 4 ∨ 1 ) False [SEP] Answer: False [RETURN]
\end{lstlisting}

\textbf{Reduction Rule (14)}
\begin{lstlisting}
[CALL] Question: prefix_from ∀ 3 Try 3 = False Answer: True Try 3 = True [CALL] Question: prefix_from ∀ 4 Try 4 = False [CALL] Question: prefix_from ∃ 1 Try 1 = False [CALL] Question: prefix_from ∀ 2 Try 2 = False Answer: False
\end{lstlisting}

\textbf{Model Generation (15)}
\begin{lstlisting}
[CALL] Question: prefix_from ∀ 3 Try 3 = False Answer: True Try 3 = True [CALL] Question: prefix_from ∀ 4 Try 4 = False [CALL] Question: prefix_from ∃ 1 Try 1 = False [CALL] Question: prefix_from ∀ 2 Try 2 = False Answer: False [SEP] Answer: False [RETURN]
\end{lstlisting}

\textbf{Reduction Rule (15)}
\begin{lstlisting}
[CALL] Question: prefix_from ∀ 3 Try 3 = False Answer: True Try 3 = True [CALL] Question: prefix_from ∀ 4 Try 4 = False [CALL] Question: prefix_from ∃ 1 Try 1 = False Answer: False
\end{lstlisting}

\textbf{Model Generation (16)}
\begin{lstlisting}
[CALL] Question: prefix_from ∀ 3 Try 3 = False Answer: True Try 3 = True [CALL] Question: prefix_from ∀ 4 Try 4 = False [CALL] Question: prefix_from ∃ 1 Try 1 = False Answer: False Try 1 = True [CALL] Question: prefix_from ∀ 2 Try 2 = False [CALL] Question: evaluate 1 = True 2 = False 3 = True 4 = False Check #0 ( 3 ∨ ¬ 3 ) True Check #1 ( ¬ 3 ∨ 3 ∨ 1 ) True Check #2 ( 3 ∨ 1 ) True Check #3 ( ¬ 1 ∨ ¬ 3 ∨ ¬ 4 ) True Check #4 ( 1 ∨ ¬ 1 ) True Check #5 ( ¬ 4 ∨ 4 ∨ 3 ) True Check #6 ( ¬ 4 ∨ 4 ) True Check #7 ( 4 ∨ 1 ) True Formula = True [SEP] Answer: True [RETURN]
\end{lstlisting}

\textbf{Reduction Rule (16)}
\begin{lstlisting}
[CALL] Question: prefix_from ∀ 3 Try 3 = False Answer: True Try 3 = True [CALL] Question: prefix_from ∀ 4 Try 4 = False [CALL] Question: prefix_from ∃ 1 Try 1 = False Answer: False Try 1 = True [CALL] Question: prefix_from ∀ 2 Try 2 = False Answer: True
\end{lstlisting}

\textbf{Model Generation (17)}
\begin{lstlisting}
[CALL] Question: prefix_from ∀ 3 Try 3 = False Answer: True Try 3 = True [CALL] Question: prefix_from ∀ 4 Try 4 = False [CALL] Question: prefix_from ∃ 1 Try 1 = False Answer: False Try 1 = True [CALL] Question: prefix_from ∀ 2 Try 2 = False Answer: True Try 2 = True [CALL] Question: evaluate 1 = True 2 = True 3 = True 4 = False Check #0 ( 3 ∨ ¬ 3 ) True Check #1 ( ¬ 3 ∨ 3 ∨ 1 ) True Check #2 ( 3 ∨ 1 ) True Check #3 ( ¬ 1 ∨ ¬ 3 ∨ ¬ 4 ) True Check #4 ( 1 ∨ ¬ 1 ) True Check #5 ( ¬ 4 ∨ 4 ∨ 3 ) True Check #6 ( ¬ 4 ∨ 4 ) True Check #7 ( 4 ∨ 1 ) True Formula = True [SEP] Answer: True [RETURN]
\end{lstlisting}

\textbf{Reduction Rule (17)}
\begin{lstlisting}
[CALL] Question: prefix_from ∀ 3 Try 3 = False Answer: True Try 3 = True [CALL] Question: prefix_from ∀ 4 Try 4 = False [CALL] Question: prefix_from ∃ 1 Try 1 = False Answer: False Try 1 = True [CALL] Question: prefix_from ∀ 2 Try 2 = False Answer: True Try 2 = True Answer: True
\end{lstlisting}

\textbf{Model Generation (18)}
\begin{lstlisting}
[CALL] Question: prefix_from ∀ 3 Try 3 = False Answer: True Try 3 = True [CALL] Question: prefix_from ∀ 4 Try 4 = False [CALL] Question: prefix_from ∃ 1 Try 1 = False Answer: False Try 1 = True [CALL] Question: prefix_from ∀ 2 Try 2 = False Answer: True Try 2 = True Answer: True [SEP] Answer: True [RETURN]
\end{lstlisting}

\textbf{Reduction Rule (18)}
\begin{lstlisting}
[CALL] Question: prefix_from ∀ 3 Try 3 = False Answer: True Try 3 = True [CALL] Question: prefix_from ∀ 4 Try 4 = False [CALL] Question: prefix_from ∃ 1 Try 1 = False Answer: False Try 1 = True Answer: True
\end{lstlisting}

\textbf{Model Generation (19)}
\begin{lstlisting}
[CALL] Question: prefix_from ∀ 3 Try 3 = False Answer: True Try 3 = True [CALL] Question: prefix_from ∀ 4 Try 4 = False [CALL] Question: prefix_from ∃ 1 Try 1 = False Answer: False Try 1 = True Answer: True [SEP] Answer: True [RETURN]
\end{lstlisting}

\textbf{Reduction Rule (19)}
\begin{lstlisting}
[CALL] Question: prefix_from ∀ 3 Try 3 = False Answer: True Try 3 = True [CALL] Question: prefix_from ∀ 4 Try 4 = False Answer: True
\end{lstlisting}

\textbf{Model Generation (20)}
\begin{lstlisting}
[CALL] Question: prefix_from ∀ 3 Try 3 = False Answer: True Try 3 = True [CALL] Question: prefix_from ∀ 4 Try 4 = False Answer: True Try 4 = True [CALL] Question: prefix_from ∃ 1 Try 1 = False [CALL] Question: prefix_from ∀ 2 Try 2 = False [CALL] Question: evaluate 1 = False 2 = False 3 = True 4 = True Check #0 ( 3 ∨ ¬ 3 ) True Check #1 ( ¬ 3 ∨ 3 ∨ 1 ) True Check #2 ( 3 ∨ 1 ) True Check #3 ( ¬ 1 ∨ ¬ 3 ∨ ¬ 4 ) True Check #4 ( 1 ∨ ¬ 1 ) True Check #5 ( ¬ 4 ∨ 4 ∨ 3 ) True Check #6 ( ¬ 4 ∨ 4 ) True Check #7 ( 4 ∨ 1 ) True Formula = True [SEP] Answer: True [RETURN]
\end{lstlisting}

\textbf{Reduction Rule (20)}
\begin{lstlisting}
[CALL] Question: prefix_from ∀ 3 Try 3 = False Answer: True Try 3 = True [CALL] Question: prefix_from ∀ 4 Try 4 = False Answer: True Try 4 = True [CALL] Question: prefix_from ∃ 1 Try 1 = False [CALL] Question: prefix_from ∀ 2 Try 2 = False Answer: True
\end{lstlisting}

\textbf{Model Generation (21)}
\begin{lstlisting}
[CALL] Question: prefix_from ∀ 3 Try 3 = False Answer: True Try 3 = True [CALL] Question: prefix_from ∀ 4 Try 4 = False Answer: True Try 4 = True [CALL] Question: prefix_from ∃ 1 Try 1 = False [CALL] Question: prefix_from ∀ 2 Try 2 = False Answer: True Try 2 = True [CALL] Question: evaluate 1 = False 2 = True 3 = True 4 = True Check #0 ( 3 ∨ ¬ 3 ) True Check #1 ( ¬ 3 ∨ 3 ∨ 1 ) True Check #2 ( 3 ∨ 1 ) True Check #3 ( ¬ 1 ∨ ¬ 3 ∨ ¬ 4 ) True Check #4 ( 1 ∨ ¬ 1 ) True Check #5 ( ¬ 4 ∨ 4 ∨ 3 ) True Check #6 ( ¬ 4 ∨ 4 ) True Check #7 ( 4 ∨ 1 ) True Formula = True [SEP] Answer: True [RETURN]
\end{lstlisting}

\textbf{Reduction Rule (21)}
\begin{lstlisting}
[CALL] Question: prefix_from ∀ 3 Try 3 = False Answer: True Try 3 = True [CALL] Question: prefix_from ∀ 4 Try 4 = False Answer: True Try 4 = True [CALL] Question: prefix_from ∃ 1 Try 1 = False [CALL] Question: prefix_from ∀ 2 Try 2 = False Answer: True Try 2 = True Answer: True
\end{lstlisting}

\textbf{Model Generation (22)}
\begin{lstlisting}
[CALL] Question: prefix_from ∀ 3 Try 3 = False Answer: True Try 3 = True [CALL] Question: prefix_from ∀ 4 Try 4 = False Answer: True Try 4 = True [CALL] Question: prefix_from ∃ 1 Try 1 = False [CALL] Question: prefix_from ∀ 2 Try 2 = False Answer: True Try 2 = True Answer: True [SEP] Answer: True [RETURN]
\end{lstlisting}

\textbf{Reduction Rule (22)}
\begin{lstlisting}
[CALL] Question: prefix_from ∀ 3 Try 3 = False Answer: True Try 3 = True [CALL] Question: prefix_from ∀ 4 Try 4 = False Answer: True Try 4 = True [CALL] Question: prefix_from ∃ 1 Try 1 = False Answer: True
\end{lstlisting}

\textbf{Model Generation (23)}
\begin{lstlisting}
[CALL] Question: prefix_from ∀ 3 Try 3 = False Answer: True Try 3 = True [CALL] Question: prefix_from ∀ 4 Try 4 = False Answer: True Try 4 = True [CALL] Question: prefix_from ∃ 1 Try 1 = False Answer: True [SEP] Answer: True [RETURN]
\end{lstlisting}

\textbf{Reduction Rule (23)}
\begin{lstlisting}
[CALL] Question: prefix_from ∀ 3 Try 3 = False Answer: True Try 3 = True [CALL] Question: prefix_from ∀ 4 Try 4 = False Answer: True Try 4 = True Answer: True
\end{lstlisting}

\textbf{Model Generation (24)}
\begin{lstlisting}
[CALL] Question: prefix_from ∀ 3 Try 3 = False Answer: True Try 3 = True [CALL] Question: prefix_from ∀ 4 Try 4 = False Answer: True Try 4 = True Answer: True [SEP] Answer: True [RETURN]
\end{lstlisting}

\textbf{Reduction Rule (24)}
\begin{lstlisting}
[CALL] Question: prefix_from ∀ 3 Try 3 = False Answer: True Try 3 = True Answer: True
\end{lstlisting}

\textbf{Model Generation (25)}
\begin{lstlisting}
[CALL] Question: prefix_from ∀ 3 Try 3 = False Answer: True Try 3 = True Answer: True [SEP] Answer: True [RETURN]
\end{lstlisting}

\textbf{Reduction Rule (25)}
\begin{lstlisting}
Answer: True
\end{lstlisting}

\textbf{Model Generation (Final Response)}
\begin{lstlisting}
 Answer: True <|endoftext|>
\end{lstlisting}

\section{Example: Einstein's Puzzle} \label{app_puzzle}

\subsection{Chain-of-Thought}

\textbf{{$\bullet$ Prompt:}}
\begin{lstlisting}
<|startoftext|> Constraint#1 : the Green house is immediately to the right of the one who keeps Birds
 Constraint#2 : the Brit is immediately to the right of the German
 Constraint#3 : the one who keeps Dogs is the same house as the Red house
 Constraint#4 : the one who keeps Birds is immediately to the right of the Swede <|endofprompt|>
\end{lstlisting}

\textbf{{$\bullet$ Response:}}
\begin{lstlisting}
[CALL] ====== Possible Assignments ======
 House#1
 Color category have 3 possibilities Blue Green Red
 Nationality category have 3 possibilities Brit German Swede
 Pet category have 3 possibilities Birds Dogs Fish
 House#2
 Color category have 3 possibilities Blue Green Red
 Nationality category have 3 possibilities Brit German Swede
 Pet category have 3 possibilities Birds Dogs Fish
 House#3
 Color category have 3 possibilities Blue Green Red
 Nationality category have 3 possibilities Brit German Swede
 Pet category have 3 possibilities Birds Dogs Fish
 Unsatisfied constraints are Constraint#1 Constraint#2 Constraint#3 Constraint#4
 => Puzzle not solved yet
 ====== Propagation ======
 Applying Constraint#1 [CALL]
 PHASE 1: Single-value logic for Green and Birds under RIGHT constraint
 PHASE 2: Handling relation Green RIGHT Birds
 Green is immediately RIGHT of Birds
 Removing Green from House#1 because Green can't be in the leftmost house if it's to the RIGHT of Birds
 Removing Birds from House#3 can't be in the rightmost house if it's to the LEFT of Green
 [SEP] House#1 Color category changed from 3 possibilities Blue Green Red to 2 possibilities Blue Red
 House#3 Pet category changed from 3 possibilities Birds Dogs Fish to 2 possibilities Dogs Fish [RETURN]
 Applying Constraint#2 [CALL]
 PHASE 1: Single-value logic for Brit and German under RIGHT constraint
 PHASE 2: Handling relation Brit RIGHT German
 Brit is immediately RIGHT of German
 Removing Brit from House#1 because Brit can't be in the leftmost house if it's to the RIGHT of German
 Removing German from House#3 can't be in the rightmost house if it's to the LEFT of Brit
 [SEP] House#1 Nationality category changed from 3 possibilities Brit German Swede to 2 possibilities German Swede
 House#3 Nationality category changed from 3 possibilities Brit German Swede to 2 possibilities Brit Swede [RETURN]
 Applying Constraint#3 [CALL]
 PHASE 1: Single-value logic for Dogs and Red under SAME constraint
 PHASE 2: Handling relation Dogs SAME Red
 Dogs must be in the SAME house as Red
 [SEP] No changes from this constraint [RETURN]
 Applying Constraint#4 [CALL]
 PHASE 1: Single-value logic for Birds and Swede under RIGHT constraint
 PHASE 2: Handling relation Birds RIGHT Swede
 Birds is immediately RIGHT of Swede
 Removing Birds from House#1 because Birds can't be in the leftmost house if it's to the RIGHT of Swede
 Removing Swede from House#3 can't be in the rightmost house if it's to the LEFT of Birds
 [SEP] House#3 Nationality category changed from 2 possibilities Brit Swede to 1 possibilities Brit
 House#1 Pet category changed from 3 possibilities Birds Dogs Fish to 2 possibilities Dogs Fish [RETURN]
 [SEP] [CALL] ====== Possible Assignments After Propagation ======
 House#1
 Color category have 2 possibilities Blue Red
 Nationality category have 2 possibilities German Swede
 Pet category have 2 possibilities Dogs Fish
 House#2
 Color category have 3 possibilities Blue Green Red
 Nationality category have 3 possibilities Brit German Swede
 Pet category have 3 possibilities Birds Dogs Fish
 House#3
 Color category have 3 possibilities Blue Green Red
 Nationality category is Brit
 Pet category have 2 possibilities Dogs Fish
 Unsatisfied constraints are Constraint#1 Constraint#2 Constraint#3 Constraint#4 [RETURN]
 => Puzzle not solved yet
 ====== Branch ======
 Branching on House#1 Color category with 2 possibilities Blue Red
 Trying possibility Blue in House#1 Color category
 [CALL] ====== Possible Assignments ======
 House#1
 Color category is Blue
 Nationality category have 2 possibilities German Swede
 Pet category have 2 possibilities Dogs Fish
 House#2
 Color category have 3 possibilities Blue Green Red
 Nationality category have 3 possibilities Brit German Swede
 Pet category have 3 possibilities Birds Dogs Fish
 House#3
 Color category have 3 possibilities Blue Green Red
 Nationality category is Brit
 Pet category have 2 possibilities Dogs Fish
 Unsatisfied constraints are Constraint#1 Constraint#2 Constraint#3 Constraint#4
 => Puzzle not solved yet
 ====== Propagation ======
 Applying Constraint#1 [CALL]
 PHASE 1: Single-value logic for Green and Birds under RIGHT constraint
 Removing Blue from House#2 Color category because Blue is pinned in another house
 Removing Blue from House#3 Color category because Blue is pinned in another house
 Forcing Birds in House#2 Pet category because it can only appear here
 PHASE 2: Handling relation Green RIGHT Birds
 Green is immediately RIGHT of Birds
 Since Birds is pinned to House#2 , removing Green from House#2 because Green must be right of House#2
 Placing Green in House#3 because Birds is pinned to House#2
 [SEP] House#2 Color category changed from 3 possibilities Blue Green Red to 1 possibilities Red
 House#3 Color category changed from 3 possibilities Blue Green Red to 1 possibilities Green
 House#2 Pet category changed from 3 possibilities Birds Dogs Fish to 1 possibilities Birds [RETURN]
 Remove Constraint#1 because it is satisfied
 Applying Constraint#2 [CALL]
 PHASE 1: Single-value logic for Brit and German under RIGHT constraint
 Removing Brit from House#2 Nationality category because Brit is pinned in another house
 PHASE 2: Handling relation Brit RIGHT German
 Brit is immediately RIGHT of German
 German must be exactly one house to the LEFT , removing from House#1
 Placing German in House#2 because Brit is pinned to House#3
 [SEP] House#1 Nationality category changed from 2 possibilities German Swede to 1 possibilities Swede
 House#2 Nationality category changed from 3 possibilities Brit German Swede to 1 possibilities German [RETURN]
 Remove Constraint#2 because it is satisfied
 Applying Constraint#3 [CALL]
 PHASE 1: Single-value logic for Dogs and Red under SAME constraint
 PHASE 2: Handling relation Dogs SAME Red
 Dogs must be in the SAME house as Red
 Since Red is pinned to House#2 , removing Dogs from House#1
 Since Red is pinned to House#2 , removing Dogs from House#3
 House#2 can't hold Dogs since it can't hold Red
 [SEP] House#2 Color category changed from 1 possibilities Red to 0 possibilities empty
 House#1 Pet category changed from 2 possibilities Dogs Fish to 1 possibilities Fish
 House#3 Pet category changed from 2 possibilities Dogs Fish to 1 possibilities Fish [RETURN]
 [SEP] No Solution [RETURN]
 Trying possibility Red in House#1 Color category
 [CALL] ====== Possible Assignments ======
 House#1
 Color category is Red
 Nationality category have 2 possibilities German Swede
 Pet category have 2 possibilities Dogs Fish
 House#2
 Color category have 3 possibilities Blue Green Red
 Nationality category have 3 possibilities Brit German Swede
 Pet category have 3 possibilities Birds Dogs Fish
 House#3
 Color category have 3 possibilities Blue Green Red
 Nationality category is Brit
 Pet category have 2 possibilities Dogs Fish
 Unsatisfied constraints are Constraint#1 Constraint#2 Constraint#3 Constraint#4
 => Puzzle not solved yet
 ====== Propagation ======
 Applying Constraint#1 [CALL]
 PHASE 1: Single-value logic for Green and Birds under RIGHT constraint
 Removing Red from House#2 Color category because Red is pinned in another house
 Removing Red from House#3 Color category because Red is pinned in another house
 Forcing Birds in House#2 Pet category because it can only appear here
 PHASE 2: Handling relation Green RIGHT Birds
 Green is immediately RIGHT of Birds
 Since Birds is pinned to House#2 , removing Green from House#2 because Green must be right of House#2
 Placing Green in House#3 because Birds is pinned to House#2
 [SEP] House#2 Color category changed from 3 possibilities Blue Green Red to 1 possibilities Blue
 House#3 Color category changed from 3 possibilities Blue Green Red to 1 possibilities Green
 House#2 Pet category changed from 3 possibilities Birds Dogs Fish to 1 possibilities Birds [RETURN]
 Remove Constraint#1 because it is satisfied
 Applying Constraint#2 [CALL]
 PHASE 1: Single-value logic for Brit and German under RIGHT constraint
 Removing Brit from House#2 Nationality category because Brit is pinned in another house
 PHASE 2: Handling relation Brit RIGHT German
 Brit is immediately RIGHT of German
 German must be exactly one house to the LEFT , removing from House#1
 Placing German in House#2 because Brit is pinned to House#3
 [SEP] House#1 Nationality category changed from 2 possibilities German Swede to 1 possibilities Swede
 House#2 Nationality category changed from 3 possibilities Brit German Swede to 1 possibilities German [RETURN]
 Remove Constraint#2 because it is satisfied
 Applying Constraint#3 [CALL]
 PHASE 1: Single-value logic for Dogs and Red under SAME constraint
 PHASE 2: Handling relation Dogs SAME Red
 Dogs must be in the SAME house as Red
 Placing Dogs in House#1 since Red is in this house
 Since Red is pinned to House#1 , removing Dogs from House#3
 [SEP] House#1 Pet category changed from 2 possibilities Dogs Fish to 1 possibilities Dogs
 House#3 Pet category changed from 2 possibilities Dogs Fish to 1 possibilities Fish [RETURN]
 Remove Constraint#3 because it is satisfied
 Applying Constraint#4 [CALL]
 PHASE 1: Single-value logic for Birds and Swede under RIGHT constraint
 PHASE 2: Handling relation Birds RIGHT Swede
 Birds is immediately RIGHT of Swede
 [SEP] No changes from this constraint [RETURN]
 Remove Constraint#4 because it is satisfied
 [SEP] [CALL] ====== Possible Assignments After Propagation ======
 House#1
 Color category is Red
 Nationality category is Swede
 Pet category is Dogs
 House#2
 Color category is Blue
 Nationality category is German
 Pet category is Birds
 House#3
 Color category is Green
 Nationality category is Brit
 Pet category is Fish
 Unsatisfied constraints are [RETURN]
 => Puzzle is solved
 [SEP] Solution House#1
 Color category is Red
 Nationality category is Swede
 Pet category is Dogs
 House#2
 Color category is Blue
 Nationality category is German
 Pet category is Birds
 House#3
 Color category is Green
 Nationality category is Brit
 Pet category is Fish [RETURN]
 [SEP] Solution House#1
 Color category is Red
 Nationality category is Swede
 Pet category is Dogs
 House#2
 Color category is Blue
 Nationality category is German
 Pet category is Birds
 House#3
 Color category is Green
 Nationality category is Brit
 Pet category is Fish [RETURN]
 => House#3 owns the Fish
 => the Brit owns the Fish <|endoftext|>
\end{lstlisting}

\subsection{PENCIL}

\textbf{{$\bullet$ Prompt:}}
\begin{lstlisting}
<|startoftext|> Constraint#1 : the Green house is immediately to the right of the one who keeps Birds
 Constraint#2 : the Brit is immediately to the right of the German
 Constraint#3 : the one who keeps Dogs is the same house as the Red house
 Constraint#4 : the one who keeps Birds is immediately to the right of the Swede <|endofprompt|>
\end{lstlisting}

\textbf{{$\bullet$ Response:}}
\begin{lstlisting}
 Solution House#1
 Color category is Red
 Nationality category is Swede
 Pet category is Dogs
 House#2
 Color category is Blue
 Nationality category is German
 Pet category is Birds
 House#3
 Color category is Green
 Nationality category is Brit
 Pet category is Fish
 => House#3 owns the Fish
 => the Brit owns the Fish <|endoftext|>
\end{lstlisting}

\subsection{Internal Thinking Process of PENCIL}

{\textbf{Model Generation (1)}}
\begin{lstlisting}
 [CALL] ====== Possible Assignments ======
 House#1
 Color category have 3 possibilities Blue Green Red
 Nationality category have 3 possibilities Brit German Swede
 Pet category have 3 possibilities Birds Dogs Fish
 House#2
 Color category have 3 possibilities Blue Green Red
 Nationality category have 3 possibilities Brit German Swede
 Pet category have 3 possibilities Birds Dogs Fish
 House#3
 Color category have 3 possibilities Blue Green Red
 Nationality category have 3 possibilities Brit German Swede
 Pet category have 3 possibilities Birds Dogs Fish
 Unsatisfied constraints are Constraint#1 Constraint#2 Constraint#3 Constraint#4
 => Puzzle not solved yet
 ====== Propagation ======
 Applying Constraint#1 [CALL]
 PHASE 1: Single-value logic for Fish and Red under RIGHT constraint
 PHASE 2: Handling relation Fish RIGHT Red
 Fish is immediately RIGHT of Red
 Removing Fish from House#1 because Fish can't be in the leftmost house if it's to the RIGHT of Red
 Removing Red from House#3 can't be in the rightmost house if it's to the LEFT of Fish
 [SEP] House#3 Color category changed from 3 possibilities Blue Green Red to 2 possibilities Blue Green
 House#1 Pet category changed from 3 possibilities Birds Dogs Fish to 2 possibilities Birds Dogs [RETURN]
\end{lstlisting}

{\textbf{Reduction Rule (1)}}
\begin{lstlisting}
 [CALL] ====== Possible Assignments ======
 House#1
 Color category have 3 possibilities Blue Green Red
 Nationality category have 3 possibilities Brit German Swede
 Pet category have 3 possibilities Birds Dogs Fish
 House#2
 Color category have 3 possibilities Blue Green Red
 Nationality category have 3 possibilities Brit German Swede
 Pet category have 3 possibilities Birds Dogs Fish
 House#3
 Color category have 3 possibilities Blue Green Red
 Nationality category have 3 possibilities Brit German Swede
 Pet category have 3 possibilities Birds Dogs Fish
 Unsatisfied constraints are Constraint#1 Constraint#2 Constraint#3 Constraint#4
 => Puzzle not solved yet
 ====== Propagation ======
 Applying Constraint#1 House#3 Color category changed from 3 possibilities Blue Green Red to 2 possibilities Blue Green
 House#1 Pet category changed from 3 possibilities Birds Dogs Fish to 2 possibilities Birds Dogs
\end{lstlisting}

{\textbf{Model Generation (2)}}
\begin{lstlisting}
 [CALL] ====== Possible Assignments ======
 House#1
 Color category have 3 possibilities Blue Green Red
 Nationality category have 3 possibilities Brit German Swede
 Pet category have 3 possibilities Birds Dogs Fish
 House#2
 Color category have 3 possibilities Blue Green Red
 Nationality category have 3 possibilities Brit German Swede
 Pet category have 3 possibilities Birds Dogs Fish
 House#3
 Color category have 3 possibilities Blue Green Red
 Nationality category have 3 possibilities Brit German Swede
 Pet category have 3 possibilities Birds Dogs Fish
 Unsatisfied constraints are Constraint#1 Constraint#2 Constraint#3 Constraint#4
 => Puzzle not solved yet
 ====== Propagation ======
 Applying Constraint#1 House#3 Color category changed from 3 possibilities Blue Green Red to 2 possibilities Blue Green
 House#1 Pet category changed from 3 possibilities Birds Dogs Fish to 2 possibilities Birds Dogs
 Applying Constraint#2 [CALL]
 PHASE 1: Single-value logic for Green and Red under LEFT constraint
 PHASE 2: Handling relation Green LEFT Red
 Green is immediately LEFT of Red
 Removing Green from House#3 because Green can't be in the rightmost house if it's to the LEFT of Red
 Removing Red from House#1 because Red can't be in the leftmost house if it's to the RIGHT of Green
 [SEP] House#1 Color category changed from 3 possibilities Blue Green Red to 2 possibilities Blue Green
 House#3 Color category changed from 2 possibilities Blue Green to 1 possibilities Blue [RETURN]
\end{lstlisting}

{\textbf{Reduction Rule (2)}}
\begin{lstlisting}
 [CALL] ====== Possible Assignments ======
 House#1
 Color category have 3 possibilities Blue Green Red
 Nationality category have 3 possibilities Brit German Swede
 Pet category have 3 possibilities Birds Dogs Fish
 House#2
 Color category have 3 possibilities Blue Green Red
 Nationality category have 3 possibilities Brit German Swede
 Pet category have 3 possibilities Birds Dogs Fish
 House#3
 Color category have 3 possibilities Blue Green Red
 Nationality category have 3 possibilities Brit German Swede
 Pet category have 3 possibilities Birds Dogs Fish
 Unsatisfied constraints are Constraint#1 Constraint#2 Constraint#3 Constraint#4
 => Puzzle not solved yet
 ====== Propagation ======
 Applying Constraint#1 House#3 Color category changed from 3 possibilities Blue Green Red to 2 possibilities Blue Green
 House#1 Pet category changed from 3 possibilities Birds Dogs Fish to 2 possibilities Birds Dogs
 Applying Constraint#2 House#1 Color category changed from 3 possibilities Blue Green Red to 2 possibilities Blue Green
 House#3 Color category changed from 2 possibilities Blue Green to 1 possibilities Blue
\end{lstlisting}

{\textbf{Model Generation (3)}}
\begin{lstlisting}
 [CALL] ====== Possible Assignments ======
 House#1
 Color category have 3 possibilities Blue Green Red
 Nationality category have 3 possibilities Brit German Swede
 Pet category have 3 possibilities Birds Dogs Fish
 House#2
 Color category have 3 possibilities Blue Green Red
 Nationality category have 3 possibilities Brit German Swede
 Pet category have 3 possibilities Birds Dogs Fish
 House#3
 Color category have 3 possibilities Blue Green Red
 Nationality category have 3 possibilities Brit German Swede
 Pet category have 3 possibilities Birds Dogs Fish
 Unsatisfied constraints are Constraint#1 Constraint#2 Constraint#3 Constraint#4
 => Puzzle not solved yet
 ====== Propagation ======
 Applying Constraint#1 House#3 Color category changed from 3 possibilities Blue Green Red to 2 possibilities Blue Green
 House#1 Pet category changed from 3 possibilities Birds Dogs Fish to 2 possibilities Birds Dogs
 Applying Constraint#2 House#1 Color category changed from 3 possibilities Blue Green Red to 2 possibilities Blue Green
 House#3 Color category changed from 2 possibilities Blue Green to 1 possibilities Blue
 Applying Constraint#3 [CALL]
 PHASE 1: Single-value logic for Fish and Swede under RIGHT constraint
 PHASE 2: Handling relation Fish RIGHT Swede
 Fish is immediately RIGHT of Swede
 Removing Swede from House#3 can't be in the rightmost house if it's to the LEFT of Fish
 [SEP] House#3 Nationality category changed from 3 possibilities Brit German Swede to 2 possibilities Brit German [RETURN]
\end{lstlisting}

{\textbf{Reduction Rule (3)}}
\begin{lstlisting}
 [CALL] ====== Possible Assignments ======
 House#1
 Color category have 3 possibilities Blue Green Red
 Nationality category have 3 possibilities Brit German Swede
 Pet category have 3 possibilities Birds Dogs Fish
 House#2
 Color category have 3 possibilities Blue Green Red
 Nationality category have 3 possibilities Brit German Swede
 Pet category have 3 possibilities Birds Dogs Fish
 House#3
 Color category have 3 possibilities Blue Green Red
 Nationality category have 3 possibilities Brit German Swede
 Pet category have 3 possibilities Birds Dogs Fish
 Unsatisfied constraints are Constraint#1 Constraint#2 Constraint#3 Constraint#4
 => Puzzle not solved yet
 ====== Propagation ======
 Applying Constraint#1 House#3 Color category changed from 3 possibilities Blue Green Red to 2 possibilities Blue Green
 House#1 Pet category changed from 3 possibilities Birds Dogs Fish to 2 possibilities Birds Dogs
 Applying Constraint#2 House#1 Color category changed from 3 possibilities Blue Green Red to 2 possibilities Blue Green
 House#3 Color category changed from 2 possibilities Blue Green to 1 possibilities Blue
 Applying Constraint#3 House#3 Nationality category changed from 3 possibilities Brit German Swede to 2 possibilities Brit German
\end{lstlisting}

{\textbf{Model Generation (4)}}
\begin{lstlisting}
 [CALL] ====== Possible Assignments ======
 House#1
 Color category have 3 possibilities Blue Green Red
 Nationality category have 3 possibilities Brit German Swede
 Pet category have 3 possibilities Birds Dogs Fish
 House#2
 Color category have 3 possibilities Blue Green Red
 Nationality category have 3 possibilities Brit German Swede
 Pet category have 3 possibilities Birds Dogs Fish
 House#3
 Color category have 3 possibilities Blue Green Red
 Nationality category have 3 possibilities Brit German Swede
 Pet category have 3 possibilities Birds Dogs Fish
 Unsatisfied constraints are Constraint#1 Constraint#2 Constraint#3 Constraint#4
 => Puzzle not solved yet
 ====== Propagation ======
 Applying Constraint#1 House#3 Color category changed from 3 possibilities Blue Green Red to 2 possibilities Blue Green
 House#1 Pet category changed from 3 possibilities Birds Dogs Fish to 2 possibilities Birds Dogs
 Applying Constraint#2 House#1 Color category changed from 3 possibilities Blue Green Red to 2 possibilities Blue Green
 House#3 Color category changed from 2 possibilities Blue Green to 1 possibilities Blue
 Applying Constraint#3 House#3 Nationality category changed from 3 possibilities Brit German Swede to 2 possibilities Brit German
 Applying Constraint#4 [CALL]
 PHASE 1: Single-value logic for Brit and Birds under LEFT constraint
 PHASE 2: Handling relation Brit LEFT Birds
 Brit is immediately LEFT of Birds
 Removing Brit from House#3 because Brit can't be in the rightmost house if it's to the LEFT of Birds
 Removing Birds from House#1 because Birds can't be in the leftmost house if it's to the RIGHT of Brit
 [SEP] House#3 Nationality category changed from 2 possibilities Brit German to 1 possibilities German
 House#1 Pet category changed from 2 possibilities Birds Dogs to 1 possibilities Dogs [RETURN]
\end{lstlisting}

{\textbf{Reduction Rule (4)}}
\begin{lstlisting}
 [CALL] ====== Possible Assignments ======
 House#1
 Color category have 3 possibilities Blue Green Red
 Nationality category have 3 possibilities Brit German Swede
 Pet category have 3 possibilities Birds Dogs Fish
 House#2
 Color category have 3 possibilities Blue Green Red
 Nationality category have 3 possibilities Brit German Swede
 Pet category have 3 possibilities Birds Dogs Fish
 House#3
 Color category have 3 possibilities Blue Green Red
 Nationality category have 3 possibilities Brit German Swede
 Pet category have 3 possibilities Birds Dogs Fish
 Unsatisfied constraints are Constraint#1 Constraint#2 Constraint#3 Constraint#4
 => Puzzle not solved yet
 ====== Propagation ======
 Applying Constraint#1 House#3 Color category changed from 3 possibilities Blue Green Red to 2 possibilities Blue Green
 House#1 Pet category changed from 3 possibilities Birds Dogs Fish to 2 possibilities Birds Dogs
 Applying Constraint#2 House#1 Color category changed from 3 possibilities Blue Green Red to 2 possibilities Blue Green
 House#3 Color category changed from 2 possibilities Blue Green to 1 possibilities Blue
 Applying Constraint#3 House#3 Nationality category changed from 3 possibilities Brit German Swede to 2 possibilities Brit German
 Applying Constraint#4 House#3 Nationality category changed from 2 possibilities Brit German to 1 possibilities German
 House#1 Pet category changed from 2 possibilities Birds Dogs to 1 possibilities Dogs
\end{lstlisting}

{\textbf{Model Generation (5)}}
\begin{lstlisting}
 [CALL] ====== Possible Assignments ======
 House#1
 Color category have 3 possibilities Blue Green Red
 Nationality category have 3 possibilities Brit German Swede
 Pet category have 3 possibilities Birds Dogs Fish
 House#2
 Color category have 3 possibilities Blue Green Red
 Nationality category have 3 possibilities Brit German Swede
 Pet category have 3 possibilities Birds Dogs Fish
 House#3
 Color category have 3 possibilities Blue Green Red
 Nationality category have 3 possibilities Brit German Swede
 Pet category have 3 possibilities Birds Dogs Fish
 Unsatisfied constraints are Constraint#1 Constraint#2 Constraint#3 Constraint#4
 => Puzzle not solved yet
 ====== Propagation ======
 Applying Constraint#1 House#3 Color category changed from 3 possibilities Blue Green Red to 2 possibilities Blue Green
 House#1 Pet category changed from 3 possibilities Birds Dogs Fish to 2 possibilities Birds Dogs
 Applying Constraint#2 House#1 Color category changed from 3 possibilities Blue Green Red to 2 possibilities Blue Green
 House#3 Color category changed from 2 possibilities Blue Green to 1 possibilities Blue
 Applying Constraint#3 House#3 Nationality category changed from 3 possibilities Brit German Swede to 2 possibilities Brit German
 Applying Constraint#4 House#3 Nationality category changed from 2 possibilities Brit German to 1 possibilities German
 House#1 Pet category changed from 2 possibilities Birds Dogs to 1 possibilities Dogs
 [SEP] [CALL] ====== Possible Assignments After Propagation ======
 House#1
 Color category have 2 possibilities Blue Green
 Nationality category have 3 possibilities Brit German Swede
 Pet category is Dogs
 House#2
 Color category have 3 possibilities Blue Green Red
 Nationality category have 3 possibilities Brit German Swede
 Pet category have 3 possibilities Birds Dogs Fish
 House#3
 Color category is Blue
 Nationality category is German
 Pet category have 3 possibilities Birds Dogs Fish
 Unsatisfied constraints are Constraint#1 Constraint#2 Constraint#3 Constraint#4 [RETURN]
\end{lstlisting}

{\textbf{Reduction Rule (5)}}
\begin{lstlisting}
 [CALL] ====== Possible Assignments After Propagation ======
 House#1
 Color category have 2 possibilities Blue Green
 Nationality category have 3 possibilities Brit German Swede
 Pet category is Dogs
 House#2
 Color category have 3 possibilities Blue Green Red
 Nationality category have 3 possibilities Brit German Swede
 Pet category have 3 possibilities Birds Dogs Fish
 House#3
 Color category is Blue
 Nationality category is German
 Pet category have 3 possibilities Birds Dogs Fish
 Unsatisfied constraints are Constraint#1 Constraint#2 Constraint#3 Constraint#4
\end{lstlisting}

{\textbf{Model Generation (6)}}
\begin{lstlisting}
 [CALL] ====== Possible Assignments After Propagation ======
 House#1
 Color category have 2 possibilities Blue Green
 Nationality category have 3 possibilities Brit German Swede
 Pet category is Dogs
 House#2
 Color category have 3 possibilities Blue Green Red
 Nationality category have 3 possibilities Brit German Swede
 Pet category have 3 possibilities Birds Dogs Fish
 House#3
 Color category is Blue
 Nationality category is German
 Pet category have 3 possibilities Birds Dogs Fish
 Unsatisfied constraints are Constraint#1 Constraint#2 Constraint#3 Constraint#4
 => Puzzle not solved yet
 ====== Branch ======
 Branching on House#1 Color category with 2 possibilities Blue Green
 Trying possibility Green in House#1 Color category
 [CALL] ====== Possible Assignments ======
 House#1
 Color category is Green
 Nationality category have 3 possibilities Brit German Swede
 Pet category is Dogs
 House#2
 Color category have 3 possibilities Blue Green Red
 Nationality category have 3 possibilities Brit German Swede
 Pet category have 3 possibilities Birds Dogs Fish
 House#3
 Color category is Blue
 Nationality category is German
 Pet category have 3 possibilities Birds Dogs Fish
 Unsatisfied constraints are Constraint#1 Constraint#2 Constraint#3 Constraint#4
 => Puzzle not solved yet
 ====== Propagation ======
 Applying Constraint#1 [CALL]
 PHASE 1: Single-value logic for Fish and Red under RIGHT constraint
 Removing Dogs from House#2 Pet category because Dogs is pinned in another house
 Removing Dogs from House#3 Pet category because Dogs is pinned in another house
 Removing Green from House#2 Color category because Green is pinned in another house
 Removing Blue from House#2 Color category because Blue is pinned in another house
 PHASE 2: Handling relation Fish RIGHT Red
 Fish is immediately RIGHT of Red
 Since Red is pinned to House#2 , removing Fish from House#2 because Fish must be right of House#2
 Placing Fish in House#3 because Red is pinned to House#2
 [SEP] House#2 Color category changed from 3 possibilities Blue Green Red to 1 possibilities Red
 House#2 Pet category changed from 3 possibilities Birds Dogs Fish to 1 possibilities Birds
 House#3 Pet category changed from 3 possibilities Birds Dogs Fish to 1 possibilities Fish [RETURN]
\end{lstlisting}

{\textbf{Reduction Rule (6)}}
\begin{lstlisting}
 [CALL] ====== Possible Assignments After Propagation ======
 House#1
 Color category have 2 possibilities Blue Green
 Nationality category have 3 possibilities Brit German Swede
 Pet category is Dogs
 House#2
 Color category have 3 possibilities Blue Green Red
 Nationality category have 3 possibilities Brit German Swede
 Pet category have 3 possibilities Birds Dogs Fish
 House#3
 Color category is Blue
 Nationality category is German
 Pet category have 3 possibilities Birds Dogs Fish
 Unsatisfied constraints are Constraint#1 Constraint#2 Constraint#3 Constraint#4
 => Puzzle not solved yet
 ====== Branch ======
 Branching on House#1 Color category with 2 possibilities Blue Green
 Trying possibility Green in House#1 Color category
 [CALL] ====== Possible Assignments ======
 House#1
 Color category is Green
 Nationality category have 3 possibilities Brit German Swede
 Pet category is Dogs
 House#2
 Color category have 3 possibilities Blue Green Red
 Nationality category have 3 possibilities Brit German Swede
 Pet category have 3 possibilities Birds Dogs Fish
 House#3
 Color category is Blue
 Nationality category is German
 Pet category have 3 possibilities Birds Dogs Fish
 Unsatisfied constraints are Constraint#1 Constraint#2 Constraint#3 Constraint#4
 => Puzzle not solved yet
 ====== Propagation ======
 Applying Constraint#1 House#2 Color category changed from 3 possibilities Blue Green Red to 1 possibilities Red
 House#2 Pet category changed from 3 possibilities Birds Dogs Fish to 1 possibilities Birds
 House#3 Pet category changed from 3 possibilities Birds Dogs Fish to 1 possibilities Fish
\end{lstlisting}

{\textbf{Model Generation (7)}}
\begin{lstlisting}
 [CALL] ====== Possible Assignments After Propagation ======
 House#1
 Color category have 2 possibilities Blue Green
 Nationality category have 3 possibilities Brit German Swede
 Pet category is Dogs
 House#2
 Color category have 3 possibilities Blue Green Red
 Nationality category have 3 possibilities Brit German Swede
 Pet category have 3 possibilities Birds Dogs Fish
 House#3
 Color category is Blue
 Nationality category is German
 Pet category have 3 possibilities Birds Dogs Fish
 Unsatisfied constraints are Constraint#1 Constraint#2 Constraint#3 Constraint#4
 => Puzzle not solved yet
 ====== Branch ======
 Branching on House#1 Color category with 2 possibilities Blue Green
 Trying possibility Green in House#1 Color category
 [CALL] ====== Possible Assignments ======
 House#1
 Color category is Green
 Nationality category have 3 possibilities Brit German Swede
 Pet category is Dogs
 House#2
 Color category have 3 possibilities Blue Green Red
 Nationality category have 3 possibilities Brit German Swede
 Pet category have 3 possibilities Birds Dogs Fish
 House#3
 Color category is Blue
 Nationality category is German
 Pet category have 3 possibilities Birds Dogs Fish
 Unsatisfied constraints are Constraint#1 Constraint#2 Constraint#3 Constraint#4
 => Puzzle not solved yet
 ====== Propagation ======
 Applying Constraint#1 House#2 Color category changed from 3 possibilities Blue Green Red to 1 possibilities Red
 House#2 Pet category changed from 3 possibilities Birds Dogs Fish to 1 possibilities Birds
 House#3 Pet category changed from 3 possibilities Birds Dogs Fish to 1 possibilities Fish
 Remove Constraint#1 because it is satisfied
 Applying Constraint#2 [CALL]
 PHASE 1: Single-value logic for Green and Red under LEFT constraint
 PHASE 2: Handling relation Green LEFT Red
 Green is immediately LEFT of Red
 [SEP] No changes from this constraint [RETURN]
\end{lstlisting}

{\textbf{Reduction Rule (7)}}
\begin{lstlisting}
 [CALL] ====== Possible Assignments After Propagation ======
 House#1
 Color category have 2 possibilities Blue Green
 Nationality category have 3 possibilities Brit German Swede
 Pet category is Dogs
 House#2
 Color category have 3 possibilities Blue Green Red
 Nationality category have 3 possibilities Brit German Swede
 Pet category have 3 possibilities Birds Dogs Fish
 House#3
 Color category is Blue
 Nationality category is German
 Pet category have 3 possibilities Birds Dogs Fish
 Unsatisfied constraints are Constraint#1 Constraint#2 Constraint#3 Constraint#4
 => Puzzle not solved yet
 ====== Branch ======
 Branching on House#1 Color category with 2 possibilities Blue Green
 Trying possibility Green in House#1 Color category
 [CALL] ====== Possible Assignments ======
 House#1
 Color category is Green
 Nationality category have 3 possibilities Brit German Swede
 Pet category is Dogs
 House#2
 Color category have 3 possibilities Blue Green Red
 Nationality category have 3 possibilities Brit German Swede
 Pet category have 3 possibilities Birds Dogs Fish
 House#3
 Color category is Blue
 Nationality category is German
 Pet category have 3 possibilities Birds Dogs Fish
 Unsatisfied constraints are Constraint#1 Constraint#2 Constraint#3 Constraint#4
 => Puzzle not solved yet
 ====== Propagation ======
 Applying Constraint#1 House#2 Color category changed from 3 possibilities Blue Green Red to 1 possibilities Red
 House#2 Pet category changed from 3 possibilities Birds Dogs Fish to 1 possibilities Birds
 House#3 Pet category changed from 3 possibilities Birds Dogs Fish to 1 possibilities Fish
 Remove Constraint#1 because it is satisfied
 Applying Constraint#2 No changes from this constraint
\end{lstlisting}

{\textbf{Model Generation (8)}}
\begin{lstlisting}
 [CALL] ====== Possible Assignments After Propagation ======
 House#1
 Color category have 2 possibilities Blue Green
 Nationality category have 3 possibilities Brit German Swede
 Pet category is Dogs
 House#2
 Color category have 3 possibilities Blue Green Red
 Nationality category have 3 possibilities Brit German Swede
 Pet category have 3 possibilities Birds Dogs Fish
 House#3
 Color category is Blue
 Nationality category is German
 Pet category have 3 possibilities Birds Dogs Fish
 Unsatisfied constraints are Constraint#1 Constraint#2 Constraint#3 Constraint#4
 => Puzzle not solved yet
 ====== Branch ======
 Branching on House#1 Color category with 2 possibilities Blue Green
 Trying possibility Green in House#1 Color category
 [CALL] ====== Possible Assignments ======
 House#1
 Color category is Green
 Nationality category have 3 possibilities Brit German Swede
 Pet category is Dogs
 House#2
 Color category have 3 possibilities Blue Green Red
 Nationality category have 3 possibilities Brit German Swede
 Pet category have 3 possibilities Birds Dogs Fish
 House#3
 Color category is Blue
 Nationality category is German
 Pet category have 3 possibilities Birds Dogs Fish
 Unsatisfied constraints are Constraint#1 Constraint#2 Constraint#3 Constraint#4
 => Puzzle not solved yet
 ====== Propagation ======
 Applying Constraint#1 House#2 Color category changed from 3 possibilities Blue Green Red to 1 possibilities Red
 House#2 Pet category changed from 3 possibilities Birds Dogs Fish to 1 possibilities Birds
 House#3 Pet category changed from 3 possibilities Birds Dogs Fish to 1 possibilities Fish
 Remove Constraint#1 because it is satisfied
 Applying Constraint#2 No changes from this constraint
 Remove Constraint#2 because it is satisfied
 Applying Constraint#3 [CALL]
 PHASE 1: Single-value logic for Fish and Swede under RIGHT constraint
 Removing German from House#1 Nationality category because German is pinned in another house
 Removing German from House#2 Nationality category because German is pinned in another house
 PHASE 2: Handling relation Fish RIGHT Swede
 Fish is immediately RIGHT of Swede
 Swede must be exactly one house to the LEFT , removing from House#1
 Placing Swede in House#2 because Fish is pinned to House#3
 [SEP] House#1 Nationality category changed from 3 possibilities Brit German Swede to 1 possibilities Brit
 House#2 Nationality category changed from 3 possibilities Brit German Swede to 1 possibilities Swede [RETURN]
\end{lstlisting}

{\textbf{Reduction Rule (8)}}
\begin{lstlisting}
 [CALL] ====== Possible Assignments After Propagation ======
 House#1
 Color category have 2 possibilities Blue Green
 Nationality category have 3 possibilities Brit German Swede
 Pet category is Dogs
 House#2
 Color category have 3 possibilities Blue Green Red
 Nationality category have 3 possibilities Brit German Swede
 Pet category have 3 possibilities Birds Dogs Fish
 House#3
 Color category is Blue
 Nationality category is German
 Pet category have 3 possibilities Birds Dogs Fish
 Unsatisfied constraints are Constraint#1 Constraint#2 Constraint#3 Constraint#4
 => Puzzle not solved yet
 ====== Branch ======
 Branching on House#1 Color category with 2 possibilities Blue Green
 Trying possibility Green in House#1 Color category
 [CALL] ====== Possible Assignments ======
 House#1
 Color category is Green
 Nationality category have 3 possibilities Brit German Swede
 Pet category is Dogs
 House#2
 Color category have 3 possibilities Blue Green Red
 Nationality category have 3 possibilities Brit German Swede
 Pet category have 3 possibilities Birds Dogs Fish
 House#3
 Color category is Blue
 Nationality category is German
 Pet category have 3 possibilities Birds Dogs Fish
 Unsatisfied constraints are Constraint#1 Constraint#2 Constraint#3 Constraint#4
 => Puzzle not solved yet
 ====== Propagation ======
 Applying Constraint#1 House#2 Color category changed from 3 possibilities Blue Green Red to 1 possibilities Red
 House#2 Pet category changed from 3 possibilities Birds Dogs Fish to 1 possibilities Birds
 House#3 Pet category changed from 3 possibilities Birds Dogs Fish to 1 possibilities Fish
 Remove Constraint#1 because it is satisfied
 Applying Constraint#2 No changes from this constraint
 Remove Constraint#2 because it is satisfied
 Applying Constraint#3 House#1 Nationality category changed from 3 possibilities Brit German Swede to 1 possibilities Brit
 House#2 Nationality category changed from 3 possibilities Brit German Swede to 1 possibilities Swede
\end{lstlisting}

{\textbf{Model Generation (9)}}
\begin{lstlisting}
 [CALL] ====== Possible Assignments After Propagation ======
 House#1
 Color category have 2 possibilities Blue Green
 Nationality category have 3 possibilities Brit German Swede
 Pet category is Dogs
 House#2
 Color category have 3 possibilities Blue Green Red
 Nationality category have 3 possibilities Brit German Swede
 Pet category have 3 possibilities Birds Dogs Fish
 House#3
 Color category is Blue
 Nationality category is German
 Pet category have 3 possibilities Birds Dogs Fish
 Unsatisfied constraints are Constraint#1 Constraint#2 Constraint#3 Constraint#4
 => Puzzle not solved yet
 ====== Branch ======
 Branching on House#1 Color category with 2 possibilities Blue Green
 Trying possibility Green in House#1 Color category
 [CALL] ====== Possible Assignments ======
 House#1
 Color category is Green
 Nationality category have 3 possibilities Brit German Swede
 Pet category is Dogs
 House#2
 Color category have 3 possibilities Blue Green Red
 Nationality category have 3 possibilities Brit German Swede
 Pet category have 3 possibilities Birds Dogs Fish
 House#3
 Color category is Blue
 Nationality category is German
 Pet category have 3 possibilities Birds Dogs Fish
 Unsatisfied constraints are Constraint#1 Constraint#2 Constraint#3 Constraint#4
 => Puzzle not solved yet
 ====== Propagation ======
 Applying Constraint#1 House#2 Color category changed from 3 possibilities Blue Green Red to 1 possibilities Red
 House#2 Pet category changed from 3 possibilities Birds Dogs Fish to 1 possibilities Birds
 House#3 Pet category changed from 3 possibilities Birds Dogs Fish to 1 possibilities Fish
 Remove Constraint#1 because it is satisfied
 Applying Constraint#2 No changes from this constraint
 Remove Constraint#2 because it is satisfied
 Applying Constraint#3 House#1 Nationality category changed from 3 possibilities Brit German Swede to 1 possibilities Brit
 House#2 Nationality category changed from 3 possibilities Brit German Swede to 1 possibilities Swede
 Remove Constraint#3 because it is satisfied
 Applying Constraint#4 [CALL]
 PHASE 1: Single-value logic for Brit and Birds under LEFT constraint
 PHASE 2: Handling relation Brit LEFT Birds
 Brit is immediately LEFT of Birds
 [SEP] No changes from this constraint [RETURN]
\end{lstlisting}

{\textbf{Reduction Rule (9)}}
\begin{lstlisting}
 [CALL] ====== Possible Assignments After Propagation ======
 House#1
 Color category have 2 possibilities Blue Green
 Nationality category have 3 possibilities Brit German Swede
 Pet category is Dogs
 House#2
 Color category have 3 possibilities Blue Green Red
 Nationality category have 3 possibilities Brit German Swede
 Pet category have 3 possibilities Birds Dogs Fish
 House#3
 Color category is Blue
 Nationality category is German
 Pet category have 3 possibilities Birds Dogs Fish
 Unsatisfied constraints are Constraint#1 Constraint#2 Constraint#3 Constraint#4
 => Puzzle not solved yet
 ====== Branch ======
 Branching on House#1 Color category with 2 possibilities Blue Green
 Trying possibility Green in House#1 Color category
 [CALL] ====== Possible Assignments ======
 House#1
 Color category is Green
 Nationality category have 3 possibilities Brit German Swede
 Pet category is Dogs
 House#2
 Color category have 3 possibilities Blue Green Red
 Nationality category have 3 possibilities Brit German Swede
 Pet category have 3 possibilities Birds Dogs Fish
 House#3
 Color category is Blue
 Nationality category is German
 Pet category have 3 possibilities Birds Dogs Fish
 Unsatisfied constraints are Constraint#1 Constraint#2 Constraint#3 Constraint#4
 => Puzzle not solved yet
 ====== Propagation ======
 Applying Constraint#1 House#2 Color category changed from 3 possibilities Blue Green Red to 1 possibilities Red
 House#2 Pet category changed from 3 possibilities Birds Dogs Fish to 1 possibilities Birds
 House#3 Pet category changed from 3 possibilities Birds Dogs Fish to 1 possibilities Fish
 Remove Constraint#1 because it is satisfied
 Applying Constraint#2 No changes from this constraint
 Remove Constraint#2 because it is satisfied
 Applying Constraint#3 House#1 Nationality category changed from 3 possibilities Brit German Swede to 1 possibilities Brit
 House#2 Nationality category changed from 3 possibilities Brit German Swede to 1 possibilities Swede
 Remove Constraint#3 because it is satisfied
 Applying Constraint#4 No changes from this constraint
\end{lstlisting}

{\textbf{Model Generation (10)}}
\begin{lstlisting}
 [CALL] ====== Possible Assignments After Propagation ======
 House#1
 Color category have 2 possibilities Blue Green
 Nationality category have 3 possibilities Brit German Swede
 Pet category is Dogs
 House#2
 Color category have 3 possibilities Blue Green Red
 Nationality category have 3 possibilities Brit German Swede
 Pet category have 3 possibilities Birds Dogs Fish
 House#3
 Color category is Blue
 Nationality category is German
 Pet category have 3 possibilities Birds Dogs Fish
 Unsatisfied constraints are Constraint#1 Constraint#2 Constraint#3 Constraint#4
 => Puzzle not solved yet
 ====== Branch ======
 Branching on House#1 Color category with 2 possibilities Blue Green
 Trying possibility Green in House#1 Color category
 [CALL] ====== Possible Assignments ======
 House#1
 Color category is Green
 Nationality category have 3 possibilities Brit German Swede
 Pet category is Dogs
 House#2
 Color category have 3 possibilities Blue Green Red
 Nationality category have 3 possibilities Brit German Swede
 Pet category have 3 possibilities Birds Dogs Fish
 House#3
 Color category is Blue
 Nationality category is German
 Pet category have 3 possibilities Birds Dogs Fish
 Unsatisfied constraints are Constraint#1 Constraint#2 Constraint#3 Constraint#4
 => Puzzle not solved yet
 ====== Propagation ======
 Applying Constraint#1 House#2 Color category changed from 3 possibilities Blue Green Red to 1 possibilities Red
 House#2 Pet category changed from 3 possibilities Birds Dogs Fish to 1 possibilities Birds
 House#3 Pet category changed from 3 possibilities Birds Dogs Fish to 1 possibilities Fish
 Remove Constraint#1 because it is satisfied
 Applying Constraint#2 No changes from this constraint
 Remove Constraint#2 because it is satisfied
 Applying Constraint#3 House#1 Nationality category changed from 3 possibilities Brit German Swede to 1 possibilities Brit
 House#2 Nationality category changed from 3 possibilities Brit German Swede to 1 possibilities Swede
 Remove Constraint#3 because it is satisfied
 Applying Constraint#4 No changes from this constraint
 Remove Constraint#4 because it is satisfied
 [SEP] [CALL] ====== Possible Assignments After Propagation ======
 House#1
 Color category is Green
 Nationality category is Brit
 Pet category is Dogs
 House#2
 Color category is Red
 Nationality category is Swede
 Pet category is Birds
 House#3
 Color category is Blue
 Nationality category is German
 Pet category is Fish
 Unsatisfied constraints are [RETURN]
\end{lstlisting}

{\textbf{Reduction Rule (10)}}
\begin{lstlisting}
 [CALL] ====== Possible Assignments After Propagation ======
 House#1
 Color category have 2 possibilities Blue Green
 Nationality category have 3 possibilities Brit German Swede
 Pet category is Dogs
 House#2
 Color category have 3 possibilities Blue Green Red
 Nationality category have 3 possibilities Brit German Swede
 Pet category have 3 possibilities Birds Dogs Fish
 House#3
 Color category is Blue
 Nationality category is German
 Pet category have 3 possibilities Birds Dogs Fish
 Unsatisfied constraints are Constraint#1 Constraint#2 Constraint#3 Constraint#4
 => Puzzle not solved yet
 ====== Branch ======
 Branching on House#1 Color category with 2 possibilities Blue Green
 Trying possibility Green in House#1 Color category
 [CALL] ====== Possible Assignments After Propagation ======
 House#1
 Color category is Green
 Nationality category is Brit
 Pet category is Dogs
 House#2
 Color category is Red
 Nationality category is Swede
 Pet category is Birds
 House#3
 Color category is Blue
 Nationality category is German
 Pet category is Fish
 Unsatisfied constraints are
\end{lstlisting}

{\textbf{Model Generation (11)}}
\begin{lstlisting}
 [CALL] ====== Possible Assignments After Propagation ======
 House#1
 Color category have 2 possibilities Blue Green
 Nationality category have 3 possibilities Brit German Swede
 Pet category is Dogs
 House#2
 Color category have 3 possibilities Blue Green Red
 Nationality category have 3 possibilities Brit German Swede
 Pet category have 3 possibilities Birds Dogs Fish
 House#3
 Color category is Blue
 Nationality category is German
 Pet category have 3 possibilities Birds Dogs Fish
 Unsatisfied constraints are Constraint#1 Constraint#2 Constraint#3 Constraint#4
 => Puzzle not solved yet
 ====== Branch ======
 Branching on House#1 Color category with 2 possibilities Blue Green
 Trying possibility Green in House#1 Color category
 [CALL] ====== Possible Assignments After Propagation ======
 House#1
 Color category is Green
 Nationality category is Brit
 Pet category is Dogs
 House#2
 Color category is Red
 Nationality category is Swede
 Pet category is Birds
 House#3
 Color category is Blue
 Nationality category is German
 Pet category is Fish
 Unsatisfied constraints are
 => Puzzle is solved
 [SEP] Solution House#1
 Color category is Green
 Nationality category is Brit
 Pet category is Dogs
 House#2
 Color category is Red
 Nationality category is Swede
 Pet category is Birds
 House#3
 Color category is Blue
 Nationality category is German
 Pet category is Fish [RETURN]
\end{lstlisting}

{\textbf{Reduction Rule (11)}}
\begin{lstlisting}
 [CALL] ====== Possible Assignments After Propagation ======
 House#1
 Color category have 2 possibilities Blue Green
 Nationality category have 3 possibilities Brit German Swede
 Pet category is Dogs
 House#2
 Color category have 3 possibilities Blue Green Red
 Nationality category have 3 possibilities Brit German Swede
 Pet category have 3 possibilities Birds Dogs Fish
 House#3
 Color category is Blue
 Nationality category is German
 Pet category have 3 possibilities Birds Dogs Fish
 Unsatisfied constraints are Constraint#1 Constraint#2 Constraint#3 Constraint#4
 => Puzzle not solved yet
 ====== Branch ======
 Branching on House#1 Color category with 2 possibilities Blue Green
 Trying possibility Green in House#1 Color category
 Solution House#1
 Color category is Green
 Nationality category is Brit
 Pet category is Dogs
 House#2
 Color category is Red
 Nationality category is Swede
 Pet category is Birds
 House#3
 Color category is Blue
 Nationality category is German
 Pet category is Fish
\end{lstlisting}

{\textbf{Model Generation (12)}}
\begin{lstlisting}
 [CALL] ====== Possible Assignments After Propagation ======
 House#1
 Color category have 2 possibilities Blue Green
 Nationality category have 3 possibilities Brit German Swede
 Pet category is Dogs
 House#2
 Color category have 3 possibilities Blue Green Red
 Nationality category have 3 possibilities Brit German Swede
 Pet category have 3 possibilities Birds Dogs Fish
 House#3
 Color category is Blue
 Nationality category is German
 Pet category have 3 possibilities Birds Dogs Fish
 Unsatisfied constraints are Constraint#1 Constraint#2 Constraint#3 Constraint#4
 => Puzzle not solved yet
 ====== Branch ======
 Branching on House#1 Color category with 2 possibilities Blue Green
 Trying possibility Green in House#1 Color category
 Solution House#1
 Color category is Green
 Nationality category is Brit
 Pet category is Dogs
 House#2
 Color category is Red
 Nationality category is Swede
 Pet category is Birds
 House#3
 Color category is Blue
 Nationality category is German
 Pet category is Fish
 [SEP] Solution House#1
 Color category is Green
 Nationality category is Brit
 Pet category is Dogs
 House#2
 Color category is Red
 Nationality category is Swede
 Pet category is Birds
 House#3
 Color category is Blue
 Nationality category is German
 Pet category is Fish [RETURN]
\end{lstlisting}

{\textbf{Reduction Rule (12)}}
\begin{lstlisting}
 Solution House#1
 Color category is Green
 Nationality category is Brit
 Pet category is Dogs
 House#2
 Color category is Red
 Nationality category is Swede
 Pet category is Birds
 House#3
 Color category is Blue
 Nationality category is German
 Pet category is Fish
\end{lstlisting}

{\textbf{Model Generation (Final Response)}}
\begin{lstlisting}
 Solution House#1
 Color category is Green
 Nationality category is Brit
 Pet category is Dogs
 House#2
 Color category is Red
 Nationality category is Swede
 Pet category is Birds
 House#3
 Color category is Blue
 Nationality category is German
 Pet category is Fish
 => House#3 owns the Fish
 => the German owns the Fish <|endoftext|>
\end{lstlisting}

\end{document}